\newcommand{\nosemic}{\renewcommand{\@endalgocfline}{\relax}}
\newcommand{\dosemic}{\renewcommand{\@endalgocfline}{\algocf@endline}}
\newcommand{\pushline}{\Indp}
\newcommand{\popline}{\Indm\dosemic}
\let\oldnl\nl
\newcommand{\nonl}{\renewcommand{\nl}{\let\nl\oldnl}}
\newcommand{\N}{\mathbb{N}}
\newcommand{\R}{\mathbb{R}}
\newcommand{\E}{\mathbb{E}}
\newcommand{\fg}{\mathfrak{g}}
\newcommand{\fso}{\mathfrak{so}}
\newcommand{\fr}{\mathfrak{r}}
\newcommand{\fh}{\mathfrak{h}}
\newcommand{\cP}{\mathcal{P}}
\newcommand{\cF}{\mathcal{F}}
\newcommand{\cH}{\mathcal{H}}
\newcommand{\cX}{\mathcal{X}}
\newcommand{\cM}{\mathcal{M}}
\newcommand{\MMD}{\mathrm{MMD}}
\newcommand{\IdInit}{\mathrm{IdInit}}
\newcommand{\timerm}{\mathrm{Time}}
\newcommand{\SW}{\mathrm{SWin}}
\newcommand{\style}[1]{{\em #1}}
\title{Path Signatures on Lie Groups}
\author{\name Darrick Lee \email ldarrick@sas.upenn.edu \\
       \addr Department of Mathematics \\ University of Pennsylvania \\ Philadelphia, PA 19104, USA
       \AND
       \name Robert Ghrist \email ghrist@seas.upenn.edu \\
       \addr Departments of Mathematics and Electrical \& Systems Engineering \\ University of Pennsylvania\\ Philadelphia, PA 19104, USA}
\begin{document}

\maketitle

\begin{abstract}%
    Path signatures are powerful nonparametric tools for time series analysis, shown to form a universal and characteristic feature map for Euclidean valued time series data. We lift the theory of path signatures to the setting of Lie group valued time series, adapting these tools for time series with underlying geometric constraints. We prove that this generalized path signature is universal and characteristic. To demonstrate universality, we analyze the human action recognition problem in computer vision, using $SO(3)$ representations for the time series, providing comparable performance to other shallow learning approaches, while offering an easily interpretable feature set. We also provide a two-sample hypothesis test for Lie group-valued random walks to illustrate its characteristic property. Finally we provide algorithms and a Julia implementation of these methods. 
\end{abstract}

\begin{keywords}
  path signature, Lie groups, universal and characteristic kernels
\end{keywords}


\section{Introduction}
Time series data is ubiquitous in modern data science, and may take values in a variety of forms. Perhaps the most common is a collection of simultaneous multivariate real-valued time series $\{\gamma^i\}_{i=1}^N$, where $\gamma^i : [0,1] \rightarrow \R$. In this case, we may consider the entire collection $\gamma = (\gamma^1, \ldots, \gamma^N)$ as a path through Euclidean space, $\gamma: [0,1] \rightarrow \R^N$. The \style{path signature} is a feature set that completely characterizes such paths, and has recently been applied to several tasks in machine learning~\citep{chevyrev_primer_2016, lyons_rough_2014}. Recent work has provided the path signature with strong theoretical properties; namely that it is a universal and characteristic kernel for time series in Euclidean space $\R^N$~\citep{chevyrev_signature_2018}.

However, in many scenarios, the data may have some geometric constraints, and may be better represented by elements of a (non-Euclidean) manifold. In this case, the time-varying data can be modelled as path on such a manifold, rather than on Euclidean space. \style{Lie groups} are smooth manifolds equipped with a compatible group structure. Paths (or time series) valued in Lie groups model a number of natural phenomena, including the following.

\begin{itemize}
    \item The \style{special Euclidean group} $SE(n)$ is the Lie group of all rigid body motions in $\R^n$. The group $SE(3)$ is often used to model the position and pose of a rigid body, such as a component of a robotic arm or an element of a drone swarm, with $k$ such components or elements collectively giving rise to a path $SE(3)^k$ ~\citep{selig_lie_2004}.
    
    \item The \style{special orthogonal group} $SO(n)$ is the Lie group of all rotations in $\R^n$; this is a Lie subgroup of $SE(n)$. The Lie group $SO(3)^k$ has recently been used to represent the pose of a human by recording the relative rotations of $k$ pairs of body parts~\citep{vemulapalli_rolling_2016}. Thus, human movement can be represented as a path in $SO(3)^k$. This representation has been used in the computer vision problem of human action recognition, and Lie group methods have achieved state-of-the-art results in this domain~\citep{huang_deep_2017}. 
    

    \item The state of an oscillator may be described as an element of the \style{circle} $S^1$, and collective behavior of a network of oscillators can be describe by an element of the \style{$n$-torus}, $T^n = (S^1)^n$. The time evolution of oscillator networks can therefore be modelled as a path on $T^n$ ~\citep{strogatz_kuramoto_2000}. 

    \item The Euclidean space $\R^N$ is the simplest example of a Lie group, where the group operation is addition. The classical path signature for Euclidean space can be viewed as a special case of path signatures on Lie groups. 
\end{itemize}

In this paper, we extend path signatures to time series valued in Lie groups, and show that this extension is also a universal and characteristic kernel. 



\subsection{Contributions}

We lift the theory of path signatures for time series valued in Euclidean space to the setting of time series valued in Lie groups, restricting ourselves to the class of piecewise regular paths on Lie groups.

\begin{definition}
    Let $G$ be a Lie group. A path $\gamma: [a,b] \rightarrow G$ is \style{regular} if $\gamma'_t$ is continuous and nonvanishing on the entire interval $[a,b]$. Such a path is \style{piecewise regular} if there exists a partition $a = t_0 < t_1 < \ldots < t_n = b$ such that $\gamma$ is regular on each open subinterval $(t_i, t_{i+1})$ for all $i$. The \style{pathspace} -- the space of all piecewise regular paths on the unit interval, $\gamma: [0,1] \rightarrow G$ -- will be denoted $PG$.
\end{definition}

Let $G$ be a Lie group of dimension $N$, and let $\fg$ be its Lie algebra (the tangent space at the identity). We denote the underlying vector space of $\fg$ by $\overline{\fg} \cong \R^N$. The \style{path signature} is a function on paths,
\begin{align*}
    S: PG \rightarrow T((\overline{\fg})) ,
\end{align*}
valued in a formal power series of tensors, $T((\overline{\fg}))$, where we may view the coefficients as descriptors (or features) of the underlying path (or time series). Path signatures for general manifolds were originally defined by~\cite{chen_integration_1958}, but not in a manner conducive to data analysis. Path signatures for Lie group valued data have been previously considered by~\cite{celledoni_signatures_2019} in a preliminary empirical study, showing promising qualitative classification results, but extensions of theoretical results and detailed quantitative comparisons were not provided. This paper gives a computationally clean derivation for path signatures on Lie groups tuned for use in data analysis, and provides a thorough discussion of its theoretical properties in the context of kernel methods. \medskip

Our generalization is designed to be analogous to the Euclidean case as much as possible, for ease of applicability. For example, the definition of the path signature for $\gamma: [0,1] \rightarrow G$ depends only on the derivative $\gamma':[0,1] \rightarrow \fg$. We exploit one of the key properties of Lie groups --- that tangent vectors at a point correspond to elements of its Lie algebra $\fg$, a vector space. This will permit a signature construction making use of iterated integrals as per the Euclidean case.

In the Euclidean case $G = \R^N$, the Lie group is often conflated with its Lie algebra $\fr = \R^N$, and the fact that the integration is performed in the Lie algebra is often not made. By clarifying and emphasizing this point, the generalization to Lie groups illuminates understanding of the classical Euclidean case. \medskip

From a machine learning perspective, the basic properties of the path signature as a feature map provide several benefits.

\begin{itemize}
    \item The signature is a feature set for a path as a whole, and can be used to compare time series with varying numbers of time points.
    
    \item Defined as iterated line integrals, the path signature is invariant under reparametrization, and thus only depends on the \style{order} in which events occur.
    
    \item The signature is left translation invariant, meaning the signatures of paths that differ by a constant element $g \in G$ will be the same. This implies that the signature only depends on the dynamics of the time series and is unconcerned with the initial point.
    
    \item The antisymmetrization of the second degree signature tensor can be viewed as an indicator of lead-lag behavior in the time series. In the case of Lie groups, the interpretation will be considered in terms of left-invariant vector fields. 
\end{itemize}

However, the most crucial property is that the path signature fully characterizes paths up to tree-like equivalence; that is, the map $S$ is injective, up to quotienting $PG$ out by an equivalence relation. This fact is originally due to~\cite{chen_integration_1958} for the case of piecewise regular paths on Lie groups, and later generalized by~\cite{hambly_uniqueness_2010} to the case of bounded variation paths in $\R^n$. \medskip

Our main contribution is to apply this injectivity result to prove that a normalized variant of the signature, $S: PG \rightarrow T((\overline{\fg}))$, is a \style{universal} and \style{characteristic} feature map for time series in $G$, when we equip $T((\overline{\fg}))$ with the structure of a Hilbert space. This is proved in Section~\ref{ssec:universal}. This was originally shown for the Euclidean case by~\cite{chevyrev_signature_2018}. Such feature maps can be used to  two large classes of machine learning problems, in the context of kernel methods.

\begin{enumerate}
    \item \textbf{(Studying functions on $PG$)} Solving a classification problem on $PG$ can be reduced to finding a function $f : PG \rightarrow \R$ such that the level set $f = 0$ provides the decision boundary. The \style{universality} of the normalized signature map states that any continuous bounded function $f: PG \rightarrow \R$ can be approximated using a linear functional $f(\cdot) \approx \langle \ell, \tilde{S}(\cdot)\rangle$. This allows us to reduce a nonlinear optimization problem into a linear one, greatly reducing the complexity.
    
    \item \textbf{(Studying measures on $PG$)} Two-sample hypothesis testing on $PG$ requires the computation of a set of statistics that is rich enough to distinguish any two probability measures on $PG$. The \style{characteristicness} of the normalized signature map states that the kernel mean embedding (KME) is injective with respect to the normalized signature
    \begin{align*}
        \overline{\Phi}: \cM(PG) \rightarrow T((\overline{\fg})), \quad \overline{\Phi}(\mu) = \E_\mu[S],
    \end{align*}
    where $\cM(PG)$ denotes all finite regular Borel measures on $PG$, and $S$ is appropriately normalized. This allows us to consider probability measures as elements of a linear space; furthermore, the norm induced by the Hilbert space structure coincides with the maximum mean discrepancy (MMD) between measures.
\end{enumerate}

We perform two experiments that demonstrate the efficacy of the path signature for these two classes of problems. First, we consider the computer vision problem of human action recognition in Section~\ref{ssec:g3d}. We show that the path signature method is much easier to use than shallow learning methods previously applied to this problem~\citep{vemulapalli_human_2014, vemulapalli_rolling_2016} while providing comparable results. Second, in Section~\ref{ssec:randomwalk}, we consider a hypothesis testing problem for simulated random walks on the Lie group $SO(3)$. Here, we show that the Lie group valued path signature vastly outperforms the Euclidean path signature. \medskip

Along the way, we will establish extensions of other properties of the path signatures to Lie groups and discuss several concepts related to path signatures and data analysis on Lie groups more broadly. A summary of these contributions is given below. 

\begin{enumerate}
    \item We provide a detailed exposition of Lie group valued time series, and discuss a notion of scaling for such time series in Section~\ref{ssec:paths_liegroups}. Scaling of data is sometimes required when the data needs to be normalized, and we discuss how scaling affects the path signature in Section~\ref{ssec:pathsignature}. We also discuss the continuous interpretation of discrete time series on Lie groups in Section~\ref{ssec:discretets}.
    
    \item For $G$ an $N$-dimensional Lie group, we give a signature-preserving bijection between $PG$ and $P\R^N$ in Section~\ref{ssec:relationship_g_rn}, which provides a Euclidean representation of Lie group valued time series. This bijection allows the exportation of Euclidean data analysis tools to Lie group valued data. With the metric introduced in Section~\ref{ssec:stability}, this bijection is an isometry.
    
    \item 
    It is well known that the Euclidean path signature is equivariant with respect to linear transformations~\citep{friz_multidimensional_2010}. We show that path signatures are equivariant under Lie group homomorphisms in general. Namely, given a homomorphism of Lie groups $F: G_1 \rightarrow G_2$, where $\fg_1$ and $\fg_2$ are the respective Lie algebras, we define the action of this homomorphism on the tensor algebra $F_*: T((\overline{\fg}_1)) \rightarrow T((\overline{\fg}_2))$, and show in Section~\ref{ssec:equivariance} that
    \begin{align*}
        S(F \gamma) = F_* S(\gamma)
    \end{align*}
    for all $\gamma \in PG$.
    
    \item An important feature of the path signature is the interpretability of lower level signature terms. We discuss the extension of the lead-lag interpretation of second level signature terms for Euclidean paths, as well as a topological interpretation of the first level signature terms for abelian Lie groups in Section~\ref{ssec:topological}.
    
    \item Path transformations, such as appending the time parameter or using a sliding window, are often used as a preprocessing step for Euclidean path signatures~\citep{chevyrev_primer_2016}. We discuss these transformations in the context of breaking reparametrization or left-translation invariance in Section~\ref{ssec:pathtrans}. Empirical studies~\citep{fermanian_embedding_2019} have shown that the sliding window transformation (also called the lead-lag transformation) provides good classification results, despite the lack of a theoretical explanation. We propose one explanation, which is that the sliding window transformation breaks left-translation invariance, and we provide empirical evidence in the experiments in Section~\ref{ssec:g3d}.
    
    \item We provide both algorithmic details and a Julia package for the computation of path signatures valued in Lie groups, which can be found at \url{https://github.com/ldarrick/PathSignatures}. For details, see Appendix~\ref{apx:implementation}.
\end{enumerate}



\subsection{Previous and related work}
The concept of path signatures is relatively new in data science and machine learning~\citep{lyons_rough_2014,chevyrev_primer_2016, giusti_iterated_2020}, but has deep roots in topology and geometry.  Chen originally defined the path signature for piecewise regular paths on manifolds and proved several basic properties in a sequence of papers~\citep{chen_iterated_1954, chen_integration_1957, chen_integration_1958}. He later studied the geometry and topology of path spaces and loop spaces by constructing a rational cochain model of these spaces, in which path signatures constitute $0$-cochains~\citep{chen_iterated_1977}.

\cite{lyons_differential_1998} developed the concept of the path signature in a different direction, using the path signature as a construction to lift bounded variation paths on $\R^N$ to paths of power series of tensors $T((\R^N))$. This initiated the study of \style{rough paths}, which can be thought of as a generalization of the path signature to highly irregular paths. This theory was then used to study stochastic processes and stochastic differential equations~\citep{lyons_system_2007, lyons_differential_2007, friz_multidimensional_2010}.

Within machine learning, path signatures have been used to study real-valued time series data in a variety of settings. Examples can be found in the study of financial time series~\citep{gyurko_extracting_2013, lyons_feature_2014}, handwritten character recognition~\citep{yang_deepwriterid_2016}, human action recognition using position data~\citep{yang_developing_2019}, identifying psychological or neurological disorders~\citep{moore_using_2019, zimmerman_dissociating_2018, arribas_signature-based_2017} and featurizing the output of persistent homology in topological data analysis~\citep{chevyrev_persistence_2020}. Additionally, experiments with path signatures on Lie groups have previously been performed~\citep{celledoni_signatures_2019}, though theoretical results were not provided, and thus suggests further study.

The theoretical aspects of path signatures in the context of kernel methods were developed in~\cite{kiraly_kernels_2019} and~\cite{chevyrev_signature_2018}. The present paper is largely inspired by these two papers. The concept of using the path signature as a kernel for time series was first proposed in~\cite{kiraly_kernels_2019}, and efficient algorithms for computing the kernel were developed. The path signature for Euclidean space was shown to be a universal and characteristic feature map in~\cite{chevyrev_signature_2018}. This exploits the recently formalized duality between universal and characteristic kernels in~\cite{simon-gabriel_kernel_2018}. 

It is well known that Euclidean path signatures are translation invariant, and we will show that Lie group path signatures are left translation invariant. \cite{diehl_invariants_2019} has considered the related problem of determining the Euclidean path signature terms which are invariant under some matrix Lie group action. 

\medskip

We begin in Section~\ref{sec:paths_on_lie_groups} by reviewing basic facts on Lie groups and Lie algebras, and provide an exposition on continuous and discrete time series on Lie groups. We then define the path signature for Lie groups in Section~\ref{sec:pathsignature_on_lie_groups}, and discuss the bijection between $PG$ and $P\R^N$, the equivariance of the path signature, detecting lead-lag behavior in time series, and path transformations. In Section~\ref{sec:kernel}, we provide a brief overview of kernel methods and prove our main result, which shows that the path signature kernel is universal and characteristic. Finally, in Section~\ref{sec:experiments}, we apply the path signature on Lie groups to a human action classification problem and a hypothesis testing problem involving random walks on $SO(3)$.

\subsection{Notation}
Throughout this paper, we will denote the time parameter for a path $\gamma: [0,1] \rightarrow G$ using a subscript $t$, meaning $\gamma_t \coloneqq \gamma(t)$. Derivatives are shown using the prime notation, as in $\gamma'_t \coloneqq \frac{d\gamma}{dt} (t)$. If we have a path in Euclidean space $\alpha:[0,1] \rightarrow \R^N$, we will use superscripts to represent the components, such as $\alpha = (\alpha^1, \alpha^2, \ldots, \alpha^N)$. If $G$ is a Lie group, we will use $\fg$ to denote its Lie algebra and use $\bar{\fg}$ to be the underlying vector space of $\fg$ (forgetting the Lie bracket structure). 

Continuous paths will often be denoted using the lowercase Greek symbols $\alpha, \beta, \gamma$, and the space of all piecewise regular paths in $G$ is denoted $PG$. For $T \in \N$, we let $[T] = \{1, \ldots, T\}$ denote the finite set of integers up to $T$. Discrete time series will be distinguished using the hat notation $\hat{\gamma}:[T] \rightarrow G$, and the space of all discrete time series in $G$ will be denoted $\hat{P}G$.  

There are also several parameters that will be used consistently throughout the paper. Unless otherwise specified, we reserve the following symbols for the given meaning. 
\begin{itemize}
    \item $N$ is the dimension of the Lie group $G$ that paths take values in;
    \item $T+1$ is the length of a discrete time series (so that the discrete derivative will be of length $T$);
    \item $M$ is the level of the truncated signature.
\end{itemize}


\section{Lie groups, paths, and time series}
\label{sec:paths_on_lie_groups}
We begin this section by recalling several basic facts about Lie groups~\cite{alexandrino_lie_2015}, followed by paths on Lie groups and the interpretation of sampled time series on Lie groups, stressing the differences from sampled time series on Euclidean space.


\subsection{A review of Lie groups}
Recall that a \style{Lie group} $G$ is a smooth manifold with a group structure such that the multiplication and inversion maps are both smooth. Let $g_0 \in G$. The \style{left translation map by $g_0$}, written as $L_{g_0}: G \rightarrow G$, is defined to be $L_{g_0}(g) = g_0 g$. The \style{right translation map} $R_{g_0}:G \rightarrow G$ is defined analogously. This induces a mapping on tangent spaces $L_{g_0*}: T_g G \rightarrow T_{g_0 g} G$. A vector field $X$ on $G$ is called \style{left-invariant} if
\begin{equation*}
    L_{g_0*} X(g) = X(L_{g_0}g) = X(g_0 g)
\end{equation*}
for all $g_0, g \in G$. This implies that all left-invariant vector fields $X$ are defined by their value at the identity $e \in G$,
\begin{equation*}
    X(g) = L_{g*} X(e),
\end{equation*}
and thus, we obtain a one-to-one correspondence between left-invariant vector fields and the tangent space at the identity, which we denote by $\fg \coloneqq T_e G$. Vector fields act on smooth functions $f: G \rightarrow \R$, and we define an operation of left-invariant vector fields $X$ and $Y$ by
\begin{align*}
    [X,Y](f) \coloneqq X(Y(f)) - Y(X(f)),
\end{align*}
where $[X,Y]$ is also left-invariant. This provides $\fg$ with the structure of a Lie algebra, where the Lie bracket $[\cdot, \cdot] : \fg \times \fg \rightarrow \fg$ is a bilinear mapping such that for all $ X,Y,Z \in \fg$
\begin{align*}
    [X, Y] & = -[X,Y] \\
    [X,[Y,Z]] + [Z,[X,Y]] + [Y,[Z,X]] & = 0.
\end{align*}

Similarly, left translation induces a map $L_{g_0}^* : T_{g_0 g}^* G \rightarrow T_g^* G$ on cotangent spaces. A $1$-form $\omega \in T^* G$ is called \style{left-invariant} if
\begin{equation*}
    L_{g_0}^*\omega(g_0 g) 
    = L_{g_0}^*\omega(L_{g_0} g)
    = \omega(g)
\end{equation*}
for all $g, g_0 \in G$. Again, we obtain a correspondence between left-invariant 1-forms and the cotangent space at the identity via the property
\begin{equation*}
    \omega(g) = L_{g^{-1}}^*\omega(e).
\end{equation*}
Thus, we may identify the left-invariant 1-forms by the dual of the Lie algebra, $\fg^*$.


\begin{remark}
We can think of tangent vectors (derivatives) of a path $\gamma: [0,T] \rightarrow G$ as elements of the Lie algebra $\fg$ in two ways. First, for a tangent vector $v \in T_g G$, we can compute the pushforward the tangent vector along the left multiplication map $L_{g^{-1}}v \in T_e G = \fg$. Second, a basis of the Lie algebra provides a global frame for $G$, meaning, it provides a basis for $T_g G$ for all $g$. By considering $v \in T_g G$ in terms of this basis, we may also think of $v$ as an element of $\fg$. 

In summary, the structure of the Lie group allows us to consider tangent vectors at \style{any} point on $G$ using a single vector space: a fact repeatedly used throughout this paper. 
\end{remark}

Given a left-invariant vector field $X \in \fg$, there exists a unique 1-parameter subgroup $\rho_X : \R \rightarrow G$ such that $\rho_X(0) = e$ and $\rho_X'(0) = X(e)$. This is defined by the integral curve of $X$ which passes through the identity at $t=0$. 

\begin{definition}
    The \style{Lie exponential map} of $G$ is defined as
    \begin{align*}
        \exp: \fg \rightarrow G, \quad \exp(X) \coloneqq \rho_X(1),
    \end{align*}
    where $\rho_X$ is the 1-parameter subgroup defined above. 
\end{definition}

This exponential map provides a way to move between a Lie group and its Lie algebra.

\begin{proposition}
    The exponential map $\exp: \fg \rightarrow G$ is smooth and $d(\exp)_0 = \mathrm{id}$. Thus, $\exp$ is a diffeomorphism between an open neighborhood of the origin $0 \in \fg$ and an open neighborhood of the identity $e \in G$.
\end{proposition}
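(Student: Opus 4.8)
The plan is to prove the two assertions in sequence, with the second following from the first via the inverse function theorem. First I would establish that $\exp$ is smooth. The natural route is to realize $\exp$ as a time-$1$ flow of a vector field. Concretely, consider the product manifold $G \times \fg$ and the vector field $\Xi$ on it defined at $(g, X)$ by $(L_{g*}X(e), 0)$ — that is, in the $G$-direction it is the left-invariant vector field determined by $X$, and in the $\fg$-direction it is zero. This $\Xi$ is smooth because left translation is smooth and the assignment $X \mapsto L_{g*}X(e)$ depends smoothly on both $g$ and $X$ (it is essentially the differential of multiplication). By smooth dependence of solutions of ODEs on initial conditions, the flow $\Phi_t$ of $\Xi$ is smooth in $(t, g, X)$ on its domain; one checks that the flow starting at $(e, X)$ stays at second coordinate $X$ and its first coordinate traces out the integral curve of the left-invariant field $X$ through $e$, which is exactly $\rho_X$. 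Hence $\rho_X(t)$ is jointly smooth in $(t, X)$, and in particular $\exp(X) = \rho_X(1)$ is smooth in $X$. A small point requiring care: a priori the flow need only exist for small time, so one must argue the integral curves of left-invariant vector fields are complete — this follows because $\rho_X(t+s) = \rho_X(t)\rho_X(s)$ (both sides solve the same ODE with the same initial condition, using left-invariance), so a locally defined solution extends to all of $\R$ by the group law.

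Next I would compute the differential $d(\exp)_0 : \fg \to T_e G = \fg$. Using the homogeneity property $\rho_{sX}(t) = \rho_X(st)$ — again a consequence of uniqueness of integral curves, since $t \mapsto \rho_X(st)$ has derivative $sX$ at $t=0$ and satisfies the defining ODE for the field $sX$ — we get $\exp(sX) = \rho_{sX}(1) = \rho_X(s)$. Therefore
\begin{align*}
    d(\exp)_0(X) = \left.\frac{d}{ds}\right|_{s=0} \exp(sX) = \left.\frac{d}{ds}\right|_{s=0} \rho_X(s) = \rho_X'(0) = X(e),
\end{align*}
which under the identification $\fg = T_e G$ is precisely the identity map.

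Finally, since $\exp$ is smooth and $d(\exp)_0 = \mathrm{id}$ is a linear isomorphism, the inverse function theorem applies at $0 \in \fg$: there is an open neighborhood $U$ of $0$ in $\fg$ and an open neighborhood $V$ of $e = \exp(0)$ in $G$ such that $\exp|_U : U \to V$ is a diffeomorphism.

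The main obstacle is the first step — setting up the smoothness of $\exp$ rigorously requires either the auxiliary vector field $\Xi$ on $G \times \fg$ together with smooth dependence of ODE flows on parameters, or an appeal to a general theorem on smoothness of flows of smooth vector fields with parameters; packaging this cleanly (and handling completeness via the group law) is where the real content lies. The differential computation and the concluding inverse function theorem argument are then routine.
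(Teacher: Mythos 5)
Your proof is correct and complete: the paper states this proposition as standard background (citing a Lie theory reference) and does not supply a proof, so there is nothing to compare against. Your argument — smoothness of $\exp$ via the flow of the auxiliary vector field on $G \times \fg$ together with completeness from the one-parameter group law, the homogeneity identity $\rho_{sX}(t) = \rho_X(st)$ to compute $d(\exp)_0 = \mathrm{id}$, and the inverse function theorem to conclude — is exactly the standard textbook proof, and each step, including the care taken over completeness and joint smoothness in $(t,X)$, is sound.
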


Thus, if elements are near the origin, we can define an inverse map. 

\begin{definition}
    Suppose $U \subset \fg$ is a neighborhood of the origin such that the exponential map is a diffeomorphism. Let $V = \exp(U)$. The \style{logarithm map} on $V$ is defined to be
    \begin{align*}
        \log: V \rightarrow \fg, \quad \log(g) \coloneqq \exp^{-1}(g).
    \end{align*}
\end{definition}

A homomorphism of Lie groups $F: G \rightarrow H$ is a smooth map which is also a group homomorphism, and a homomorphism of Lie algebras $\phi: \fg \rightarrow \fh$ is a linear map that preserves the Lie bracket $F([X,Y]) = [F(X), F(Y)]$ for all $X, Y \in \fg$. A Lie group homomorphism $F: G \rightarrow H$ induces a Lie algebra homomorphism $F_*: \fg \rightarrow \fh$ between the respective Lie algebras by the induced map between the tangent spaces at the identity $F_*: T_e G \rightarrow T_e H$. 


\begin{example}
    The \style{special orthogonal group} $SO(3)$ --- orientation-preserving rotations of $\R^3$ --- will be the running example used throughout this paper. This is a matrix Lie group and can be explicitly described as the space of all $3 \times 3$ orthogonal matrices ($AA^\intercal = A^\intercal A = I$) with determinant $+1$. The Lie algebra of $SO(3)$ is $\fso(3)$, which consists of all $3 \times 3$ skew-symmetric matrices ($B = -B^\intercal$). An explicit basis for $\fso(3)$ is
    \begin{align*}
        e_1 = \begin{pmatrix} 0 & -1 & 0 \\ 1 & 0 & 0 \\ 0 & 0 & 0 \end{pmatrix}, \quad 
        e_2 = \begin{pmatrix} 0 & 0 & 1 \\ 0 & 0 & 0 \\ -1 & 0 & 0 \end{pmatrix}, \quad 
        e_3 = \begin{pmatrix} 0 & 0 & 0 \\ 0 & 0 & -1 \\ 0 & 1 & 0 \end{pmatrix}.
    \end{align*}
    
    We will denote the duals of these basis vectors to be $\omega_i = e_i \in \fg^*$. For all matrix Lie groups, the Lie exponential and logarithm are simply the matrix exponential and logarithm. Suppose $\theta \in \R$. The exponential map in these three basis directions gives us
    \begin{align*}
        \exp(\theta e_1) &= \begin{pmatrix} \cos\theta & -\sin\theta & 0 \\ \sin\theta & \cos\theta & 0 \\ 0 & 0 & 1 \end{pmatrix}, \\
        \exp(\theta e_2) &= \begin{pmatrix} \cos\theta & 0 & \sin\theta \\ 0 & 1 & 0 \\ -\sin\theta & 0 & \cos\theta \end{pmatrix}, \\
        \exp(\theta e_3)  &= \begin{pmatrix} 1 & 0 & 0 \\ 0 & \cos\theta & -\sin\theta \\ 0 & \sin\theta & \cos\theta \end{pmatrix}.
    \end{align*}
    These are exactly the rotation matrices about the z, y, and x axes respectively. Therefore, we may think of the basis vectors $e_i$ of the Lie algebra as infinitesimal rotations in the respective directions. In particular, given a path $\gamma \in P(SO(3))$, the value $\omega_i(\gamma'_t)$ corresponds to the infinitesimal rotation of $\gamma$ at time $t$ in the direction of $e_i$. If we integrate this over the domain of the path,
    \begin{align*}
        \int_0^1 \omega_i(\gamma'_t) dt,
    \end{align*}
    we obtain the cumulative rotation of $\gamma$ in the direction of $e_i$ over the unit interval. This interpretation will be important to keep in mind when we define the path signature in Section~\ref{sec:pathsignature_on_lie_groups}.
\end{example}

Finally, we briefly discuss the Riemannian structure of Lie groups. Recall that a \style{Riemannian metric} on a smooth manifold $M$ is the assignment of an inner product $\langle \cdot , \cdot \rangle_p$ to the tangent space $T_p M$ for every point $p \in M$, which varies smoothly. Specifically, this means that if $X, Y$ are smooth vector fields defined on a neighborhood of $p$, then the map $p \mapsto \langle X_p, Y_p\rangle_p$ is smooth. On a Lie group, we often want a Riemannian metric that is compatible with the algebraic structure of $G$. A Riemannian metric is \style{left-invariant} if
\begin{align*}
    \langle X, Y \rangle_g = \langle L_{h*} X, L_{h*} Y\rangle_{hg}
\end{align*}
for all $g, h \in G$ and $X, Y \in T_g G$, and a right-invariant Riemannian metric is defined similarly. Such left-invariant metrics can simply be defined on the tangent space at the identity. 

\begin{proposition}
    There is a one-to-one correspondence between left-invariant metrics on a Lie group $G$ and inner products on its Lie algebra $\fg$. 
\end{proposition}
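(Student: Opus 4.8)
The plan is to exhibit two explicit mutually inverse maps between the two sets in question. Write $\mathcal{L}(G)$ for the set of left-invariant Riemannian metrics on $G$ and $\mathcal{I}(\fg)$ for the set of inner products on the vector space $\overline{\fg} = T_e G$. Define the \emph{restriction map} $\Phi : \mathcal{L}(G) \to \mathcal{I}(\fg)$ by sending a left-invariant metric $\langle\cdot,\cdot\rangle$ to the inner product $\langle\cdot,\cdot\rangle_e$ it assigns to $T_e G$. Conversely, define the \emph{extension map} $\Psi : \mathcal{I}(\fg) \to \mathcal{L}(G)$ by declaring, for an inner product $b$ on $\fg$, a point $g \in G$, and tangent vectors $X, Y \in T_g G$,
\[
    \langle X, Y \rangle_g^{\Psi(b)} \coloneqq b\big(L_{g^{-1}*} X,\, L_{g^{-1}*} Y\big),
\]
which makes sense because $L_{g^{-1}*} : T_g G \to T_{g^{-1}g} G = \fg$ is a linear isomorphism.

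Next I would check that $\Psi(b)$ really is a left-invariant Riemannian metric. Bilinearity, symmetry, and positive-definiteness of $\langle\cdot,\cdot\rangle_g^{\Psi(b)}$ at each fixed $g$ are inherited from $b$ precisely because $L_{g^{-1}*}$ is a linear isomorphism. For smoothness I would pass to the global frame of left-invariant vector fields: fix a basis $X_1,\dots,X_N$ of $\fg$ and extend each to the left-invariant vector field $X_i(g) = L_{g*}X_i$. By functoriality of the pushforward, $L_h \circ L_k = L_{hk}$ gives $L_{h*}\circ L_{k*} = L_{hk*}$, hence $L_{g^{-1}*}\circ L_{g*} = L_{e*} = \mathrm{id}$, so
\[
    \langle X_i, X_j \rangle_g^{\Psi(b)} = b(X_i, X_j),
\]
a constant function of $g$ and therefore smooth; writing two arbitrary smooth vector fields in this frame with smooth coefficient functions and expanding bilinearly shows $p \mapsto \langle \cdot,\cdot\rangle_p^{\Psi(b)}$ is smooth. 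Left-invariance follows from the same functoriality: since $(hg)^{-1}h = g^{-1}$, we have $L_{(hg)^{-1}*}\circ L_{h*} = L_{g^{-1}*}$, and thus for $X,Y \in T_g G$,
\begin{align*}
    \langle L_{h*}X, L_{h*}Y \rangle_{hg}^{\Psi(b)}
    &= b\big(L_{(hg)^{-1}*}L_{h*}X,\, L_{(hg)^{-1}*}L_{h*}Y\big) \\
    &= b\big(L_{g^{-1}*}X,\, L_{g^{-1}*}Y\big) = \langle X, Y \rangle_g^{\Psi(b)}.
\end{align*}

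Finally I would verify that $\Phi$ and $\Psi$ are mutually inverse. For $\Phi\circ\Psi = \mathrm{id}$: evaluating $\Psi(b)$ at $g = e$ and using $L_{e*} = \mathrm{id}$ gives $\langle X,Y\rangle_e^{\Psi(b)} = b(X,Y)$, so $\Phi(\Psi(b)) = b$. For $\Psi\circ\Phi = \mathrm{id}$: given a left-invariant metric $\langle\cdot,\cdot\rangle$ with restriction $\langle\cdot,\cdot\rangle_e$, apply its defining left-invariance property with $h = g^{-1}$ to $X,Y \in T_g G$ to obtain $\langle X,Y\rangle_g = \langle L_{g^{-1}*}X, L_{g^{-1}*}Y\rangle_e$, which is exactly the value of $\Psi(\Phi(\langle\cdot,\cdot\rangle))$ at $g$. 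Hence $\Phi$ is a bijection with inverse $\Psi$, which is the claimed one-to-one correspondence.

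I expect the only nonformal step to be the smoothness check for $\Psi(b)$; everything else is linear algebra plus the identity $L_{h*}\circ L_{k*} = L_{hk*}$. The smoothness point is exactly where the manifold structure (not merely the group structure) is used, and it is handled cleanly by observing that in the left-invariant global frame the metric has constant coefficients.
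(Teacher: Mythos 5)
Your proof is correct and fleshes out exactly the correspondence the paper indicates (the paper states this proposition without proof, remarking only that evaluating $\langle\,\cdot\,,\cdot\,\rangle_g$ amounts to pulling tangent vectors back to the identity and using the chosen inner product on $\fg$ — which is precisely your extension map $\Psi$). The verification of smoothness via the left-invariant frame and of the mutual-inverse property is the standard argument and is carried out correctly.
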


Namely, evaluating the inner product $\langle \, , \, \rangle_g$ simply corresponds to viewing the tangent vectors as elements of the identity, and then evaluating the chosen inner product on $\fg$. We will assume that all Riemannian metrics under discussion are left-invariant, and simply call them Riemannian metrics. 


\subsection{Paths on Lie groups}
\label{ssec:paths_liegroups}

A Riemannian metric $\langle \cdot , \cdot \rangle$, where we now omit the subscript $g$ since it is left-invariant,  provides a notion of length for piecewise regular paths on $G$. Suppose $\gamma \in PG$. Then the \style{length} of $\gamma$ is defined to be 
\begin{equation*}
    \ell(\gamma) \coloneqq \int_0^1 \sqrt{\langle \gamma_t', \gamma_t'\rangle} dt.
\end{equation*}

This allows us to define a metric on the Lie group. If $g_1, g_2 \in G$, then the distance between $g$ and $h$ is defined to be the infimum length of paths connecting $g_1$ and $g_2$,
\begin{equation*}
    d(g_1, g_2) \coloneqq \inf \left \{ \ell(\gamma) \, : \, \gamma \in PG, \, \gamma_0 = g_1, \, \gamma_1 = g_2 \right\}.
\end{equation*}
Note that since the Riemannian metric is left invariant, this metric is also left invariant,
\begin{equation*}
    d(h g_1, h g_2) = d(g_1, g_2),
\end{equation*}
for all $h \in G$. The more familiar notion of length in the path signature literature is the 1-variation of a path.

\begin{definition}
    Suppose $(X, d_X)$ is a metric space and let $\gamma \in PX$. The \style{1-variation of $\gamma$} on $[0,1]$ is defined as
    \begin{equation}
        |\gamma|_{1-var} = \sup_{(t_i)} \sum_i d_X(\gamma_{t_i}, \gamma_{t_{i+1}}),
    \end{equation}
    where the sum is taken over all partitions $0 = t_1 \leq \ldots \leq t_n = 1$ of $[0,1]$. 
\end{definition}

Using the metric induced by the Riemannian metric, we may consider the 1-variation length of paths in $G$. Under the piecewise regular hypothesis, these two lengths are equivalent.

\begin{lemma}[\cite{burtscher_length_2015}]
    Let $\gamma \in PG$. We have $\ell(\gamma) = |\gamma|_{1-var}$.
\end{lemma}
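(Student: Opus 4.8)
The plan is to establish the two inequalities $|\gamma|_{1-var} \le \ell(\gamma)$ and $\ell(\gamma) \le |\gamma|_{1-var}$ separately; the first is elementary, while the second contains all the analytic content. Throughout write $\|v\| := \sqrt{\langle v,v\rangle}$, and recall that since $\gamma$ is piecewise regular, it is regular on the interiors of a finite partition $0 = s_0 < s_1 < \cdots < s_k = 1$.

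\emph{Easy direction.} Fix any partition $0 = t_0 \le \cdots \le t_n = 1$. For each $i$, restricting $\gamma$ to $[t_i, t_{i+1}]$ and reparametrizing affinely onto $[0,1]$ produces a path in $PG$ joining $\gamma_{t_i}$ to $\gamma_{t_{i+1}}$, so by the definition of $d$ and the reparametrization invariance of the length integral, $d(\gamma_{t_i}, \gamma_{t_{i+1}}) \le \ell(\gamma|_{[t_i,t_{i+1}]}) = \int_{t_i}^{t_{i+1}} \|\gamma'_t\|\, dt$. Summing over $i$ and taking the supremum over partitions gives $|\gamma|_{1-var} \le \ell(\gamma)$. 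The identical argument on a subinterval $[s,t]$ gives the interval version $|\gamma|_{1-var,[s,t]} \le \int_s^t \|\gamma'_u\|\, du$, used below.

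\emph{The local estimate.} The heart of the reverse inequality is the claim that at every $t$ in the interior of a regularity subinterval,
\[
    \lim_{h \to 0^+} \frac{d(\gamma_t, \gamma_{t+h})}{h} = \|\gamma'_t\|.
\]
The bound $\le$ follows from the easy direction applied to $\gamma|_{[t,t+h]}$ together with continuity of $\|\gamma'\|$. For $\ge$, I would use left-invariance of $d$ to reduce to a neighborhood of the identity: $d(\gamma_t, \gamma_{t+h}) = d(e,\, \gamma_t^{-1}\gamma_{t+h})$, where $h \mapsto \gamma_t^{-1}\gamma_{t+h}$ is a $C^1$ curve through $e$ with velocity $\xi := (L_{\gamma_t^{-1}})_* \gamma'_t \in \fg$ and $\|\xi\| = \|\gamma'_t\|$ by left-invariance of the metric. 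In the chart given by $\log$, the Riemannian metric tensor is continuous and coincides at the origin with the chosen inner product on $\fg$, which forces $d(e,g) = \|\log g\|\,(1 + o(1))$ as $g \to e$ (in this chart $\|\log g\|$ is the Euclidean norm of the coordinate, and $d$ is asymptotic to it since the metric is normalized at the origin). Since $d(\log)_e = \mathrm{id}$ we have $\log(\gamma_t^{-1}\gamma_{t+h}) = h\xi + o(h)$, hence $d(\gamma_t, \gamma_{t+h}) = \|h\xi + o(h)\|(1+o(1)) = h\|\gamma'_t\| + o(h)$, giving the limit. (Alternatively one may use the Riemannian exponential map at $\gamma_t$ together with the Gauss lemma, without invoking the group structure.)

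\emph{Conclusion.} Set $V(t) := |\gamma|_{1-var}$ on $[0,t]$. By additivity of $1$-variation over subintervals, $V$ is non-decreasing and $V(t) - V(s) = |\gamma|_{1-var,[s,t]} \le \int_s^t \|\gamma'_u\|\,du$ for $s \le t$; being dominated by the absolutely continuous function $t \mapsto \int_0^t \|\gamma'_u\|\,du$, the monotone function $V$ is itself absolutely continuous, and $V'(t) \le \|\gamma'_t\|$ for a.e.\ $t$. On the other hand $V(t+h) - V(t) \ge |\gamma|_{1-var,[t,t+h]} \ge d(\gamma_t, \gamma_{t+h})$, so the local estimate gives $V'(t) \ge \|\gamma'_t\|$ at every $t$ off the finite singular set, hence a.e. Therefore $V' = \|\gamma'\|$ a.e., and since $V$ is absolutely continuous with $V(0) = 0$,
\[
    |\gamma|_{1-var} = V(1) = \int_0^1 V'(t)\,dt = \int_0^1 \|\gamma'_t\|\,dt = \ell(\gamma).
\]
The main obstacle is the local estimate — precisely, showing that the geodesic distance is infinitesimally Euclidean near $e$, i.e.\ $d(e,g) = \|\log g\|(1+o(1))$; once this is in hand, the remainder is routine bookkeeping with monotone and absolutely continuous functions of one variable.
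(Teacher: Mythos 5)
The paper does not prove this lemma at all: it is quoted verbatim from \cite{burtscher_length_2015}, so there is no in-paper argument to compare against. Judged on its own, your proof is correct and is essentially the standard argument behind Burtscher's result, specialized to the Lie group setting. The easy direction and the concluding bookkeeping (additivity of $1$-variation over adjacent subintervals, absolute continuity of $V$ by domination, and the a.e.\ identification $V'=\|\gamma'\|$) are all sound. The one step you should spell out more carefully is the lower bound in the local estimate $d(e,g)=\|\log g\|(1+o(1))$: continuity of the metric tensor in the $\log$ chart gives $(1-\epsilon)\|v\|^2\le g_x(v,v)\le(1+\epsilon)\|v\|^2$ on a small coordinate ball of radius $r$, which bounds lengths of curves \emph{staying in that ball} from below by $(1-\epsilon)^{1/2}$ times their Euclidean length; but the infimum defining $d(e,g)$ is over all paths, so you must also observe that any competitor leaving the ball already has length at least $(1-\epsilon)^{1/2}r$, which exceeds $(1+\epsilon)^{1/2}\|\log g\|$ once $g$ is close enough to $e$. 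With that sentence added, the local estimate is complete. It is also worth noting what the group structure buys you over the general Riemannian case treated in the cited reference: left-invariance of both the metric and $d$ lets you verify the infinitesimal comparison in a single chart at $e$, rather than at every point along the (compact image of the) path with uniformity on compacta, which is the main technical burden in the general manifold version.
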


At this point, in the case of paths on Euclidean space, we may use the $1$-variation to define a metric on $P\R^N_0$, which are the paths which start at the origin. Given a Lie group $G$ with a left-invariant Riemannian metric, we could follow the same procedure to obtain a metric space structure on $PG_e$. However, this is not the metric space structure on $PG$ that is the most compatible with the path signature. We will defer this discussion until Section~\ref{ssec:stability}.\medskip

The path space $P\R^N$ is endowed with a vector space structure since $\R^N$ itself is a vector space. Similarly, we can endow $PG$ with a group structure by pointwise multiplication, where the identity is the constant path at the identity, and the inverse to a path $\gamma \in PG$ is the pointwise inverse. However, we are missing a notion of scaling for paths in $PG$, and such an operation is important to have in machine learning, since algorithms may require normalization of data. Such a scaling is obtained by proving a correspondence between paths in $G$ and paths in $\fg$, and then transferring the scaling operation from $\fg$ to $G$. \medskip

This is done by considering paths on $G$ from the point of view of differential equations. We have the following existence and uniqueness theorem for first order ordinary differential equations. Let $\overline{P}\fg$ denote the space of piecewise continuous paths $\gamma: [0,1] \rightarrow \fg$ which are right continuous, meaning $\lim_{t\downarrow t_0} \gamma_{t} = \gamma_{t_0}$.

\begin{theorem}
\label{thm:euode}
    Let $f \in \overline{P}\fg$, so that $f:[0,1] \rightarrow \fg$ is piecewise continuous and right continuous, where we consider elements of $\fg$ as left-invariant vector fields. Then, the solution of the first order ODE
    \begin{equation}
    \label{eq:ode}
        \gamma'_t = f_t(\gamma_t), \quad \gamma_0 = g
    \end{equation}
    exists and is unique.
\end{theorem}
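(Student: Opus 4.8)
The plan is to reduce Theorem~\ref{thm:euode} to the classical Picard--Lindel\"of/Carath\'eodory theory on each subinterval of continuity of $f$, supplemented by a global no-blow-up argument that exploits the geometry of left-invariant vector fields. First I would reduce to the case $g = e$: since each $f_t$ is left-invariant, for any fixed $h \in G$ the path $t \mapsto h\gamma_t$ satisfies $(h\gamma)'_t = L_{h*}\gamma'_t = L_{h*} f_t(\gamma_t) = f_t(h\gamma_t)$, where the last equality is exactly left-invariance of the vector field; hence a solution with initial condition $e$ produces, by left translation, a solution with initial condition $g$, and vice versa, so it suffices to treat $\gamma_0 = e$.

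Next, fix a partition $0 = s_0 < s_1 < \cdots < s_k = 1$ on whose open subintervals $f$ is continuous. On $[s_j, s_{j+1})$ the map $(t, x) \mapsto f_t(x)$ is continuous in $t$ and smooth --- in particular locally Lipschitz --- in $x \in G$. Working in coordinate charts, Picard--Lindel\"of gives a unique local solution through any prescribed initial point, and the usual continuation argument yields a unique maximal solution on some $[s_j, \tau)$ which, if $\tau < s_{j+1}$, must eventually leave every compact subset of $G$.

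The key step --- and the main obstacle --- is to exclude this escape. Here I would use that $G$, equipped with a left-invariant Riemannian metric, is a homogeneous Riemannian manifold and therefore complete, so it is enough to bound the length of the solution. But along any solution the speed is $\sqrt{\langle \gamma'_t, \gamma'_t\rangle} = \sqrt{\langle f_t(\gamma_t), f_t(\gamma_t)\rangle} = \sqrt{\langle f_t, f_t\rangle}$, using left-invariance of both the vector field and the metric, and this is bounded on the compact interval $[0,1]$ because $f$ is piecewise continuous. Hence $\ell(\gamma|_{[s_j, \tau)}) \le \bigl(\sup_{t} \sqrt{\langle f_t, f_t\rangle}\bigr)(\tau - s_j) < \infty$, so $\gamma$ stays inside a bounded --- and, by completeness, relatively compact --- metric ball, contradicting escape; thus $\tau = s_{j+1}$ and the solution extends continuously up to $s_{j+1}$.

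Finally I would patch across the breakpoints: solve on $[s_0, s_1]$ from $\gamma_0 = e$, take $\gamma_{s_1}$ as the initial condition on $[s_1, s_2]$, and so on. The resulting $\gamma$ is continuous on $[0,1]$ and solves \eqref{eq:ode} on each open subinterval; because $f$ is right-continuous, $\gamma$ also satisfies $\gamma'_t = f_t(\gamma_t)$ at each $s_j$ in the one-sided (right-derivative) sense, which is what the statement intends. Uniqueness on $[0,1]$ follows from uniqueness on each piece together with the observation that $\gamma_{s_j}$ is determined by the preceding piece. The only ingredient beyond textbook ODE theory is the completeness-plus-uniform-speed argument that upgrades local to global existence.
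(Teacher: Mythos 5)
Your argument is correct and complete. Note that the paper states Theorem~\ref{thm:euode} \emph{without proof}, treating it as a known existence--uniqueness result, so there is no in-paper argument to compare against; your write-up supplies exactly the details being taken for granted. The decomposition you use --- reduce to $\gamma_0=e$ by left translation, apply Picard--Lindel\"of on each subinterval of continuity of $f$, rule out finite-time escape, and patch across breakpoints --- is the standard one, and the only genuinely non-textbook ingredient is the global-existence step. Your route there is sound: a Lie group with a left-invariant metric is Riemannian homogeneous, hence complete, and left-invariance of both $f_t$ and the metric gives $\sqrt{\langle f_t(\gamma_t), f_t(\gamma_t)\rangle_{\gamma_t}} = \sqrt{\langle f_t(e), f_t(e)\rangle_e}$, which is bounded since $f$ is piecewise continuous on $[0,1]$; the finite length bound then confines $\gamma$ to a relatively compact ball. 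A slightly more elementary alternative worth recording avoids Riemannian geometry entirely: by the same left-translation equivariance you use in your first paragraph, the local existence time guaranteed by Picard--Lindel\"of starting from any $h\in G$ equals that starting from $e$, so it is uniform in the initial point, and finitely many steps of this uniform length cover each $[s_j,s_{j+1}]$. Two minor points you handle implicitly but could state: the solution extends continuously to the right endpoint $s_{j+1}$ because your length bound makes $\gamma$ Lipschitz into the complete metric space $(G,d)$; and the equation need only hold away from the discontinuities of $f$, which is precisely the sense of ``solution'' the paper stipulates immediately after the theorem.
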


Note that in this theorem, we consider a function $\gamma:[0,1] \rightarrow G$ to be a solution of this ODE if the differential equation holds at all points except the points of discontinuity of $f$. This implies that we can represent piecewise regular paths in $G$ as paths in the Lie algebra $\fg$, along with its initial point. Let $PG_g \subset PG$ be defined as
\begin{align*}
    PG_g = \{ \gamma \in PG \, : \, \gamma_0 = g\}.
\end{align*}

\begin{corollary}
\label{cor:lgla_corr}
Suppose $G$ is a Lie group and $\fg$ its Lie algebra. The map $\Psi_g : \overline{P}\fg \rightarrow PG_g$, which takes $f \in \overline{P}\fg$ to the solution of the ODE in Equation~\ref{eq:ode} with initial condition $\gamma_0 = g$, is a bijection.
\end{corollary}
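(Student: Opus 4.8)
The plan is to construct the inverse map explicitly and verify that the two compositions are the identity. Given $\gamma \in PG_g$, define $\Phi_g(\gamma) : [0,1] \to \fg$ by pushing the derivative back to the identity via left translation: $\Phi_g(\gamma)_t \coloneqq L_{\gamma_t^{-1} *} \gamma'_t \in T_e G = \fg$. (Equivalently, one writes $\gamma'_t$ in terms of the global left-invariant frame and reads off the coefficients, as noted in the Remark following the review of Lie groups.) First I would check that $\Phi_g(\gamma) \in \overline{P}\fg$: since $\gamma$ is piecewise regular, $\gamma'$ is continuous and nonvanishing on each open subinterval of the defining partition, and left translation is a diffeomorphism depending smoothly on the base point, so $\Phi_g(\gamma)$ is piecewise continuous; at the partition points we take the right-continuous representative, which is permitted since altering a piecewise continuous function at finitely many points does not affect the solution of the ODE in Theorem~\ref{thm:euode}. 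This puts $\Phi_g(\gamma)$ in the correct domain.

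Next I would verify $\Psi_g \circ \Phi_g = \mathrm{id}_{PG_g}$. Given $\gamma \in PG_g$, set $f = \Phi_g(\gamma)$; by construction $\gamma$ satisfies $\gamma'_t = L_{\gamma_t *} f_t = f_t(\gamma_t)$ at every point where $f$ is continuous, and $\gamma_0 = g$, so $\gamma$ is a solution of Equation~\ref{eq:ode} with data $(f,g)$. By the uniqueness clause of Theorem~\ref{thm:euode}, $\gamma = \Psi_g(f) = \Psi_g(\Phi_g(\gamma))$. Conversely, for $\Phi_g \circ \Psi_g = \mathrm{id}_{\overline{P}\fg}$: given $f \in \overline{P}\fg$, let $\gamma = \Psi_g(f)$ be the solution of the ODE; then $\gamma'_t = f_t(\gamma_t) = L_{\gamma_t *} f_t$ at all continuity points of $f$, so applying $L_{\gamma_t^{-1} *}$ gives $\Phi_g(\gamma)_t = f_t$ at those points, hence for all $t$ by right-continuity of both sides. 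This shows the two maps are mutually inverse, hence $\Psi_g$ is a bijection.

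The only genuine content is the existence and uniqueness theorem (Theorem~\ref{thm:euode}), which is assumed; everything else is bookkeeping about left translation being a diffeomorphism and about the piecewise structure matching up. The main point requiring a little care is the treatment of the finitely many discontinuity/partition points: one must make sure that the notion of "solution" used in Theorem~\ref{thm:euode} (the ODE holds off the discontinuity set of $f$) is exactly compatible with the definition of piecewise regular path, so that $\Psi_g$ indeed lands in $PG$ (regular on each open subinterval, with the derivative nonvanishing there because $f$ is — note that a nonvanishing $f_t$ pushes forward to a nonvanishing $\gamma'_t$ since $L_{\gamma_t *}$ is an isomorphism). I would also remark that regularity of $\gamma = \Psi_g(f)$ on each subinterval follows from smoothness of the flow of the (time-dependent, but on each subinterval continuous) vector field, so that $\Psi_g$ maps $\overline{P}\fg$ into $PG_g$ and not merely into continuous paths.
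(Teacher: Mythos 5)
Your proof is correct and follows essentially the same route as the paper: the inverse is the (left-translated) derivative map $\gamma \mapsto L_{\gamma_t^{-1}*}\gamma'_t$, with the discontinuity points handled by right continuity and the whole argument resting on the existence--uniqueness theorem. You are somewhat more thorough than the paper, which does not explicitly verify the two compositions or that $\Psi_g$ lands in the piecewise regular paths, but the underlying idea is identical.
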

\begin{proof}
    Firstly, the map $\Psi_g$ is well defined by the existence and uniqueness theorem above. The inverse to $\Psi_g$ can be defined by taking the derivative at every differentiable point. Suppose $\gamma \in PG_g$, and let $d(\gamma) \subset [0,1]$ denote the set of points such that $\gamma$ is differentiable. Note that $[0,1] - d(\gamma)$ is a finite set since $\gamma$ is piecewise regular. Now, define $\Psi_g^{-1}(\gamma)(t) = \gamma'_t$ for all $t \in d(\gamma)$, and at the nondifferentiable points by right continuity
    \begin{align*}
        \Psi_g^{-1}(\gamma)(t) = \lim_{s \downarrow t} \gamma_s'.
    \end{align*}
    
    This map is well defined: $\Psi_g^{-1}(\gamma)(t)$ is continuous for every $t \in [0,1] - d(\gamma)$, and right continuous by definition. 
\end{proof}

We can view $\overline{P}\fg$ as a Lie algebra, with pointwise vector space operations, and pointwise Lie bracket. Because the group structure of $PG$ and the Lie algebra structure of $\overline{P}\fg$ are defined pointwise, the map $\Psi_g$ is compatible with Lie algebra morphisms induced by Lie group morphisms. Namely, if $F: G \rightarrow H$ is a Lie group morphism, we obtain a group homomorphism $F: PG \rightarrow PH$ by applying the map pointwise. Analogously, if $F_*: \fg \rightarrow \fh$ is the induced Lie algebra morphism, we obtain a Lie algebra morphism $F_*: \overline{P}\fg \rightarrow \overline{P} \fh$. The following lemma is immediate since the group structure on $PG$ and the Lie algebra structure on $\overline{P}\fg$ are defined pointwise.

\begin{lemma}
    Suppose $F: G \rightarrow H$ is a morphism of Lie groups, and $F_*: \fg \rightarrow \fh$ is the induced morphism of Lie algebras. Then the following diagram commutes
    \[
    \begin{tikzcd}
        \overline{P}\fg \ar[r, "F_*"] & \overline{P}\fh \ar[d,"\Psi_{F(g)}"] \\
        PG_g \ar[u,"\Psi_g^{-1}"] \ar[r,"F"] & PH_{F(g)}.
    \end{tikzcd}
    \]
\end{lemma}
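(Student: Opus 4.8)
The plan is to chase the diagram directly, reducing everything to the pointwise definitions of the maps involved. Fix $g \in G$ and let $f \in \overline{P}\fg$. I need to show that $\Psi_{F(g)}(F_* f) = F(\Psi_g(f))$ as paths in $PH_{F(g)}$, where on the right $F$ is applied pointwise to the path $\Psi_g(f) \in PG_g$, and $F_* f$ denotes the pointwise application of the Lie algebra homomorphism $F_*: \fg \to \fh$ to the path $f$.

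The key observation is that $\Psi_g$ and $\Psi_{F(g)}$ are characterized by the ODE in Equation~\ref{eq:ode}, so it suffices to check that $\gamma_t := F(\Psi_g(f)_t)$ solves the correct ODE on $H$: namely $\gamma'_t = (F_* f)_t(\gamma_t)$ with $\gamma_0 = F(g)$, at all points where $f$ is continuous. The initial condition is immediate since $\Psi_g(f)_0 = g$. For the ODE itself, write $\alpha := \Psi_g(f) \in PG_g$, so that $\alpha'_t = f_t(\alpha_t)$, where we interpret $f_t \in \fg$ as a left-invariant vector field, i.e. $f_t(\alpha_t) = L_{\alpha_t *} f_t(e)$. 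Applying the chain rule to $\gamma_t = F(\alpha_t)$ gives $\gamma'_t = F_* \alpha'_t = F_*\big( L_{\alpha_t *} f_t(e) \big)$, where here $F_*$ denotes the differential of $F$ at the point $\alpha_t$. Now I use that $F$ is a group homomorphism, so $F \circ L_{\alpha_t} = L_{F(\alpha_t)} \circ F$; differentiating at $e$ yields $F_* \circ L_{\alpha_t *} = L_{F(\alpha_t)*} \circ F_*$ as maps $T_e G \to T_{F(\alpha_t)} H$. Hence $\gamma'_t = L_{F(\alpha_t)*}\big( F_* f_t(e) \big) = L_{\gamma_t *}\big( (F_* f)_t(e) \big)$, which is exactly $(F_* f)_t$ viewed as a left-invariant vector field on $H$ evaluated at $\gamma_t$. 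By the uniqueness half of Theorem~\ref{thm:euode}, $\gamma = \Psi_{F(g)}(F_* f)$, which is the commutativity of the diagram.

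There is one piece of bookkeeping I should be careful about: the diagram asserts commutativity of $\Psi_{F(g)} \circ F_* = F \circ \Psi_g$, i.e. going from $PG_g$ up via $\Psi_g^{-1}$, across via $F_*$, and down via $\Psi_{F(g)}$ should equal going across via $F$ directly. Concretely, starting from $\gamma \in PG_g$, the composite $\Psi_{F(g)} \circ F_* \circ \Psi_g^{-1}$ sends $\gamma \mapsto \Psi_g^{-1}(\gamma) =: f \mapsto F_* f \mapsto \Psi_{F(g)}(F_* f)$, and I must check this equals $F \circ \gamma$ (pointwise). This is precisely what the previous paragraph shows, once I note that $F \circ \gamma \in PH_{F(g)}$ genuinely lands in the path space $PH$ — i.e. $F \circ \gamma$ is piecewise regular. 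This follows because $F$ is smooth so $F \circ \gamma$ is piecewise $C^1$ with derivative $F_* \gamma'_t = L_{F(\gamma_t)*} F_*\big(\Psi_g^{-1}(\gamma)(t)\big)$; however nonvanishing of $F_*\gamma'_t$ is \emph{not} automatic if $F_*$ has a kernel, so strictly the statement should be read as allowing such degenerate cases, or $\overline{P}\fg$ should be understood loosely enough that $F_* f$ need not be nonvanishing. Since the excerpt already treats $\Psi_{F(g)}$ as defined on all of $\overline{P}\fg$ and the ODE solution exists regardless, I will not belabor this and will simply remark that the image lies in the relevant path space under the paper's conventions.

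The main obstacle — really the only substantive step — is the identity $F_* \circ L_{\alpha_t *} = L_{F(\alpha_t)*} \circ F_*$, i.e. that the differential of a Lie group homomorphism intertwines left translations. This is a standard fact, proved by differentiating the homomorphism identity $F(\alpha_t h) = F(\alpha_t) F(h)$ in $h$ at $h = e$, but it is the crux: it is exactly the point where the \emph{group} structure of the morphism (not just smoothness) enters, and it is what makes the left-invariant-vector-field interpretation of $f$ transform correctly under $F$. Everything else is the chain rule plus invoking existence and uniqueness from Theorem~\ref{thm:euode}. I expect the whole proof to be three or four lines once this identity is in hand, which is presumably why the authors call the lemma "immediate."
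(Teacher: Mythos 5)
Your proof is correct. The paper itself offers no argument here: it simply declares the lemma ``immediate since the group structure on $PG$ and the Lie algebra structure on $\overline{P}\fg$ are defined pointwise,'' which is really only a gesture at the fact that $F$ and $F_*$ act pointwise on paths. What you supply is the actual verification that is being elided: you show $F\circ\Psi_g(f)$ solves the ODE defining $\Psi_{F(g)}(F_*f)$, with the crux being the intertwining identity $dF_{\alpha_t}\circ L_{\alpha_t*}=L_{F(\alpha_t)*}\circ F_*$ obtained by differentiating $F(\alpha_t h)=F(\alpha_t)F(h)$ at $h=e$, followed by the uniqueness half of Theorem~\ref{thm:euode}. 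This is exactly the right mechanism --- the lemma is a naturality statement for the ODE correspondence, not a formal consequence of pointwise definitions --- so your argument is, if anything, more honest than the paper's. Your side remark about nonvanishing is also a genuine observation the paper does not address: if $F_*$ has nontrivial kernel, $F\circ\gamma$ need not be piecewise \emph{regular} in the paper's sense, so the codomain $PH_{F(g)}$ is only correct under a relaxed reading of the path spaces (or for $F$ with injective differential); flagging this rather than sweeping it under the rug is appropriate.
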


The map $\Psi_g$ allows us to view paths on Lie groups as paths in a linear space, while retaining all first order differential information. We can use the fact that many operations for paths on $\R^N$ are defined via operations on the Lie algebra, and thus generalize these operations to Lie groups. \medskip

For a path $\alpha \in P\R^N$ and $\lambda \geq 0$, denote the \style{vector space scaling} operation as
\begin{equation*}
    (\lambda \alpha)_t \coloneqq \lambda \alpha_t.
\end{equation*}
However, another way of viewing the scaling operation for paths that begin at the origin is by scaling in the Lie algebra. Suppose $\lambda \geq 0$, and denote the vector space scaling in a Lie algebra $\fg $ by $c_\lambda: \fg \rightarrow \fg$.
\begin{lemma}
    Let $\alpha \in P\R^N_0$. Then
    \begin{equation*}
        \lambda \alpha = \Psi_{0} \circ c_\lambda \circ \Psi_{0}^{-1} (\alpha).
    \end{equation*}
\end{lemma}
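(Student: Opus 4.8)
The plan is to unwind the definitions in the special case $G = \R^N$, where the Lie algebra $\fg = \fr = \R^N$ carries the trivial bracket and every left-invariant vector field is constant. First I would observe that for $G = \R^N$ the ODE of Equation~\ref{eq:ode} with initial condition $\gamma_0 = 0$ reads simply $\gamma'_t = f_t$, since the left-invariant vector field associated to $f_t \in \fg$, evaluated at the point $\gamma_t$, is again $f_t$. Hence $\Psi_0(f)_t = \int_0^t f_s\,ds$, and dually $\Psi_0^{-1}(\alpha) = \alpha'$, the derivative path, extended by right limits at the finitely many non-differentiable points exactly as in the proof of Corollary~\ref{cor:lgla_corr}.

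Next I would compute the right-hand side step by step. Applying $\Psi_0^{-1}$ to $\alpha$ gives the derivative path $\alpha'$; applying $c_\lambda$ scales it pointwise to $\lambda \alpha'$; applying $\Psi_0$ then returns the path $t \mapsto \int_0^t \lambda \alpha'_s\,ds = \lambda \int_0^t \alpha'_s\,ds$. By the fundamental theorem of calculus applied on each piece of a partition adapted to $\alpha$, $\int_0^t \alpha'_s\,ds = \alpha_t - \alpha_0$, and since $\alpha \in P\R^N_0$ we have $\alpha_0 = 0$, so this equals $\lambda \alpha_t = (\lambda\alpha)_t$. This is precisely the vector space scaling, completing the identification.

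The computation is essentially routine; the two points requiring care are (i) that $\alpha$ is merely piecewise regular, so the fundamental theorem of calculus must be invoked on each subinterval of a partition adapted to $\alpha$, the contributions at the finitely many corners vanishing in a way consistent with the right-continuity conventions defining $\overline{P}\fg$; and (ii) the hypothesis $\alpha_0 = 0$ is genuinely needed, since without it the right-hand side would produce $t \mapsto \lambda(\alpha_t - \alpha_0)$, which differs from $\lambda\alpha$ by the constant $(\lambda - 1)\alpha_0$. I expect the only real ``obstacle'' is this bookkeeping: confirming that the two descriptions of scaling agree pointwise on the nose rather than merely up to a constant, which is exactly what restricting to the based path space $P\R^N_0$ secures.
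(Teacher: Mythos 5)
Your proposal is correct and follows essentially the same route as the paper: identify $\Psi_0$ with integration and $\Psi_0^{-1}$ with differentiation in $\R^N$, then pull $\lambda$ out of the integral. The extra bookkeeping you flag (piecewise regularity and the role of $\alpha_0 = 0$) is sound but the paper treats it as implicit.
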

\begin{proof}
    In $\R^N$, the map $\Psi_0$ is simply integration in $\R^N$, and $\Psi_0^{-1}$ is differentiation. Thus, we have
    \begin{align*}
        \left(\Psi_0 \circ c_\lambda \circ \Psi_0^{-1}(\alpha)\right)_t & = \int_0^t \lambda \alpha'_s ds \\
        & = \lambda \int_0^t \alpha'_s ds\\
        & = (\lambda \alpha)_t.
    \end{align*}
\end{proof}

We use this fact as motivation to define scaling on Lie groups.

\begin{definition}
\label{def:scaling}
    Suppose $G$ is a Lie group and $\fg$ its Lie algebra. Let $\gamma \in PG$ and $\lambda \geq 0$. We define the \style{Lie algebra scaling} of $\gamma$ by $\lambda$ to be
    \begin{equation}
        \lambda \cdot \gamma \coloneqq \Psi_{\gamma_0} \circ c_\lambda \circ \Psi_{\gamma_0}^{-1}(\gamma).
    \end{equation}
\end{definition}

\begin{remark}
    We highlight three important differences between vector space scaling for paths in $\R^N$ and Lie algebra scaling for paths in an arbitrary Lie group $G$, and provide a reason for each.
    
    \begin{enumerate}
        \item Returning to the setting of paths in $\R^N$, the two notions of scaling differ slightly when the path does not start at the origin. If we have $\alpha \in P\R^N$ such that $\alpha_0 = x$, then $(\lambda \alpha)_0 = \lambda x$, while $(\lambda \cdot \alpha)_0 = x$. However, if we align the initial points, the paths coincide,
        \begin{align*}
            (\lambda \alpha) - \lambda x = (\lambda \cdot \alpha) - x.
        \end{align*}
        This difference is due to the fact that arbitrary Lie groups do not have a natural scaling operation. However, if our Lie group was equipped with a suitable scaling operation, such as a Carnot group~\citep{le_donne_primer_2017}, then we would be able to do define a scaling operation that coincides with the vector space scaling in $P\R^N$. 
        
        \item We have only defined scaling by a nonnegative number. Definition~\ref{def:scaling} could be extended to all real numbers $\lambda$ without any changes, but the interpretation of negative scaling is more difficult in arbitrary Lie groups. For a path $\alpha \in P\R^N_0$, scaling by $\lambda = -1$ simply produces the pointwise inverse of a path. However, this is not the case in a general Lie group. For example, let $X, Y \in \fg$ and consider the piecewise path
        \begin{align*}
            \gamma_t = \left\{
            \begin{array}{cl}
                e^{2tX}& : t\in [0,\frac12) \\
                e^{X}e^{(2t-1)Y} & : t \in [\frac12,1].
            \end{array}
            \right.
        \end{align*}
        Here, we have $\gamma_1 = e^X e^Y$ and $(-1 \cdot \gamma)_1 = e^{-X} e^{-Y}$, which are not inverses in general. Thus, we see that the obstruction to this interpretation is the noncommutativity of arbitrary Lie groups. However, in the setting of abelian Lie groups, such an interpretation would hold.
        
        \item By definition, the vector space scaling in $P\R^N$ obeys the distributive law: $\lambda(\alpha + \beta) = (\lambda \alpha) + (\lambda \beta)$ for $\alpha, \beta \in P\R^N$ and $\lambda \in \R$. In other words, the vector space scaling is a pointwise Lie group homomorphism for $\R^N$. However, $c_\lambda : \fg \rightarrow \fg$ is not a morphism of Lie algebras in general since $c_\lambda([X,Y]) = \lambda [X, Y] \neq \lambda^2 [X,Y] = [c_\lambda X, c_\lambda Y]$. Thus, it cannot be the induced map of an underlying Lie group homomorphism for $G$, so the Lie algebra scaling for $G$ is not distributive, $\lambda \cdot(\alpha \beta) \neq (\lambda \cdot \alpha) (\lambda \cdot \beta)$, in general. In the case of an abelian Lie group $H$, the associated Lie algebra $\fh$ is abelian so that $[X,Y] = 0$ for all $X, Y \in \fh$, and thus Lie algebra scaling can be viewed as a pointwise Lie group morphism.
    \end{enumerate}
\end{remark}

Due to these remarks, we must keep in mind that the scaling operation for paths in Lie groups is not compatible with the algebraic structure of $G$.


\subsection{Discrete time series on $G$}
\label{ssec:discretets}
In this subsection, we will consider the interpretation of discrete time series on an arbitrary Lie group $G$, and also discuss derivative computations for these discrete time series. We will continue the theme of comparison with the corresponding notions in $\R^N$. \medskip

\begin{remark}
Here, we will assume that discrete time series are uniformly sampled at integer times. This does not result in any loss of generality due to the reparametrization invariance of the path signature, given in Proposition~\ref{prop:reparam}.
\end{remark}

Let $T \in \N$ and $\hat{x}: [T+1] \rightarrow \R^N$ be a discrete time series in $\R^N$ of length $T+1$. There is a natural interpretation of $\hat{x}$ as a continuous time series $x: [T+1] \rightarrow \R^N$ by linear interpolation between points. Namely, it is the interpolation with a constant derivative between the discrete points defined in $\hat{x}$. This is the interpretation that we implicitly take when we compute derivatives of discrete time series by finite differences $\hat{x}'_i = \hat{x}_{i+1}- \hat{x}_i$ to get the discrete derivative $\hat{x}':[T] \rightarrow \R^N$. Additionally, we can think about the continuous path $x$ as a geodesic interpolation of the discrete path $\hat{x}$. \medskip

However, the interpretation is more subtle in the case of arbitrary Lie groups. Suppose we have a discrete time series in $G$, which we denote by $\hat{\gamma}:[T+1] \rightarrow G$. We wish to obtain an interpolation such that the derivative, when viewed in the Lie algebra $\fg$, is constant between adjacent points. This can be achieved by taking the logarithm of the difference between adjacent points. We define the \style{discrete derivative} $\hat{\gamma}':[T] \rightarrow \fg$ of a discrete Lie group valued path by
\begin{equation}
\label{eq:disc_derivative}
    \hat{\gamma}'_i \coloneqq \log\left(\hat{\gamma}_i^{-1} \hat{\gamma}_{i+1}\right) \in \fg.
\end{equation}
Then, we can define the continuous interpolation $\gamma: [0, T+1] \rightarrow G$ using the exponential map such that for $t \in [i, i+1)$, the interpolation is
\begin{align*}
    \gamma_t \coloneqq \hat{\gamma}_t \exp\left( (t-i)\hat{\gamma}'_t \right).
\end{align*}

We note that this construction reduces to linear interpolation in the case of $G = \R^n$. This is due to the fact that for the additive Lie group $\R^N$, the exponential and logarithm map are both the identity and are both globally defined. Additionally, the group operation is addition, so we should interpret all of the products as sums. However, there are two essential differences between the case of arbitrary Lie groups and Euclidean space. \medskip

Firstly, for an arbitrary Lie group $G$, the logarithm map is only defined in a neighborhood of the identity. The two reasons the logarithm may not be defined in a larger neighborhood are the loss of injectivity and the loss of surjectivity of the exponential map. On any compact Lie group, the exponential map will not be injective at any point. In this case, we can define the logarithm to be the value closest to the origin, but non-injectivity may still occur. For example, the point antipodal to the identity in $S^1$ has no unique logarithm since there are two paths of equal distance to the identity. However, if we perturb the target point in either direction, there exists a unique shortest path. This implies that by undersampling the underlying time series, we may infer incorrect information. The case of $S^1$ is exactly the situation encountered in the Nyquist sampling theorem.

The exponential map is not always surjective, with the simplest examples being non-connected Lie groups. However, connected Lie groups such as $SL(2, \R)$ can still have non-surjective exponential maps. In these cases, discrete derivatives may not exist, and finer sampling is required so that the difference between adjacent points $\tilde{\gamma}_i^{-1} \tilde{\gamma}_{i+1}$ is closer to the identity and has a well-defined logarithm. However, for compact Lie groups such as $SO(3)$, the Lie exponential map is surjective. 

Secondly, the interpolation defined here may not be a geodesic connecting the two points. Suppose $h$ is a Riemannian metric on $G$. In general, geodesics do not coincide with the one-parameter subgroups of $G$. In other words, in these cases, the Riemannian exponential map and the Lie exponential map are \style{not} the same. However, for bi-invariant metrics, they coincide.

\begin{theorem}
    The Lie exponential map and the Riemannian exponential map at the identity agree on Lie groups with bi-invariant metrics.
\end{theorem}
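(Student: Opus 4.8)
The plan is to reduce the claim to the statement that every one-parameter subgroup $\rho_X:\R\to G$ is a geodesic for the bi-invariant metric $\langle\cdot,\cdot\rangle$. Recall that the Riemannian exponential $\exp^R_e:T_eG\to G$ sends $X$ to $\sigma_X(1)$, where $\sigma_X$ is the unique geodesic with $\sigma_X(0)=e$ and $\sigma_X'(0)=X$, while the Lie exponential sends $X$ to $\rho_X(1)$ with $\rho_X(0)=e$ and $\rho_X'(0)=X$. Since geodesics are uniquely determined by their initial position and velocity, once we know $\rho_X$ is a geodesic we get $\rho_X=\sigma_X$, hence $\exp(X)=\rho_X(1)=\sigma_X(1)=\exp^R_e(X)$ for every $X\in\fg$, which is exactly the asserted agreement (after identifying $T_eG$ with $\overline{\fg}$).

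To show one-parameter subgroups are geodesics, I would compute the Levi--Civita connection $\nabla$ of $\langle\cdot,\cdot\rangle$ on left-invariant vector fields. For left-invariant $X,Y,Z\in\fg$ the functions $\langle Y,Z\rangle$, $\langle X,Z\rangle$, $\langle X,Y\rangle$ are constant on $G$, so the Koszul formula collapses to
\begin{equation*}
  2\langle\nabla_XY,Z\rangle=\langle[X,Y],Z\rangle-\langle[X,Z],Y\rangle-\langle[Y,Z],X\rangle .
\end{equation*}
Bi-invariance means the metric is $\Ad$-invariant, $\langle\Ad_gU,\Ad_gV\rangle=\langle U,V\rangle$; differentiating this along $t\mapsto\exp(tX)$ at $t=0$ shows that $\ad_X=[X,\cdot]$ is skew-adjoint, i.e.\ $\langle[X,Y],Z\rangle=-\langle Y,[X,Z]\rangle$. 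Feeding this identity (together with antisymmetry of the bracket) into the two subtracted terms, the second becomes $+\langle[X,Y],Z\rangle$ and the third becomes $-\langle[X,Y],Z\rangle$, so $2\langle\nabla_XY,Z\rangle=\langle[X,Y],Z\rangle$ for all left-invariant $Z$; hence $\nabla_XY=\tfrac12[X,Y]$, and in particular $\nabla_XX=0$. Since $\rho_X$ is the integral curve through $e$ of the left-invariant field $X$, its velocity is $\rho_X'(t)=X_{\rho_X(t)}$, and therefore $\tfrac{D}{dt}\rho_X'(t)=(\nabla_XX)_{\rho_X(t)}=0$, i.e.\ $\rho_X$ is a geodesic; this closes the argument.

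The step I expect to be the main obstacle is the passage from ``bi-invariant'' to ``$\ad_X$ skew-adjoint'': one should first observe that left-invariance already makes every $L_{g*}$ an isometry, so that bi-invariance is equivalent to each $\Ad_g=d(L_g\circ R_{g^{-1}})_e$ being a linear isometry of $\fg$, and only then differentiate to reach the infinitesimal statement. The remaining ingredients --- the simplification of the Koszul formula and the geodesic computation --- are purely local, so no completeness or compactness of $G$ is needed. An alternative route, which I would mention but not pursue in detail, is to verify directly that inversion $\iota(g)=g^{-1}$ is an isometry fixing $e$ with $d\iota_e=-\mathrm{id}$ (using $\iota=R_{g^{-1}}\circ\iota\circ L_{g^{-1}}$ to reduce the isometry property at $g$ to that at $e$), realizing $(G,\langle\cdot,\cdot\rangle)$ as a Riemannian symmetric space whose geodesics through $e$ are precisely the one-parameter subgroups.
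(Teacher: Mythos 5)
Your proof is correct. The paper states this theorem as a standard background fact without supplying a proof, so there is nothing to compare against; your argument is the classical one. The two key steps — (i) for left-invariant $X,Y,Z$ the Koszul formula loses its derivative terms because $\langle Y,Z\rangle$ etc.\ are constant, and (ii) bi-invariance gives $\Ad$-invariance of the inner product on $\fg$, whose derivative along $\exp(tX)$ yields skew-adjointness of $\ad_X$ — combine exactly as you say to give $\nabla_XY=\tfrac12[X,Y]$, hence $\nabla_XX=0$, so one-parameter subgroups are geodesics and the two exponentials agree by uniqueness of geodesics with prescribed initial data. Your remark that no completeness is needed (one-parameter subgroups are defined for all time, so the relevant geodesics are automatically complete) and your aside about the symmetric-space route via the inversion isometry are both accurate.
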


Thus, for all Lie groups equipped with bi-invariant metrics, we may continue to interpret the interpolation as a geodesic interpolation. In fact, this holds for all compact Lie groups.

\begin{proposition}
    Every compact Lie group admits a bi-invariant metric. 
\end{proposition}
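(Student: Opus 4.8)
The plan is to reduce the statement to a linear-algebra averaging argument carried out over the group, with compactness entering only through the existence of a finite (normalized) Haar measure. By the Proposition recalled just above, a left-invariant Riemannian metric on $G$ is the same data as an inner product on $\fg$; the extra content of \emph{bi}-invariance is precisely that this inner product be invariant under the adjoint action $\Ad \colon G \to GL(\fg)$. So the whole statement comes down to showing that every compact Lie group acts on its Lie algebra preserving some inner product.

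First I would make this equivalence precise. Starting from a left-invariant metric arising from an inner product $\langle \cdot, \cdot \rangle$ on $\fg = T_e G$, right translation $R_h$ is an isometry for every $h \in G$ if and only if the composite $L_{h*} \circ R_{h^{-1}*} \colon \fg \to \fg$ — which is exactly $\Ad(h)$ — is a linear isometry of $(\fg, \langle \cdot, \cdot \rangle)$ for every $h$. Since left-invariance already makes each $L_{g*}$ an isometry, bi-invariance of the metric is equivalent to $\Ad(G)$-invariance of the inner product. Identifying the conjugation map on $T_e G$ with $\Ad(h)$ is where the group structure genuinely enters, and is worth writing out carefully.

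Next, the averaging step. Choose any inner product $\langle \cdot, \cdot \rangle_0$ on $\fg$ (this exists since $\overline{\fg} \cong \R^N$). Because $G$ is compact, it carries a bi-invariant Haar measure $dg$ with $\int_G dg = 1$. Define
\[
\langle X, Y \rangle \coloneqq \int_G \langle \Ad(g)X, \Ad(g)Y \rangle_0 \, dg .
\]
Symmetry and bilinearity are immediate; positive-definiteness holds because for $X \neq 0$ the integrand $g \mapsto \langle \Ad(g)X, \Ad(g)X \rangle_0$ is continuous, nonnegative, and strictly positive at $g = e$, so its integral over a space of positive total measure is positive. Invariance follows from translation-invariance of Haar measure: using $\Ad(gh) = \Ad(g)\Ad(h)$ and substituting $g \mapsto gh$,
\[
\langle \Ad(h)X, \Ad(h)Y \rangle = \int_G \langle \Ad(gh)X, \Ad(gh)Y \rangle_0 \, dg = \langle X, Y \rangle .
\]

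Finally, feeding this $\Ad(G)$-invariant inner product back through the Proposition produces a left-invariant metric on $G$, which by the equivalence established in the first step is also right-invariant, hence bi-invariant. I do not expect a serious obstacle here; the only points requiring care are invoking the existence of a normalized Haar measure on a compact Lie group and correctly identifying the conjugation action $L_{h*} \circ R_{h^{-1}*}$ on $T_e G$ with $\Ad(h)$, so that $\Ad(G)$-invariance of the averaged inner product translates into right-invariance of the associated metric.
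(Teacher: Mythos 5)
Your proof is correct: the reduction of bi-invariance to $\Ad(G)$-invariance of an inner product on $\fg$, followed by averaging an arbitrary inner product against the normalized Haar measure of the compact group, is the standard and complete argument for this classical fact. The paper states this proposition without proof (it is quoted as background from the Lie theory literature), so there is no in-paper argument to compare against; your write-up supplies exactly the expected details, including the identification $L_{h*}\circ R_{h^{-1}*}=\Ad(h)$ and the positivity of the averaged form.
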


From this discussion, we find that for a compact Lie group $G$, the interpretation of discrete time series on $G$ is similar to the case of $\R^N$, with the main difference being the non-injectivity of the exponential map. 

    
        
    


\section{Path signatures on Lie groups}
\label{sec:pathsignature_on_lie_groups}

This subsection, based on the exposition of path signatures on Euclidean space given in~\cite{giusti_iterated_2020}, begins by defining the path signature for Lie groups. We show several basic properties which are well-known for path signatures on Euclidean space, culminating in the definition of tree-like equivalence for paths and the property that the signature is an injective group homomorphism. This material was originally developed by~\cite{chen_iterated_1954, chen_integration_1957, chen_integration_1958} and is not novel. 

We then prove a signature preserving bijection between paths on an $N$-dimensional Lie group $G$ and paths on $\R^N$, which highlights the extent to which the theory naturally extends to the case of Lie groups. This result provides a Euclidean representation of Lie group valued time series, and can thus be used to apply classical Euclidean data analysis techniques to Lie group valued time series.

We then consider the extension of the equivariance property of path signatures. This is followed by an interpretation of the second-level signature terms as indicators of lead-lag behavior between the directions corresponding to our choice of basis vectors for the Lie algebra $\fg$. Finally, we close this section by discussing computational aspects of the path signature for discrete time series, as well as symmetry breaking path transformations which can be used as a preprocessing step. \medskip

In this section, we use $(e_1, \ldots, e_N)$ to denote an ordered basis of $\fg$ and use $(\omega_1, \ldots, \omega_N)$ to denote the dual basis of $\fg^*$ such that $ \omega_i(e_j) = \delta_{i,j}$, where $\delta_{i,j}$ is the Kronecker delta. 


\subsection{Path signature as a group homomorphism}
\label{ssec:pathsignature}
Let $G$ be an $N$-dimensional Lie group. Recall that $PG$ denotes the space of piecewise regular paths $\gamma: [0,1] \rightarrow G$. 

\begin{definition}
\label{def:ps}
    Let $\gamma \in PG$. Suppose $\omega_1, \ldots, \omega_N \in \fg^*$ form a basis of $\fg^*$.  For $i \in [N]$, define a path $S^i(\gamma)_t:[0,1] \rightarrow \R$ as
    \begin{align*}
        S^i(\gamma)_t \coloneqq  \int_0^t \omega_i (\gamma'_s) ds.
    \end{align*}
    Next, let $I = (i_1, \ldots, i_m)$ be a multi-index, where $i_j \in [N]$. Higher order paths $S^I(\gamma)_t: [0,1] \rightarrow \R$ are inductively defined as
    \begin{equation}
        S^I(\gamma)_t \coloneqq \int_0^t S^{(i_1, \ldots, i_{m-1})}(\gamma)_s \omega_{i_m}(\gamma'_s) ds.
    \end{equation}
    The \style{path signature of $\gamma$ with respect to $I$} is defined to be $S^I(\gamma) \coloneqq S^I(\gamma)_1$. 
\end{definition}

We can also present the definition in a non-inductive way. Let $\Delta^m$ be the standard $m$-simplex
\begin{align*}
    \Delta^m = \{ (t_1, \ldots, t_m) \, : \, 0 \leq t_1 < t_2 < \ldots < t_m \leq 1\}.
\end{align*}
By collapsing the inductive definition, we can write the path signature of $\gamma$ with respect to $I = (i_1, \ldots, i_m)$ as
\begin{equation}
\label{eq:pathsignatureI}
    S^I(\gamma) = \int_{\Delta^m} \omega_{i_1}(\gamma'_{t_1}) \ldots \omega_{i_m}(\gamma'_{t_m}) \, dt_1 \ldots dt_m.
\end{equation}

We can amalgamate the path signatures with respect to every multi-index $I$ into an element of a tensor algebra.
\begin{definition}
    Suppose $V$ is a real vector space of dimension $N$. The \style{tensor algebra} with respect to $V$ is defined to be
    \begin{align*}
        T((V)) = \prod_{m\geq 0} V^{\otimes m}.
    \end{align*}
    Suppose $(e_1, \ldots, e_N)$ is an ordered basis for $V$. Suppose $\mathbf{s}, \mathbf{t} \in T((V))$. Let $\mathbf{t}_m \in V^{\otimes m}$ be the degree $m$ part of $\mathbf{t}$ and if $I = (i_1, \ldots, i_m)$ is a multi-index with $i_j \in [N]$, then $\mathbf{t}^I$ is the coefficient of $e_{i_1} \otimes \ldots \otimes e_{i_m}$ in $\mathbf{t}$. Addition and scalar multiplication is defined element-wise:
    \begin{itemize}
        \item $(\mathbf{s} + \mathbf{t})^I = \mathbf{s}^I + \mathbf{t}^I$,
        \item $(\lambda \mathbf{t})^I = \lambda \mathbf{t}^I$,
    \end{itemize}
    and multiplication is defined by tensor multiplication
    \begin{itemize}
        \item $(\mathbf{s} \otimes \mathbf{t})^I = \sum_{j=0}^m \mathbf{s}^{(i_1, \ldots, i_j)} \mathbf{t}^{(i_{j+1}, \ldots, i_m)}$.
    \end{itemize}
\end{definition}

Let $\bar{\fg}$ be the underlying vector space of the Lie algebra $\fg$. Let $e_1, \ldots, e_N$ be a basis for $\fg$. We define the \style{path signature} of $\Gamma \in PG$ to be
\begin{equation}
\label{eq:pathsignature}
    S(\gamma) \coloneqq 1 + \sum_{m\geq 1} \sum_{|I| = m} S^I(\gamma) e_{i_1} \otimes \ldots \otimes e_{i_m} \in T((\bar{\fg})).
\end{equation}

\begin{remark}
    For path signatures defined on Euclidean space $\R^N$, we often choose the standard 1-forms $(dx_1, \ldots, dx_N)$ to be the basis of $\fr$, the Lie algebra of $\R^N$. Suppose $\alpha \in P\R^N$. We can also write our path component-wise as $\alpha = (\alpha^1, \ldots, \alpha^N)$, where each $\alpha^i : [0,1] \rightarrow \R$. Then, evaluation of the standard 1-forms is simply $dx_i(\alpha'_t) = (\alpha^i)'_t$. Thus, in the Euclidean case, the definition of the path signature reduces to
    \begin{equation}
        S^I(\alpha) = \int_{\Delta^m} (\alpha^{i_1})'_{t_1} \ldots (\alpha^{i_m})'_{t_m} \, dt_1 \ldots dt_m.
    \end{equation}
\end{remark}

Let $\gamma \in PG$ and $g \in G$. The \style{left translation} of $\gamma$ by $g$ is defined to be the path $(g\gamma)_t \coloneqq g (\gamma_t)$, where we left translate the path $\gamma$ by $g$ pointwise (one can analogously define the \style{right translation} of a path). Similar to the case of Euclidean space, path signatures are left translation invariant and reparametrization invariant.

\begin{proposition}[Left translation invariance]
    Let $\gamma \in PG$ and $g \in G$. Then $S(g\gamma) = S(\gamma)$.
\end{proposition}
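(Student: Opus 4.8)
The plan is to show that left translation by a fixed element $g \in G$ does not change the derivative of a path when that derivative is viewed through the dual basis $\omega_i \in \fg^*$, and then observe that the entire path signature is built out of exactly these quantities. First I would note that the path signature $S(\gamma)$, as given in Equation~\ref{eq:pathsignature}, is determined entirely by the scalar functions $t \mapsto \omega_i(\gamma'_t)$ for $i \in [N]$, since every term $S^I(\gamma)$ in Equation~\ref{eq:pathsignatureI} is an iterated integral of products of these functions over the simplex $\Delta^m$. Hence it suffices to prove that $\omega_i\big((g\gamma)'_t\big) = \omega_i(\gamma'_t)$ for all $t$ and all $i$.

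The key step is the computation of $(g\gamma)'_t$. Since $(g\gamma)_t = L_g(\gamma_t)$, the chain rule gives $(g\gamma)'_t = L_{g*}\,\gamma'_t$, where $L_{g*}: T_{\gamma_t}G \to T_{g\gamma_t}G$ is the differential of left translation. Now I would invoke the identification used throughout the paper (see the Remark following the discussion of left-invariant $1$-forms): we regard $\gamma'_t$ as an element of $\fg$ either by pushing forward along $L_{\gamma_t^{-1}}$, or equivalently by expressing it in the global left-invariant frame determined by a basis of $\fg$. Under this identification, the value $\omega_i(\gamma'_t)$ means precisely $\omega_i$ evaluated on the left-invariant vector field agreeing with $\gamma'_t$ at $\gamma_t$, i.e. $\omega_i\big(L_{\gamma_t^{-1}*}\gamma'_t\big)$ using the left-invariance $\omega_i(h) = L_{h^{-1}}^*\omega_i(e)$. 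Applying this to $g\gamma$: $\omega_i\big((g\gamma)'_t\big) = \omega_i\big(L_{(g\gamma_t)^{-1}*}\,L_{g*}\gamma'_t\big) = \omega_i\big(L_{\gamma_t^{-1}*}L_{g^{-1}*}L_{g*}\gamma'_t\big) = \omega_i\big(L_{\gamma_t^{-1}*}\gamma'_t\big) = \omega_i(\gamma'_t)$, using $(g\gamma_t)^{-1} = \gamma_t^{-1}g^{-1}$ and functoriality of the pushforward. So the integrands defining every $S^I$ are unchanged.

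Finally I would assemble the pieces: since $\omega_i\big((g\gamma)'_t\big) = \omega_i(\gamma'_t)$ for every $i$ and every $t \in [0,1]$, each iterated integral in Equation~\ref{eq:pathsignatureI} satisfies $S^I(g\gamma) = S^I(\gamma)$, and therefore $S(g\gamma) = S(\gamma)$ by Equation~\ref{eq:pathsignature}. One should also check that $g\gamma$ is again piecewise regular, so that the statement is well-posed: left translation $L_g$ is a diffeomorphism, so it preserves the continuity and nonvanishing of $\gamma'$ on each subinterval, hence $g\gamma \in PG$ whenever $\gamma \in PG$.

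I expect the only real subtlety to be bookkeeping around the two coexisting conventions for ``viewing $\gamma'_t$ as an element of $\fg$'' — making sure the left-invariant-frame interpretation of $\omega_i(\gamma'_t)$ is the one that makes the cancellation transparent. Once that identification is pinned down, the proof is a one-line application of the chain rule together with left-invariance of the $\omega_i$; there is no analytic difficulty, as the iterated integrals depend on the path only through these basis-coordinate functions of the derivative.
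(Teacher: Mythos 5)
Your proposal is correct and follows essentially the same route as the paper: compute $(g\gamma)'_t = L_{g*}\gamma'_t$ via the chain rule, observe that under the Lie-algebra identification this is the same element of $\fg$ as $\gamma'_t$, so $\omega_i\big((g\gamma)'_t\big) = \omega_i(\gamma'_t)$ and hence every $S^I$ is unchanged. The extra bookkeeping you do (unwinding the left-invariance of the $\omega_i$ explicitly and checking that $g\gamma$ remains piecewise regular) is just a more careful spelling-out of the paper's one-line argument.
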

\begin{proof}
    It suffices to show that $S^I(g\gamma) = S^I(\gamma)$ for all multi-indices $I$. Note that we have
    \begin{align*}
        (g\gamma)'_t = L_{g*} \gamma'_t .
    \end{align*}
    Specifically, this implies that $\gamma'_t$ and $g\gamma'_t$ are represented by the same element in the Lie algebra $\fg$. Therefore for any $\omega \in \fg^*$, we have $\omega(g\gamma'_t) = \omega(\gamma'_t)$. Thus, $S^I(g\gamma) = S^I(\gamma)$ for all $I$. 
\end{proof}

\begin{proposition}[Reparametrization invariance]
\label{prop:reparam}
 Let $\gamma : [a,b] \rightarrow G$ be a piecewise regular path, and let $\phi: [c,d] \rightarrow [a,b]$ be a strictly increasing function. Then $S(\gamma \circ \phi) = S(\gamma)$.
\end{proposition}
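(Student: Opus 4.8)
The plan is to reduce the statement to the level-wise identities $S^I(\gamma\circ\phi)=S^I(\gamma)$ for every multi-index $I=(i_1,\ldots,i_m)$, since the full signature is the amalgamation of these coefficients and the tensor basis $e_{i_1}\otimes\cdots\otimes e_{i_m}$ is fixed. Working from the non-inductive formula in Equation~\eqref{eq:pathsignatureI}, I would write
\begin{align*}
    S^I(\gamma\circ\phi) = \int_{\Delta^m} \omega_{i_1}\big((\gamma\circ\phi)'_{s_1}\big)\cdots\omega_{i_m}\big((\gamma\circ\phi)'_{s_m}\big)\, ds_1\cdots ds_m,
\end{align*}
and apply the chain rule $(\gamma\circ\phi)'_s = \phi'(s)\,\gamma'_{\phi(s)}$. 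By linearity of each $\omega_{i_j}\in\fg^*$, this pulls out a factor $\phi'(s_1)\cdots\phi'(s_m)$, leaving an integrand of the form $\prod_j \omega_{i_j}(\gamma'_{\phi(s_j)})$ times that Jacobian factor.

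Next I would perform the change of variables $t_j=\phi(s_j)$ on the $m$-simplex. Since $\phi:[c,d]\to[a,b]$ is strictly increasing, it is a bijection of intervals, $\phi'\geq 0$ almost everywhere, and $dt_j=\phi'(s_j)\,ds_j$; moreover $\phi$ preserves the ordering $s_1<\cdots<s_m$, so it maps $\Delta^m_{[c,d]}$ onto $\Delta^m_{[a,b]}$ bijectively. The product of Jacobian factors $\prod_j\phi'(s_j)$ is exactly what is needed to convert $ds_1\cdots ds_m$ into $dt_1\cdots dt_m$, and the integral becomes $\int_{\Delta^m}\omega_{i_1}(\gamma'_{t_1})\cdots\omega_{i_m}(\gamma'_{t_m})\,dt_1\cdots dt_m = S^I(\gamma)$. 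One small point to handle cleanly is that $\gamma$ is only piecewise regular, so $\gamma'$ (and hence the integrand) may fail to be defined at finitely many points; since these form a measure-zero set, the iterated integral is unaffected, and likewise $\phi\circ\gamma$ remains piecewise regular with the bad set being the $\phi$-preimage of $\gamma$'s bad set together with the finitely many points where $\phi'$ may vanish or fail to exist.

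Alternatively — and perhaps more in the spirit of the inductive definition used in the paper — I would run an induction on $|I|=m$. The base case $S^i(\gamma\circ\phi)_t = \int_0^t \omega_i((\gamma\circ\phi)'_s)\,ds = \int_0^{\phi(t)}\omega_i(\gamma'_u)\,du = S^i(\gamma)_{\phi(t)}$ follows by the substitution $u=\phi(s)$. For the inductive step, assuming $S^{(i_1,\ldots,i_{m-1})}(\gamma\circ\phi)_t = S^{(i_1,\ldots,i_{m-1})}(\gamma)_{\phi(t)}$, the same substitution in $S^I(\gamma\circ\phi)_t = \int_0^t S^{(i_1,\ldots,i_{m-1})}(\gamma\circ\phi)_s\,\omega_{i_m}((\gamma\circ\phi)'_s)\,ds$ gives $S^I(\gamma)_{\phi(t)}$; evaluating at $t=d$ with $\phi(d)=b$ yields $S^I(\gamma\circ\phi)=S^I(\gamma)$.

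I do not expect any genuine obstacle here: the result is essentially the classical reparametrization invariance of iterated integrals, and the only thing the Lie group setting changes is that the integrands are $\omega_{i_j}(\gamma'_s)$ rather than coordinate derivatives — but these are still scalar functions of $s$, so the substitution argument is identical. The one place to be careful, and the closest thing to a ``main step,'' is bookkeeping the piecewise-regular hypothesis: confirming that $\gamma\circ\phi$ lies in $PG$ (so that the signature is even defined) and that the finitely many non-differentiable points contribute nothing to the integrals. I would state this explicitly but not belabor it.
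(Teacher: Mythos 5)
Your proposal is correct and matches the paper's argument: the paper proves the first-level identity by the change of variables $\tau=\phi(t)$ and then handles higher orders ``by induction using the same argument,'' which is precisely your second (inductive) route, and your first route via the simplex formula is just the unrolled version of the same substitution. The extra bookkeeping you flag about $\gamma\circ\phi$ remaining piecewise regular and the measure-zero set of non-differentiable points is a reasonable refinement that the paper leaves implicit.
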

\begin{proof}
    This is the Change of Variables Theorem. Reparametrization invariance of the first level of the signature is given as
    \[
        S^i(\gamma \circ \phi) = \int_c^d \omega_i((\gamma \circ \phi)'_t) dt 
        = \int_c^d \omega_i (\gamma'_{\phi_t}) \phi'_t dt 
        = \int_a^b \omega_i (\gamma'_\tau) d\tau 
        = S^i(\gamma).
    \]
    Invariance for higher order terms is shown by induction using the same argument. 
\end{proof}
In particular this proposition justifies our choice of only considering paths parametrized by $[0,1]$, as any other path can be reparametrized into this domain. Next, we would like to understand how scaling of paths in $G$ given in Definition~\ref{def:scaling} affects the path signature. Note that the vector space scaling in $\bar{\fg}$ induces a dilation map in $T((\bar{\fg}))$. Explicitly, we define the map $\delta_\lambda : T((\bar{\fg})) \rightarrow T((\bar{\fg}))$ as
\begin{equation}
\label{eq:tensordilation}
    \delta_\lambda\mathbf{t} \coloneqq (\mathbf{t}_0, \lambda \mathbf{t}_1, \lambda^2 \mathbf{t}_2, \ldots).
\end{equation}

\begin{proposition}
\label{prop:ps_scaling}
Let $\gamma \in PG$ and $\lambda \geq 0$. Then $S(\lambda \cdot \gamma) = \delta_\lambda S(\gamma)$.
\end{proposition}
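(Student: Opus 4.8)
The plan is to reduce the claim to a computation at the level of the multi-index components $S^I$, since two elements of $T((\bar\fg))$ agree iff all their components agree, and since $\delta_\lambda$ acts on the degree-$m$ part by multiplication by $\lambda^m$. So it suffices to show that $S^I(\lambda\cdot\gamma) = \lambda^m S^I(\gamma)$ for every multi-index $I = (i_1,\dots,i_m)$ of length $m$. By the non-inductive formula~\eqref{eq:pathsignatureI}, $S^I(\gamma) = \int_{\Delta^m}\omega_{i_1}(\gamma'_{t_1})\cdots\omega_{i_m}(\gamma'_{t_m})\,dt_1\cdots dt_m$, so the whole statement will follow once I understand how the derivative (viewed in $\fg$) of the rescaled path $\lambda\cdot\gamma$ compares to that of $\gamma$.

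The key step is therefore to identify $\Psi_{\gamma_0}^{-1}(\lambda\cdot\gamma)$ in terms of $\Psi_{\gamma_0}^{-1}(\gamma)$. By Definition~\ref{def:scaling}, $\lambda\cdot\gamma = \Psi_{\gamma_0}\circ c_\lambda\circ\Psi_{\gamma_0}^{-1}(\gamma)$, so applying $\Psi_{\gamma_0}^{-1}$ to both sides gives $\Psi_{\gamma_0}^{-1}(\lambda\cdot\gamma) = c_\lambda\,\Psi_{\gamma_0}^{-1}(\gamma)$; that is, if $f_t := \Psi_{\gamma_0}^{-1}(\gamma)_t \in\fg$ denotes the "derivative in the Lie algebra" of $\gamma$ at time $t$, then the corresponding object for $\lambda\cdot\gamma$ is $\lambda f_t$ at each $t$. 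This is the crucial point: the scaling was defined precisely so that it acts by pointwise scalar multiplication on the Lie-algebra-valued derivative, even though it does not act simply on the path in $G$ itself. I will need to be slightly careful to connect $\Psi_{\gamma_0}^{-1}(\gamma)_t$ with the quantity $\omega_i(\gamma'_t)$ appearing in Definition~\ref{def:ps}: the path signature is defined by evaluating the left-invariant $1$-forms $\omega_i$ on $\gamma'_t\in T_{\gamma_t}G$, which by left-invariance equals $\omega_i$ evaluated on the element of $\fg$ representing $\gamma'_t$, and that element is exactly $\Psi_{\gamma_0}^{-1}(\gamma)_t = f_t$. Hence $\omega_i\big((\lambda\cdot\gamma)'_t\big) = \omega_i(\lambda f_t) = \lambda\,\omega_i(f_t) = \lambda\,\omega_i(\gamma'_t)$ by linearity of $\omega_i$, at every point of differentiability (a cofinite set).

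With that identity in hand, the computation closes immediately: substituting into~\eqref{eq:pathsignatureI},
\[
S^I(\lambda\cdot\gamma) = \int_{\Delta^m}\lambda\,\omega_{i_1}(\gamma'_{t_1})\cdots\lambda\,\omega_{i_m}(\gamma'_{t_m})\,dt_1\cdots dt_m = \lambda^m\int_{\Delta^m}\omega_{i_1}(\gamma'_{t_1})\cdots\omega_{i_m}(\gamma'_{t_m})\,dt_1\cdots dt_m = \lambda^m S^I(\gamma),
\]
so $S(\lambda\cdot\gamma) = 1 + \sum_{m\geq1}\sum_{|I|=m}\lambda^m S^I(\gamma)\,e_{i_1}\otimes\cdots\otimes e_{i_m} = \delta_\lambda S(\gamma)$ by~\eqref{eq:tensordilation}. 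I expect the only real obstacle to be the bookkeeping in the second paragraph — namely making the identification $\omega_i((\lambda\cdot\gamma)'_t) = \lambda\,\omega_i(\gamma'_t)$ precise, since this requires unwinding that "$\gamma'_t$ viewed in $\fg$" (via $L_{\gamma_t^{-1}*}$, equivalently via $\Psi_{\gamma_0}^{-1}$) is the object on which $c_\lambda$ acts, and noting the (harmless, finite) set of nondifferentiable points does not affect the integrals. Everything after that is a one-line change of the integrand and a factor of $\lambda$ pulled out of each of the $m$ slots.
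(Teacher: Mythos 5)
Your proposal is correct and follows essentially the same route as the paper: reduce to the component formula, observe that by Definition~\ref{def:scaling} the Lie-algebra-valued derivative of $\lambda\cdot\gamma$ is $\lambda$ times that of $\gamma$, and pull one factor of $\lambda$ out of each of the $m$ slots of the iterated integral. The paper's proof simply writes $\omega_{i_j}(\lambda\gamma'_{t_j})$ directly without the careful unwinding of $\Psi_{\gamma_0}^{-1}$ that you supply, so your version is the same argument with the implicit step made explicit.
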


\begin{proof}
    Consider the multi-index $I = (i_1, \ldots, i_k)$. Then,
    \begin{align*}
        S^I(\lambda \cdot \gamma) &= \int_{\Delta^k} \omega_{i_1}(\lambda \gamma'_{t_1}) \ldots \omega_{i_k}(\lambda \gamma'_{t_k}) \, dt_1 \ldots dt_k \\
        & = \lambda^k \int_{\Delta^k} \omega_{i_1}(\gamma'_{t_1}) \ldots \omega_{i_k}(\gamma'_{t_k}) \, dt_1 \ldots dt_k \\
        & = \lambda^k S^I(\gamma).
    \end{align*}
\end{proof}
We have seen that the group structure on $G$ allows us to define a group structure on $PG$ by pointwise multiplication. The group structure on $G$ allows us to obtain another group structure on a quotient of $PG$ where the group operation is given by concatenation. Let $\alpha, \beta \in PG$. The \style{concatenation} of $\alpha$ and $\beta$ is defined to be 
\begin{equation*}
    (\alpha * \beta)_t = \left\{
        \begin{array}{cl}
            \alpha_{2t} & : t\in [0,\frac12) \\
            \alpha_1(\beta_0)^{-1} \beta_{2t-1} & : t \in [\frac12,1].
        \end{array}
    \right.
\end{equation*}
The inverse of a path $\gamma$ is defined to be the same path, but in the reverse direction
\begin{equation*}
    (\gamma^{-1})_t = \gamma_{1-t}.
\end{equation*}

Concatenation or inversion of piecewise regular paths is still piecewise regular. In order to obtain an identity element, we must quotient out by an equivalence relation.

\begin{definition}
    A path $\gamma \in PG$ is called \style{reducible} if there exist paths $\alpha, \beta, \zeta \in PG$ such that $\gamma = \alpha*\zeta * \zeta^{-1} * \beta$, up to reparametrization. The path $\alpha * \beta$ is called a \style{reduction} of $\gamma$. We define the reduction of $\zeta * \zeta^{-1}$ to be $c_{e}$, the constant path at the identity $e \in G$.  A path $\gamma$ is \style{irreducible} if no reduction exists. An irreducible path $\tilde{\gamma}$ obtained by finitely many iterative reductions of a path $\gamma$ is called an \style{irreducible reduction} of $\gamma$. 
\end{definition}

\begin{figure}[!htbp]
\centering
	\includegraphics[width=0.6\textwidth]{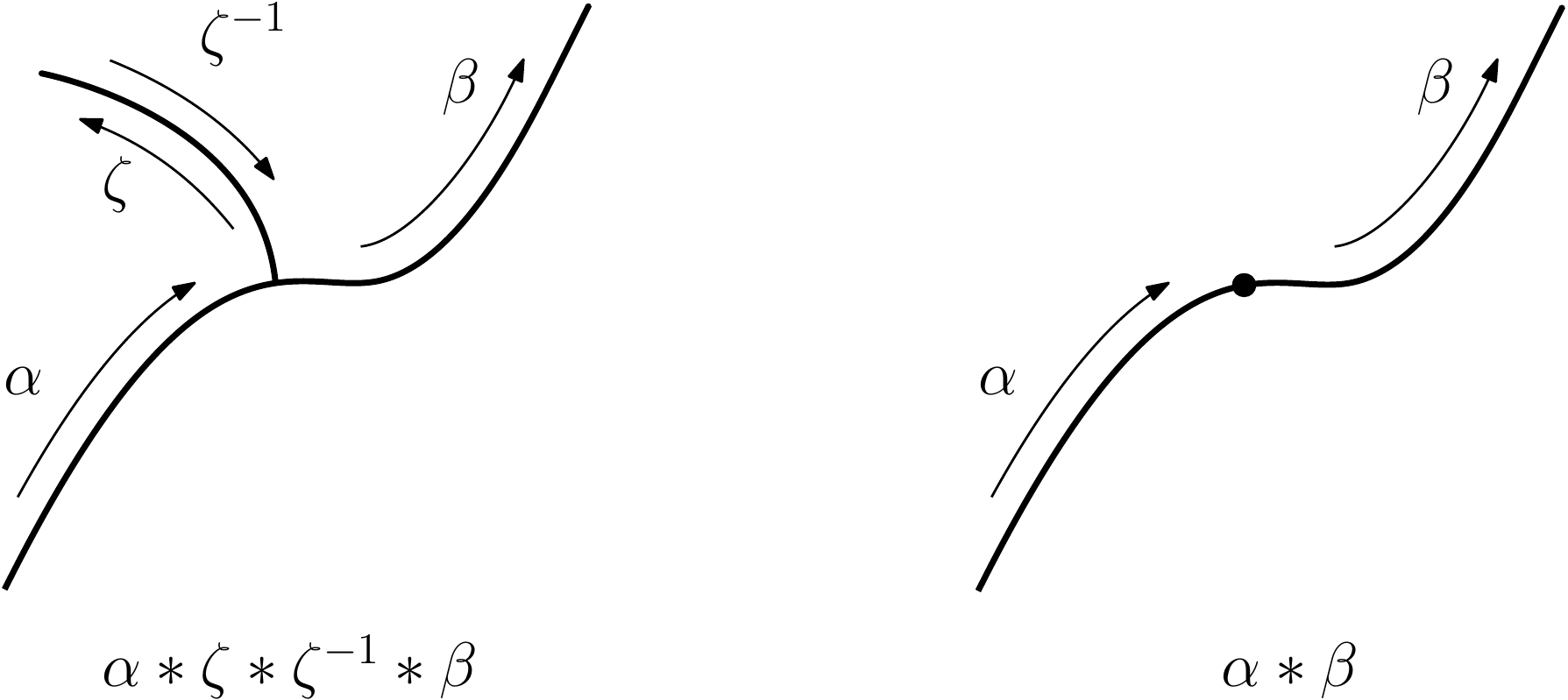}
	\caption{(Left) An example of a reducible path $\alpha * \zeta * \zeta^{-1} * \beta$. (Right) The irreducible reduction of the path on the left.}
\end{figure}

\begin{theorem}[\cite{chen_integration_1958}]
    Every piecewise regular path $\gamma \in PG$ has a unique irreducible reduction up to reparametrization.
\end{theorem}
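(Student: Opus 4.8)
\emph{Setup.} The plan is to treat this as a terminating, confluent rewriting system whose single rewrite step is the removal of one spur, $\alpha*\zeta*\zeta^{-1}*\beta \mapsto \alpha*\beta$, so that ``the irreducible reduction'' of $\gamma$ is its normal form. First I would record two elementary facts, obtained after reparametrizing $\gamma$ (say by arc length, which is finite since $\ell(\gamma)=|\gamma|_{1-var}$) so that $\gamma$ becomes, up to reparametrization, a finite concatenation $\gamma_1*\cdots*\gamma_n$ of immersed arcs, each a $C^1$ path with nowhere-vanishing derivative on a closed interval. (a) An immersed arc is locally injective, hence admits no \emph{immediate retrace} --- a time $t_0$ and $\epsilon>0$ with $\gamma|_{[t_0-\epsilon,t_0]}$ a reparametrization of the reverse of $\gamma|_{[t_0,t_0+\epsilon]}$ --- and is therefore irreducible. (b) More generally, $\gamma$ is reducible if and only if it has an immediate retrace somewhere: one direction is immediate from the definition, and for the other a decomposition $\gamma\circ\phi=\alpha*\zeta*\zeta^{-1}*\beta$ exhibits a retrace at the apex, which persists under the reparametrization $\phi^{-1}$. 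By (a)--(b), retraces occur only at the finitely many non-smooth points (``corners'') $t_1,\dots,t_{n-1}$.

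\emph{Existence.} I would run the greedy procedure: while $\gamma$ has a retrace at some corner, remove the \emph{maximal} spur straddling that corner --- this exists because a spur can be enlarged in essentially one way (prepend a path $\mu$ to $\zeta$ and append $\mu^{-1}$ to $\zeta^{-1}$) and the path has finite length. Removing a maximal spur replaces $\alpha*\zeta*\zeta^{-1}*\beta$ by $\alpha*\beta$; the only corner whose retrace-status can change is the new junction $\alpha_1=\beta_0$, and were that junction a retrace --- so that a tail $\mu$ of $\alpha$ equals the reverse of a head $\mu^{-1}$ of $\beta$ --- then $(\mu*\zeta)*(\mu*\zeta)^{-1}$ would be a strictly larger spur, contradicting maximality. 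Hence the number of ``retrace corners'' drops by at least one at each step (the apex disappears) and never increases; being a non-negative integer it reaches $0$ after finitely many steps, yielding an irreducible reduction of $\gamma$.

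\emph{Uniqueness.} The maximal-spur-removal relation is terminating by the previous paragraph, so by Newman's lemma it suffices to prove local confluence: if $\gamma$ reduces in one step to each of $\gamma'$ and $\gamma''$ by removing spurs $\sigma_1,\sigma_2$, then $\gamma'$ and $\gamma''$ have a common reduct. This is a critical-pair analysis of how $\sigma_1,\sigma_2$ can sit inside $\gamma$: if their time-intervals are disjoint the two removals commute; if one lies inside the $\zeta$ or $\zeta^{-1}$ half of the other, or the two overlap partially, one uses that a spur is its own inverse ($(\zeta*\zeta^{-1})^{-1}=\zeta*\zeta^{-1}$ up to reparametrization) and that such overlaps force further spurs to appear, so that each side continues to a common path. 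Finally I would reconcile this with the theorem's stated notion (arbitrary iterated reductions): removing a non-maximal spur $\zeta*\zeta^{-1}$ leaves behind the spur $\mu*\mu^{-1}$ whose subsequent removal completes it to the corresponding maximal removal, so any reduction sequence can be interleaved with one using only maximal spurs, and therefore terminates at the same irreducible path.

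\emph{Main obstacle.} The genuine work is the confluence step: enumerating the overlap configurations of two spurs in a piecewise regular path and verifying each closes up, together with the bookkeeping that reconciles arbitrary and maximal reductions. The weak regularity hypotheses also need mild care in the preliminary passage to a concatenation of immersed arcs and in the claim that a spur has a well-defined maximal enlargement, but these are routine given finiteness of both the length and the corner set.
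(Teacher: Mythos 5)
The paper does not prove this statement; it is quoted directly from Chen's 1958 paper, so there is no in-text argument to compare against. Your overall strategy --- treat spur removal as a terminating rewriting system, get existence of a normal form by greedy removal of maximal spurs, and get uniqueness from Newman's lemma via local confluence --- is the right one and is essentially the classical route (the path analogue of uniqueness of reduced words in a free group). The setup and existence halves are sound: localizing retraces to the finitely many corners via local injectivity of immersed arcs, the well-definedness of the maximal spur at a fixed apex (after arc-length reparametrization the matching between the two arms is forced, so the spurs at a given apex are nested and their union is again a spur), and the strictly decreasing count of retrace corners are all correct modulo routine care.

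The gap is exactly where you flag it, and it is not cosmetic: the local confluence step is the entire mathematical content of the uniqueness assertion, and you do not carry it out. The sentence ``such overlaps force further spurs to appear, so that each side continues to a common path'' is the statement to be proved, not an argument for it. The genuinely delicate configuration is partial overlap --- apexes $m_1 < m_2$ with the descending arm of the first spur meeting the ascending arm of the second --- where one must extract the induced spurs from the interaction and verify that both one-step reducts admit a common further reduction; this case analysis is where a careless argument can break, and it is the part Chen actually had to work for. A secondary, fixable issue: you establish termination only for the maximal-spur-removal relation, but your confluence claim is phrased for one-step removal of arbitrary spurs, so Newman's lemma needs termination of that larger relation. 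This does hold (each removal strictly decreases the number of non-differentiable points, since the apex of a spur built from regular arms is always a genuine corner), but it must be said, since a decrease in length alone does not rule out infinite reduction sequences. As written, the proposal is a correct and well-organized reduction of the theorem to its hard step, not a proof of it.
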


This result allows us to define the notion of tree-like equivalence.

\begin{definition}
    A path $\gamma \in PG$ is a \style{tree-like path} if its irreducible reduction is $c_e$, the constant path at the identity. Two paths $\alpha, \beta$ are \style{tree-like equivalent}, $\alpha \sim_t \beta$, if $\alpha * \beta^{-1}$ is a tree-like path. 
\end{definition}

\begin{remark}
    The definition of tree-like equivalence includes translations. Indeed, suppose $\gamma \in PG$ and $g \in G$. Define $g\gamma$ and $\gamma g$ to be the left and right translations of the path $\gamma$ by $g$. Then $\gamma \sim_t g \gamma$ since $ \gamma *(g \gamma)^{-1} = \gamma * \gamma^{-1}$ by the definition of the concatenation operator. The same holds for right translations.
    
    Additionally, tree-like equivalence also includes reparametrization since the definition of reductions are reparametrization invariant. 
\end{remark}

\begin{proposition}
    Tree-like equivalence is an equivalence relation.
\end{proposition}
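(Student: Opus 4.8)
The plan is to verify the three defining properties of an equivalence relation — reflexivity, symmetry, and transitivity — on the relation $\sim_t$ on $PG$, working through the group structure of $PG$ under concatenation (modulo reductions) together with the fact, from the preceding theorem, that every path has a unique irreducible reduction. The key technical input is that ``tree-like'' means ``irreducible reduction equals $c_e$,'' and that concatenation is associative up to reparametrization with $c_e$ as an identity and $\gamma^{-1}$ as an inverse on the quotient.

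First, for \emph{reflexivity}: given $\gamma \in PG$, I would show $\gamma * \gamma^{-1}$ is tree-like. This is essentially the definition of reducible: $\gamma * \gamma^{-1}$ is of the form $\alpha * \zeta * \zeta^{-1} * \beta$ with $\alpha = \beta = c_e$ and $\zeta = \gamma$ (up to reparametrization and the basepoint bookkeeping in the concatenation formula), so its reduction is $c_e * c_e = c_e$, hence its irreducible reduction is $c_e$ and $\gamma \sim_t \gamma$. Second, for \emph{symmetry}: suppose $\alpha \sim_t \beta$, i.e. $\alpha * \beta^{-1}$ is tree-like. I would observe that $\beta * \alpha^{-1}$ is (up to reparametrization) the inverse path $(\alpha * \beta^{-1})^{-1}$, since $(\gamma_1 * \gamma_2)^{-1} \sim \gamma_2^{-1} * \gamma_1^{-1}$ and $(\beta^{-1})^{-1} = \beta$. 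A path is tree-like iff its reverse is tree-like — reversing a reduction $\alpha * \zeta * \zeta^{-1} * \beta$ yields $\beta^{-1} * \zeta * \zeta^{-1} * \alpha^{-1}$, another reducible path, and this respects the iterative reduction process — so $\beta * \alpha^{-1}$ is tree-like and $\beta \sim_t \alpha$.

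Third, for \emph{transitivity}: suppose $\alpha \sim_t \beta$ and $\beta \sim_t \zeta$, so both $\alpha * \beta^{-1}$ and $\beta * \zeta^{-1}$ are tree-like. I would consider the concatenation $\alpha * \zeta^{-1}$ and rewrite it, up to reparametrization and associativity, as $(\alpha * \beta^{-1}) * (\beta * \zeta^{-1})$, using that $\beta^{-1} * \beta$ reduces to $c_e$. The concatenation of two tree-like paths is tree-like: if each factor reduces iteratively to $c_e$, performing those reductions inside the concatenation reduces it to $c_e * c_e = c_e$, and by uniqueness of the irreducible reduction this is \emph{the} irreducible reduction. Hence $\alpha * \zeta^{-1}$ is tree-like and $\alpha \sim_t \zeta$.

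The main obstacle is not conceptual but bookkeeping: the concatenation operator as defined carries the correction factor $\alpha_1(\beta_0)^{-1}$ to glue endpoints, so identities like ``$(\alpha * \beta^{-1}) * (\beta * \zeta^{-1}) \sim \alpha * \zeta^{-1}$'' and ``$(\gamma_1 * \gamma_2)^{-1} \sim \gamma_2^{-1} * \gamma_1^{-1}$'' hold only up to reparametrization and must be checked to interact correctly with reductions — one needs that reduction is well-defined on reparametrization classes (stated in the remark following tree-like equivalence) and that inserting or deleting a $\zeta * \zeta^{-1}$ segment does not affect the irreducible reduction, which follows from its uniqueness. I would isolate the statement ``the set of tree-like paths is closed under reversal and under concatenation, and contains $c_e$'' as a lemma, prove it once from the uniqueness theorem, and then deduce all three axioms cleanly.
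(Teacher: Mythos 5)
Your proposal is correct and follows essentially the same route as the paper: reflexivity from the defining reduction of $\gamma * \gamma^{-1}$, symmetry by showing reversal sends reductions to reductions so that tree-like is preserved under inversion, and transitivity by noting $\gamma_1 * \gamma_3^{-1}$ is a reduction of the concatenation of two tree-like paths, invoking uniqueness of irreducible reductions. Isolating the closure properties of tree-like paths as a lemma is a reasonable organizational refinement but not a different argument.
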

\begin{proof}
    Let $\gamma, \gamma_1, \gamma_2, \gamma_3 \in PG$. Note that the subscript here denotes distinct paths, and does not denote the time parameter. By definition the reduction of $\gamma * \gamma^{-1}$ is the constant path, so $\gamma \sim_t \gamma$.
    
    Next, if $\gamma = \alpha * \zeta * \zeta^{-1} * \beta$, for paths $\alpha, \beta, \zeta \in PG$, then $\gamma^{-1} = \beta^{-1} * \zeta * \zeta^{-1} * \alpha^{-1}$. Thus, a path is reducible if and only if its inverse is reducible. Additionally, the reduction $\beta^{-1} * \alpha^{-1}$ of $\gamma^{-1}$ is the inverse of the reduction $\alpha * \beta$ of $\gamma$. Now, suppose $\gamma_1 \sim_t \gamma_2$ so that $\gamma_1 * \gamma_2^{-1}$ is tree-like. By the above argument, $\gamma_2 * \gamma_1^{-1}$ is also tree-like, so $\gamma_2 \sim_t \gamma_1$.
    
    Finally, the concatentation $\alpha * \beta$ of two tree-like paths is also tree-like, by performing all the reductions of $\alpha$ and then performing all the reductions on $\beta$. Suppose $\gamma_1 \sim_t \gamma_2$ and $\gamma_2 \sim_t \gamma_3$. Then, $\gamma_1 *\gamma_3^{-1}$ is a reduction of $( \gamma_1 * \gamma_2^{-1}) * (\gamma_2 * \gamma_3^{-1})$, and the latter path is tree-like since it is a concatenation of two tree-like paths. By the uniqueness of irreducible reductions, $\gamma_1 * \gamma_3^{-1}$ is tree-like. Thus, $\gamma_1 \sim_t \gamma_3$. 
\end{proof}


We can now define $\widetilde{PG} \coloneqq PG/\sim_t$ to be the space of tree-like equivalence classes of piecewise regular paths in $G$. We define the identity element to be $[c_e] \in\widetilde{PG}$, the equivalence class of the constant path at the identity. Compatibility of concatenation and inversion are implicit in the above proof, and the group axioms are easily checked. Thus, we have shown the following.

\begin{proposition}
    The quotient $\widetilde{PG}$ is a group. 
\end{proposition}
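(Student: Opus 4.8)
The plan is to show that concatenation, which by the preceding discussion is already a well-defined map $*\colon PG\times PG\to PG$, descends to a binary operation on $\widetilde{PG}=PG/\sim_t$, and then to verify associativity, the existence of an identity, and the existence of inverses directly at the level of equivalence classes. The two tools I will lean on throughout are Chen's theorem on \emph{uniqueness} of irreducible reductions and the algebraic facts already extracted in the proof that $\sim_t$ is an equivalence relation: that inversion reverses concatenation (i.e.\ $(\alpha*\beta)^{-1}$ reduces to $\beta^{-1}*\alpha^{-1}$), that a concatenation of tree-like paths is tree-like, and that reductions are insensitive to reparametrization and to left and right translation.

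First I would prove well-definedness, which I expect to be the main obstacle. Given $\alpha_1\sim_t\alpha_2$ and $\beta_1\sim_t\beta_2$, I must show $\alpha_1*\beta_1\sim_t\alpha_2*\beta_2$, that is, that $(\alpha_1*\beta_1)*(\alpha_2*\beta_2)^{-1}$ is tree-like. Using that inversion reverses concatenation and that $*$ is associative up to reparametrization, this path has the same irreducible reduction as $\alpha_1*(\beta_1*\beta_2^{-1})*\alpha_2^{-1}$. The middle factor $\beta_1*\beta_2^{-1}$ is tree-like by hypothesis, and since a reduction is a local cancellation it can be carried out inside the ambient path; thus the ambient path reduces to $\alpha_1*c_e*\alpha_2^{-1}$, hence to $\alpha_1*\alpha_2^{-1}$, which is tree-like because $\alpha_1\sim_t\alpha_2$. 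By uniqueness of irreducible reductions the original path is therefore tree-like, so $\alpha_1*\beta_1\sim_t\alpha_2*\beta_2$ and $[\alpha][\beta]\coloneqq[\alpha*\beta]$ is unambiguous. The delicate point is precisely the appeal to uniqueness in order to conclude that cancelling a tree-like subpath does not change the tree-like class of the whole.

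The group axioms then follow quickly. For associativity, $(\alpha*\beta)*\zeta$ and $\alpha*(\beta*\zeta)$ differ only by a (piecewise-linear, increasing) reparametrization of $[0,1]$ — the translation constants built into the concatenation formula are arranged exactly so that the two agree on the nose up to reparametrization — hence they represent the same class. For the identity I take $[c_e]$: the second half of $\gamma*c_e$ is the constant path at $\gamma_1$, which is tree-like (it equals $\zeta*\zeta^{-1}$ for $\zeta$ the constant path at $\gamma_1$, hence reduces to $c_e$), so $\gamma*c_e$ reduces to $\gamma$ and $\gamma*c_e\sim_t\gamma$; the argument for $c_e*\gamma\sim_t\gamma$ is symmetric. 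For inverses, $\gamma*\gamma^{-1}$ is tree-like by the very definition of a tree-like path, so $[\gamma][\gamma^{-1}]=[c_e]$, and likewise $[\gamma^{-1}][\gamma]=[c_e]$ since $\gamma^{-1}*(\gamma^{-1})^{-1}=\gamma^{-1}*\gamma$ up to reparametrization. Hence $\widetilde{PG}$ is a group with $[\gamma]^{-1}=[\gamma^{-1}]$.
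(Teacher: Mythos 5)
Your proof is correct and follows the same route the paper intends: the paper itself only remarks that ``compatibility of concatenation and inversion are implicit'' in the proof that $\sim_t$ is an equivalence relation and that ``the group axioms are easily checked,'' and you have simply carried out those checks in full, using exactly the tools the paper sets up (uniqueness of irreducible reductions, reversal of concatenation under inversion, and invariance of reductions under reparametrization and translation). The well-definedness argument --- reducing the tree-like middle factor inside the ambient concatenation and then invoking uniqueness of the irreducible reduction --- is the genuine content here, and you have it right.
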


We can now state Chen's injectivity theorem.

\begin{theorem}[\cite{chen_integration_1958}]
\label{thm:injectivity}
Suppose $G$ is a real Lie group. Let $\alpha, \beta \in PG$. Then $S(\alpha) = S(\beta)$ if and only if $\alpha$ and $\beta$ are tree-like equivalent.
\end{theorem}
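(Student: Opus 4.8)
The plan is to reduce the theorem to the already-established Euclidean case via the signature-preserving bijection between $PG$ and $P\R^N$ promised in Section~\ref{ssec:relationship_g_rn} (built from the correspondence $\Psi_g$ of Corollary~\ref{cor:lgla_corr}), combined with the Hambly--Lyons uniqueness theorem for bounded variation paths in $\R^N$ cited in the introduction. Concretely, I would first record the one direction that is elementary: if $\alpha \sim_t \beta$, then $\alpha * \beta^{-1}$ is tree-like, hence its irreducible reduction is $c_e$; since the path signature $S : PG \to T((\bar{\fg}))$ is invariant under reparametrization (Proposition~\ref{prop:reparam}) and unchanged by inserting a segment of the form $\zeta * \zeta^{-1}$ — this last fact follows from the multiplicativity of $S$ under concatenation, i.e.\ $S(\zeta * \zeta^{-1}) = S(\zeta)\otimes S(\zeta^{-1}) = S(\zeta)\otimes S(\zeta)^{-1} = 1$ — we get $S(\alpha * \beta^{-1}) = 1$, and then $S(\alpha) = S(\beta)$ by multiplicativity again. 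This half uses only machinery already in the excerpt (Chen's identity for $S$ under concatenation, left-translation invariance to handle the $\alpha_1(\beta_0)^{-1}$ gluing factor, and the uniqueness of irreducible reductions, Theorem of \cite{chen_integration_1958}).

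For the converse — the substantive direction — suppose $S(\alpha) = S(\beta)$. The key step is to transport the problem to $\R^N$. Using the bijection $\Phi : PG \to P\R^N$ (composing $\Psi_{\gamma_0}^{-1} : PG_{\gamma_0} \to \bar{P}\fg$ with integration $\bar{P}\fg \to P\R^N$, after identifying $\bar{\fg}\cong\R^N$ via the chosen basis $e_1,\dots,e_N$), one checks that $S^I(\gamma) = S^I(\Phi\gamma)$ for every multi-index $I$: indeed $\omega_i(\gamma'_t)$ is by definition the $i$-th coordinate of $\Psi_{\gamma_0}^{-1}(\gamma)_t = \gamma'_t \in \bar{\fg}$, which is exactly $(\Phi\gamma)^i{}'_t$, so the iterated-integral formula \eqref{eq:pathsignatureI} for $\gamma$ coincides verbatim with the Euclidean iterated-integral formula for $\Phi\gamma$. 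Hence $S(\alpha) = S(\beta)$ in $T((\bar{\fg}))$ forces $S(\Phi\alpha) = S(\Phi\beta)$ in $T((\R^N))$, and by Hambly--Lyons \citep{hambly_uniqueness_2010} the $\R^N$-paths $\Phi\alpha$ and $\Phi\beta$ differ by a tree-like path: $(\Phi\alpha) * (\Phi\beta)^{-1}$ reduces to the constant path $0$.

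It then remains to push this tree-like equivalence back through $\Phi^{-1}$ to conclude $\alpha \sim_t \beta$ in $PG$. The point is that $\Phi$ intertwines the relevant operations: reparametrization, reversal, and concatenation in $PG$ correspond under $\Phi$ to the same operations in $P\R^N$ (reversal and concatenation on the $\fg$-derivative side are pointwise, modulo the left-translation gluing factor which $\Psi$ absorbs precisely because the derivative is taken in the left-trivialized frame), so a reduction $\gamma = \alpha' * \zeta * \zeta^{-1} * \beta'$ of a path downstairs lifts to a reduction upstairs and vice versa. Therefore $\alpha * \beta^{-1}$ is tree-like in $PG$ iff $\Phi(\alpha * \beta^{-1}) = (\Phi\alpha)*(\Phi\beta)^{-1}$ is tree-like in $P\R^N$, and the latter holds, giving $\alpha \sim_t \beta$.

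The main obstacle I expect is verifying cleanly that $\Phi$ is genuinely a morphism for all three path operations — in particular that the basepoint-correction factors $\alpha_1(\beta_0)^{-1}$ appearing in the definition of concatenation on $PG$ disappear after applying $\Psi^{-1}$ (they should, since $\Psi^{-1}$ only sees derivatives, and left translation acts trivially on left-trivialized derivatives by the proof of left-translation invariance), and that $\Phi$ respects the reduction relation $\zeta * \zeta^{-1} \rightsquigarrow c_e$ in both directions. Once that bookkeeping is in place, the theorem is a formal consequence of the Euclidean result; there is no new analysis required, only the observation that Chen's original construction factors through the Euclidean one via $\Psi$.
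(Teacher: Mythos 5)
The paper does not prove Theorem~\ref{thm:injectivity}: it is quoted from \cite{chen_integration_1958} and used as a black box, so there is no in-paper argument to compare against and your proposal must be judged on its own terms. Its architecture --- dispatch the easy direction with Chen's identity (Equation~\ref{eq:chens_identity}), and reduce the hard direction to the Euclidean case through the signature-preserving bijection of Proposition~\ref{prop:relationship} built from Corollary~\ref{cor:lgla_corr} --- is sound, and the bookkeeping point you single out as the main obstacle is indeed the crux and does go through: the left-trivialized derivative of $g\gamma$ coincides with that of $\gamma$, so the gluing factor $\alpha_1(\beta_0)^{-1}$ in the definition of concatenation vanishes on the $\overline{P}\fg$ side, and the bijection therefore commutes with concatenation, reversal and reparametrization. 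Since reducibility and tree-likeness are defined purely in terms of those three operations (and are translation-invariant), $\alpha*\beta^{-1}$ is tree-like in $PG$ if and only if its image is tree-like in $P\R^N$. Granting the Euclidean case as an independently proved input, this is a legitimate --- and arguably the cleanest modern --- proof of the Lie-group statement.

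Two caveats, one substantive. First, you invoke \cite{hambly_uniqueness_2010}, but their theorem characterizes trivial signature by a \emph{height-function} notion of tree-like path (the right notion for general bounded variation paths), not by the reduction-based notion this paper uses. For piecewise regular paths the two notions agree, but that agreement is not free: it is essentially the Euclidean case of Chen's theorem itself. To close your argument as stated you should either cite the $\R^N$ case of \cite{chen_integration_1958} directly, which is phrased in reduction terms and has a proof independent of the Lie-group statement, or supply the lemma that a piecewise regular path which is tree-like in the Hambly--Lyons sense reduces to a point. Second, and minor: your easy direction uses $S(\zeta^{-1}) = S(\zeta)^{-1}$, the inverse taken in the group of tensors with unit constant term. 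This is standard but is nowhere established in the paper; it needs a short proof, e.g.\ the change-of-variables identity $S^{(i_1,\ldots,i_m)}(\zeta^{-1}) = (-1)^m S^{(i_m,\ldots,i_1)}(\zeta)$ combined with the shuffle identity of Theorem~\ref{thm:shuffle} (or an inductive integration by parts) to verify that $S(\zeta)\otimes S(\zeta^{-1})=1$. With those two repairs the proof is complete.
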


Chen also showed that the signature is a group homomorphism. Namely, suppose $\alpha, \beta \in PG$. Chen's identity~\citep{chen_iterated_1954} states that
\begin{equation}
\label{eq:chens_identity}
    S(\alpha * \beta) = S(\alpha) \otimes S(\beta).
\end{equation}
Putting the previous results together, we obtain the following characterization.

\begin{proposition}
    The path signature map $S: \widetilde{PG} \rightarrow T((\overline{\fg}))$ is an injective group homomorphism.
\end{proposition}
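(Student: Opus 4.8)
The plan is to assemble this proposition directly from the results already established in the excerpt, since essentially all the heavy lifting has been done. The statement asserts that $S: \widetilde{PG} \to T((\overline{\fg}))$ is a well-defined injective group homomorphism, so there are really three things to verify: that $S$ descends to a well-defined map on the quotient $\widetilde{PG} = PG/\sim_t$, that it is a homomorphism with respect to the concatenation group structure, and that it is injective.

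First I would address well-definedness. The map $S$ is defined on $PG$ via Definition~\ref{def:ps}, and we need to check it is constant on tree-like equivalence classes. Suppose $[\alpha] = [\beta]$ in $\widetilde{PG}$, i.e. $\alpha \sim_t \beta$, meaning $\alpha * \beta^{-1}$ is tree-like. By Theorem~\ref{thm:injectivity} (Chen's injectivity theorem), $S(\alpha) = S(\beta)$ holds if and only if $\alpha$ and $\beta$ are tree-like equivalent, so in particular $\alpha \sim_t \beta$ implies $S(\alpha) = S(\beta)$; hence $S$ factors through the quotient to give a well-defined map $S: \widetilde{PG} \to T((\overline{\fg}))$. (One should also note $S(c_e) = 1$, the identity of $T((\overline{\fg}))$, which follows since the constant path has vanishing derivative so every $S^I$ with $|I| \geq 1$ is zero — consistent with sending the identity $[c_e]$ to the identity $1 \in T((\overline{\fg}))$.)

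Next, the homomorphism property. On $PG$ we have Chen's identity~\eqref{eq:chens_identity}, $S(\alpha * \beta) = S(\alpha) \otimes S(\beta)$, which is stated in the excerpt. This descends to the quotient because concatenation is well-defined on $\widetilde{PG}$ (established in the discussion preceding the proposition that $\widetilde{PG}$ is a group) and because $S$ is well-defined on $\widetilde{PG}$ by the previous paragraph. One also checks that $S$ sends inverses to inverses: by Chen's identity, $S(\gamma) \otimes S(\gamma^{-1}) = S(\gamma * \gamma^{-1}) = S(c_e) = 1$ since $\gamma * \gamma^{-1}$ is tree-like, so $S(\gamma^{-1}) = S(\gamma)^{-1}$ in $T((\overline{\fg}))$ (noting $S(\gamma)$ is invertible, as its degree-zero term is $1$). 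Thus $S$ respects the group operations.

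Finally, injectivity on $\widetilde{PG}$. This is the content of Theorem~\ref{thm:injectivity}: if $S([\alpha]) = S([\beta])$, then $S(\alpha) = S(\beta)$, which by that theorem forces $\alpha \sim_t \beta$, i.e. $[\alpha] = [\beta]$ in $\widetilde{PG}$. So the induced map is injective. I do not anticipate any genuine obstacle here — the proposition is essentially a repackaging of Chen's injectivity theorem (Theorem~\ref{thm:injectivity}) together with Chen's identity~\eqref{eq:chens_identity} and the group structure on $\widetilde{PG}$; the only things requiring care are the bookkeeping points that $S$ is well-defined on the quotient (which is exactly the "only if" direction of injectivity read the other way) and that $S(\gamma)$ is group-invertible in $T((\overline{\fg}))$ so that "inverse goes to inverse" makes sense.
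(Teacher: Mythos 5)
Your proposal is correct and matches the paper's (implicit) argument exactly: the paper derives this proposition by ``putting the previous results together,'' namely Chen's injectivity theorem (Theorem~\ref{thm:injectivity}) for well-definedness on the quotient and injectivity, and Chen's identity~(\ref{eq:chens_identity}) for the homomorphism property. Your additional bookkeeping --- checking $S(c_e)=1$ and that $S(\gamma)$ is invertible with $S(\gamma^{-1})=S(\gamma)^{-1}$ --- is a sound and slightly more careful spelling-out of the same argument.
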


We will also require an internal multiplicative structure on the path signature coefficients which is an immediate generalization of the Euclidean path signature. 
\begin{definition}
    Let $k$ and $l$ be non-negative integers. A \style{$(k,l)$-shuffle} is a permutation of $\sigma$ of the set $\{1, 2, \ldots, k+l\}$ such that
    \begin{align*}
        \sigma^{-1}(1) < \sigma^{-1}(2) < \ldots < \sigma^{-1}(k)
    \end{align*}
    and
    \begin{align*}
        \sigma^{-1}(k+1) < \sigma^{-1}(k+2) < \ldots < \sigma^{-1}(k+l).
    \end{align*}
    We denote by $Sh(k,l)$ the set of $(k,l)$-shuffles.
Given two finite ordered multi-indices $I = (i_1, \ldots, i_k)$ and $J = (j_1, \ldots, j_l)$ , let $R = (r_1, \ldots, r_k, r_{k+1}, \ldots r_{k+1}) = (i_1, \ldots, i_k, j_1, \ldots, j_l)$ be the concatenated multi-index. The \style{shuffle product} of $I$ and $J$ is defined to be the multiset
\begin{align*}
    I \shuffle J = \left\{ \left( r_{\sigma(1)}, \ldots r_{\sigma(k+l)}\right) \, : \, \sigma \in Sh(k,l)\right\}.
\end{align*}
\end{definition}

As an example, suppose $I = (1, 2)$ and $J = (2,3)$. Then
\begin{align*}
    I \shuffle J = \left\{ (1,2,2,3), (1,2,2,3), (2,1,2,3), (1,2,3,2), (2,1,3,2), (2,3,1,2) \right\}.
\end{align*}

\begin{theorem}
\label{thm:shuffle}
    Let $I$ and $J$ be multi-indices in $[N]$, of lengths $k$ and $l$ respectively, and suppose $\gamma \in PG$. Then
    \begin{equation}
        S^I(\gamma)S^J(\gamma) = \sum_{K \in I \shuffle J} S^K(\gamma).
    \end{equation}
\end{theorem}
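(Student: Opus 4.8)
The plan is to reduce the identity to the classical shuffle relation for iterated integrals; the Lie group structure plays no further role, since the only input needed is the already-established fact that $s \mapsto \omega_i(\gamma'_s)$ is a bounded, piecewise continuous scalar function on $[0,1]$ for each $i \in [N]$ ($\gamma$ is piecewise regular and $\omega_i \in \fg^*$ is linear). Abbreviate $f_i(s) \coloneqq \omega_i(\gamma'_s)$. First I would use the non-inductive formula~\eqref{eq:pathsignatureI} together with Fubini's theorem (applicable because the integrand is bounded and measurable on a bounded set) to write
\begin{align*}
    S^I(\gamma)\, S^J(\gamma) = \int_{\Delta^k \times \Delta^l} \prod_{a=1}^{k} f_{i_a}(s_a) \prod_{b=1}^{l} f_{j_b}(t_b) \, ds\, dt.
\end{align*}
Relabeling $(s_1,\dots,s_k,t_1,\dots,t_l) = (u_1,\dots,u_{k+l})$ and writing $R = (r_1,\dots,r_{k+l}) = (i_1,\dots,i_k,j_1,\dots,j_l)$ for the concatenated index word, this becomes the integral of $\prod_{m=1}^{k+l} f_{r_m}(u_m)$ over the region $\{0 \le u_1 < \dots < u_k \le 1\} \times \{0 \le u_{k+1} < \dots < u_{k+l} \le 1\}$.

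Next I would perform the chamber decomposition of that region. Discarding the Lebesgue-null set on which two of the $u_m$ coincide, every remaining point has a well-defined total order of its coordinates, and the constraint that the first $k$ and the last $l$ coordinates are each increasing says precisely that this order is recorded by a $(k,l)$-shuffle $\sigma$, i.e. $\sigma^{-1}(1) < \dots < \sigma^{-1}(k)$ and $\sigma^{-1}(k+1) < \dots < \sigma^{-1}(k+l)$ in the sense of the definition above. Thus, up to a null set, $\Delta^k \times \Delta^l$ is the disjoint union over $\sigma \in Sh(k,l)$ of the chambers $C_\sigma = \{\, u_{\sigma(1)} < u_{\sigma(2)} < \dots < u_{\sigma(k+l)} \,\} \cap [0,1]^{k+l}$. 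On each $C_\sigma$ the measure-preserving substitution $v_m = u_{\sigma(m)}$ carries $C_\sigma$ onto $\Delta^{k+l}$ with unit Jacobian and turns the integrand into $\prod_{m=1}^{k+l} f_{r_{\sigma(m)}}(v_m)$, whence $\int_{C_\sigma} \prod_{m} f_{r_m}(u_m)\, du = S^{(r_{\sigma(1)},\dots,r_{\sigma(k+l)})}(\gamma)$ by~\eqref{eq:pathsignatureI}. As $\sigma$ ranges over $Sh(k,l)$ the word $(r_{\sigma(1)},\dots,r_{\sigma(k+l)})$ ranges over the multiset $I \shuffle J$, repetitions included (this is why the example preceding the theorem lists $(1,2,2,3)$ twice), so summing the chamber integrals gives exactly $\sum_{K \in I \shuffle J} S^K(\gamma)$.

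I expect the only genuinely delicate point --- and the main, though mild, obstacle --- to be the measure-theoretic bookkeeping of the chamber decomposition: one must check that the chambers $C_\sigma$ cover the product simplex up to a null set and that their pairwise intersections are null, so that the integral over the union splits as the sum of the chamber integrals. A fully self-contained alternative sidesteps this by inducting on $k+l$. With $I' = (i_1,\dots,i_{k-1})$ and $J' = (j_1,\dots,j_{l-1})$, the product rule together with the inductive definition of the higher-order paths gives, off the finitely many discontinuities of $\gamma'$,
\begin{align*}
    \frac{d}{dt}\left(S^I(\gamma)_t\, S^J(\gamma)_t\right)
    &= \left(S^{I'}(\gamma)_t\, S^J(\gamma)_t\right)\omega_{i_k}(\gamma'_t) \\
    &\quad + \left(S^I(\gamma)_t\, S^{J'}(\gamma)_t\right)\omega_{j_l}(\gamma'_t).
\end{align*}
Applying the inductive hypothesis to the two lower-order products on the right, integrating from $0$ to $1$ (both sides vanish at $t=0$), and invoking the multiset recursion $I \shuffle J = (I' \shuffle J)\cdot(i_k)\ \sqcup\ (I \shuffle J')\cdot(j_l)$ for the shuffle product (decomposition by last letter) then yields the identity; the base case $k=0$ or $l=0$ is immediate, since an empty iterated integral equals $1$.
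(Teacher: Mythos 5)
Your main argument is precisely the paper's proof: both write the product $S^I(\gamma)S^J(\gamma)$ as a single integral over $\Delta^k \times \Delta^l$ via Equation~\eqref{eq:pathsignatureI} and then invoke the standard decomposition of that product into $(k+l)$-simplices indexed by $(k,l)$-shuffles; you merely make explicit the measure-theoretic bookkeeping (null-set boundaries, unit-Jacobian reordering substitution) that the paper leaves implicit. The inductive alternative you sketch via the product rule and the last-letter recursion of the shuffle product is also sound, but it is not needed --- the chamber decomposition already completes the proof.
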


\begin{proof}
    Let $R = (r_1, \ldots, r_k, r_{k+1}, \ldots r_{k+l}) = (i_1, \ldots, i_k, j_1, \ldots, j_l)$. Writing out the signature on the left side of the equation using Equation~\ref{eq:pathsignatureI}, we get 
    \begin{align*}
        \int_{\Delta^k} \omega_{i_1}(\gamma'_{t_1}) \ldots &\omega_{i_k}(\gamma'_{t_k}) dt_1 \ldots dt_k \int_{\Delta^l} \omega_{j_1}(\gamma'_{t_1}) \ldots \omega_{j_l}(\gamma'_{t_l})dt_1 \ldots dt_l \\
        &= \int_{\Delta^k \times \Delta^l}\omega_{r_1}(\gamma'_{t_1}) \ldots \omega_{r_{k+l}}(\gamma'_{t_{k+l}}) \, dt_1 \ldots dt_{k+l},
    \end{align*}
    and the sum on the right side is
    \begin{align*}
        \sum_{\sigma \in Sh(k,l)} \int_{\Delta^{k+l}} \omega_{\sigma(r_1)}(\gamma'_{t_1}) \ldots \omega_{\sigma(r_{k+l})}(\gamma'_{t_{k+l}}) dt_1 \ldots dt_{k+l}.
    \end{align*}

    The equivalence of the two formulas is given by the standard decomposition of $\Delta^k \times \Delta^l$ into $(k+l)$-simplices,
    \begin{align*}
        \Delta^k \times \Delta^l &= \left \{ (t_1, \ldots, t_{k+l}) \, : \, 0 < t_1 < \ldots < t_k < 1, \, 0 < t_{k+1} < \ldots < t_{k+l} < 1\right\}\\
        & = \bigsqcup_{\sigma \in Sh(k,l)} \left\{ (t_{\sigma(1)}, \ldots, t_{\sigma(k+l)}) \, : \, 0 < t_1 < \ldots < t_{k+l} < 1\right\}.
    \end{align*}
\end{proof}


\subsection{Relationship between paths in $G$ and $\R^N$}
\label{ssec:relationship_g_rn}
In this subsection, we define a signature-preserving bijection between piecewise regular paths in $G$ and piecewise regular paths in $\R^N$ which start at the identity and origin respectively. 

The idea behind the following proposition is that the path signature computation only requires the first derivative of paths. The Lie bracket is unused in the computation of path signatures, so we can simply consider the Lie algebras of Lie group as vector spaces. Thus, we can identify the underlying vector space of the Lie algebra $\fg$ with the underlying vector space of the Lie algebra $\fr$ of $\R^N$. We then use the correspondence $\Psi_g: \overline{P}\fg \rightarrow PG_g$ between piecewise continuous paths $\overline{P}\fg$ and piecewise regular paths $PG$ given in Corollary~\ref{cor:lgla_corr}, to map paths on $G$ to paths on $\R^N$.

In the following proposition, we abuse notation and consider elements of the Lie algebras $\fr$ of $\R^N$ and $\fg$ of $G$ as both the tangent space at the identity, and the vector space of left-invariant vector fields. Similarly, we consider elements of the dual of the Lie algebra $\fr^*$ and $\fg^*$ as both the cotangent space at the identity, and the vector space of left-invariant $1$-forms.

Because we will be using two different path signature functions, we will denote by $S_\R: P\R^N \rightarrow T((\R^N))$ the path signature for $\R^N$ with respect to the ordered basis of standard $1$-forms $(dx_1, \ldots, dx_N)$ of $\fr^*$. We denote $S_G: PG \rightarrow T((\R^N))$ to be the path signature for $G$ with respect to a given ordered basis $(\omega_1, \ldots, \omega_N)$ of $\fg^*$.

\begin{proposition}
\label{prop:relationship}
    Suppose $G$ is an $N$-dimensional Lie group with Lie algebra $\fg$. Let $\phi: \bar{\fr} \rightarrow \bar{\fg}$ be an isomorphism of vector spaces, and $\phi^*: \fg^* \rightarrow \fr^*$ be its dual. Let $(dx_1, \ldots, dx_N)$ denote the standard 1-forms of $\R^N$, and define $\omega_i = (\phi^*)^{-1}(dx_i)$. Let $S_G: PG \rightarrow T((\R^N))$ be the path signature map for $G$ with respect to the ordered basis $(\omega_1, \ldots, \omega_N)$ of $\fg^*$. Then, there exists a bijection $\Phi: P\R^N_0 \rightarrow PG_e$ such that $S_\R(\gamma) = S_G(\Phi(\gamma))$ for all $\gamma \in P\R^N$. 
\end{proposition}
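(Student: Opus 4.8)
The plan is to construct $\Phi$ explicitly as a composition of three maps: differentiate a path in $\R^N_0$, push the resulting $\fr$-valued derivative path through the vector space isomorphism $\phi: \bar\fr \to \bar\fg$, and then integrate the $\fg$-valued path back up to $G$ via the ODE-solution map $\Psi_e$ of Corollary~\ref{cor:lgla_corr}. Concretely, recalling that on $\R^N$ the map $\Psi_0: \overline{P}\fr \to P\R^N_0$ is just componentwise integration with inverse $\Psi_0^{-1}$ given by differentiation, I would set
\begin{equation*}
    \Phi \coloneqq \Psi_e \circ \phi_* \circ \Psi_0^{-1} : P\R^N_0 \longrightarrow PG_e,
\end{equation*}
where $\phi_*: \overline{P}\fr \to \overline{P}\fg$ is the pointwise application of $\phi$. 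Each factor is a bijection: $\Psi_0^{-1}$ is a bijection onto $\overline{P}\fr$ (inverse of $\Psi_0$), $\phi_*$ is a bijection because $\phi$ is a vector space isomorphism and it acts pointwise (so piecewise continuity and right-continuity are preserved, as is the fact that the image lies in $\overline{P}\fg$), and $\Psi_e$ is a bijection by Corollary~\ref{cor:lgla_corr}. Hence $\Phi$ is a bijection with inverse $\Psi_0 \circ (\phi^{-1})_* \circ \Psi_e^{-1}$.

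The second and main step is verifying the signature-preservation identity $S_\R(\gamma) = S_G(\Phi(\gamma))$ for all $\gamma \in P\R^N_0$. Here I would work at the level of individual multi-indices, showing $S_\R^I(\gamma) = S_G^I(\Phi(\gamma))$ for every $I = (i_1,\dots,i_m)$, and appeal to the non-inductive formula~\eqref{eq:pathsignatureI}. The key observation is that both signatures are integrals of products of the form (1-form evaluated on derivative)$\,dt$, and by construction the derivatives match under $\phi$. Precisely, writing $f = \Psi_0^{-1}(\gamma) \in \overline{P}\fr$ for the $\fr$-valued derivative path of $\gamma$ (so $f_t = \gamma'_t$ at differentiable points), the definition of $\Phi$ gives $\Phi(\gamma) = \Psi_e(\phi_* f)$, which by definition of $\Psi_e$ is the path $\eta \in PG_e$ solving $\eta'_t = (\phi f_t)(\eta_t)$, where $\phi f_t \in \fg$ is viewed as a left-invariant vector field. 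Therefore, interpreting $\eta'_t$ as an element of $\fg$ via left-translation to the identity, we have $\eta'_t = \phi(f_t) = \phi(\gamma'_t)$ in $\fg$. Now evaluate: $\omega_{i_j}(\eta'_{t}) = \omega_{i_j}(\phi(\gamma'_{t})) = (\phi^* \omega_{i_j})(\gamma'_t) = dx_{i_j}(\gamma'_t) = (\gamma^{i_j})'_t$, using $\omega_i = (\phi^*)^{-1}(dx_i)$, i.e. $\phi^*\omega_i = dx_i$. Substituting into~\eqref{eq:pathsignatureI} for both sides shows the integrands agree pointwise on the simplex $\Delta^m$, hence the integrals agree, hence $S_G^I(\Phi(\gamma)) = S_\R^I(\gamma)$.

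One subtlety I want to flag and handle carefully: in the definition of the path signature (Definition~\ref{def:ps}) the expression $\omega_i(\gamma'_t)$ uses the identification of $\gamma'_t \in T_{\gamma_t}G$ with an element of $\fg$, and I should make sure the convention matches the one used in defining $\Psi_e$ — namely, that $\gamma'_t = f_t(\gamma_t)$ with $f_t \in \fg$ a left-invariant vector field means exactly that $L_{\gamma_t^{-1}*}\gamma'_t = f_t \in \fg = T_eG$, and that left-invariant 1-forms are evaluated accordingly (consistent with the left translation invariance proposition, where $\omega(g\gamma'_t) = \omega(\gamma'_t)$ because $\gamma'_t$ and $g\gamma'_t$ represent the same Lie algebra element). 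Once this bookkeeping is pinned down, the computation above is routine. The main obstacle is thus not any deep difficulty but rather the careful reconciliation of the two different ways tangent vectors get identified with Lie algebra elements (discussed in the Remark after the Lie-group review), ensuring that $\omega_i(\eta'_t)$ in the signature definition genuinely equals $\omega_i$ applied to the Lie-algebra element $\phi(f_t)$ that the ODE prescribes; everything else follows from the change-of-variables-free simplex formula~\eqref{eq:pathsignatureI} and the functoriality already recorded in the commuting-square lemma for $\Psi_g$.
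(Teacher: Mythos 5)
Your proposal is correct and follows essentially the same route as the paper: you build $\Phi$ as the composition $\Psi_e \circ \phi_* \circ \Psi_0^{-1}$ (the paper writes $\Psi_G \circ \phi \circ \Psi_\R^{-1}$), observe each factor is a bijection via Corollary~\ref{cor:lgla_corr}, and verify signature preservation index-by-index using $\Phi(\gamma)'_t = \phi(\gamma'_t)$ together with $\phi^*\omega_i = dx_i$ in the simplex formula~\eqref{eq:pathsignatureI}. Your extra care in reconciling the two identifications of tangent vectors with Lie algebra elements is a welcome clarification of a point the paper's proof leaves implicit, but it does not change the argument.
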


\begin{proof}
    The construction of the map $\Phi$ is derived from Corollary~\ref{cor:lgla_corr}. Let $\Psi_\R: \overline{P}\fr \rightarrow P\R^N_0$ and $\Psi_G: \overline{P}\fg \rightarrow PG_e $ be the bijections from Corollary~\ref{cor:lgla_corr} for $\R^N$ and $G$ respectively. Now, define $\Phi$ by
    \begin{align*}
        \Phi: P\R^N_0 \xrightarrow{\Psi_\R^{-1}} \overline{P}\fr \xrightarrow{\phi} \overline{P}\fg \xrightarrow{\Psi_G} PG_e.
    \end{align*}
    The idea is that we start with a path $\gamma \in P\R^N_0$, and apply the following maps:
    \begin{enumerate}
        \item $\Psi_\R^{-1}$ : take the derivative $\gamma'$ to obtain a path in $\fr$
        \item $\phi$ : identify the underlying vector space of $\fr$ with $\fg$
        \item $\Psi_G$ : solve the differential equation (Equation~ \ref{eq:ode}) with the identity initial condition to obtain a path in $G$.
    \end{enumerate}
    
    \begin{figure}[!htbp]
    \centering
	\includegraphics[width=\textwidth]{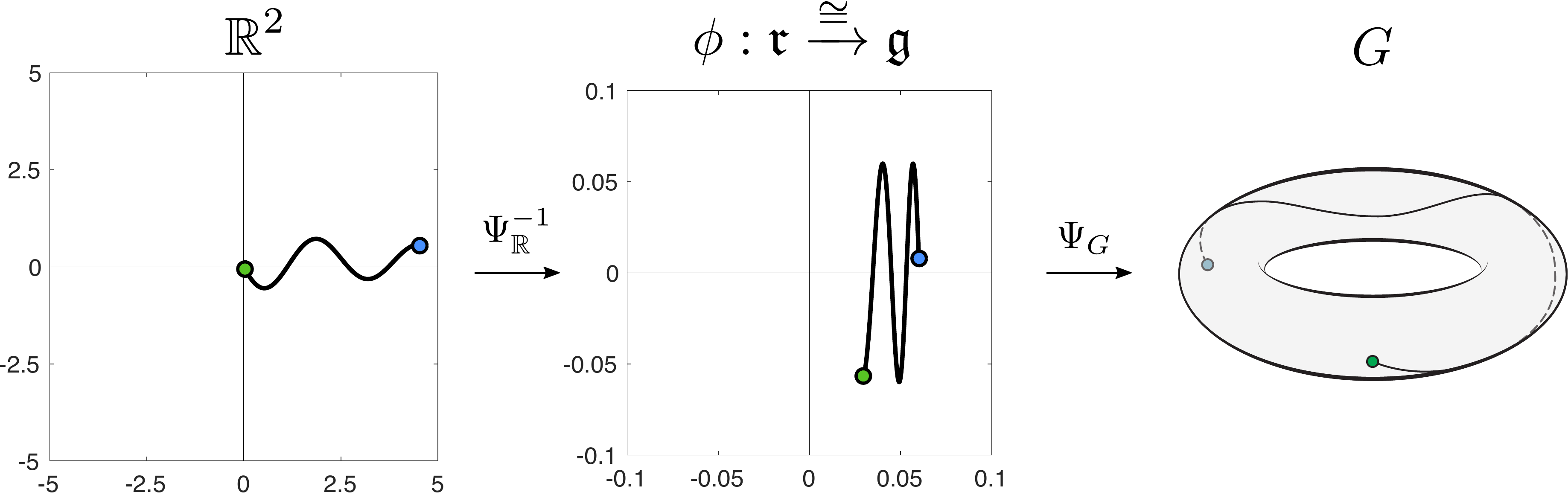}
	\caption{An example of the function $\Phi$ when we take $G = S^1 \times S^1$. (Left) A path $\gamma \in P\R^2_0$. (Middle) The derivative of $\gamma$ as a path in $\fr$ or $\fg$. (Right) The corresponding path $\Phi(\gamma)$.}
    \end{figure}
    
    
    Because all three maps are bijective, $\Phi$ is also bijective. To show that the signatures are invariant under this mapping, let $I = (i_1, \ldots, i_k)$ and $\gamma \in P\R^N_0$. The path signature of $\gamma$ with respect to $I$ is
    \begin{align*}
        S^I_\R(\gamma) = \int_{\Delta^k} dx_{i_1}(\gamma_{t_1}) \ldots dx_{i_k}(\gamma_{t_k}) dt_1 \ldots dt_k.
    \end{align*}
    Note that the derivative of $\Phi(\gamma)$ is given by $\Phi(\gamma)'_t = \phi(\gamma'_t)$ and thus, the path signature of $\Phi(\gamma)$ with respect to $I$ is
    \begin{align*}
        S^I_G(\Phi(\gamma)) &= \int_{\Delta^k} \omega_{i_1}(\phi(\gamma'_{t_1})) \ldots \omega_{i_k}(\phi(\gamma'_{t_k})) dt_1 \ldots dt_k \\ 
        & = \int_{\Delta^k} \phi^*(\omega_{i_1})(\gamma'_{t_1}) \ldots \phi^*(\omega_{i_k})(\gamma'_{t_k}) dt_1 \ldots dt_k \\
        & = S^I_\R(\gamma).
    \end{align*}
    The final equality holds because the dual isomorphism $\phi^*$ takes $\omega_i$ to $dx_i$. Thus, $S^I_\R(\gamma) = S^I_G(\Phi(\gamma))$ for all $\Gamma \in P\R^N_e$ and all multi-indices $I$. 
\end{proof}

\subsection{Stability of the path signature}
\label{ssec:stability}

In this section, we will discuss the stability of path signatures, which is of crucial importance in machine learning applications. Because we will only be interested in truncated path signatures in applications, we will consider the truncated signature map $S_M : PG \rightarrow T^{\leq M}(\bar{\fg})$, where
\begin{align*}
    T^{\leq M}(\bar{\fg}) = \bigoplus_{m=0}^M (\bar{\fg})^{\otimes m},
\end{align*}
where $S_M$ only retains information about the first $M$ levels of the path signature. In addition, we define the projection map
\begin{align*}
    \pi_m: T((\bar{\fg})) \rightarrow \bar{\fg}^{\otimes m}
\end{align*}
to a particular tensor level. Such a map can also be defined on the truncated tensor algebra $\pi_m: T^{\leq M}(\bar{\fg}) \rightarrow \bar{\fg}^{\otimes m}$, and we denote all such maps in the same manner. \medskip

By stability of the path signature, we mean to say that the truncated signature map $S_M : PG \rightarrow T^{\leq M}(\bar{\fg})$ is Lipschitz continuous. In order to disucss such a notion, we must provide both $PG$ and $T^{\leq M}(\bar{\fg})$ with metrics. We begin with the metric on $T((\bar{\fg}))$, which is required in Section~\ref{ssec:universal} and is analogous to the metric on $T^{\leq M}(\bar{\fg})$. \medskip

Recall that a basis $(e_1, \ldots, e_N)$ of $\bar{\fg}$ induces a natural inner product on $\bar{\fg}$ by defining the basis to be orthonormal. This extends to an inner product structure on $\bar{\fg}^{\otimes m}$, and given $\mathbf{t}_k \in \bar{\fg}^{\otimes m}$, we will denote the norm by $\|\mathbf{t}\|_m$. In addition, this also extends to an inner product on $T^{\leq M}(\bar{\fg})$. Let $\mathbf{s}, \mathbf{t} \in T^{\leq M}(\bar{\fg})$. Such an inner product and norm are defined to be
\begin{align}
\label{eq:trunc_innerprod}
    \langle \mathbf{s}, \mathbf{t} \rangle  = \sum_{m=0}^M \sum_{|I|=m} \mathbf{s}^I \mathbf{t}^I, \quad
    \|\mathbf{t}\| & = \sqrt{ \sum_{m=0}^M \sum_{|I|=m}  (\mathbf{t}^I)^2 }.
\end{align}
Then, we can use the norm to define a metric on both $\bar{\fg}^{\otimes m}$ and $T^{\leq M}(\bar{\fg})$. Namely, given $\mathbf{s}_m, \mathbf{t}_m \in \bar{\fg}^{\otimes m}$ and $\mathbf{s}, \mathbf{t} \in T^{\leq M}(\bar{\fg})$, we have
\begin{align*}
    d_m(\mathbf{s}_m, \mathbf{t}_m) &= \|\mathbf{s}_m - \mathbf{t}_m\|_m \\
    d(\mathbf{s}, \mathbf{t}) &= \|\mathbf{s} - \mathbf{t}\|.
\end{align*}

Note that this norm on the tensor algebra extends to $T((\bar{\fg}))$, where the inner product and norm for $\mathbf{s}, \mathbf{t} \in T((\bar{\fg}))$ are defined as
in (\ref{eq:trunc_innerprod}) with $M\to\infty$.
In this case, the inner product and norm may be infinite. However, image of the path signature lies in a subalgebra of $T((\bar{\fg}))$ where the norm is finite. Namely, we define 
\begin{equation}
\label{eq:tensorsubspace}
    T_1((\bar{\fg})) \coloneqq \left\{ \mathbf{t} \in T((\bar{\fg})) \quad : \quad \|\mathbf{t}\| < \infty, \, \mathbf{t}_0 = 1 \right\}.
\end{equation}
We nowshow that the signature of any path $\gamma \in PG$ lies in this subspace.

\begin{lemma}
\label{lem:sig_finite_norm}
    Let $\gamma \in PG$. Then $\|S(\gamma)\| < \infty$. 
\end{lemma}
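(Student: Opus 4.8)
The plan is to bound the norm of $S(\gamma)$ by a convergent series in the $1$-variation length of $\gamma$. First I would recall the standard factorial estimate from the Euclidean theory: for a path of finite length, the degree-$m$ signature tensor has norm controlled by $\ell^m/m!$, where $\ell$ is the $1$-variation. To make this work on $G$, I would exploit the signature-preserving bijection $\Phi: P\R^N_0 \to PG_e$ from Proposition~\ref{prop:relationship} together with left-translation invariance: since $S(\gamma) = S(\gamma_0^{-1}\gamma)$ and $\gamma_0^{-1}\gamma \in PG_e$, it suffices to prove the bound for paths starting at the identity, and for such paths $S_G(\gamma) = S_\R(\Phi^{-1}(\gamma))$. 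Thus the estimate reduces to the purely Euclidean statement about $S_\R$ of a bounded-variation path in $\R^N$.

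The key steps, in order: (1) reduce to $\gamma \in PG_e$ via left-translation invariance (Proposition on left translation invariance); (2) transfer to $\R^N$ via $\Phi$, so that $\pi_m S(\gamma) = \pi_m S_\R(\alpha)$ for $\alpha = \Phi^{-1}(\gamma) \in P\R^N_0$; (3) estimate $\|\pi_m S_\R(\alpha)\|_m$ — using the integral representation in Equation~\eqref{eq:pathsignatureI}, each coefficient $S^I_\R(\alpha)$ is an integral over $\Delta^m$ of a product of $m$ derivative-component terms, and bounding the full tensor norm by the symmetrized estimate gives $\|\pi_m S_\R(\alpha)\|_m \le L^m/m!$ where $L = \|\alpha\|_{1\text{-var}}$ is finite because $\alpha$ is piecewise regular hence of bounded variation; (4) sum: $\|S(\gamma)\|^2 = \sum_{m\ge 0}\|\pi_m S(\gamma)\|_m^2 \le \sum_{m\ge 0} (L^{m}/m!)^2 \le \big(\sum_{m\ge0} L^m/m!\big)^2 = e^{2L} < \infty$, so $\|S(\gamma)\| \le e^{L} < \infty$.

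I expect the main obstacle to be step (3): getting the factorial decay in the \emph{tensor norm} (sum of squares of all coefficients) rather than just for individual coefficients. The clean way is to note that $\sum_{|I|=m} (S^I_\R(\alpha))^2 \le \big(\sum_{|I|=m} |S^I_\R(\alpha)|\big)^2$ is too lossy in general, so instead one uses the sharper route: parametrize $\alpha$ by arc length (legitimate by reparametrization invariance, Proposition~\ref{prop:reparam}, after rescaling the domain), so that $|\alpha'_t| \le 1$ pointwise and the total length is $L$; then $\sum_{|I|=m}(S^I_\R(\alpha))^2$ is itself an iterated integral over $\Delta^m \times \Delta^m$ of $\langle \alpha'_{s_1},\alpha'_{t_1}\rangle \cdots$, bounded by $\mathrm{vol}(\Delta^m)^2 \cdot L^{2m}/(\text{normalization})$, yielding the $L^{2m}/(m!)^2$ bound directly. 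One subtlety to handle carefully: $\Phi^{-1}(\gamma)$ is only piecewise $C^1$ with a genuinely bounded-variation (not necessarily Lipschitz) reparametrization, but since $\gamma$ is piecewise regular the length $\ell(\gamma) = |\gamma|_{1\text{-var}}$ is finite by the Lemma of~\cite{burtscher_length_2015}, and the same holds for $\alpha$, so the arc-length reparametrization argument goes through on each regular piece and then concatenates.
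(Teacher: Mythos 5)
Your proposal is correct and follows essentially the same route as the paper: reparametrize by arc length so that $\|\gamma'_t\| = 1$, bound the level-$m$ iterated integrals by the volume $L^m/m!$ of the simplex over $[0,L]$, and sum the resulting square-summable series. The detour through the bijection $\Phi$ to $\R^N$ is unnecessary (the paper carries out the estimate directly on $G$, since the integrands $\omega_i(\gamma'_t)$ already live in the Lie algebra), and your sharper kernel-style bound $\|\pi_m S(\gamma)\|_m^2 \le L^{2m}/(m!)^2$ is valid but not needed --- the paper's cruder coefficient-wise bound $|S^I(\gamma)| \le L^m/m!$, which costs an extra factor of $N^m$ after summing over multi-indices, already gives convergence.
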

\begin{proof}
    Without loss of generality, we suppose that $\gamma$ is parametrized by length such that it is defined as $\gamma: [0,L] \rightarrow G$, where $L$ is the length, and $\|\gamma'_t\| = 1$ for all differentiable $t$; this assumption is valid due to the reparametrization invariance of the signature. We will inductively bound each signature term. At the first level, we have
    \begin{align*}
        |S^i(\gamma)(t)| & \leq \int_0^t |\omega_i(\gamma'_s)| ds \\
        & \leq t,
    \end{align*}
    using the fact that $|\omega_i(\gamma'_t)| \leq \|\gamma'_t\| = 1$. Assume that for any multi-index $I = (i_1, \ldots, i_{m-1})$ of length $m-1$, we have
    \begin{align*}
        |S^I(\gamma)(t)| \leq \frac{t^{m-1}}{(m-1)!}.
    \end{align*}
    Now consider the multi-index $I = (i_1, \ldots, i_m)$ of length $m$. Using the induction hypothesis, and the recursive definition of the signature, we have
    \begin{align*}
        |S^I(\gamma)(t)| 
        & \leq 
        \int_0^t |S^{(i_1, \ldots, i_{m-1})}(s)| |\omega_{i_m}(\gamma'_s)| ds 
        \\
        & 
        \leq 
        \int_0^t \frac{s^{m-1}}{(m-1)!} ds = \frac{t^m}{m!}.
    \end{align*}
    Therefore, for any multi-index $I$ of length $m$, we have
    \begin{align}
    \label{eq:sigbound}
        |S^I(\gamma)| \leq \frac{L^m}{m!},
    \end{align}
    and the norm of $S(\gamma)$ is bounded by
    \begin{align*}
        \|S(\gamma)\|^2 & = \sum_{m=0}^\infty \sum_{|I|=m} (S^I(\gamma))^2 \\
        & \leq \sum_{m=0}^\infty \frac{N^m L^{2m}}{(m!)^2} < \infty
    \end{align*}
    where the last inequality uses the fact that there are $N^m$ multi-indices of length $m$. 
\end{proof}

Next, we consider a metric structure on $PG$. We mentioned in Section~\ref{ssec:paths_liegroups} that a metric on $P\R^N_0$ can be obtained by using the 1-variation of paths. Namely, given $\alpha, \beta \in P\R^N_0$, we may consider the distance between these two paths as $|\alpha - \beta|_{1-var}$, where the difference is performed pointwise. Such an approach could be used for $PG$ in theory. Now suppose $\alpha, \beta \in PG_e$, we can use the metric on $G$ to define the distance to be $|\beta^{-1}\alpha|_{1-var}$, where the inversion and multiplication are both performed pointwise. However, this notion of distance is not well-suited for the path signature.

The main reason for this is that the computation of $|\beta^{-1}\alpha|_{1-var}$ depends fundamentally on the adjoint action of the Lie group on the Lie algebra, which is governed by the Lie bracket. Namely, the adjoint action is trivial if and only if the Lie bracket is zero. However, the path signature ignores the Lie bracket structure, so the prospect of Lipschitz continuity of the signature with respect to this metric is problematic.

We therefore consider a different metric. Note that our path signature computations have consistently been performed on the underlying vector space of the Lie algebra $\bar{\fg}$, so it seems natural to directly define a metric using the derivatives $\alpha', \beta' \in \hat{P}\fg$. One such notion of a distance would be the $L^1$ distance between these derivatives
\begin{align*}
    \|\alpha'-\beta'\|_{L^1} = \int_0^1 \|\alpha'_t - \beta'_t\|_\fg \ dt
\end{align*}
which in particular does not use the Lie bracket structure. In fact this $L^1$ distance is exactly the 1-variation of the corresponding paths $\Phi^{-1}(\alpha), \Phi^{-1}(\beta) \in P\R^N_0$, given by the bijection in Proposition~\ref{prop:relationship}. Thus, we can define the metric on $PG_e$ to be
\begin{align*}
    d_\R(\alpha, \beta) \coloneqq |\Phi^{-1}(\alpha) - \Phi^{-1}(\beta)|_{1-var}.
\end{align*}

Note that equipped with this metric, the map $\Phi$ is trivially an isometry.
\begin{lemma}
    Suppose $\Phi: P\R^N_0 \rightarrow PG_e$ is the map defined in Proposition~\ref{prop:relationship}. Suppose $P\R^N_0$ is equipped with the $1$-variation metric, and $PG_e$ is equipped with the metric $d_\R$. Then, $\Phi$ is an isometry.
\end{lemma}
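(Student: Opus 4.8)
The plan is to observe that, once the definition of $d_\R$ is unwound, the statement is essentially a tautology, requiring no genuine computation. By Proposition~\ref{prop:relationship}, $\Phi: P\R^N_0 \to PG_e$ is a bijection, so in particular $\Phi^{-1} \circ \Phi = \mathrm{id}_{P\R^N_0}$. I would fix $\gamma, \eta \in P\R^N_0$ and compute directly from the definition of $d_\R$ that
\begin{align*}
    d_\R(\Phi(\gamma), \Phi(\eta)) &= \left| \Phi^{-1}(\Phi(\gamma)) - \Phi^{-1}(\Phi(\eta)) \right|_{1-var} \\
    &= |\gamma - \eta|_{1-var},
\end{align*}
which is exactly the $1$-variation distance between $\gamma$ and $\eta$ in $P\R^N_0$. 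Since this holds for every pair and $\Phi$ is surjective, $\Phi$ is an isometry of metric spaces. Notably, this argument uses only the bijectivity of $\Phi$ and the fact that $d_\R$ was \emph{defined} by transporting the $1$-variation metric along $\Phi^{-1}$; the signature-preserving property of $\Phi$ plays no role here.

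The only point that is not literally a restatement of definitions --- and hence the closest thing to an obstacle, though it is still routine --- is checking that $d_\R$ is a genuine metric on $PG_e$ rather than merely a pseudometric, i.e.\ that $d_\R(\alpha,\beta) = 0$ forces $\alpha = \beta$. This follows because $|\cdot|_{1-var}$ separates points of $P\R^N_0$: if $|\Phi^{-1}(\alpha) - \Phi^{-1}(\beta)|_{1-var} = 0$, then the path $\Phi^{-1}(\alpha) - \Phi^{-1}(\beta)$ has zero $1$-variation and starts at the origin, hence is the constant zero path, so $\Phi^{-1}(\alpha) = \Phi^{-1}(\beta)$ and therefore $\alpha = \beta$ by injectivity of $\Phi^{-1}$. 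Symmetry, nonnegativity, and the triangle inequality for $d_\R$ transfer immediately from the corresponding properties of $|\cdot|_{1-var}$ through $\Phi^{-1}$, so no further input is needed.
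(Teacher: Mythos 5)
Your proof is correct and matches the paper's treatment: the paper simply remarks that $\Phi$ is ``trivially an isometry'' because $d_\R$ is defined by pulling back the $1$-variation metric along $\Phi^{-1}$, which is exactly the tautological computation you carry out. Your additional check that $d_\R$ separates points (using that a path of zero $1$-variation starting at the origin is the zero path) is a small but welcome extra that the paper leaves implicit.
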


Using this isometry, stability for Lie group path signatures is a direct corollary of stability for Euclidean path signatures.
\begin{proposition}[\cite{friz_multidimensional_2010}]
    Let $\alpha, \beta \in P\R_0$, and let 
    \begin{align*}
    	L \geq \max\{|\alpha|_{1-var}, |\beta|_{1-var}\}.
    \end{align*}
     Then, for all $k \geq 1$, there exists some $C_k > 0$ such that
    \begin{align*}
        \left\| \pi_k\Big( S(\alpha) - S(\beta)\Big) \right\|_k \leq C_k L^{k-1} |\alpha - \beta|_{1-var}.
    \end{align*}
\end{proposition}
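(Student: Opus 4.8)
The plan is to prove this by induction on $k$, exploiting the recursive definition of the path signature together with the fact that the difference of two iterated integrals can be telescoped into a sum of terms, each of which isolates a single ``difference factor'' $(\alpha'_t - \beta'_t)$ while the remaining factors come from $\alpha$ or $\beta$. Concretely, writing $S^I(\alpha)_t$ for the running signature, I would first establish the base case $k=1$: here $\pi_1(S(\alpha) - S(\beta))$ has components $S^i(\alpha) - S^i(\beta) = \int_0^1 \omega_i(\alpha'_t - \beta'_t)\,dt$, which is bounded in norm by $\int_0^1 \|\alpha'_t - \beta'_t\|\,dt = |\alpha - \beta|_{1-var}$ (using the isometry $\Phi$ / the identification of the $L^1$ distance on derivatives with the $1$-variation, as noted just before the statement). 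So $C_1 = 1$ works and the $L^{k-1} = L^0$ factor is vacuous.

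For the inductive step, suppose the bound holds up to level $k-1$. Using the recursive formula $S^I(\alpha)_t = \int_0^t S^{I'}(\alpha)_s\,\omega_{i_k}(\alpha'_s)\,ds$ where $I = (I', i_k)$, I would write
\begin{align*}
S^I(\alpha)_1 - S^I(\beta)_1 = \int_0^1 \left( S^{I'}(\alpha)_s - S^{I'}(\beta)_s \right) \omega_{i_k}(\alpha'_s)\,ds + \int_0^1 S^{I'}(\beta)_s\, \omega_{i_k}(\alpha'_s - \beta'_s)\,ds.
\end{align*}
The first integral is controlled by the inductive hypothesis applied to the running signatures at every time $s \in [0,1]$ (which requires a running-time version of the estimate — easy, since restricting a path to $[0,s]$ only decreases its $1$-variation), combined with the elementary bound $|\omega_{i_k}(\alpha'_s)| \le \|\alpha'_s\|$ and $\int_0^1 \|\alpha'_s\|\,ds \le L$. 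The second integral is controlled by the a priori bound $|S^{I'}(\beta)_s| \le \frac{L^{k-1}}{(k-1)!}$ from Lemma~\ref{lem:sig_finite_norm} (Equation~\ref{eq:sigbound}, in its running-time form), times $\int_0^1 \|\alpha'_s - \beta'_s\|\,ds = |\alpha - \beta|_{1-var}$. Summing the resulting componentwise bounds over all $N^k$ multi-indices $I$ of length $k$ and taking the square root gives the norm estimate $\|\pi_k(S(\alpha)-S(\beta))\|_k \le C_k L^{k-1}|\alpha - \beta|_{1-var}$ for an explicit constant $C_k$ (something like $C_k = N^{k/2}\big(\sum_{j} \text{(combinatorial factors)}\big)$, which I would not bother optimizing).

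The main obstacle — really a bookkeeping obstacle rather than a deep one — is setting up the induction at the level of running signatures $S^{I'}(\alpha)_s$ uniformly in $s$, rather than just at $s=1$, so that the first integral above can actually be bounded. This forces me to state and carry a time-parametrized version of the whole proposition through the induction: ``for all $s \in [0,1]$, $\|\pi_k(S(\alpha)_s - S(\beta)_s)\|_k \le C_k L^{k-1} |\alpha|_{[0,s],1-var}$'' or, more crudely, with $|\alpha - \beta|_{1-var}$ on the right. Once that stronger statement is in place the estimates are purely mechanical. Alternatively, and perhaps cleaner for exposition, I could simply invoke the cited Euclidean result of~\cite{friz_multidimensional_2010} verbatim and then transport it through the isometry $\Phi$ of Proposition~\ref{prop:relationship}: since $S_\R(\gamma) = S_G(\Phi(\gamma))$ and $\Phi$ identifies the $1$-variation metric on $P\R^N_0$ with $d_\R$ on $PG_e$, the Lie-group statement for $\alpha, \beta \in PG_e$ is literally the Euclidean statement for $\Phi^{-1}(\alpha), \Phi^{-1}(\beta)$; left-translation invariance of $S$ then extends it from $PG_e$ to all of $PG$. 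Given the framing of the surrounding text (``stability for Lie group path signatures is a direct corollary of stability for Euclidean path signatures''), this transport argument is almost certainly the intended proof, and I would present it as the main line, relegating the direct induction to a remark.
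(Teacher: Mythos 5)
The paper offers no proof of this proposition at all: it is stated as a citation to \cite{friz_multidimensional_2010} and used as a black box, so there is nothing internal to compare your argument against. With that said, your primary argument --- the induction on $k$ via the telescoping decomposition
\begin{align*}
S^I(\alpha)_1 - S^I(\beta)_1 = \int_0^1 \left( S^{I'}(\alpha)_s - S^{I'}(\beta)_s \right)\omega_{i_k}(\alpha'_s)\,ds + \int_0^1 S^{I'}(\beta)_s\,\omega_{i_k}(\alpha'_s - \beta'_s)\,ds
\end{align*}
--- is the standard proof of this estimate and is essentially correct. The first term picks up one factor of $L$ from $\int_0^1|\omega_{i_k}(\alpha'_s)|\,ds \le |\alpha|_{1-var}\le L$ against the inductive bound $C_{k-1}L^{k-2}|\alpha-\beta|_{1-var}$, the second is controlled by the factorial-decay bound of Lemma~\ref{lem:sig_finite_norm} (in its running-time form, using the $1$-variation of $\beta$ restricted to $[0,s]$) times $\int_0^1\|\alpha'_s-\beta'_s\|\,ds = |\alpha-\beta|_{1-var}$, and the passage from componentwise bounds to the $\|\cdot\|_k$ norm only costs a factor of $N^{k/2}$ absorbed into $C_k$. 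You correctly identify the one genuine bookkeeping point, namely that the induction must be carried at the level of running signatures uniformly in $s$; your fix (restriction to $[0,s]$ only decreases $1$-variation) is the right one.

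Your proposed ``cleaner'' alternative, however, should be dropped: it is circular for this statement. The proposition you are asked to prove \emph{is} the Euclidean result of \cite{friz_multidimensional_2010} (note the hypothesis $\alpha,\beta\in P\R_0$); you cannot prove it by ``invoking the cited Euclidean result verbatim.'' The transport through the isometry $\Phi$ of Proposition~\ref{prop:relationship} is exactly how the paper obtains the \emph{subsequent} Corollary~\ref{cor:stability} for paths in $PG_e$ from this proposition, not how this proposition itself is established. So present the induction as the proof and omit the transport argument here, or at most mention that it is what converts this Euclidean statement into the Lie-group one.
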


\begin{corollary}
\label{cor:stability}
    Let $\alpha, \beta \in PG_e$, and let $L \geq \max\{|\alpha|_{1-var}, |\beta|_{1-var}\}$. Then, for all $k \geq 1$, there exists some $C_k > 0$ such that
    \begin{align*}
        \left\| \pi_k\Big( S(\alpha) - S(\beta)\Big) \right\|_k \leq C_k L^{k-1} d_\R(\alpha, \beta).
    \end{align*}
\end{corollary}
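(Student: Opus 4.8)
The plan is to deduce Corollary~\ref{cor:stability} directly from the Euclidean stability result (Proposition~\citep{friz_multidimensional_2010} as stated above) by transporting everything through the isometric bijection $\Phi: P\R^N_0 \to PG_e$ of Proposition~\ref{prop:relationship}. The two ingredients to combine are: (i) $\Phi$ preserves signatures, i.e. $S_\R(\gamma) = S_G(\Phi(\gamma))$ for all $\gamma \in P\R^N_0$; and (ii) $\Phi$ is an isometry, i.e. $d_\R(\Phi(\gamma_1), \Phi(\gamma_2)) = |\gamma_1 - \gamma_2|_{1-var}$, which is exactly the preceding Lemma and the definition of $d_\R$.

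The steps, in order, are as follows. First, given $\alpha, \beta \in PG_e$, set $\gamma_\alpha \coloneqq \Phi^{-1}(\alpha)$ and $\gamma_\beta \coloneqq \Phi^{-1}(\beta)$ in $P\R^N_0$; this is possible since $\Phi$ is a bijection. Second, observe that $|\gamma_\alpha|_{1-var} = d_\R(\alpha, c_e)$-type computations are not even needed: by the definition $d_\R(\alpha,\beta) = |\Phi^{-1}(\alpha) - \Phi^{-1}(\beta)|_{1-var}$ and the fact that $|\cdot|_{1-var}$ of $\Phi^{-1}(\alpha)$ equals the length $\ell(\alpha) = |\alpha|_{1-var}$ on $G$ (using the isometry together with the Burtscher length lemma), we get $|\gamma_\alpha|_{1-var} = |\alpha|_{1-var}$ and similarly for $\beta$; hence the hypothesis $L \geq \max\{|\alpha|_{1-var}, |\beta|_{1-var}\}$ transfers verbatim to $L \geq \max\{|\gamma_\alpha|_{1-var}, |\gamma_\beta|_{1-var}\}$. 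Third, apply the Euclidean proposition to $\gamma_\alpha, \gamma_\beta$ with this $L$ to obtain, for each $k \geq 1$, a constant $C_k > 0$ with
\begin{align*}
    \left\| \pi_k\Big( S_\R(\gamma_\alpha) - S_\R(\gamma_\beta)\Big)\right\|_k \leq C_k L^{k-1} |\gamma_\alpha - \gamma_\beta|_{1-var}.
\end{align*}
Fourth, rewrite the left side using (i): $S_\R(\gamma_\alpha) = S_G(\alpha)$ and $S_\R(\gamma_\beta) = S_G(\beta)$, so $\pi_k(S_\R(\gamma_\alpha) - S_\R(\gamma_\beta)) = \pi_k(S(\alpha) - S(\beta))$ and the norms agree since $\Phi$ matches the chosen orthonormal bases of $\R^N$ and $\bar{\fg}$ (via $\phi$ taking $dx_i \mapsto \omega_i$), so the inner product structures on $\bar{\fg}^{\otimes k}$ and $(\R^N)^{\otimes k}$ coincide. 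Fifth, rewrite the right side using (ii): $|\gamma_\alpha - \gamma_\beta|_{1-var} = d_\R(\alpha, \beta)$. Combining yields exactly the claimed inequality.

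The only genuinely delicate point is bookkeeping around norms and bases: one must check that the orthonormal-basis inner product on $(\R^N)^{\otimes k}$ used in the Euclidean statement is carried by $\phi^{\otimes k}$ to the orthonormal-basis inner product on $\bar{\fg}^{\otimes k}$ used in $\|\cdot\|_k$. This holds precisely because $\omega_i = (\phi^*)^{-1}(dx_i)$, equivalently $\phi$ sends the standard basis of $\R^N$ to the basis $(e_1,\ldots,e_N)$ of $\bar{\fg}$ declared orthonormal, so $\phi$ is a linear isometry and $\phi^{\otimes k}$ is an isometry of tensor powers. Everything else is a direct substitution. I would therefore present the proof as a two-line reduction: invoke the Euclidean proposition on $\Phi^{-1}(\alpha), \Phi^{-1}(\beta)$, then translate both sides of the inequality through the signature-preserving isometry $\Phi$. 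The constants $C_k$ are inherited unchanged from the Euclidean case.
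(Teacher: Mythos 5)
Your proposal is correct and is exactly the argument the paper intends: the paper states the corollary as ``a direct corollary'' of the Euclidean stability result via the signature-preserving isometry $\Phi$ of Proposition~\ref{prop:relationship}, which is precisely the transfer you carry out. Your extra care about the $1$-variation hypothesis transferring and about $\phi^{\otimes k}$ being an isometry of tensor powers fills in the bookkeeping the paper leaves implicit, and it all checks out.
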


\subsection{Equivariance of the path signature}
\label{ssec:equivariance}
At this point, a natural question to consider is how do path signatures behave under Lie group morphisms? Let $G_1$ and $G_2$ be Lie groups of dimensions $N_1$ and $N_2$ respectively. Given a Lie group morphism $F: G_1 \rightarrow G_2$, we have an induced Lie algebra morphism $F_*: \fg_1 \rightarrow \fg_2$ between the corresponding Lie algebras. In particular, all Lie algebra morphisms are linear transformations, so if we forget the Lie bracket, this results in a map $F_*: \bar{\fg}_1 \rightarrow \bar{\fg}_2$ between the underlying vector spaces. Because linear transformations induce maps on tensor products of the space $F_*^{\otimes m} : \bar{\fg}_1^{\otimes m} \rightarrow \bar{\fg}^{\otimes m}_2$, we also get an induced map of algebras between tensor algebras
\begin{align*}
    F_*: T((\bar{\fg}_1)) \rightarrow T((\bar{\fg}_2)).
\end{align*}

If $(e_1, \ldots, e_{N_1})$ is an ordered basis for $\fg_1$ and $(f_1, \ldots, f_{N_2})$ is an ordered basis for $\fg_2$, then we can write $F_*: \fg_1 \rightarrow \fg_2$ as an $N_2 \times N_1$ matrix in terms of these bases, which we call $M$. We can describe the action of $F_*$ in the tensor algebra using this matrix. Let $\mathbf{t} \in T((\bar{\fg}_1))$. In general, the action on the order $m$ elements $\mathbf{t}_m \in \bar{\fg}^{\otimes m}$ is a tensor-matrix multiplication, as described in~\cite{pfeffer_learning_2019}, in which all $m$ sides of the tensor $\mathbf{t}_m$ are multiplied by the matrix $M$. This can be written out as
\begin{align*}
    F_* \mathbf{t} = \sum_{m=0}^\infty \sum_{|I|=m} \mathbf{t}^I (Me_{i_1}) \otimes (Me_{i_2}) \otimes \ldots \otimes (Me_{i_k}).
\end{align*}

The low order tensors can be written out in usual matrix notation. Consider $\mathbf{t}_1$ as a column vector. The action on first order elements is matrix multiplication,
\begin{align*}
    (F_* \mathbf{t})_1 = M \mathbf{t}_1.
\end{align*}
Considering $\mathbf{t}_2$ as a matrix, the action on the second order elements is conjugation by $M$,
\begin{align*}
    (F_* \mathbf{t})_2 = M \mathbf{t}_2 M^{\intercal}.
\end{align*}

For higher orders, we can no longer use matrix notation, so we explicitly define the action for a given index. Let $J=(j_1, \ldots, j_n)$ be a multi-index where $j_k \in [N_2]$. Then, the element of $F_*\mathbf{t}$ corresponding to the multi-index $J$ is
\begin{align*}
    (F_* \mathbf{t})^J = \sum_{i_1 = 1}^{N_1} \sum_{i_2 = 1}^{N_1} \ldots \sum_{i_n = 1}^{N_1} \mathbf{t}^{(i_1, \ldots, i_n)} M_{j_1, i_1} M_{j_2, i_2}, \ldots, M_{j_n, i_n}.
\end{align*}

The following is a generalization of the equivariance of the path signature in Euclidean space, which is discussed in~\cite{friz_multidimensional_2010} and~\cite{pfeffer_learning_2019}. Here, suppose $(\omega_1, \ldots, \omega_{N_1})$ is the dual basis to $(e_1, \ldots, e_{N_1})$ and $(\nu_1, \ldots, \nu_{N_2})$ is the dual basis to $(f_1, \ldots, f_{N_2})$.

\begin{proposition}
    Let $G_1$ and $G_2$ be Lie groups, with Lie algebras $\fg_1$ and $\fg_2$ respectively. Suppose $F: G_1 \rightarrow G_2$ is a Lie group morphism and $\gamma \in P(G_1)$. Then
    \begin{align*}
        S(F \gamma) = F_* S(\gamma). 
    \end{align*}
\end{proposition}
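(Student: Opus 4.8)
The plan is to reduce the statement to the level of individual signature coefficients, exactly as in the proofs of Proposition~\ref{prop:reparam} and Proposition~\ref{prop:ps_scaling}. Since the map $F_* : T((\bar{\fg}_1)) \rightarrow T((\bar{\fg}_2))$ is linear and is determined on each tensor level by the matrix $M$ representing $F_* : \bar{\fg}_1 \rightarrow \bar{\fg}_2$ in the chosen bases, it suffices to show that for every multi-index $J = (j_1, \ldots, j_n)$ with $j_k \in [N_2]$ one has
\begin{align*}
    S^J(F\gamma) = \sum_{i_1=1}^{N_1} \cdots \sum_{i_n=1}^{N_1} S^{(i_1,\ldots,i_n)}(\gamma) \, M_{j_1,i_1} \cdots M_{j_n,i_n},
\end{align*}
which is precisely the coefficient $(F_* S(\gamma))^J$ spelled out in the excerpt.

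First I would establish the infinitesimal statement: for $\gamma \in P(G_1)$, the derivative of the pushed-forward path satisfies $(F\gamma)'_t = F_* \gamma'_t$ in the sense that, under the identification of tangent vectors with Lie algebra elements, the velocity of $F\gamma$ at $t$ is the image under $F_* : \bar{\fg}_1 \to \bar{\fg}_2$ of the velocity of $\gamma$ at $t$. This follows from the chain rule together with left-translation invariance: $F \circ L_g = L_{F(g)} \circ F$ since $F$ is a homomorphism, so differentiating $F \circ \gamma$ and translating back to the identity commutes with applying $F_*$. Concretely, $\nu_j\big((F\gamma)'_t\big) = \nu_j\big(F_* \gamma'_t\big) = \sum_{i} M_{j,i}\, \omega_i(\gamma'_t)$, using that $M_{j,i} = \nu_j(F_* e_i)$ by definition of the matrix $M$, and that $\omega_i$ is the dual basis to $e_i$.

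Next I would substitute this identity into the non-inductive integral formula \eqref{eq:pathsignatureI} for $S^J(F\gamma)$:
\begin{align*}
    S^J(F\gamma) = \int_{\Delta^n} \nu_{j_1}\big((F\gamma)'_{t_1}\big) \cdots \nu_{j_n}\big((F\gamma)'_{t_n}\big)\, dt_1 \cdots dt_n,
\end{align*}
expand each factor $\nu_{j_k}\big((F\gamma)'_{t_k}\big) = \sum_{i_k} M_{j_k,i_k}\,\omega_{i_k}(\gamma'_{t_k})$, and then pull the finite sums over $i_1,\ldots,i_n$ and the constants $M_{j_k,i_k}$ outside the integral by linearity. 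The remaining integral is exactly $S^{(i_1,\ldots,i_n)}(\gamma)$, giving the claimed coefficient identity; summing over $m = n$ and over all $J$ reassembles $S(F\gamma) = F_* S(\gamma)$ in $T((\bar{\fg}_2))$. (Alternatively one could run the same argument inductively on $|J|$ using the recursive definition, mirroring Proposition~\ref{prop:reparam}, but the collapsed-simplex form is cleaner here.)

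The only genuinely substantive step is the infinitesimal identity $(F\gamma)'_t = F_*\gamma'_t$ at the level of Lie algebra elements — i.e. checking carefully that "pushforward of the velocity along $F$'' agrees with "apply the induced Lie algebra map $F_*$'' once both velocities are transported to the respective identities. This is where the homomorphism property of $F$ is actually used (it is false for arbitrary smooth maps), and it is the analogue of the observation $(g\gamma)'_t = L_{g*}\gamma'_t$ used in the proof of left-translation invariance. Everything after that is routine: linearity of the integral and bookkeeping with the dual bases. I would also remark that since the Lie bracket never enters, this is really a statement about $F_*$ as a linear map, which is why it specializes to the classical equivariance of Euclidean path signatures under linear maps when $G_1 = \R^{N_1}$ and $G_2 = \R^{N_2}$.
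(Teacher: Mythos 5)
Your proposal is correct and follows essentially the same route as the paper's proof: expand each factor $\nu_{j_k}\bigl((F\gamma)'_{t_k}\bigr) = \sum_{i_k} M_{j_k,i_k}\,\omega_{i_k}(\gamma'_{t_k})$ and pull the finite sums and constants outside the iterated integral by linearity. You are in fact more careful than the paper on the one substantive point, namely justifying $(F\gamma)'_t = F_*\gamma'_t$ under the identification of tangent vectors with Lie algebra elements via $F \circ L_g = L_{F(g)} \circ F$, which the paper's proof uses implicitly without comment.
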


\begin{proof}
    The proof of this claim is simply due to the linearity of integrals and $1$-forms. Consider the multi-index $J = (j_1, \ldots, j_m)$. Then,
    \begin{align*}
        S^J(F\gamma) & = \int_{\Delta^m} \nu_{j_1}(F_*\gamma'_{t_1}) \ldots \nu_{j_m}(F_*\gamma'_{t_m})  dt_1 \ldots dt_m.
    \end{align*}
    Consider a single factor in the integrand. Using the basis $(e_1, \ldots, e_{N_1})$ for $\fg$, write the derivative $\gamma'$ as
    \begin{align*}
        \gamma'_t = \sum_{i=1}^{N_1} c^i_t e_i
    \end{align*}
    where $c^i: [0,1] \rightarrow \R$ are the component paths. Then, since $\nu_j(F_*\gamma_t')$ denotes the $j^{th}$ component of $F_* \gamma_t'$, we can write this as
    \begin{align*}
        \nu_{j}(F_*\gamma_t') = \sum_{i=1}^{N_1} M_{j,i} \omega_i(\gamma'_t).
    \end{align*}
    Substituting this back into the formula for $S^J(F\gamma)$, we get
    \begin{align*}
        S^J(F\gamma) & = \sum_{i_1=1}^{N_1} \ldots \sum_{i_n=1}^{N_1} \left(M_{j_1,i_1} \ldots M_{j_m,i_m}\right) \int_{\Delta^m} \omega_{i_1}(\gamma_t') \ldots \omega_{i_m}(\gamma_t') dt_1 \ldots dt_m \\
        & = \sum_{i_1=1}^{N_1} \ldots \sum_{i_n=1}^{N_1}  \left(M_{j_1,i_1} \ldots M_{j_m,i_m}\right) S^{(i_1, \ldots, i_m)}(\gamma) \\
        & = (F_* S(\gamma))^J.
    \end{align*}
\end{proof}


\subsection{Lead-lag relationships}
\label{ssec:leadlag}

For path signatures defined on Euclidean space, a certain linear combination of second degree signature terms provides a reparametrization invariant indicator of lead-lag behavior in cyclic real-valued time series, as initially introduced in~\cite{baryshnikov_cyclicity_2016}. In this subsection, we will extend this interpretation to time series valued in Lie groups. \medskip

A \style{cyclic} time series in $G$ is one which is periodic up to a time-reparametrization. More precisely, a time series $\gamma$ is cyclic if it factors through the circle,
\begin{align*}
    \gamma: [0,1] \xrightarrow{\phi} S^1 \rightarrow G,
\end{align*}
with $\phi$ monotone and winding around the circle at least twice, the winding condition enforcing nontrivial repetition. 

Consider the interpretation for Euclidean paths in $\R^2$. Suppose $\gamma = (\gamma^1, \gamma^2) \in P\R^2$ is a cyclic time series. We say that the component $\gamma^1$ exhibits a cyclic leading behavior with respect to the component $\gamma^2$ if the following two conditions hold:
\begin{enumerate}
    \item when $\gamma^1$ is large (small), then $\gamma^2$ is increasing (decreasing), and 
    \item when $\gamma^2$ is large (small), then $\gamma^1$ is decreasing (increasing).
\end{enumerate}
The first condition can be viewed as a reparametrization invariant definition of a time series $\gamma^1$ leading another time series $\gamma^2$. The second condition is used because we are working with cyclic time series, so we also consider the negative influence of $\gamma^2$ on $\gamma^1$. We may think of this phenomena as a feedback loop in which $\gamma^1$ positively influences $\gamma^2$ and $\gamma^2$ negatively influences $\gamma^1$. The standard example of such behavior is $\gamma_t = (\sin(t), -\cos(t))$. 

To quantify what we mean by large or small in the two conditions above, we translate the time series such that $\gamma_0 = (0,0)$ and interpret large (small) to mean positive (negative). Then, a measure for these two conditions are given by $S^{1,2}(\gamma)$ and $-S^{2,1}(\gamma)$ respectively,
\begin{align*}
    S^{1,2}(\gamma) = \int_0^1 \gamma^1_t (\gamma^2)'_t dt, \quad S^{2,1}= (\gamma) \int_0^1 \gamma^2_t (\gamma^1)'_t dt.
\end{align*}

Thus, a measure for cyclic leading behavior can be defined as
\begin{equation*}
    A^{1,2}(\gamma) = \frac12 \left( S^{1,2}(\gamma) - S^{2,1}(\gamma)\right) = \frac12 \int_0^1 \gamma^1_t (\gamma^2)'_t - \gamma^2_t (\gamma^1)'_t dt.
\end{equation*}

Because the signature is translation invariant, the translation to the origin described above does not affect this measure. Moreover, if we consider a time series $\gamma \in P\R^N$, then we can consider all pairwise cyclic leading behavior between components. We can place all of this information into a matrix called the \style{lead matrix}, $A(\gamma)$, which has entries
\begin{align}
\label{eq:leadmatrix}
    A^{i,j}(\gamma) = \frac12 \left(S^{i,j}(\gamma) - S^{j,i}(\gamma)\right).
\end{align}

The entries $A^{i,j}(\gamma)$ have a geometric interpretation in terms of the \style{signed area} of the path, as per~\cite{baryshnikov_cyclicity_2016}. 



An example of the second level signatures and the signed area is shown in the figure below.

\begin{figure}[!htbp]
\centering
	\includegraphics[width=0.9\textwidth]{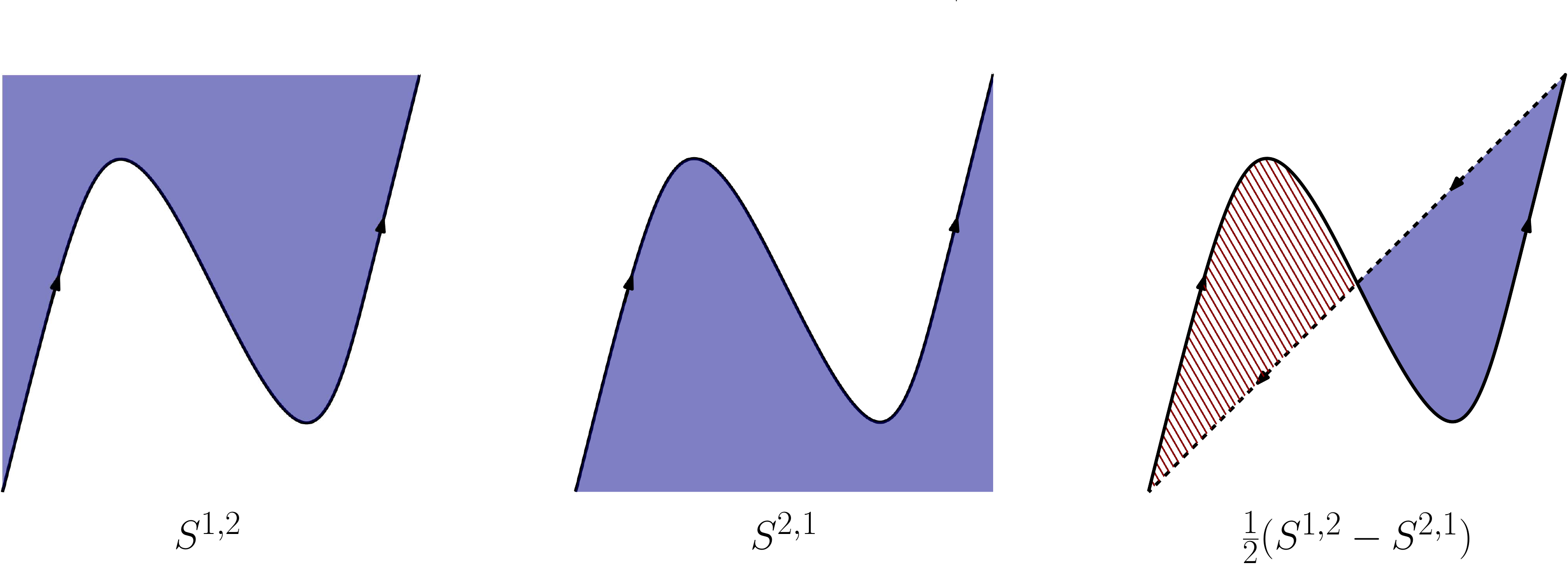}
	\caption{Second level signature computations $S^{1,2}$ (left), $S^{2,1}$ (middle), and the signed are $A^{1,2}$ (right). Blue represents positive area, while red represents negative area.}
\end{figure}

Returning to the setting of Lie groups, we can define the lead matrix of a path $\gamma \in PG$ in the same manner, but the interpretation must be slightly modified. Writing out the integral for $S^{i,j}(\gamma)$ given a basis $(\omega_1, \ldots, \omega_N)$ of $\fg^*$, we get
\begin{align*}
    S^{i,j}(\gamma) = \int_0^1 \left( \int_0^{t} \omega_i(\gamma'_{s}) ds \right) \omega_j(\gamma'_t) dt.
\end{align*}
The inner integral is simply $S^i(\gamma)_t$ and represents the cumulative variation of the path in the direction of $e_i$ (the dual of $\omega_i$), which is the analogue of the displacement in Euclidean space. Thus, for a cyclic time series $\gamma \in PG$, we say that the $e^i$ direction exhibits cyclic leading behavior with respect to the $e^j$ direction if the following holds:
\begin{enumerate}
    \item \textbf{(positive influence)} when $S^i(\gamma)_t$ is positive (negative), then $\omega_i(\gamma'_t)$ is positive (negative), and
    \item \textbf{(negative influence)} when $S^j(\gamma)_t$ is positive (negative), then $\omega_j(\gamma'_t)$ is negative (positive).
\end{enumerate}
Thus the lead matrix, as defined in Equation~\ref{eq:leadmatrix}, can be interpreted as a measure of this cyclic leading behavior for Lie group time series. An example of this interpretation is given in Section~\ref{ssec:g3d}.

However, the geometric interpretation in terms of signed area is no longer available. This is because any area computation on Lie groups will require second-order differential information about the paths, but path signatures are only defined using first order differential information. This suggests that an interpretation in terms of areas on the Lie group will not be possible. However, by using Proposition~\ref{prop:relationship}, the value $A^{i,j}(\gamma)$ can still be interpreted as the signed area of the corresponding path $\Phi^{-1}(\gamma)$, where $\Phi$ is the bijection given in Proposition~\ref{prop:relationship}.


\subsection{Topological considerations}
\label{ssec:topological}

In this section, we will consider the topological interpretation of the first level signature terms for some Lie groups. Namely, the first level signature term $S^i$ is homotopy invariant if the differential 1-form corresponding to the basis vector $\omega_i \in \fg^*$ is a closed form. For simplicity, we will assume that all paths are piecewise smooth in this section. Note in particular that the continuous interpretation of discrete time series is piecewise smooth, so the discussion in this section holds for analysis of these discrete time series. \medskip

Recall the following definition of a homotopy between paths.

\begin{definition}
Suppose $\alpha, \beta:[0,1] \rightarrow G$ are \style{homotopic relative to endpoints} if $\alpha_0 = \beta_0$, $\alpha_1 = \beta_1$ and there exists a continuous function $h: [0,1]^2 \rightarrow G$, called a \style{homotopy}, such that
\begin{align*}
    h(0,t) = \alpha_t, \quad h(1,t) = \beta_t, \quad h(s,0) = \alpha_0 = \beta_0, \quad h(s,1) = \alpha_1 = \beta_1.
\end{align*}
We use the notation $\alpha \simeq \beta$ if the paths $\alpha$ and $\beta$ are homotopic relative to endpoints.
\end{definition}

Loosely speaking, two paths are homotopic relative to endpoints if their endpoints coincide, and there exists a continuous deformation from one path to the other. Namely, homotopy relative to endpoints forms an equivalence relation in $PG$. We say that a map $f: PG \rightarrow \R$ is \style{homotopy invariant} if $f(\alpha) = f(\beta)$ whenever $\alpha \simeq \beta$. \medskip

Recall that a differential form $\omega$ is \style{closed} if its exterior derivative is trivial, $d\omega = 0$. By Stokes' theorem, the first level signature terms for closed forms are homotopy invariant. Indeed, let $\alpha, \beta \in PG$, and let $h: [0,1]^2 \rightarrow G$ be a homotopy between $\alpha$ and $\beta$. By Stokes' theorem, we have
\begin{align*}
    \int_{\partial h} \omega = \int_{h} d\omega = 0,
\end{align*}
but the boundary of $h$ is exactly $\alpha* \beta^{-1}$. Thus, we have
\begin{align*}
    \int_\alpha \omega - \int_\beta \omega = \int_{\partial h} \omega = 0.
\end{align*}

For left-invariant forms on Lie groups, there is a simple way to determine whether the form is closed. We begin with the invariant formula for the exterior derivative~\citep{lee_introduction_2003}. Let $\omega$ be a $1$-form on $G$, and $X,Y$ are vector fields on $G$, then
\begin{align*}
    d\omega(X,Y) = X(\omega(Y)) - Y(\omega(X)) - \omega([X,Y]).
\end{align*}
If in particular, if $\omega \in \fg^*$ is a left-invariant $1$-form and $X,Y \in \fg$ are left-invariant vector fields, then this formula reduces to
\begin{align*}
    d\omega(X,Y) = \omega([X,Y])
\end{align*}
since $\omega(X)$ and $\omega(Y)$ are constant functions. Thus, the left invariant form $\omega$ is closed if and only if $\omega([X,Y]) = 0$ for all $X, Y \in \fg$. In particular, this implies that all left-invariant $1$-forms are closed on abelian Lie groups such as $\R^N$ and $T^N$, since $[X,Y] = 0$ for all $X, Y \in \fg$. However, there are no closed left invariant $1$-forms on $SO(3)$ since a nontrivial $\omega \in \fso(3)^*$ must be nonzero for at least some $Z \in \fso(3)$. However, for all $Z \in \fso(3)$, there exist $X,Y \in \fso(3)$ such that $Z = [X,Y]$. In fact, this argument extends to all semisimple Lie groups, and thus there are no closed left-invariant $1$-forms on any semisimple Lie group. \medskip



\subsection{Discretization of the path signature}
\label{ssec:discrete_ps}

We have focused our discussion of the path signature so far on the continuous setting in order to discuss the theoretical framework. However, applications are studied in the discrete setting.  In this section, we provide the explicit computation of path signatures for discrete time series, and discuss a useful discrete approximation.\medskip

Let $v = (v^1, \ldots, v^N) \in \fg$, where we have written out the components of $v$ in terms of some choice of basis on $\fg$. Consider the continuous time series $\gamma_t = \exp(vt)$, where $\exp$ is the Lie exponential. Note that the derivative $\gamma'_t = v$ is constant. In this case, the path signature of $\gamma$ is straightforward to compute. Given $I = (i_1, \ldots, i_m)$, the path signature is
\begin{align*}
    S^I(\gamma) & = \int_{\Delta^m} \omega_{i_1}(\gamma'_{t_1}) \ldots \omega_{i_m}(\gamma'_{t_m}) dt_1 \ldots dt_m \\
    & = \int_{\Delta^m} v_{i_1} \ldots v_{i_m} dt_1 \ldots dt_m \\
    & = \frac{v_{i_1} \ldots v_{i_m}}{m!}.
\end{align*}
The entire path signature can be written concisely as the tensor exponential.

\begin{definition}
    Let $V$ be a real vector space. The tensor exponential $\exp_\otimes : V \rightarrow T((V))$ is defined to be
    \begin{align*}
        (\exp_\otimes(v))_m = \frac{v^{\otimes m}}{m!}.
    \end{align*}
\end{definition}

Then, we may write the path signature of $\gamma = \exp(vt)$ to be
\begin{align*}
    S(\gamma) = \exp_\otimes(v).
\end{align*}

Now, suppose we have a discrete time series $\hat{\gamma}:[T+1] \rightarrow G$. Recall from Equation~\ref{eq:disc_derivative} that we may compute the discrete derivative by
\begin{align*}
    \hat{\gamma}'_t = \log \left( \hat{\gamma}^{-1}_t \hat{\gamma}_{t+1}\right)
\end{align*}
to obtain a discrete time series $\hat{\gamma}': [T] \rightarrow \fg$. As discussed in Section~\ref{ssec:discretets}, we interpret these discrete time series as continuous paths by interpolating using the exponential. Thus, the continuous interpretation of the discrete time series can be thought of as a concatenation of several exponential paths
\begin{align*}
    \gamma = \exp(\hat{\gamma}'_1 t) * \exp(\hat{\gamma}'_2 t) * \ldots * \exp(\hat{\gamma}'_T t).
\end{align*}
Therefore, by the above computation of the path signature of an exponential path and Chen's identity, we define the continuous path signature of the discrete time series to be
\begin{align}
    S(\hat{\gamma}) = \exp_\otimes(\hat{\gamma}'_1) \otimes \exp_\otimes(\hat{\gamma}'_2) \otimes \ldots \otimes \exp_\otimes(\hat{\gamma}'_T).
\end{align}
By using tensor operations, this formula provides an effective implementation for the computation of the path signature. 

An alternative approach is to compute an approximation of the path signature for discrete time series. 

\begin{definition}
\label{def:discrete_ps}
    Let $\hat{\gamma}:[T+1] \rightarrow G$. Suppose $\omega_1, \ldots, \omega_N \in \fg^*$ form a basis of $\fg^*$. Let $I = (i_1, \ldots, i_m)$ be a multi-index, where $i_j \in [N]$. Let $\hat{\gamma}':[T] \rightarrow \fg$ be the discrete derivative of $\hat{\gamma}$. We define the discrete $m$-simplex with length $T$ to be
    \begin{align*}
        \hat{\Delta}^m_T = \{(t_1, \ldots, t_m) \in [T]^m \quad : \quad 0 \leq t_1 < t_2 < \ldots < t_m \leq 1\}.
    \end{align*}
    The discrete path signature of $\hat{\gamma}$ with respect to $I$ is defined to be
    \begin{align*}
        \hat{S}^I(\hat{\gamma}) \coloneqq \sum_{(t_1, \ldots, t_m) \in \hat{\Delta}^m_T} \omega_{i_1}(\hat{\gamma}'_{t_1}) \ldots \omega_{i_m}(\hat{\gamma}'_{t_m}),
    \end{align*}
    where $t_1, \ldots, t_m \in [T]$. The discrete path signature can be viewed as a map
    \begin{align*}
        \hat{S} : \hat{P}G \rightarrow T((\overline{\fg})).
    \end{align*}
\end{definition}

The discrete path signature can be viewed as an approximation to the continuous path signature. Let $\gamma \in PG$ be a continuous path. Given a partition $\pi = (0 = t_1 < t_2 < \ldots < t_{T+1} = 1)$, the discretization of $\gamma$ with respect to $\pi$, denoted $\hat{\gamma}^{(\pi)} : [T+1] \rightarrow G$, is defined to be
\begin{align*}
    \hat{\gamma}^{(\pi)}_i \coloneqq \gamma_{t_i}.
\end{align*}
The following proposition in~\cite{kiraly_kernels_2019} shows that the discrete signature indeed approximates the continuous path signature.

\begin{proposition}[Corollary 4.3,~\cite{kiraly_kernels_2019}]
    Let $\gamma \in P\R^N$, and define a partition $\pi = (0 = t_1 < t_2 < \ldots < t_{T+1} = 1)$. Then,
    \begin{align*}
        \|\hat{S}(\hat{\gamma}^{(\pi)}) - S(\gamma) \| \leq |\gamma|_{1-var}e^{|\gamma|_{1-var}} \max_{i=1, \ldots, T} |\gamma_{[t_i, t_{i+1}]}|_{1-var}.
    \end{align*}
\end{proposition}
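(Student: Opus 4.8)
The plan is to write both $S(\gamma)$ and $\hat{S}(\hat{\gamma}^{(\pi)})$ as ordered tensor products over the $T$ subintervals determined by $\pi$, and then telescope, swapping one subinterval's true signature for its discrete counterpart at a time. Write $L_i := |\gamma_{[t_i,t_{i+1}]}|_{1-var}$, so that additivity of the $1$-variation over a partition gives $\sum_{i=1}^{T} L_i = |\gamma|_{1-var} =: L$, and let $x_i := \gamma_{t_{i+1}} - \gamma_{t_i}$ be the $i$-th increment. By the fundamental theorem of calculus $x_i = \int_{t_i}^{t_{i+1}} \gamma'_s\, ds = \pi_1 S(\gamma_{[t_i,t_{i+1}]})$, so $1 + x_i$ is exactly the truncation of $S(\gamma_{[t_i,t_{i+1}]})$ to tensor degrees $\le 1$.

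First I would record the two product formulas. Chen's identity (Equation~\ref{eq:chens_identity}) gives
\begin{align*}
    S(\gamma) = S(\gamma_{[t_1,t_2]}) \otimes S(\gamma_{[t_2,t_3]}) \otimes \cdots \otimes S(\gamma_{[t_T,t_{T+1}]}),
\end{align*}
while expanding $\bigotimes_{i=1}^{T}(1+x_i)$ and matching the coefficient of each $e_{i_1} \otimes \cdots \otimes e_{i_m}$ with a strictly increasing selection of $m$ time steps from $[T]$ recovers precisely Definition~\ref{def:discrete_ps}:
\begin{align*}
    \hat{S}(\hat{\gamma}^{(\pi)}) = (1 + x_1) \otimes (1 + x_2) \otimes \cdots \otimes (1 + x_T).
\end{align*}
Hence the per-subinterval discrepancy $E_i := S(\gamma_{[t_i,t_{i+1}]}) - (1 + x_i)$ is supported in tensor degrees $\ge 2$.

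Next I would telescope. For $0 \le k \le T$ set
\begin{align*}
    \mathbf{P}_k := \Big( \bigotimes_{i=1}^{k} S(\gamma_{[t_i,t_{i+1}]}) \Big) \otimes \Big( \bigotimes_{i=k+1}^{T} (1+x_i) \Big),
\end{align*}
so that $\mathbf{P}_0 = \hat{S}(\hat{\gamma}^{(\pi)})$, $\mathbf{P}_T = S(\gamma)$, and, by bilinearity of $\otimes$,
\begin{align*}
    \mathbf{P}_k - \mathbf{P}_{k-1} = \Big( \bigotimes_{i<k} S(\gamma_{[t_i,t_{i+1}]}) \Big) \otimes E_k \otimes \Big( \bigotimes_{i>k} (1+x_i) \Big).
\end{align*}
Using the factorial bound from the proof of Lemma~\ref{lem:sig_finite_norm} — with a tensor norm for which $\|v^{\otimes m}\|_m = \|v\|^m$ it reads $\|\pi_m S(\eta)\|_m \le |\eta|_{1-var}^m/m!$ — one gets $\|S(\gamma_{[t_i,t_{i+1}]})\| \le e^{L_i}$, $\|1 + x_i\| \le 1 + L_i \le e^{L_i}$, and, since $E_k$ has no part in degrees $0$ or $1$,
\begin{align*}
    \|E_k\| \le \sum_{m \ge 2} \frac{L_k^m}{m!} \le \frac{L_k^2}{2}\, e^{L_k} \le \frac{L_k}{2}\Big(\max_j L_j\Big) e^{L_k}.
\end{align*}
Submultiplicativity of the norm then gives $\|\mathbf{P}_k - \mathbf{P}_{k-1}\| \le e^{\sum_{i<k} L_i} \cdot \frac{L_k}{2}(\max_j L_j) e^{L_k} \cdot e^{\sum_{i>k} L_i} = \frac{L_k}{2}(\max_j L_j) e^{L}$, and summing over $k$ with $\sum_k L_k = L$ yields $\|\hat{S}(\hat{\gamma}^{(\pi)}) - S(\gamma)\| \le \frac{1}{2} L\, e^{L} \max_j L_j$, which is the claimed inequality (with an extra factor $\frac12$).

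The combinatorics above — the product formula for $\hat{S}$, Chen's identity, and the telescope — are routine; the only subtle point, and where the cited argument of~\cite{kiraly_kernels_2019} does its real work, is the choice of tensor norm. The $\ell^2$-norm of Equation~\ref{eq:trunc_innerprod} is not submultiplicative, and Lemma~\ref{lem:sig_finite_norm} only yields $\|S(\eta)\| \le e^{\sqrt{N}|\eta|_{1-var}}$ through it, so to obtain the clean constant one passes to an admissible (cross-)norm for which both $\|v^{\otimes m}\|_m = \|v\|^m$ and $\|\mathbf{s} \otimes \mathbf{t}\| \le \|\mathbf{s}\| \|\mathbf{t}\|$ hold, or else carries level-dependent constants through the estimates at the cost of a messier bound.
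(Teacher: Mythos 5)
The paper does not actually prove this proposition --- it is imported verbatim as Corollary 4.3 of \cite{kiraly_kernels_2019} and then transferred to Lie groups via the isometry $\Phi$ --- so there is no internal proof to compare against; I will assess your argument on its own. It is correct and is essentially the standard route to this estimate: Chen's identity (Equation~\ref{eq:chens_identity}) factors $S(\gamma)$ over the partition; expanding $\bigotimes_{i=1}^T(1+x_i)$ and matching coefficients against strictly increasing index tuples does recover Definition~\ref{def:discrete_ps}; each $E_k$ is supported in degrees $\geq 2$ because $\pi_0 S=1$ and $\pi_1 S(\gamma_{[t_k,t_{k+1}]})=x_k$; and the telescope closes with the constant $\tfrac12$ to spare. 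You are also right that the only genuine subtlety is the norm, and your proposed fix works: on each graded piece the Hilbert--Schmidt norm satisfies $\|\mathbf{s}_j\otimes\mathbf{t}_{m-j}\|_m=\|\mathbf{s}_j\|_j\,\|\mathbf{t}_{m-j}\|_{m-j}$, so one gets $\|\pi_m S(\eta)\|_m\leq\int_{\Delta^m}\|\eta'_{t_1}\|\cdots\|\eta'_{t_m}\|\,dt=L_\eta^m/m!$ directly (avoiding the $N^{m/2}$ loss incurred by the coordinatewise argument in the proof of Lemma~\ref{lem:sig_finite_norm}), and one can then run your entire estimate in the $\ell^1$-sum-over-levels norm $\sum_m\|\mathbf{t}_m\|_m$, which \emph{is} submultiplicative and dominates the $\ell^2$-type norm of Equation~\ref{eq:trunc_innerprod}. (A further small simplification: $\bigotimes_{i<k}S(\gamma_{[t_i,t_{i+1}]})=S(\gamma_{[t_1,t_k]})$ by Chen, so that factor can be bounded by $e^{\sum_{i<k}L_i}$ without invoking submultiplicativity at all.) With these observations every inequality you wrote goes through, yielding $\tfrac12\,L\,e^{L}\max_j L_j$, which implies the stated bound.
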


By applying the map $\Phi: P\R^N \rightarrow PG$ from Proposition~\ref{prop:relationship}, and using the fact that it is an isometry, we immediately get the following corollary for Lie group valued paths.

\begin{corollary}
    Let $\gamma \in PG$, and define a partition $\pi = (0 = t_1 < t_2 < \ldots < t_{T+1} = 1)$. Then,
    \begin{align*}
        \|\hat{S}(\hat{\gamma}^{(\pi)}) - S(\gamma) \| \leq |\gamma|_{1-var}e^{|\gamma|_{1-var}} \max_{i=1, \ldots, T} |\gamma_{[t_i, t_{i+1}]}|_{1-var}.
    \end{align*}
\end{corollary}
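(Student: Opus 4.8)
The plan is to deduce the corollary directly from the Euclidean statement (Proposition, Corollary 4.3 of~\cite{kiraly_kernels_2019}) by transporting everything through the signature-preserving isometry $\Phi: P\R^N_0 \rightarrow PG_e$ of Proposition~\ref{prop:relationship}, exactly as was done for Corollary~\ref{cor:stability}. The key observation is that all four ingredients appearing in the inequality -- the discrete signature, the continuous signature, the $1$-variation of the full path, and the $1$-variation of each subpath -- are preserved or identified under $\Phi$.

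First I would set $\alpha = \Phi^{-1}(\gamma) \in P\R^N_0$, so that $\gamma = \Phi(\alpha)$. By the signature-preservation in Proposition~\ref{prop:relationship}, $S(\gamma) = S_G(\Phi(\alpha)) = S_\R(\alpha)$. Next I would check that the discretization commutes with $\Phi$ in the appropriate sense: applying $\Phi$ pointwise to the sampled points of $\alpha$ at the partition $\pi$ yields exactly the sampled points of $\gamma$, i.e. $\widehat{\gamma}^{(\pi)} = \Phi \circ \widehat{\alpha}^{(\pi)}$ on each node; and since the discrete signature $\widehat{S}$ depends only on the discrete derivatives $\widehat{\gamma}'_t = \log(\widehat{\gamma}_t^{-1}\widehat{\gamma}_{t+1})$, which under $\Phi$ correspond precisely to the increments $\widehat{\alpha}'_t = \alpha_{t_{i+1}} - \alpha_{t_i}$ identified via the vector space isomorphism $\phi$, one gets $\widehat{S}(\widehat{\gamma}^{(\pi)}) = \widehat{S}(\widehat{\alpha}^{(\pi)})$ after the identification $\bar{\fr} \cong \bar{\fg}$. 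Finally, since $\Phi$ is an isometry for the $1$-variation metric $d_\R$ on $PG_e$ (the Lemma preceding Corollary~\ref{cor:stability}), we have $|\gamma|_{1-var} = |\alpha|_{1-var}$ and, applied to each restriction, $|\gamma_{[t_i,t_{i+1}]}|_{1-var} = |\alpha_{[t_i,t_{i+1}]}|_{1-var}$; also the ambient norm $\|\cdot\|$ on $T((\bar{\fg}))$ matches the norm on $T((\R^N))$ under $\phi$.

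Putting these identifications into the Euclidean inequality
\begin{align*}
    \|\widehat{S}(\widehat{\alpha}^{(\pi)}) - S(\alpha)\| \leq |\alpha|_{1-var} e^{|\alpha|_{1-var}} \max_{i=1,\ldots,T} |\alpha_{[t_i,t_{i+1}]}|_{1-var}
\end{align*}
and replacing each Euclidean quantity by its Lie group counterpart yields the claimed bound verbatim.

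The main obstacle is not analytic but bookkeeping: one must be careful that the notion of ``$1$-variation of the subpath $\gamma_{[t_i,t_{i+1}]}$'' on the right-hand side of the Euclidean proposition corresponds, under $\Phi$, to the $1$-variation measured with the Lie-group metric $d$ (equivalently the length $\ell$, via the Lemma of~\cite{burtscher_length_2015}), rather than to some extrinsic chordal quantity -- and to confirm that $\Phi$ restricts to an isometry on each subinterval, not merely globally. Since $\Phi$ is built from $\Psi_g$, which is local in time (it solves an ODE), its restriction to a subinterval is again of the same form with a shifted initial condition, so the isometry property is inherited; verifying this carefully is the only real content beyond Corollary~\ref{cor:stability}.
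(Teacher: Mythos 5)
Your proposal is correct and follows exactly the paper's own route: the paper proves this corollary in one line by transporting the Euclidean bound (Corollary 4.3 of Kir\'aly--Oberhauser) through the signature-preserving isometry $\Phi$ of Proposition~\ref{prop:relationship}, precisely as you do. In fact you supply more detail than the paper does, in particular the checks that discretization commutes with $\Phi$, that the discrete derivatives correspond under the identification $\bar{\fr}\cong\bar{\fg}$, and that the $1$-variation identification restricts to subintervals --- all of which the paper leaves implicit.
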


We consider the discrete path signature since it is more amenable to computation. In particular,~\cite{kiraly_kernels_2019} derived efficient algorithms to compute the discrete path signature kernel, and we extend these algorithms to Lie groups in Section~\ref{ssec:kernel_trick}. In addition, the discrete path signature is of independent interest and the algebraic properties of the discrete path signature are studied in~\cite{diehl_time-warping_2020}, where it is called the iterated sums signature.


\subsection{Path transformations}
\label{ssec:pathtrans}
We turn to discussion of several transformations which add information to paths.

\subsubsection{Time transformation}
The \style{time transformation} is a simple method to remove reparametrization invariance (common in the path signatures literature such as in~\cite{chevyrev_primer_2016}), defined by appending the time parameter to the path:
\begin{align}
    T_{\timerm}: PG_e &\rightarrow \widetilde{P}(G \times \R) \nonumber \\
    \gamma_t &\mapsto (\gamma_t, t).
\end{align}

\begin{lemma}
    The map $T_{\timerm}: PG_e \rightarrow \widetilde{P}(G\times \R)$ is injective.
\end{lemma}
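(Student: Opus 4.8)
The plan is to show that $T_{\timerm}$ is injective by exhibiting a left inverse, or more directly by arguing that if two paths have the same image under $T_{\timerm}$ modulo tree-like equivalence, then they must have been tree-like equivalent to begin with. First I would unwind what the codomain means: $T_{\timerm}(\gamma)$ is the tree-like equivalence class $[(\gamma_t, t)] \in \widetilde{P}(G \times \R)$ of the path $t \mapsto (\gamma_t, t)$ in the product Lie group $G \times \R$. So the claim is that if $(\gamma_t, t) \sim_t (\eta_t, t)$ in $G \times \R$, then $\gamma = \eta$ (as elements of $PG_e$ — note the domain is $PG_e$, not $\widetilde{PG}$, so I must conclude actual equality of paths, up to nothing, or perhaps up to reparametrization depending on how one reads $PG_e$; I would check the intended reading, but reparametrization invariance is harmless here since the time coordinate pins down the parametrization).

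The key observation is that the second coordinate of $(\gamma_t, t)$ is the path $t \mapsto t$, which is \emph{strictly monotone} and in particular injective with no backtracking. A tree-like path in $G \times \R$, when projected to the $\R$ factor, must be a tree-like path in $\R$ (since projection $\pi: G \times \R \to \R$ is a Lie group homomorphism, and tree-like paths map to tree-like paths under homomorphisms — this follows from the fact that reductions are preserved by applying a homomorphism pointwise). So if $(\gamma_t,t) * (\eta_t,t)^{-1}$ is tree-like in $G \times \R$, its projection to $\R$, namely the concatenation of $t \mapsto t$ with its reverse $t \mapsto 1-t$... wait, more carefully: the concatenation $(\gamma_t,t) * (\eta_t, t)^{-1}$ has $\R$-component equal to the path that goes $0 \to 1$ then $1 \to 0$, which is already tree-like in $\R$, so the projection gives no information directly. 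The real content must come from a different angle.

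The cleaner approach: suppose $T_{\timerm}(\gamma) = T_{\timerm}(\eta)$, i.e. $(\gamma_t, t) \sim_t (\eta_t, t)$. Since the time coordinate is strictly increasing on each factor of the concatenation, I claim the concatenated path $\zeta := (\gamma_t, t) * (\eta_t, t)^{-1}$ can only be tree-like if $\gamma_t = \eta_t$ for all $t$. Here I would invoke the structure of tree-like reductions: a reduction cancels a subpath of the form $\zeta' * (\zeta')^{-1}$, which requires an exact backtrack. In the first half of $\zeta$, the time coordinate strictly increases from $0$ to $1$; in the second half it strictly decreases from $1$ back to $0$. Any cancellation must match a sub-interval of the first half against a reversed sub-interval of the second half at the \emph{same time values}. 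Pushing this through, the irreducible reduction of $\zeta$ is $c_e$ if and only if, for every $t$, the point of $\gamma$ at time $t$ equals the point of $\eta$ at time $t$ — i.e. $\gamma = \eta$. I expect the main obstacle to be making this "matching at the same time values" argument rigorous: one must carefully use the uniqueness of irreducible reductions (the theorem attributed to Chen, stated earlier) together with the fact that the time coordinate provides a monotone "height function" that obstructs any nontrivial backtracking except the global one already present in $\zeta = \zeta_1 * \zeta_1^{-1}$ when $\gamma = \eta$. An alternative, possibly slicker, route is to observe that the time-augmented path has a canonical parametrization (by its own last coordinate) and that within each tree-like class there is a unique representative that is injective-in-time up to reparametrization, forcing $\gamma = \eta$; I would try this framing first and fall back to the explicit reduction analysis if it gets stuck.
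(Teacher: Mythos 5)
Your argument is correct and rests on the same key observation as the paper's proof: the strictly increasing time coordinate rules out any nontrivial backtracking, so a time-augmented concatenation $(\gamma_t,t)*(\eta_t,t)^{-1}$ can reduce to the constant path only if $\gamma$ and $\eta$ agree pointwise. The paper packages this slightly differently --- it first notes that $T_{\timerm}(\alpha)$ is irreducible and then that two irreducible paths with common start point are tree-like equivalent only if they differ by a reparametrization, which the time coordinate pins down --- but this is the same monotonicity argument you make directly on the concatenation, left at a comparable level of rigor.
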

\begin{proof}
    Consider $\alpha, \beta \in PG_e$. The time parameter of $T_{\timerm}(\alpha)$ is monotone increasing, implying the path $T_{\timerm}(\alpha)$ is irreducible. Thus, since we also assume that $\alpha_0 = \beta_0 = e$, the paths $T_{\timerm}(\alpha)$ and $T_{\timerm}(\beta)$ are tree-like equivalent if and only if they differ by a reparametrization. However, all paths in the image of $T_{\timerm}$ have the same parametrization in the time coordinate. Thus, $T_{\timerm}(\alpha)$ is tree-like equivalent to $T_{\timerm}(\beta)$ if and only if $\alpha = \beta$. 
\end{proof}
Because the path signature is injective with respect to tree-like equivalence classes of paths, this lemma implies that by first embedding a path $\gamma \in PG_e$ into $\widetilde{P}(G\times \R)$, we obtain a parametrization-dependent feature map for $PG_e$ using the path signature. More generally, this also removes tree-like invariance, so in fact we obtain an injective feature map for $PG_e$.

\subsubsection{Identity initialized transformation}
The \style{identity initialized (IdInit) transformation} is a simple method to remove translation invariance: to the authors' knowledge, this has not been explicitly discussed in the context of path signatures. Suppose $g \in G$ and we define $\ell_g \in PG$ to be the exponential path from the identity $e \in G$ to the point $g \in G$. Then the IdInit transformation is defined to be
\begin{align}
    T_{\IdInit} : PG &\rightarrow PG_e \nonumber \\
    \gamma &\mapsto \ell_{\gamma_0} * \gamma .
\end{align}

For discrete time series, this simply amounts to appending the identity element to the beginning of the time series. The following lemma is clear by definition. 
\begin{lemma}
    The map $T_{\IdInit}: PG \rightarrow PG_e$ is injective.
\end{lemma}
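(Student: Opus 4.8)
The statement to prove is that $T_{\IdInit}: PG \rightarrow PG_e$ is injective, where $T_{\IdInit}(\gamma) = \ell_{\gamma_0} * \gamma$, and $\ell_g$ is the exponential path from $e$ to $g$.

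The plan is to show that $T_{\IdInit}$ has a left inverse, or directly that $T_{\IdInit}(\alpha) = T_{\IdInit}(\beta)$ forces $\alpha = \beta$ as elements of $PG$. First I would unpack the concatenation: $T_{\IdInit}(\gamma) = \ell_{\gamma_0} * \gamma$ is, after the reparametrization implicit in the concatenation operator, the path that traverses $\ell_{\gamma_0}$ on $[0,\tfrac12]$ and then traverses $\gamma$ (suitably translated so the endpoints match) on $[\tfrac12,1]$. Note that by definition of the concatenation operator, the second piece is $\ell_{\gamma_0}(1) (\gamma_0)^{-1}\gamma_{2t-1} = \gamma_0 \gamma_0^{-1} \gamma_{2t-1} = \gamma_{2t-1}$, since $\ell_{\gamma_0}$ ends at $\gamma_0 = \gamma_0$; so the concatenation genuinely recovers $\gamma$ on the second half without any residual translation. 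Also $\ell_{\gamma_0}$ starts at $e$, so $T_{\IdInit}(\gamma) \in PG_e$ as claimed.

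The key observation is that the exponential path $\ell_{g}$ is determined by its endpoint $g$ (it is the fixed path $t \mapsto \exp(t \log g)$, or more precisely the chosen exponential path), so the first half of $T_{\IdInit}(\gamma)$ records $\gamma_0$ exactly, and the second half records $\gamma$ itself. Concretely, if $T_{\IdInit}(\alpha) = T_{\IdInit}(\beta)$ as paths (not merely up to tree-like equivalence — the map lands in $PG_e$, and the lemma asserts honest injectivity), then restricting to the first half gives $\ell_{\alpha_0} = \ell_{\beta_0}$, hence $\alpha_0 = \beta_0$ by evaluating at the endpoint; and restricting to the second half gives $\alpha_{2t-1} = \beta_{2t-1}$ for $t \in [\tfrac12, 1]$, i.e. $\alpha = \beta$ after undoing the reparametrization. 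So $\alpha = \beta$. I would also remark that an explicit left inverse is available: read off $\gamma_0$ from the value at $t = \tfrac12$ and $\gamma$ from the restriction to $[\tfrac12,1]$, which makes the injectivity transparent.

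There is essentially no hard part here — the authors themselves say "the following lemma is clear by definition." The only thing requiring a word of care is the bookkeeping around the concatenation operator's built-in reparametrization and its translation correction term $\alpha_1(\beta_0)^{-1}$, to confirm that it contributes nothing here because $\ell_{\gamma_0}$ ends precisely at $\gamma_0$ which is where $\gamma$ begins. Once that is checked, the two halves of $T_{\IdInit}(\gamma)$ independently recover $\gamma_0$ and $\gamma$, so the map is injective.

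\begin{proof}
    By definition $T_{\IdInit}(\gamma) = \ell_{\gamma_0} * \gamma$, where $\ell_{\gamma_0}$ is the exponential path from $e$ to $\gamma_0$. Since $\ell_{\gamma_0}$ ends at $\gamma_0$, the translation correction term in the concatenation operator is $\ell_{\gamma_0}(1)(\gamma_0)^{-1} = \gamma_0 \gamma_0^{-1} = e$, so
    \begin{align*}
        T_{\IdInit}(\gamma)_t = \left\{
            \begin{array}{cl}
                \ell_{\gamma_0}(2t) & : t \in [0,\tfrac12) \\
                \gamma_{2t-1} & : t \in [\tfrac12, 1].
            \end{array}
        \right.
    \end{align*}
    In particular $T_{\IdInit}(\gamma)_0 = \ell_{\gamma_0}(0) = e$, so the image lies in $PG_e$.

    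Now suppose $\alpha, \beta \in PG$ satisfy $T_{\IdInit}(\alpha) = T_{\IdInit}(\beta)$. Restricting to $t \in [\tfrac12, 1]$ gives $\alpha_{2t-1} = \beta_{2t-1}$ for all such $t$, i.e. $\alpha = \beta$. (Alternatively, $T_{\IdInit}$ admits an explicit left inverse: given $\eta = T_{\IdInit}(\gamma)$, one recovers $\gamma$ by $\gamma_s = \eta_{(s+1)/2}$ for $s \in [0,1]$.) Hence $T_{\IdInit}$ is injective.
\end{proof}
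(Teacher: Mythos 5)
Your proof is correct and matches the paper's intent exactly: the paper offers no argument beyond "clear by definition," and you supply precisely the obvious details — that the concatenation's translation correction term vanishes because $\ell_{\gamma_0}$ ends at $\gamma_0$, so the second half of $T_{\IdInit}(\gamma)$ literally recovers $\gamma$ up to the fixed reparametrization.
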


The implication of this lemma is that by embedding $PG$ into $PG_e$, we can obtain a translation-dependent feature map for $PG$ using the path signature. Moreover, by combining the time and identity start transformations, the path signature provides an injective feature map for $PG$

\begin{corollary}
    The composition $T_{\timerm}\circ T_{\IdInit} : PG \rightarrow \widetilde{P}(G\times\R)$ is injective. 
\end{corollary}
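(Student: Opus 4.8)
The plan is to compose the two injectivity lemmas that precede this corollary, being careful about how $T_{\timerm}$ interacts with $T_{\IdInit}$. First I would observe that $T_{\IdInit}: PG \to PG_e$ is injective by the preceding lemma, and its image lies in $PG_e$. Then $T_{\timerm}$ is applied to an element of $PG_e$, which is exactly its domain, and it is injective by the lemma two above. Since a composition of injective maps is injective, the result follows immediately. The only point that needs a sentence of care is that $T_{\timerm}$ lands in $\widetilde{P}(G\times\R)$ rather than in a pathspace of the form $PH_e$; this causes no trouble because injectivity of each constituent map is all we use, and the codomain of the composition is simply the codomain $\widetilde{P}(G\times\R)$ of $T_{\timerm}$.

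Concretely, I would argue as follows. Suppose $\alpha, \beta \in PG$ satisfy $(T_{\timerm}\circ T_{\IdInit})(\alpha) = (T_{\timerm}\circ T_{\IdInit})(\beta)$. By injectivity of $T_{\timerm}$ on $PG_e$, we get $T_{\IdInit}(\alpha) = T_{\IdInit}(\beta)$ as elements of $PG_e$. By injectivity of $T_{\IdInit}$, we conclude $\alpha = \beta$. Hence the composition is injective. One may also want to remark on the data-analysis payoff, parallel to the remarks after the earlier lemmas: composing with the (tree-like, injective) path signature map $S$ then yields a feature map $S \circ T_{\timerm}\circ T_{\IdInit}$ on all of $PG$ that is simultaneously sensitive to the initial point (because $T_{\IdInit}$ breaks left-translation invariance) and to the parametrization (because $T_{\timerm}$ breaks reparametrization invariance). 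This is the whole point of the corollary: neither transformation alone produces an injective feature map on $PG$ via the signature, but their composition does.

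The main obstacle — and it is a minor one — is purely bookkeeping: one must confirm the domains and codomains match so that the composition is well-defined, in particular that $T_{\IdInit}(\gamma)$ genuinely starts at $e$ so it is a legitimate input to $T_{\timerm}$. This is immediate from the definition $T_{\IdInit}(\gamma) = \ell_{\gamma_0} * \gamma$ together with the convention that $\ell_{\gamma_0}$ is the exponential path \emph{from} $e$, so the concatenation begins at $e$. There is no analytic content here; the corollary is a formal consequence of the two lemmas, and the proof is essentially one line once the composability is noted.
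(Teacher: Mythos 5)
Your proof is correct and is exactly the argument the paper intends: the corollary is stated without proof as the formal composition of the two preceding injectivity lemmas, which is what you carry out. The bookkeeping you note (that $T_{\IdInit}(\gamma)$ starts at $e$, so the composition is well-defined) is the only content, and you handle it correctly.
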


\subsubsection{Sliding window transformation}
\label{sssec:sliding_window}
The sliding window transformation (often called the lead-lag transformation in the path signature literature) has been used for several applications of the path signatures, such as~\cite{gyurko_extracting_2013} and~\cite{yang_developing_2019}, and has been found to produce good results. An empirical study by~\cite{fermanian_embedding_2019} has shown that the sliding window transformation performs well on classification tasks. The exact definition may vary slightly between these papers, and we give the definition from~\cite{yang_developing_2019} and~\cite{fermanian_embedding_2019}.

Let $\tau \in \R^+$, and $m \in \N$. Given a path $\gamma \in PG$, which we recall is defined on the unit interval $\gamma:[0,1] \rightarrow G$, we can extend its definition to all of $\R$ as
\begin{align*}
    \gamma_t = \left\{
        \begin{array}{cl}
            \gamma_0 & : t < 0 \\
            \gamma_t & : t \in [0,1] \\
            \gamma_1 & : t > 1.
        \end{array}
    \right.
\end{align*}
Then, we define the sliding window transformation with $m$ lags to be
\begin{align}
    T_{\SW, m}: PG &\rightarrow PG^{m+1} \nonumber \\
    \gamma_t & \mapsto (\gamma_t, \gamma_{t-\tau}, \gamma_{t-2\tau}, \ldots, \gamma_{t-m\tau}).
\end{align}

For discrete time series, we assume that the data is temporally uniformly sampled, and we choose $\tau$ to be the time in between samples. Then, if we consider a discrete time series to be $\tilde{\gamma}: \{0, \ldots, n\} \rightarrow G$, then the sliding window transformation will be
\begin{align*}
    T_{\SW, m}(\tilde{\gamma})_i \coloneqq (\tilde{\gamma}_i, \tilde{\gamma}_{i-1}, \ldots, \tilde{\gamma}_{i-m}).
\end{align*}

In the context of Euclidean path signatures, \cite{fermanian_embedding_2019} empirically shows that the sliding window embedding often performs well on classification tasks, though there is no theoretical explanation. We note that due to the choice of padding the start of the delayed time series with the identity, this transformation breaks translation invariance. We suggest that breaking the translation symmetry is one reason the sliding window transformation performs well in practice. This is discussed in Remark~\ref{rem:translation_breaking} in Section~\ref{ssec:g3d}.


\section{The path signature kernel for Lie groups}
\label{sec:kernel}

In this section, we show that a normalized variant of the path signature can be used to define a universal and characteristic kernel for Lie group valued time series. We begin by giving an overview of kernel methods and the types of problems that kernel methods can solve. Next, we extend the results of~\cite{chevyrev_signature_2018} to show that path signatures for Lie group valued paths are universal and characteristic. Finally, we show that the algorithms introduced in~\cite{kiraly_kernels_2019} for efficient computation of the signature kernel can also be used for Lie group valued paths.


\subsection{Background on universal and characteristic kernels}

Suppose $\cX$ is a topological space which represents the space of data we would like to consider; we will call this the \style{input space}. Many tasks in machine learning can be separated into two broad classes of problems.
\begin{enumerate}
    \item Those which involve making inferences about functions $f \in \cF$, where $\cF \subset \R^{\cX}$ is the function class which we are considering. Performing binary classification reduces to learning a function $f \in \cF$ such that the level set $f=0$ represents the \style{decision boundary} between the two classes of data.
    
    \item Those which involve making inferences about probability measures $\mu \in \cP(\cX)$, where $\cP(\cX)$ denotes the space of Borel probability measures on $\cX$. For example, in two sample hypothesis testing, we begin with samples $\{x_1, \ldots, x_n\}$ and $\{y_1, \ldots, y_m\}$ taken from probability distributions $p$ and $q$ on $\cX$ respectively. Testing the null hypothesis that $p=q$ then corresponds to learning about the underlying measures of $p$ and $q$. 
\end{enumerate}

The general philosophy behind kernel methods is to map the input space $\cX$ into a reproducing kernel Hilbert space (RKHS) $\cH$ using a \style{feature map}
\begin{equation*}
    \Phi: \cX \rightarrow \cH_\kappa,
\end{equation*}
where the corresponding kernel is
\begin{align*}
    \kappa: \cX \times \cX \rightarrow \R, \quad \kappa(x,x') = \langle \Phi(x), \Phi(x')\rangle.
\end{align*}

Problems involving learning nonlinear functions $f \in \R^\cX$ given some input data $\{x_i\}$, where $x_i \in \cX$, can be reformulated as problems involving learning an element $g \in \cH$ (which can be thought of as a function $g \in \R^{\cX}$) given the data $\{\Phi(x_i)\}$. Additionally, the norm induced by the Hilbert space provides a metric between points $x, y \in \cX$ as $\|x-y\|$. In essence, this translates a \style{nonlinear} learning problem into a \style{linear} learning problem. This allows the application of linear methods, which are much simpler and better developed in many cases. 

Measures $\mu$ on $\cX$ can be mapped into the RKHS via the \style{kernel mean embedding} (KME),
\begin{equation}
    \overline{\Phi}: \cM(\cX) \rightarrow \cH_\kappa, \quad \overline{\Phi}(\mu) \coloneqq \int_{\cX} \Phi(x) d\mu(x) = \E_\mu[\Phi].
\end{equation}
A priori, this map is not necessarily well-defined, so we will usually require restrictions on the feature map or kernel such that the integral exists.

\begin{lemma}[\cite{sriperumbudur_hilbert_2010}]
    Let $\mu \in \cM(\cX)$. If the kernel $\kappa: \cX \rightarrow \cX \rightarrow \R$ is measurable and the integral $\int_{\cX} \sqrt{\kappa(x,x)} d\mu(x) < \infty$, then $\overline{\Phi}(\mu) \in \cH_\kappa$.
\end{lemma}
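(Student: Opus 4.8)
The plan is to realize $\overline{\Phi}(\mu)$ not by directly making sense of the formal Bochner integral $\int_\cX \Phi(x)\,d\mu(x)$, but as the Riesz representative of a suitable bounded linear functional on $\cH_\kappa$; the hypothesis $\int_\cX \sqrt{\kappa(x,x)}\,d\mu(x) < \infty$ is precisely what is needed to bound that functional. The starting point is the elementary identity, valid for any feature map, that
\begin{align*}
    \|\Phi(x)\|_{\cH_\kappa} = \sqrt{\langle \Phi(x), \Phi(x)\rangle_{\cH_\kappa}} = \sqrt{\kappa(x,x)},
\end{align*}
so the assumption states exactly that $x \mapsto \|\Phi(x)\|_{\cH_\kappa}$ is $\mu$-integrable.

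Next I would dispose of the measurability bookkeeping. Recall that $\cH_\kappa$ is the closed linear span of the features $\{\Phi(x) : x \in \cX\}$, so every $h \in \cH_\kappa$ is a norm limit of finite combinations $h_n = \sum_i c_i^{(n)} \Phi(x_i^{(n)})$. Since $\kappa$ is measurable, each section $\kappa(x_i^{(n)}, \cdot)$ is measurable, hence so is $x \mapsto \langle h_n, \Phi(x)\rangle = \sum_i c_i^{(n)} \kappa(x_i^{(n)}, x)$; and because
\begin{align*}
    |\langle h_n - h, \Phi(x)\rangle| \leq \|h_n - h\|_{\cH_\kappa}\,\sqrt{\kappa(x,x)} \longrightarrow 0
\end{align*}
pointwise in $x$, the map $x \mapsto \langle h, \Phi(x)\rangle$ is a pointwise limit of measurable functions, hence measurable, for every $h \in \cH_\kappa$.

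Then I would introduce the functional $T_\mu : \cH_\kappa \to \R$ defined by $T_\mu(h) = \int_\cX \langle h, \Phi(x)\rangle_{\cH_\kappa}\, d\mu(x)$. It is well defined and bounded, since applying Cauchy--Schwarz pointwise and then the hypothesis gives
\begin{align*}
    |T_\mu(h)| \leq \int_\cX |\langle h, \Phi(x)\rangle_{\cH_\kappa}|\, d\mu(x) \leq \|h\|_{\cH_\kappa} \int_\cX \sqrt{\kappa(x,x)}\, d\mu(x) < \infty.
\end{align*}
By the Riesz representation theorem there is a unique $m_\mu \in \cH_\kappa$ with $T_\mu(h) = \langle h, m_\mu\rangle_{\cH_\kappa}$ for all $h$. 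Setting $\overline{\Phi}(\mu) \coloneqq m_\mu$ gives an element of $\cH_\kappa$ satisfying $\langle h, \overline{\Phi}(\mu)\rangle_{\cH_\kappa} = \int_\cX \langle h, \Phi(x)\rangle_{\cH_\kappa}\, d\mu(x)$; i.e.\ $\overline{\Phi}(\mu)$ is the weak (Pettis) integral $\int_\cX \Phi(x)\,d\mu(x)$, which under the present integrability hypothesis coincides with the Bochner integral, and in particular $\langle \Phi(y), \overline{\Phi}(\mu)\rangle_{\cH_\kappa} = \int_\cX \kappa(y,x)\,d\mu(x)$, recovering the usual kernel mean embedding formula.

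The only genuine subtlety — and hence the step I would be most careful with — is the measurability argument: without the assumption that $\kappa$ is measurable, the integrals $\int_\cX \langle h, \Phi(x)\rangle\, d\mu(x)$ need not even make sense, so that hypothesis is not a mere technicality but exactly what makes $T_\mu$ definable. The quantitative content — that finite ``expected norm'' forces convergence — is then immediate from $\|\Phi(x)\|_{\cH_\kappa} = \sqrt{\kappa(x,x)}$ and presents no difficulty.
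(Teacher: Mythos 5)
Your argument is correct and is essentially the standard one: the paper itself gives no proof of this lemma (it is stated as a citation to Sriperumbudur et al.), and your route --- identify $\|\Phi(x)\|_{\cH_\kappa}=\sqrt{\kappa(x,x)}$, check measurability of $x\mapsto\langle h,\Phi(x)\rangle$ by approximating $h$ with finite kernel combinations, bound the functional $T_\mu$ by Cauchy--Schwarz, and invoke Riesz representation --- is precisely how the cited source establishes that the kernel mean embedding exists as a weak (Pettis) integral, which is also the sense in which the paper later interprets $\overline{\Phi}$. Two minor caveats that do not affect the result: since $\cM(\cX)$ here contains signed measures, your bounds should be taken against $d|\mu|$; and the closing remark that the weak integral ``coincides with the Bochner integral'' additionally requires strong ($\mu$-a.e.\ separably valued) measurability of $\Phi$, which is an aside rather than something the lemma needs.
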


Specifically, the bounded integral condition is satisfied if we know that $\kappa(x,x) = \|\Phi(x)\|^2 < C$ for all $x \in \cX$ for some fixed constant $C$. In other words, if the image of the feature map $\Phi$ is contained in a bounded subset of $\cH_\kappa$, then the KME is well defined. 

Similar to the previous case, we can use the norm on $\cH_\kappa$ to define a notion of distance on $\cP(\cX)$. Although this only provides a pseudometric since $\overline{\Phi}$ may not be injective, it coincides with a well known measure of discrepancy between probability measures.

\begin{definition}
    Let $\cF \subset \R^\cX$ be a class of functions and $\mu, \nu \in \cP(\cX)$. The \style{maximum mean discrepancy} (MMD) of $\mu$ and $\nu$ with respect to $\cF$ is
    \begin{equation}
        \MMD[\cF, \mu, \nu] \coloneqq \sup_{f \in \cF} \left( \E_\mu[f] - \E_\nu[f]\right).
    \end{equation}
\end{definition}

When we take the function class $\cF$ to be the unit ball in the RKHS $\cH_\kappa$, the MMD can be written as the distance between the mean embeddings, with respect to the norm on $\cH$.
\begin{lemma}[\cite{borgwardt_integrating_2006}]
    Suppose the KME map $\overline{\Phi}$ is well-defined and suppose $\mu, \nu \in \cM(\cX)$. Let $\cF = \{ f \in \cH \subset \R^\cX \, : \, \|f\| \leq 1\}$. Then,
    \begin{equation}
        \MMD^2[\cF, \mu, \nu] = \|\overline{\Phi}(\mu) - \overline{\Phi}(\nu)\|^2.
    \end{equation}
\end{lemma}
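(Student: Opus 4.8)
The plan is to exploit the reproducing property of $\cH$ together with the assumed well-definedness of the kernel mean embedding to linearize the $\MMD$. First I would recall that, since $\cH$ is the RKHS associated to the feature map $\Phi$, every $f \in \cH$ satisfies the reproducing identity $f(x) = \langle f, \Phi(x)\rangle$ for all $x \in \cX$. The key step is then to move the expectation inside the inner product: for $\mu \in \cM(\cX)$ with $\overline{\Phi}(\mu)$ well-defined, one has
\begin{align*}
    \E_\mu[f] = \E_\mu\big[\langle f, \Phi(x)\rangle\big] = \langle f, \E_\mu[\Phi]\rangle = \langle f, \overline{\Phi}(\mu)\rangle,
\end{align*}
and likewise for $\nu$. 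Subtracting gives $\E_\mu[f] - \E_\nu[f] = \langle f, \overline{\Phi}(\mu) - \overline{\Phi}(\nu)\rangle$ for every $f \in \cH$.

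Next I would take the supremum over the unit ball $\cF = \{f \in \cH : \|f\| \leq 1\}$. By Cauchy--Schwarz, $\langle f, \overline{\Phi}(\mu) - \overline{\Phi}(\nu)\rangle \leq \|\overline{\Phi}(\mu) - \overline{\Phi}(\nu)\|$ for all $f \in \cF$, with equality attained at $f = \big(\overline{\Phi}(\mu) - \overline{\Phi}(\nu)\big)/\|\overline{\Phi}(\mu) - \overline{\Phi}(\nu)\|$ when the difference is nonzero (and trivially when it is zero). Hence $\sup_{f\in\cF}\big(\E_\mu[f] - \E_\nu[f]\big) = \|\overline{\Phi}(\mu) - \overline{\Phi}(\nu)\|$, and squaring both sides yields the claim. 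Since $\cF$ is symmetric under $f \mapsto -f$, the signed supremum in the definition of $\MMD$ coincides with the supremum of $\big|\E_\mu[f] - \E_\nu[f]\big|$, so nothing is lost by omitting the absolute value.

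The only genuinely nontrivial point is the interchange $\E_\mu[\langle f, \Phi(x)\rangle] = \langle f, \E_\mu[\Phi]\rangle$: this is exactly where the hypothesis that $\overline{\Phi}(\mu)$ is well-defined — i.e. that $\Phi$ is Bochner $\mu$-integrable, which the preceding lemma guarantees under the bounded-feature-map condition — does the work, because the bounded linear functional $\langle f, \cdot\rangle$ commutes with the Bochner integral. Everything else reduces to Cauchy--Schwarz and the reproducing property, so I expect the argument to be short once this integrability/continuity fact is invoked.
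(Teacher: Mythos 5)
Your proof is correct and is the standard argument for this identity (the paper itself offers no proof, merely citing \cite{borgwardt_integrating_2006}, where essentially this same reasoning appears): reproducing property, commuting the Bochner integral with the bounded linear functional $\langle f,\cdot\rangle$, and Cauchy--Schwarz with equality at the normalized difference of mean embeddings. You also correctly identify the one point where the well-definedness hypothesis is actually used, so nothing is missing.
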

This simplifies the study of probability measures by considering them as elements of a linear space, and also provides a straightforward method to compute an unbiased finite sample estimate of the MMD in terms of the kernel.

\begin{lemma}[\cite{gretton_kernel_2012}]
    Suppose the KME map $\overline{\Phi}$ is well-defined and suppose $\mu, \nu \in \cM(\cX)$. Let $\cF = \{ f \in \cH \subset \R^\cX \, : \, \|f\| \leq 1\}$. Let $X = (x_1, \ldots, x_n)$ and $Y = (y_1, \ldots, y_m)$ be i.i.d. samples from $\mu$ and $\nu$ respectively. An unbiased estimate of $\MMD^2[\cF, \mu, \nu]$ is given as the MMD of the empirical distributions of $X$ and $Y$,
    \begin{align}
        \MMD^2_u[\cF, X,Y] =&\frac{1}{n(n-1)}\sum_{i=1}^n \sum_{j \neq i}^n  \kappa(x_i, x_j) + \frac{1}{m(m-1)}\sum_{i=1}^m \sum_{j\neq i}^m \kappa(y_i, y_j) \nonumber \\ 
        &- \frac{2}{nm} \sum_{i=1}^n \sum_{j=1}^m \kappa(x_i, y_j). \label{eq:emp_mmd}
    \end{align}
\end{lemma}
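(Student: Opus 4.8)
The plan is to reduce the claim to an elementary computation with the squared norm on $\cH_\kappa$ and then recognize each average in~\eqref{eq:emp_mmd} as an unbiased estimator of a term in the resulting population formula. First I would apply the preceding lemma of~\cite{borgwardt_integrating_2006}, which applies since the KME is assumed well-defined, to write $\MMD^2[\cF,\mu,\nu] = \|\overline{\Phi}(\mu) - \overline{\Phi}(\nu)\|^2$, and expand by bilinearity of the inner product:
\begin{align*}
    \MMD^2[\cF,\mu,\nu] = \langle \overline{\Phi}(\mu), \overline{\Phi}(\mu)\rangle - 2\langle \overline{\Phi}(\mu), \overline{\Phi}(\nu)\rangle + \langle \overline{\Phi}(\nu), \overline{\Phi}(\nu)\rangle.
\end{align*}
Since $\overline{\Phi}(\mu) = \E_\mu[\Phi]$ is a Bochner integral and the integrability condition contained in the well-definedness of the KME permits passing the inner product through the integral, each inner product becomes a double expectation over \emph{independent} draws; using the defining relation $\kappa(x,x') = \langle \Phi(x), \Phi(x')\rangle$, one gets, for instance, $\langle \overline{\Phi}(\mu), \overline{\Phi}(\nu)\rangle = \E_{x\sim\mu}\E_{y\sim\nu}[\kappa(x,y)]$. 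Writing $x, x'$ for independent copies distributed as $\mu$ and $y, y'$ for independent copies distributed as $\nu$, this yields
\begin{align*}
    \MMD^2[\cF,\mu,\nu] = \E[\kappa(x,x')] - 2\,\E[\kappa(x,y)] + \E[\kappa(y,y')].
\end{align*}

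It then remains to show that~\eqref{eq:emp_mmd} is unbiased for this quantity, which I would do term by term using linearity of expectation. For the cross term, each $\kappa(x_i,y_j)$ has $\E[\kappa(x_i,y_j)] = \E[\kappa(x,y)]$ because $x_i$ and $y_j$ are independent with the correct marginals, so the average over all $nm$ pairs is unbiased. For the two like-sample terms the crucial point is the restriction $j \neq i$: for distinct indices the samples $x_i, x_j$ are independent, so $\E[\kappa(x_i,x_j)] = \E[\kappa(x,x')]$, and averaging over the $n(n-1)$ ordered pairs is unbiased; the diagonal terms $\kappa(x_i,x_i)$ must be excluded because their common expectation $\E_{x\sim\mu}[\kappa(x,x)]$ generally differs from $\E[\kappa(x,x')]$ and would introduce bias. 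Adding the three unbiased pieces gives $\E\big[\MMD^2_u[\cF,X,Y]\big] = \MMD^2[\cF,\mu,\nu]$.

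I do not expect any substantive obstacle here: the only delicate points are bookkeeping ones — keeping the diagonal self-terms out of the first two sums, and checking that the interchange of the inner product with the Bochner integral is licensed, which is precisely what the well-definedness hypothesis on the KME supplies. Everything else is linearity of expectation together with the identity $\MMD^2[\cF,\mu,\nu] = \|\overline{\Phi}(\mu) - \overline{\Phi}(\nu)\|^2$ already established.
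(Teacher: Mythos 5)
Your argument is correct. The paper itself gives no proof of this lemma --- it is quoted directly from \cite{gretton_kernel_2012} --- and your derivation (expand $\|\overline{\Phi}(\mu)-\overline{\Phi}(\nu)\|^2$ by bilinearity, pass the inner product through the Bochner integral to get $\E[\kappa(x,x')]-2\E[\kappa(x,y)]+\E[\kappa(y,y')]$, then verify term-by-term unbiasedness with the diagonal excluded) is exactly the standard argument in that reference, so there is nothing to add.
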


From this discussion, kernels provide a unified way to study both nonlinear functions and probability measures using the linear space $\cH$. However, there are deficiencies in both scenarios.
\begin{enumerate}
    \item In the case of nonlinear functions, we usually begin by choosing our function class $\cF \subset \R^\cX$. How do we know that any function $f: \cX \rightarrow \R$ can be represented arbitrarily closely by an element in $\ell \in \cH$ such that $f(x) \approx \langle \ell, \Phi(x)\rangle$ for all $x \in \cX$?
    
    \item In the case of probability measures, it is often crucial that the MMD is in fact a metric instead of just a pseudometric. How do we know that the feature map $\Phi$ is rich enough to distinguish all probability measures $\mu \in \cM(\cX)$?
\end{enumerate}

The answer is given by the definitions of universal and characteristic kernels. This will require us to extend the definition of the KME to Schwarz distributions rather than just measures. We provide a quick exposition of the definitions here, and refer the reader to a more thorough treatment in~\cite{simon-gabriel_kernel_2018}. As usual, let $\cF \subset \R^\cX$ be a function class, and let $\cF'$ denote its topological dual of all continuous linear functionals. The definition of the KME for distributions is analogous to the case of measures
\begin{equation}
    \overline{\Phi}: \cF' \rightarrow \cH_\kappa, \quad \overline{\Phi}(D) \coloneqq \int_{\cX} \Phi(x) dD(x),
\end{equation}
where the integral here is the \style{weak-} or \style{Pettis-} integral~\citep{simon-gabriel_kernel_2018}. Similar to the case of measures, this map is a priori not well-defined. However, we have a simple criterion for the existence of these weak integrals.

\begin{lemma}[\cite{simon-gabriel_kernel_2018}]
\label{lem:KMEdef}
    Let $\cF \subset \R^\cX$. If the map $\iota: \cH_\kappa \rightarrow \R^\cX$ defined by $\iota(\ell) \coloneqq \langle \ell, \Phi(\cdot)\rangle$ has image contained in $\cF$, then the integral $\overline{\Phi}(D)$ exists for all $D \in \cF'$. Thus, the KME map $\overline{\Phi}:\cF' \rightarrow \cH_\kappa$ is well-defined.
\end{lemma}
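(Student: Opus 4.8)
The plan is to build $\overline{\Phi}(D)$ by hand using the Riesz representation theorem, with the hypothesis $\iota(\cH_\kappa) \subseteq \cF$ playing the essential role. Recall that the weak (Pettis) integral $\int_\cX \Phi(x)\, dD(x)$, when it exists, is by definition the unique $h \in \cH_\kappa$ such that $\langle \ell, h\rangle = D\big(\langle \ell, \Phi(\cdot)\rangle\big)$ for every $\ell \in \cH_\kappa$; so it suffices to produce such an $h$ for each $D \in \cF'$.

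First I would fix $D \in \cF'$ and define $\Lambda_D : \cH_\kappa \to \R$ by $\Lambda_D(\ell) \coloneqq D\big(\iota(\ell)\big)$. This is meaningful precisely because $\iota(\ell) = \langle \ell, \Phi(\cdot)\rangle$ lies in $\cF$ by hypothesis, so $D$ can be evaluated on it; and $\Lambda_D$ is linear since $\ell \mapsto \iota(\ell)$ is linear (by linearity of the inner product in its first argument) and $D$ is linear. The key remaining point is that $\Lambda_D$ is \emph{bounded}.

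To see boundedness, I would first note the pointwise estimate $|\iota(\ell)(x)| = |\langle \ell, \Phi(x)\rangle| \le \|\ell\|\sqrt{\kappa(x,x)}$, which shows $\iota : \cH_\kappa \to \R^\cX$ is continuous for the topology of pointwise convergence. Upgrading this to continuity of $\iota$ as a map into $\cF$ with its own (finer) topology is the crux of the argument: under the structural assumptions on $\cF$ in~\cite{simon-gabriel_kernel_2018} --- $\cF$ a locally convex space whose topology refines pointwise convergence and to which the closed graph theorem applies --- the graph of $\iota$ is closed (a pointwise limit identifies any candidate image with $\iota(\ell)$), hence $\iota : \cH_\kappa \to \cF$ is continuous. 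Then $\Lambda_D = D \circ \iota$ is a composition of continuous linear maps, so $\Lambda_D \in \cH_\kappa^*$.

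Finally, by the Riesz representation theorem there is a unique $h_D \in \cH_\kappa$ with $\langle \ell, h_D\rangle = \Lambda_D(\ell) = D\big(\langle \ell, \Phi(\cdot)\rangle\big)$ for all $\ell$; by the characterization recalled above, $h_D$ is exactly the Pettis integral, so $\overline{\Phi}(D) \coloneqq h_D$ exists, and uniqueness of the Riesz representative makes $D \mapsto \overline{\Phi}(D)$ a well-defined (and linear) map $\cF' \to \cH_\kappa$. The main obstacle is the middle step --- promoting pointwise continuity of $\iota$ to continuity into $\cF$ --- which is exactly where the hypotheses on the class $\cF$ are used and why the statement is attributed to~\cite{simon-gabriel_kernel_2018}; everything else is a routine invocation of Riesz.
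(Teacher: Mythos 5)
The paper does not prove this lemma---it is quoted from \cite{simon-gabriel_kernel_2018} without proof---so your argument can only be judged on its own merits, and on those it is correct: it is essentially the argument of the cited source. Reducing existence of the Pettis integral to continuity of $\Lambda_D = D \circ \iota$ and then invoking Riesz is exactly right, and you correctly isolate the only nontrivial step, namely upgrading the well-definedness of $\iota: \cH_\kappa \rightarrow \cF$ to continuity via a closed graph argument. Two remarks. First, that step genuinely requires hypotheses on $\cF$ that the lemma as stated in this paper suppresses (the topology on $\cF$ must refine pointwise convergence and $\cF$ must be a space to which a closed graph theorem applies, e.g.\ webbed); you flag this honestly, and it is indeed where the attribution to \cite{simon-gabriel_kernel_2018} is doing the work. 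Second, in the only setting where the paper actually uses the lemma---$\cF = C_b(\widetilde{PG},\R)$ with the strict topology and the \emph{normalized} signature, so that $\sup_x \kappa(x,x) < \infty$---no closed graph theorem is needed: Cauchy--Schwarz gives $p_\psi(\iota(\ell)) = \sup_x |\langle \ell, \Phi(x)\rangle \psi(x)| \leq \|\ell\| \, \sup_x \sqrt{\kappa(x,x)}\,\|\psi\|_\infty$, so $\iota$ is continuous into the strict topology directly. That shortcut is worth knowing, but your general argument is the one that justifies the lemma as stated.
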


We can now state the definition of a universal and characteristic feature map. 

\begin{definition}
 Fix an input space $\cX$, and a function space $\cF \subset \R^\cX$. Consider a feature map
 \begin{align*}
     \Phi : \cX \rightarrow \cH_\kappa
 \end{align*}
 into an RKHS $\cH_\kappa$ with respect to a kernel $k$. Suppose that $\langle \ell, \Phi(\cdot ) \rangle \in \cF$ for all $\ell \in \cH_\kappa$. We say that $\Phi$ is
 \begin{enumerate}
     \item \style{universal to $\cF$} if the map
     \begin{align*}
         \iota : \cH_k \rightarrow \cF, \quad \ell \mapsto \langle \ell, \Phi(\cdot) \rangle
     \end{align*}
     has a dense image in $\cF$; and
     \item \style{characteristic to a subset $\cP \subseteq \cF'$} if the KME map
     \begin{align*}
         \overline{\Phi}: \cP \rightarrow \cH_k, \quad D \mapsto \int_\cX \Phi(x) dD(x)
     \end{align*}
     is injective.
 \end{enumerate}
\end{definition}

Note that we have assumed that the image $\iota(\cH_\kappa) \subset \cF$ so by Lemma~\ref{lem:KMEdef}, the KME map is well defined. The property of universality allows us to approximate any function $f \in \cF$ using linear functionals $\langle \ell, \Phi(\cdot) \rangle$ for $\ell \in \cH_\kappa$. The dual of a class of functions $\cF$ is generally much larger than the set of probability measures on $\cX$. If $\cM(\cX) \subset \cF'$, then a characteristic feature map is able to represent probability measures on $\cX$ with elements of $\cH$. Moreover the MMD becomes a metric due to the injectivity of the KME. 

We have the following equivalence between universality and characteristicness, as shown in \cite{simon-gabriel_kernel_2018} and \cite{chevyrev_signature_2018}.
\begin{theorem}
\label{thm:duality}
    Suppose that $\cF$ is a locally convex topological vector space. A feature map $\Phi$ is universal to $\cF$ if and only if $\Phi$ is characteristic to $\cF'$. 
\end{theorem}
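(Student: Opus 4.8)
The plan is to recognize the claim as an instance of the classical Hahn--Banach duality between a linear map with dense range and the injectivity of its transpose, applied to $\iota$ and its adjoint $\overline{\Phi}$. First I would record that $\overline{\Phi}$ is exactly the transpose of $\iota$ under the Riesz identification $\cH_\kappa \cong \cH_\kappa'$. Concretely, for every $\ell \in \cH_\kappa$ and every $D \in \cF'$, the defining property of the weak (Pettis) integral --- that it commutes with each continuous linear functional on $\cH_\kappa$, in particular with $\langle \ell, \cdot\rangle$ --- yields
\begin{align*}
    \langle \ell, \overline{\Phi}(D)\rangle_{\cH_\kappa}
    &= \left\langle \ell, \int_\cX \Phi(x)\, dD(x)\right\rangle \\
    &= \int_\cX \langle \ell, \Phi(x)\rangle\, dD(x)
    = \big\langle D, \iota(\ell)\big\rangle,
\end{align*}
where the final pairing is that of $\cF'$ with $\cF$; this makes sense precisely because the standing hypothesis $\langle \ell, \Phi(\cdot)\rangle = \iota(\ell) \in \cF$ holds, and by Lemma~\ref{lem:KMEdef} the integral $\overline{\Phi}(D)$ indeed exists.

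Next I would invoke the elementary consequence of Hahn--Banach available in any locally convex (Hausdorff) space $\cF$: a linear subspace $M \subseteq \cF$ is dense if and only if its annihilator $\{ D \in \cF' : D|_M = 0\}$ is trivial. Apply this with $M = \iota(\cH_\kappa)$, which is a linear subspace since $\iota$ is linear. Then universality of $\Phi$, i.e. density of $\iota(\cH_\kappa)$ in $\cF$, is equivalent to: the only $D \in \cF'$ with $\langle D, \iota(\ell)\rangle = 0$ for all $\ell \in \cH_\kappa$ is $D = 0$. By the displayed adjoint identity, $\langle D, \iota(\ell)\rangle = \langle \ell, \overline{\Phi}(D)\rangle_{\cH_\kappa}$ for all $\ell$, and in a Hilbert space this vanishes for every $\ell$ exactly when $\overline{\Phi}(D) = 0$. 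Hence universality holds if and only if $\ker \overline{\Phi} = \{0\}$, which is precisely characteristicness of $\Phi$ to $\cF'$; running the biconditional in either direction gives both implications.

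I expect the only genuine subtlety to be the interchange $\langle \ell, \int_\cX \Phi\, dD\rangle = \int_\cX \langle \ell, \Phi\rangle\, dD$, that is, the existence of the Pettis integral defining $\overline{\Phi}$ and its commutation with bounded functionals; this is handled by Lemma~\ref{lem:KMEdef} together with the standard properties of the weak integral, and is the point at which the hypothesis $\iota(\cH_\kappa) \subseteq \cF$ is used. The other ingredient --- the density-versus-annihilator dichotomy --- is exactly where local convexity of $\cF$ enters, since without enough continuous functionals a proper subspace can fail to be detectable by $\cF'$; this is why local convexity appears in the hypotheses. Everything else is routine transpose bookkeeping.
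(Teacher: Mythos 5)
Your proof is correct, and it is essentially the standard argument: the paper itself does not prove this theorem but cites \cite{simon-gabriel_kernel_2018} and \cite{chevyrev_signature_2018}, where the proof is exactly this identification of $\overline{\Phi}$ as the transpose of $\iota$ via the Pettis-integral identity, combined with the Hahn--Banach dichotomy that a subspace of a locally convex space is dense if and only if its annihilator is trivial. The one point worth keeping explicit, which you do flag, is that linearity of $\overline{\Phi}$ in $D$ is what lets you pass from ``trivial kernel'' to ``injective,'' and that the separation half of Hahn--Banach is where local convexity (and Hausdorffness) is genuinely used.
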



\subsection{The path signature kernel}
\label{ssec:universal}
In this subsection, we will define the path signature kernel, and show that it is universal and characteristic. This was shown for the Euclidean case in~\cite{chevyrev_signature_2018}, which states that these properties hold when studying paths evolving in a Hilbert space. Through our definition of the path signature for Lie groups, we provide a clarification: the space itself need not be a Hilbert space, but rather the space of tangent vectors must be a Hilbert space. In our case of Lie groups, this is the Lie algebra with a Riemannian metric. With this setup, the path signature kernel for Lie groups is universal and characteristic. \medskip

Recall that we have defined $T_1((\bar{\fg}))$ in Equation~\ref{eq:tensorsubspace} to be the subspace of $T((\bar{\fg}))$ with constant value $1$ and finite norm. We will view the path signature
\begin{align*}
    S: \widetilde{PG} \rightarrow T_1((\bar{\fg}))
\end{align*}
as a feature map, and recall that $T_1((\bar{\fg}))$ is equipped with an inner product, and is in particular a Hilbert space. However, as discussed in the previous subsection, we will need to ensure that the signature map sends paths to a bounded subset of $T_1((\bar{\fg}))$ in order for the KME to be defined. This will be done by using a tensor normalization, which was first discussed in~\cite{chevyrev_signature_2018}.

\begin{definition}
    A \style{tensor normalization} is a continuous injective map of the form
    \begin{align*}
        \Lambda : T_1((\bar{\fg})) &\rightarrow \{ \mathbf{t} \in T_1((\bar{\fg})) \, : \, \|\mathbf{t}\| \leq K \} \\
        \mathbf{t} & \mapsto \delta_{\lambda(\mathbf{t})} \mathbf{t}
    \end{align*}
    where $K > 0$ is a constant, $\lambda : T_1((\bar{\fg})) \rightarrow (0, \infty)$ is a function, and $\delta_{\lambda}$ is the tensor dilation from Definition~\ref{eq:tensordilation}.
\end{definition}

We will discuss the construction and computational aspects of tensor normalization in Appendix~\ref{apx:tensornorm}. For now, we will assume that a tensor normalization exists, and set $\Lambda: T_1((\bar{\fg})) \rightarrow T_1((\bar{\fg}))$ to be a fixed tensor normalization. Now, we have the \style{normalized signature}
\begin{align*}
    \Phi_S : \widetilde{PG} \rightarrow T_1((\bar{\fg})), \quad \Phi_S = \Lambda \circ S,
\end{align*}
which is a continuous injective map from $\widetilde{PG}$ into a bounded subset of $T_1((\bar{\fg}))$. Note that due to the scaling property of the path signature from Proposition~\ref{prop:ps_scaling}, this is equivalent to
\begin{align*}
    \Phi_S(\gamma) = S( \lambda(S(\gamma)) \cdot \gamma),
\end{align*}
where we first scale the path in $G$ by $\lambda(S(\gamma))$, using the Lie algebra scaling from Definition~\ref{def:scaling}.\medskip

Following~\cite{chevyrev_signature_2018}, we will show universality and then use the duality in Theorem~\ref{thm:duality} to show characteristicness with respect to probability measures. Using the theory discussed in the previous section, the objective is to find a function class $\cF \subset \R^{PG}$ and a topology on $PG$ such that
\begin{enumerate}
    \item the function class $\cF$ can be approximated by linear functionals $\langle \ell, \Phi_S(\cdot)\rangle$, and
    \item the dual $\cF'$ contains probability measures on $PG$.
\end{enumerate}
The difficulty with such a result is due to the fact that $PG$ is not locally compact. However, the class of continuous bounded functions $C_b(PG, \R)$ has such properties when $PG$ is endowed with the \style{strict topology}, originally defined in~\cite{giles_generalization_1971}. 
\begin{definition}
    Let $X$ be a topological space. We say that a function $\psi: X \rightarrow \R$ vanishes at infinity if for each $\epsilon> 0$, there exists a compact set $K \subset X$ such that $\sup_{x \in X-K} |\psi(x)| < \epsilon$. Denote by $B_0(X, \R)$ the set of functions that vanish at infinity. The \style{strict topology} on $C_b(X, \R)$ is the topology generated by the seminorms
    \begin{align*}
        p_\psi(f) = \sup_{x \in X} |f(x) \psi(x)|, \quad \psi \in B_0(X, \R).
    \end{align*}
\end{definition}

\begin{theorem}
\label{thm:sw}
Let $X$ be a metrizable topological space.
\begin{enumerate}
    \item The strict topology on $C_b(X,\R)$ is weaker than the uniform topology and stronger than the topology of uniform convergence on compact sets.
    \item If $\cF_0$ is a subalgebra of $C_b(X,\R)$ such that for all $x, y\in X$, there exists some $f \in \cF_0$ such that $f(x) \neq f(y)$ ($\cF_0$ separates points), and for all $x \in X$, there exists some $f \in \cF_0$ such that $f(x) \neq 0$, then $\cF_0$ is dense in $C_b(X,\R)$ under the strict topology.
    \item The topological dual of $C_b(X,\R)$ equipped with the strict topology is the space of finite regular Borel measures on $X$. 
\end{enumerate}
\end{theorem}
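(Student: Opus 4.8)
The three statements are classical structural properties of the strict topology on $C_b(X,\R)$; the plan is to treat Part~1 by a direct comparison of the generating seminorms, Part~2 by the classical Stone--Weierstrass theorem applied on compact sets after a truncation, and Part~3 as a Riesz-type representation theorem, following \cite{giles_generalization_1971}. For Part~1: every $\psi\in B_0(X,\R)$ is bounded, so $p_\psi(f)\le\|\psi\|_\infty\|f\|_\infty$, which shows the strict topology is coarser than the uniform one; conversely, for a compact $K\subseteq X$ the indicator $\mathbf 1_K$ vanishes at infinity and $p_{\mathbf 1_K}(f)=\sup_{x\in K}|f(x)|$ is exactly a seminorm generating uniform convergence on $K$, so the compact-open topology is coarser than the strict topology. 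One checks on elementary noncompact examples that both comparisons are strict.

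For Part~2, let $A$ denote the uniform closure of $\cF_0$ in $C_b(X,\R)$; it is a closed subalgebra, hence closed under post-composition with every continuous $h:\R\to\R$ with $h(0)=0$ (approximate $h$ uniformly on $[-\|g\|_\infty,\|g\|_\infty]$ by polynomials vanishing at the origin), in particular under the truncations $T_R(t)=\max(-R,\min(R,t))$. Since uniform convergence implies strict convergence, it suffices to show $A$ is strictly dense, and since $\cF_0$ is uniformly, hence strictly, dense in $A$, this proves the claim. Given $f\in C_b(X,\R)$, $\psi\in B_0(X,\R)$ and $\epsilon>0$, first fix $\eta>0$ small (depending on $\epsilon$ and $\|f\|_\infty$); then $\{x:|\psi(x)|\ge\eta\}$ lies in a compact $K$, and by the classical Stone--Weierstrass theorem the restriction algebra $A|_K$, which separates the points of $K$ and vanishes nowhere, is uniformly dense in $C(K)$, so there is $g\in A$ with $\sup_K|f-g|$ arbitrarily small. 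With $R=\|f\|_\infty+1$, the function $T_R(g)\in A$ has norm at most $R$, is no farther from $f$ on $K$ than $g$ is, and has small product with $\psi$ off $K$ since $|\psi|<\eta$ there; a short estimate then yields $p_\psi(f-T_R(g))<\epsilon$.

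For Part~3 I would prove both inclusions. Given a finite regular Borel measure $\mu$, inner regularity of $|\mu|$ produces increasing compacts $K_1\subseteq K_2\subseteq\cdots$ with $|\mu|(X\setminus K_n)\le 4^{-n}$; then $\psi\coloneqq\sum_{n\ge 1}2^{-n}\mathbf 1_{K_n}$ is a bounded function vanishing at infinity with $\psi\ge 2^{-n}$ on $K_n$, and a telescoping estimate over the annuli $K_n\setminus K_{n-1}$ gives $\int_X|f|\,d|\mu|\le C\,p_\psi(f)$ for a constant $C$ depending only on $\|\mu\|$, so $f\mapsto\int f\,d\mu$ is strictly continuous; hence $\cM(X)\hookrightarrow C_b(X,\R)'$ for the strict topology. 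Conversely, a strictly continuous functional $L$ is norm-continuous by Part~1, hence represented as an element of the Banach dual of $C_b(X,\R)$ by a bounded finitely additive regular set function; strict continuity moreover forces $L$ to be \emph{tight} --- for each $\epsilon$ there is a compact $K$ with $|L(f)|\le\epsilon$ whenever $\|f\|_\infty\le 1$ and $f$ vanishes on $K$, obtained by taking $K$ on whose complement the dominating $\psi$ is small --- and a tight bounded functional on $C_b(X,\R)$ for $X$ metrizable is integration against a finite, countably additive, regular Borel measure on $X$.

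I expect the main obstacle to be exactly this last step of Part~3: showing that tightness is precisely the condition that promotes the abstract Banach-dual representation of $L$ to a genuine countably additive, regular Borel measure concentrated on $X$, rather than one that lives on a compactification of $X$. This is the technical heart of the theorem --- it is where the metrizability (or complete regularity) of $X$ and the tightness of the functional are genuinely used --- whereas Part~1 is a routine seminorm computation and Part~2 reduces cleanly to the classical Stone--Weierstrass theorem together with the truncation device above.
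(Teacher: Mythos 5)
This theorem is not proved in the paper at all: it is imported from the literature on the strict topology (\cite{giles_generalization_1971}, as packaged in \cite{chevyrev_signature_2018}), so there is no in-paper argument to compare yours against; I am therefore judging your sketch against the standard proofs. Parts~1 and~2 of your proposal are correct and essentially complete. For Part~1, the two seminorm comparisons ($p_\psi(f)\le\|\psi\|_\infty\|f\|_\infty$, and $p_{\mathbf{1}_K}(f)=\sup_K|f|$ with $\mathbf{1}_K\in B_0(X,\R)$) are exactly right; note only that $B_0(X,\R)$ must implicitly consist of \emph{bounded} functions for $p_\psi$ to be a finite seminorm, which the paper's definition leaves tacit. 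For Part~2, your reduction to the uniform closure $A$, the closure of $A$ under post-composition with continuous $h$ satisfying $h(0)=0$ (via polynomials without constant term), the application of classical Stone--Weierstrass on the compact set $K=\{|\psi|\ge\eta\}$, and above all the truncation $T_R(g)$ --- which is precisely what prevents $g$ from being huge off $K$ and spoiling the estimate against the small tail of $\psi$ --- constitute the correct and standard argument.

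The genuine gap is where you yourself place it, in the converse inclusion of Part~3. Your forward direction is fine (granting that ``regular'' here means Radon, i.e.\ inner regular with respect to \emph{compact} sets, which is what your choice of $K_n$ with $|\mu|(X\setminus K_n)\le 4^{-n}$ uses and what the theorem needs to be true as stated), and your derivation that a strictly continuous $L$ is norm-bounded and tight is correct: dominate $L$ by a single $p_\psi$ (a finite maximum $\max_i p_{\psi_i}$ equals $p_{\max_i|\psi_i|}$ and $\max_i|\psi_i|\in B_0(X,\R)$) and take $K$ with $|\psi|$ small off $K$. But the final assertion --- that a norm-bounded, tight linear functional on $C_b(X,\R)$ is integration against a countably additive regular Borel measure on $X$ --- is the entire analytic content of the theorem and is only cited, not proved. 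The missing argument is the Alexandroff-type representation: represent $L$ first as a bounded finitely additive Baire (for metrizable $X$, Borel) set function, then show that tightness forces $\sigma$-smoothness (if $f_n\downarrow 0$ pointwise with $\|f_n\|_\infty\le 1$, then by Dini's theorem $f_n\to 0$ uniformly on the compact $K$ furnished by tightness, while the contribution off $K$ is at most $\epsilon$, whence $L(f_n)\to 0$), and conclude countable additivity and inner compact regularity of the resulting measure. Without at least this Dini/tightness argument the proof of Part~3 is incomplete; with it, your outline closes up into the standard proof.
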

Specifically, note that the space of finite regular Borel measures on $X$ includes all probability measures on $X$. Finally, we are ready to state the universality and characteristicness result.

\begin{theorem}
Let $\Lambda: \overline{T}_1(\R^n) \rightarrow \overline{T}_1(\R^n)$ be a tensor normalization. The normalized signature
\begin{align*}
    \Phi: \widetilde{PG} \rightarrow T_1((\bar{\fg})), \quad \Phi = \Lambda \circ S,
\end{align*}

\begin{enumerate}
    \item is a continuous injection from $\widetilde{PG}$ into a bounded subset of $\overline{T}_1(\R^n)$,
    \item is universal to $\cF \coloneqq C_b(\widetilde{PG} , \R)$ equipped with the strict topology, and 
    \item is characteristic to the space of finite regular Borel measures on $\widetilde{PG}$. 
\end{enumerate}
\end{theorem}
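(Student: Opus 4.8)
The plan is to follow the template of~\cite{chevyrev_signature_2018}: prove the three claims in order, with universality (claim 2) as the substantive step and characteristicness (claim 3) as a formal consequence of the duality in Theorem~\ref{thm:duality}.

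\textbf{Claims 1.} Boundedness is immediate from the definition of a tensor normalization: $\Lambda$ lands in $\{\mathbf{t}\in T_1((\bar\fg)):\|\mathbf{t}\|\le K\}$, so $\Phi=\Lambda\circ S$ has bounded image. Injectivity is the composite of two injections: $S:\widetilde{PG}\to T_1((\bar\fg))$ is injective by Chen's theorem (Theorem~\ref{thm:injectivity}), and $\Lambda$ is injective by definition. For continuity we must first fix a metrizable topology on $\widetilde{PG}$; I would take the topology induced by reading the metric $d_\R$ of Section~\ref{ssec:stability} on tree-reduced representatives in $PG_e$ (transported through the bijection of Proposition~\ref{prop:relationship}). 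With respect to this topology, continuity of $S$ is exactly the stability estimate of Corollary~\ref{cor:stability} applied level by level and summed against the bound $|S^I(\gamma)|\le L^{|I|}/|I|!$ of Lemma~\ref{lem:sig_finite_norm}; composing with the continuous map $\Lambda$ gives continuity of $\Phi$.

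\textbf{Claim 2.} Regard $\cH_\kappa:=T_1((\bar\fg))$ as the RKHS with feature map $\Phi$ and kernel $k(\gamma,\gamma')=\langle\Phi(\gamma),\Phi(\gamma')\rangle$. Since $\|\Phi(\gamma)\|\le K$ uniformly, each $\langle\ell,\Phi(\cdot)\rangle$ is bounded, and it is continuous by Claim 1, so $\iota(\cH_\kappa)\subset C_b(\widetilde{PG},\R)=:\cF$. Let $\cF_0\subset\cF$ be the linear span of the normalized coordinate functions $\gamma\mapsto\Phi(\gamma)^I$ over all multi-indices $I$; equivalently $\cF_0=\{\langle\ell,\Phi(\cdot)\rangle:\ell\text{ finitely supported}\}\subseteq\iota(\cH_\kappa)$. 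I would then verify the hypotheses of Theorem~\ref{thm:sw}(2) for $\cF_0$. It is a subalgebra: writing $\Phi(\gamma)^I=\lambda(\gamma)^{|I|}S^I(\gamma)$ and combining the shuffle identity (Theorem~\ref{thm:shuffle}) with the fact that $|K|=|I|+|J|$ for every $K\in I\shuffle J$, one obtains $\bigl(\sum_I\ell_1^I\Phi(\gamma)^I\bigr)\bigl(\sum_J\ell_2^J\Phi(\gamma)^J\bigr)=\sum_K(\ell_1\shuffle\ell_2)^K\Phi(\gamma)^K$, which again lies in $\cF_0$ since $\ell_1\shuffle\ell_2$ has finite support. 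It separates points: if $[\alpha]\ne[\beta]$ in $\widetilde{PG}$ then $\Phi(\alpha)\ne\Phi(\beta)$ by Claim 1, so some coordinate $\Phi(\cdot)^I$ distinguishes them. And it contains a nonvanishing function: the degree-zero coordinate is the constant $1$, since $\mathbf{t}_0=1$ is preserved by $\Lambda$. Theorem~\ref{thm:sw}(2) (applicable since $\widetilde{PG}$ is metrizable) then yields that $\cF_0$, hence $\iota(\cH_\kappa)$, is dense in $\cF$ under the strict topology; this is precisely universality of $\Phi$ to $\cF$.

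\textbf{Claim 3.} The space $\cF=C_b(\widetilde{PG},\R)$ with the strict topology is a locally convex topological vector space, so Theorem~\ref{thm:duality} applies and universality to $\cF$ is equivalent to characteristicness to $\cF'$; by Theorem~\ref{thm:sw}(3) the dual $\cF'$ is exactly the space of finite regular Borel measures on $\widetilde{PG}$, and Lemma~\ref{lem:KMEdef} (whose hypothesis $\iota(\cH_\kappa)\subset\cF$ was checked in Claim 2) guarantees the KME is well-defined there. \textbf{Main obstacle.} The genuinely new point, beyond bookkeeping, is the compatibility of the tensor normalization with the shuffle product in Claim 2 — that rescaling by the single scalar $\lambda(\gamma)^{\deg}$ does not destroy the algebra structure, which is exactly what the degree identity $|K|=|I|+|J|$ provides. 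The other delicate point is foundational rather than computational: one must pin down a metrizable topology on $\widetilde{PG}$ for which $\Phi$ is continuous, since $d_\R$ itself is defined on $PG_e$ and does not literally descend to the tree-like quotient (tree-like-equivalent paths may have different derivatives), so this descent must be handled with care via tree-reduced representatives or through Proposition~\ref{prop:relationship}.
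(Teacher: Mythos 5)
Your proof follows the same route as the paper's: the same class $\cF_0$ of functionals $\langle \ell, \Phi(\cdot)\rangle$ with finitely supported $\ell$, verified to be a point-separating, nonvanishing subalgebra via the shuffle identity and then fed into the strict-topology Stone--Weierstrass theorem (Theorem~\ref{thm:sw}), with characteristicness obtained from the duality of Theorem~\ref{thm:duality} exactly as in the paper. The two points you flag as delicate --- that the dilation $\delta_{\lambda}$ is compatible with the shuffle product because every $K\in I\shuffle J$ has degree $|I|+|J|$, so the normalized coordinates $\Phi(\cdot)^I=\lambda^{|I|}S^I(\cdot)$ still close under multiplication, and that the metric of Section~\ref{ssec:stability} lives on $PG_e$ and must be transported to the tree-like quotient via reduced representatives --- are genuine details that the paper's proof passes over silently, and your handling of both is correct.
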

\begin{proof}
    The fact that $\Phi$ is an injection follows from the injectivity of the path signature from Theorem~\ref{thm:injectivity} and the definition of the tensor normalization. Continuity follows from the stability property of Corollary~\ref{cor:stability}. Next, we move on to universality. Define
    \begin{align*}
        L = 1+ \bigoplus_{m=1}^\infty (\bar{\fg})^{\otimes m}
    \end{align*}
    to be a dense subspace of $T_1((\bar{\fg}))$ (note that $L$ only contains finite linear combinations of tensors, whereas $T_1((\bar{\fg}))$ contains all power series of tensors) and define
    \begin{align*}
        \cF_0 = \{\langle \ell, \Phi (\cdot)\rangle : \widetilde{PG} \rightarrow \R \, : \, \ell \in L\}.
    \end{align*}
    We aim to show that $\cF_0$ satisfies the hypotheses of the second point of Theorem~\ref{thm:sw}. By the injectivity of $\Phi$, the class of functions $\cF_0$ separates points, and because the path signature is defined with constant term $1$, the path signature is nonzero for all paths $\gamma \in \widetilde{PG}$. Finally, by the shuffle product identity from Theorem~\ref{thm:shuffle}, the class of functions $\cF_0$ is closed under shuffle multiplication and is therefore a subalgebra of $C_b(\widetilde{PG}, \R)$. Namely, let $I$ and $J$ be multi-indices, and $e_I$ and $e_J$ be the corresponding basis vectors in $L$. Then, we may define multiplication in $\cF_0$ by
    \begin{align*}
        \langle e_I, \Phi(\cdot)\rangle \langle e_J, \Phi(\cdot) \rangle = \left \langle \sum_{K \in I \shuffle J} e_K, \Phi(\cdot) \right\rangle,
    \end{align*}
    which is closed. Thus, $\Phi$ is universal with respect to $\cF$. Finally, by the duality in Theorem~\ref{thm:duality} and the third point of Theorem~\ref{thm:sw}, the function class $\cF$ is characteristic with respect to finite regular Borel measures on $\widetilde{PG}$. 
\end{proof}

\begin{remark}
    Although this theorem is stated for tree-like equivalence classes of paths in $G$, by precomposing with the time transformation or identity start transformation discussed in Section~\ref{ssec:pathtrans}, we can also obtain universal and characteristic feature maps that are not reparametrization or translation invariant.
\end{remark}


\subsection{The kernel trick}
\label{ssec:kernel_trick}
The \style{kernel trick} refers to an efficient method to compute kernels without having to compute explicit representations of elements in the feature space. Several efficient algorithms to compute the Euclidean path signature kernel are provided in~\cite{kiraly_kernels_2019}, who state that their algorithms hold for path signatures computed for Hilbert space-valued data (their methods hold in more generality). The algorithms depend only on an inner product structure in the space where the integrals are being computed, namely in $\bar{\fg}$, and thus also hold in our present context of Lie group-valued data. In this section, we will provide an explicit generalization of their main algorithm. \medskip

As in Section~\ref{ssec:stability}, we will restrict ourselves to the truncated signature
\begin{align*}
    S_M: PG \rightarrow T^{\leq M}(\bar{\fg}).
\end{align*}
Here, the inner product for $T^{\leq M}(\bar{\fg})$ was given in Equation~\ref{eq:trunc_innerprod}. The \style{signature kernel truncated at level $M$} is defined to be
\begin{equation}
    K_{M} : PG \times PG \rightarrow \R, \quad (\alpha, \beta) \mapsto \langle S_M(\alpha), S_M(\beta) \rangle
\end{equation}
We will begin by simplifying the computation of the kernel for continuous paths. 
\begin{proposition}
\label{prop:sigkernel_comp}
    The signature kernel can be computed as
    \begin{equation}
    \label{eq:sigkernel_comp}
        K_M(\alpha, \beta) = \sum_{m=0}^M \int_{(s,t) \in \Delta^m \times \Delta^m} \prod_{i=1}^m \langle \alpha'_{s_i}, \beta'_{t_i} \rangle_\fg\, ds\, dt,
    \end{equation}
    where we view $\alpha'_t$ and $\beta'_t$ as elements of the Lie algebra $\fg$, and the inner product in the integrand $\langle \cdot, \cdot \rangle_\fg$ is computed in the Lie algebra. Also, we denote $s = (s_1, \ldots, s_m)$ and $t= (t_1, \ldots, t_m)$ as elements of $\Delta^m$, and write $ds\coloneqq ds_1 \ldots ds_m$ and $dt \coloneqq dt_1 \ldots dt_m$. 
\end{proposition}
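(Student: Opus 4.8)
The plan is to expand both truncated signatures in the orthonormal basis $(e_1,\dots,e_N)$ of $\bar{\fg}$, compute the inner product coefficient-by-coefficient using the explicit integral formula~(\ref{eq:pathsignatureI}), and then recognize the resulting sum over multi-indices as a product of inner products inside a single integral over the product simplex. Concretely, I would start from the definition $K_M(\alpha,\beta) = \langle S_M(\alpha), S_M(\beta)\rangle = \sum_{m=0}^M \sum_{|I|=m} S^I(\alpha)\, S^I(\beta)$, using the inner product from~(\ref{eq:trunc_innerprod}). Fixing a level $m$ and a multi-index $I=(i_1,\dots,i_m)$, substitute $S^I(\alpha) = \int_{\Delta^m} \omega_{i_1}(\alpha'_{s_1})\cdots\omega_{i_m}(\alpha'_{s_m})\,ds$ and similarly for $\beta$ with variables $t=(t_1,\dots,t_m)$.

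Next I would multiply the two iterated integrals into a single integral over $\Delta^m \times \Delta^m$, justified by Fubini (the integrands are bounded and the domains have finite measure), giving
\begin{align*}
    S^I(\alpha)\,S^I(\beta) = \int_{\Delta^m\times\Delta^m} \prod_{k=1}^m \omega_{i_k}(\alpha'_{s_k})\,\omega_{i_k}(\beta'_{t_k})\, ds\, dt.
\end{align*}
Then I would sum over all multi-indices $I$ of length $m$ and interchange the finite sum with the integral, so the integrand becomes $\sum_{i_1,\dots,i_m} \prod_{k=1}^m \omega_{i_k}(\alpha'_{s_k})\,\omega_{i_k}(\beta'_{t_k}) = \prod_{k=1}^m \Big(\sum_{i=1}^N \omega_i(\alpha'_{s_k})\,\omega_i(\beta'_{t_k})\Big)$, where the factorization of the sum of products over all index tuples into a product of sums is the standard distributive identity. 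The key algebraic point is that, since $(e_1,\dots,e_N)$ is orthonormal and $(\omega_1,\dots,\omega_N)$ is its dual basis, $\sum_{i=1}^N \omega_i(v)\,\omega_i(w) = \langle v, w\rangle_\fg$ for any $v,w\in\bar{\fg}$: the numbers $\omega_i(v)$ are exactly the coordinates of $v$ in the orthonormal basis, so this sum is the Euclidean dot product of coordinate vectors, which equals the inner product on $\bar{\fg}$. Substituting this in yields the integrand $\prod_{k=1}^m \langle \alpha'_{s_k}, \beta'_{t_k}\rangle_\fg$, and summing over $m$ from $0$ to $M$ gives~(\ref{eq:sigkernel_comp}).

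The only genuine obstacle is bookkeeping rather than conceptual: making sure the interchange of the (finite) sum over multi-indices with the integral, and the product-of-iterated-integrals step, are legitimate. Both are immediate here because all integrands are continuous on compact domains (piecewise regular paths have bounded derivatives away from finitely many points), so dominated convergence / Fubini apply without subtlety, and the sum over $I$ at each level is finite. I would also remark that this establishes the claim purely formally, valid for any orthonormal basis of $\bar{\fg}$, which makes the formula basis-independent — a point worth noting since the left-hand side $K_M$ is defined via a basis-dependent inner product but the right-hand side is manifestly intrinsic.
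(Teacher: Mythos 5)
Your proposal is correct and follows essentially the same route as the paper's proof: expand the inner product level by level over multi-indices, combine the iterated integrals over $\Delta^m \times \Delta^m$, interchange the finite sum with the integral, and use the identity $\sum_{i=1}^N \omega_i(v)\,\omega_i(w) = \langle v, w\rangle_\fg$ to factor the integrand into the product of Lie-algebra inner products. The only difference is that you make the Fubini and sum--integral interchange justifications explicit, which the paper leaves implicit.
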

\begin{proof}
    Let's consider the inner product at a single level $m$. Recall that $\pi_m : T((\bar{\fg})) \rightarrow \bar{\fg}^{\otimes m}$ is the projection on to the level $m$ tensors, and $\langle \cdot , \cdot \rangle_m$ refers to the inner product on $\bar{\fg}^{\otimes m}$. Then,
    \begin{align*}
        \langle \pi_m  S(\alpha)&, \pi_m  S(\beta)\rangle_m \\
        & = \sum_{I : |I| = m} S^I(\alpha) \cdot S^I(\beta) \\
        & = \sum_{I = (i_1, \ldots, i_m)} \int_{\Delta^m} \omega_{i_1}(\alpha'_{s_1}) \ldots \omega_{i_m}(\alpha'_{s_m}) ds \int_{\Delta^m} \omega_{i_1}(\beta'_{t_1}) \ldots \omega_{i_m}(\beta'_{t_m})  dt \\
        & = \int_{(s,t) \in \Delta^m \times \Delta^m} \sum_{I = (i_1, \ldots, i_m)} \left[\omega_{i_1}(\alpha'_{s_1}) \omega_{i_1}(\beta'_{t_1})\right] \ldots \left[\omega_{i_m}(\alpha'_{s_m}) \omega_{i_m}(\beta'_{t_m})\right] \, ds\, dt \\
        & = \int_{(s,t) \in \Delta^m \times \Delta^m} \prod_{i=1}^m \langle \alpha'_{s_i}, \beta'_{t_i} \rangle_\fg\, ds\, dt.
    \end{align*}
    Then, adding up all of the levels, we get our desired result.
\end{proof}

As noted by~\cite{kiraly_kernels_2019}, the expression in Equation~\ref{eq:sigkernel_comp} can be efficiently computed by a method that is similar to Horner's scheme for computing polynomial expressions. Suppose we wish to compute the expression $p(x) = \sum_{i=0}^M  x^i$. By expanding this polynomial as
\begin{align*}
    p(x) = 1 + x(1 + x(1 + \ldots + x(1 +  x) ) )
\end{align*}
where the recursion occurs $M$ times and computing the brackets from the inside to the outside, we can evaluate the expression using $M$ additions and $M$ multiplications. In contrast, the naive computation of $p(x)$ would require $M$ additions and $\frac{M^2 + M}{2}$ multiplications. Note that we may write out this recursion explicitly as follows. Let
\begin{align*}
    q_1 = 1+x, \quad q_m = 1 + xq_{m-1}.
\end{align*}
Then, we may write $p(x) = q_M$. We can significantly reduce the number of operations required to compute the integrals in Equation~\ref{eq:sigkernel_comp} by adapting this procedure. 

\begin{corollary}
\label{cor:cont_kerneltrick}
    Let
    \begin{align}
    \label{eq:Q1}
        Q_1(s, t) = 1 + \int_{\begin{subarray}{l} s' \in [0,s]\\ t' \in [0,t]\end{subarray}} \langle \alpha'_{s'}, \beta'_{t'}\rangle_\fg ds' dt'.
    \end{align}
    Then, we recursively define
    \begin{align}
    \label{eq:Qm}
        Q_m(s, t) = 1 + \int_{\begin{subarray}{l} s' \in [0,s]\\ t' \in [0,t]\end{subarray}} Q_{m-1}(s',t') \langle \alpha'_{s'}, \beta'_{t'}\rangle_\fg ds' dt'.
    \end{align}
    The signature kernel can be computed as
    \begin{align}
    \label{eq:sigkernel_rec}
        K_M(\alpha, \beta) = Q_M(1, 1).
    \end{align}
\end{corollary}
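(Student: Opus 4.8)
The plan is to unroll the Horner-type recursion in Equations~\eqref{eq:Q1}--\eqref{eq:Qm} and match it, level by level, against the expression for $K_M$ provided by Proposition~\ref{prop:sigkernel_comp}. First I would introduce, for scalars $s,t \in [0,1]$ and $m \geq 1$, the \emph{partial $m$-th order term}
\begin{align*}
    R_m(s,t) &\coloneqq \int_{(u,v) \in \Delta^m_{[0,s]} \times \Delta^m_{[0,t]}} \prod_{i=1}^m \langle \alpha'_{u_i}, \beta'_{v_i}\rangle_\fg \, du\, dv, \\
    \Delta^m_{[0,s]} &\coloneqq \{ (u_1, \ldots, u_m) \, : \, 0 \leq u_1 < \cdots < u_m \leq s\},
\end{align*}
together with the convention $R_0(s,t) \coloneqq 1$. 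Since $\Delta^m_{[0,1]} = \Delta^m$ and the $m=0$ term of Proposition~\ref{prop:sigkernel_comp} equals $1$, that proposition reads $K_M(\alpha,\beta) = \sum_{m=0}^M R_m(1,1)$. It therefore suffices to prove the identity $Q_m(s,t) = \sum_{k=0}^m R_k(s,t)$ for all $m \geq 1$ and all $s,t \in [0,1]$, and then set $s = t = 1$.

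The engine of the induction is the \emph{prepending identity}
\begin{align*}
    \int_{\begin{subarray}{l} s' \in [0,s]\\ t' \in [0,t]\end{subarray}} R_k(s',t') \, \langle \alpha'_{s'}, \beta'_{t'}\rangle_\fg\, ds'\, dt' = R_{k+1}(s,t), \qquad k \geq 0,
\end{align*}
which is a purely combinatorial statement about simplices glued by Fubini's theorem: in $R_k(s',t')$ the variables satisfy $u_1 < \cdots < u_k \leq s'$, so adjoining the new variable $u_{k+1} \coloneqq s'$, which itself ranges over $[0,s]$, produces exactly the chain $u_1 < \cdots < u_k < u_{k+1} \leq s$ defining $\Delta^{k+1}_{[0,s]}$ (the boundary locus $u_k = s'$ has measure zero), and the same happens in the $t$-variables, while the integrand picks up precisely the extra factor $\langle \alpha'_{u_{k+1}}, \beta'_{v_{k+1}}\rangle_\fg$. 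Granting this identity, the base case $m=1$ holds because Equation~\eqref{eq:Q1} says $Q_1 = 1 + R_1 = R_0 + R_1$; and assuming $Q_{m-1} = \sum_{k=0}^{m-1} R_k$, substitution into Equation~\eqref{eq:Qm} and an application of the prepending identity to each summand give
\begin{align*}
    Q_m(s,t) &= 1 + \sum_{k=0}^{m-1} \int_{\begin{subarray}{l} s' \in [0,s]\\ t' \in [0,t]\end{subarray}} R_k(s',t') \langle \alpha'_{s'}, \beta'_{t'}\rangle_\fg\, ds'\, dt' \\
    &= 1 + \sum_{k=0}^{m-1} R_{k+1}(s,t) = \sum_{k=0}^m R_k(s,t).
\end{align*}
Evaluating at $(s,t) = (1,1)$ then yields $Q_M(1,1) = \sum_{m=0}^M R_m(1,1) = K_M(\alpha,\beta)$, which is Equation~\eqref{eq:sigkernel_rec}.

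There is no deep obstacle here; the only place demanding care is the bookkeeping in the prepending identity. One must check that the newly introduced integration variable becomes the \emph{largest} coordinate of the enlarged simplex — so that the recursion grows simplices from the top down — and must track the index offset so that the recursion $Q_m = 1 + (\cdot)\, Q_{m-1}$ accumulates precisely the partial sums $\sum_{k=0}^m R_k$, rather than, say, $\sum_{k=1}^{m+1} R_k$. Once these are pinned down, the claim follows.
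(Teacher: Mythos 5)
Your proof is correct and follows essentially the same route as the paper's: unrolling the Horner-type recursion and matching summands against Proposition~\ref{prop:sigkernel_comp}. In fact your version is more complete — the paper only verifies the case $M=2$ and asserts that the general case "proceeds in the same manner," whereas your strengthened inductive claim $Q_m(s,t) = \sum_{k=0}^m R_k(s,t)$, driven by the prepending identity, supplies the full argument for arbitrary $M$.
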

\begin{proof}
    We will consider the case of $M=2$. Here, we have
    \begin{align*}
        Q_2(1,1) & = 1 + \int_{\begin{subarray}{l} s_1 \in [0,1]\\ t_1 \in [0,1]\end{subarray}} Q_{1}(s_1,t_1) \langle \alpha'_{s_1}, \beta'_{t_1}\rangle_\fg ds_1 dt_1 \\
        & = 1 + \int_{\begin{subarray}{l} s_1 \in [0,1]\\ t_1 \in [0,1]\end{subarray}} \left(1 + \int_{\begin{subarray}{l} s_2 \in [0,s_1]\\ t_2 \in [0,t_1]\end{subarray}}  \langle \alpha'_{s_2}, \beta'_{t_2}\rangle_\fg ds_2 dt_2\right)  \langle \alpha'_{s_1}, \beta'_{t_1}\rangle_\fg ds_1 dt_1 \\
        & = 1 + \int_{\begin{subarray}{l} s_1 \in [0,1]\\ t_1 \in [0,1]\end{subarray}} \langle \alpha'_{s_1}, \beta'_{t_1}\rangle_\fg ds_1 dt_1  + \int_{\begin{subarray}{l} s_1 \in [0,1]\\ t_1 \in [0,1]\end{subarray}} \int_{\begin{subarray}{l} s_2 \in [0,s_1]\\ t_2 \in [0,t_1]\end{subarray}} \prod_{i=1}^2\langle \alpha'_{s_i}, \beta'_{t_i}\rangle_\fg \, ds_1 ds_2 dt_1 dt_2 \\
        & = K_2(\alpha, \beta).
    \end{align*}
    The general proof proceeds in the same manner. By each successive unfolding of the definition of $Q_m$, we recover an additional summand in Equation~\ref{eq:sigkernel_comp}.
\end{proof}

Next, we will consider the discrete formulation of this expression. For simplicity, we will consider discrete time series of the same length, though the following results also hold when the two discrete time series are of different lengths. Suppose $\hat{\alpha}, \hat{\beta}: [T+1] \rightarrow G$, and let $\hat{\alpha}', \hat{\beta}':[T] \rightarrow \fg$ be their corresponding discrete derivatives. Following the notation in Section~\ref{ssec:discrete_ps} for the discrete signature, we define the \style{discrete signature kernel truncated at level $M$} to be
\begin{equation}
    \hat{K}_M(\hat{\alpha}, \hat{\beta}) \coloneqq \langle \hat{S}_M(\hat{\alpha}), \hat{S}_M(\hat{\beta})\rangle_M.
\end{equation}
Notice we are using the discrete signature given in Definition~\ref{def:discrete_ps}. Then, the discrete analogues of Proposition~\ref{prop:sigkernel_comp} and Corollary~\ref{cor:cont_kerneltrick} are as follows.

\begin{proposition}
The discrete signature kernel can be computed as 
\begin{equation}
    \hat{K}_M(\hat{\alpha}, \hat{\beta}) = \sum_{m=0}^M\, \sum_{(s,t) \in \hat{\Delta}^m_T \times \hat{\Delta}^m_T} \,\prod_{i=1}^m \langle \alpha'_{s_i}, \beta'_{t_i} \rangle_\fg.
\end{equation}
\end{proposition}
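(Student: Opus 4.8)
The plan is to mirror the proof of Proposition~\ref{prop:sigkernel_comp} verbatim, replacing the iterated integrals over the simplices $\Delta^m$ by finite sums over the discrete simplices $\hat\Delta^m_T$. The only analytic ingredient of the continuous argument — Fubini's theorem to commute the inner product sum with the integrals — is here replaced by the trivial interchange of finitely many finite summations, so the discrete statement is if anything easier.

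Concretely, I would first decompose the kernel by tensor level. By the definition of the inner product on $T^{\leq M}(\bar\fg)$ in Equation~\ref{eq:trunc_innerprod} and the fact that $\{e_{i_1}\otimes\cdots\otimes e_{i_m}\}_{|I|=m}$ is an orthonormal basis of $\bar\fg^{\otimes m}$,
\[
    \hat K_M(\hat\alpha,\hat\beta)
    = \sum_{m=0}^M \langle \pi_m \hat S(\hat\alpha), \pi_m \hat S(\hat\beta)\rangle_m
    = \sum_{m=0}^M \sum_{|I|=m} \hat S^I(\hat\alpha)\, \hat S^I(\hat\beta).
\]
Next, for each fixed $m$ I would substitute the defining formula for $\hat S^I$ from Definition~\ref{def:discrete_ps}, expand the product of the two sums as a single sum over $(s,t)\in\hat\Delta^m_T\times\hat\Delta^m_T$, and interchange it with the sum over multi-indices $I=(i_1,\dots,i_m)$; this is immediate since all the sums are finite. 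The summand then factors over the coordinate index $j$, producing $\prod_{j=1}^m \sum_{i=1}^N \omega_i(\hat\alpha'_{s_j})\,\omega_i(\hat\beta'_{t_j})$.

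The last step is to recognise each inner factor as a Riemannian inner product: since the basis $(e_1,\dots,e_N)$ of $\bar\fg$ was fixed to be orthonormal in Section~\ref{ssec:stability}, its dual basis satisfies $\sum_{i=1}^N \omega_i(u)\,\omega_i(v) = \langle u, v\rangle_\fg$ for all $u,v\in\fg$. Applying this to each of the $m$ factors and summing over $m$ from $0$ to $M$ yields the stated formula. There is no real obstacle here; the only point requiring any care is to invoke the orthonormality of the chosen Lie-algebra basis so that the dual-basis contraction collapses to $\langle\cdot,\cdot\rangle_\fg$, exactly as in the continuous case.
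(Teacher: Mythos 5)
Your proposal is correct and is exactly the argument the paper intends: the paper itself only remarks that ``the proofs for these discrete formulas proceed in exactly the same manner as their continuous counterparts,'' and your write-up is precisely that continuous proof with integrals over $\Delta^m$ replaced by finite sums over $\hat{\Delta}^m_T$, including the key contraction $\sum_{i=1}^N \omega_i(u)\,\omega_i(v) = \langle u,v\rangle_\fg$ coming from the orthonormality of the chosen basis.
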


\begin{corollary}
    Let 
    \begin{align}
        \hat{Q}_1(s,t) &= 1 + \sum_{s' \in [s], \, t' \in [t]} \langle \alpha'_{s'}, \beta'_{t'}\rangle_\fg
    \end{align}
    and recursively define
    \begin{align}
        \hat{Q}_m(s,t) & = 1 + \sum_{s' \in [s], \, t' \in [t]} \hat{Q}_{m-1}(s',t')\langle \alpha'_{s'}, \beta'_{t'}\rangle_\fg.
    \end{align}
    The discrete signature kernel can be computed as
    \begin{align}
        \hat{K}_M(\hat{\alpha},\hat{\beta}) = \hat{Q}_M(T,T).
    \end{align}
\end{corollary}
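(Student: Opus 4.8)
The plan is to deduce this corollary from the Proposition immediately preceding it in exactly the way that Corollary~\ref{cor:cont_kerneltrick} is deduced from Proposition~\ref{prop:sigkernel_comp}, i.e.\ by recognizing the recursion for $\hat{Q}_m$ as a discrete Horner-type scheme that reassembles the nested sum. That preceding Proposition already gives
\[
    \hat{K}_M(\hat{\alpha}, \hat{\beta}) = \sum_{m=0}^M\, \sum_{(s,t) \in \hat{\Delta}^m_T \times \hat{\Delta}^m_T} \,\prod_{i=1}^m \langle \alpha'_{s_i}, \beta'_{t_i} \rangle_\fg,
\]
so the only thing left to prove is the identity $\hat{Q}_M(T,T) = \sum_{m=0}^M \sum_{(s,t)} \prod_{i=1}^m \langle \alpha'_{s_i}, \beta'_{t_i}\rangle_\fg$. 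This is a purely combinatorial statement about iterated sums and does not involve the Lie-group structure at all: writing $a_{s,t} \coloneqq \langle \alpha'_s, \beta'_t\rangle_\fg$, we may treat the inner products as opaque scalars throughout.

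First I would unfold the recursion, exactly as in the proof of Corollary~\ref{cor:cont_kerneltrick}. For $M = 2$ one computes
\[
    \hat{Q}_2(T,T) = 1 + \sum_{s_1,t_1} \hat{Q}_1(s_1,t_1)\, a_{s_1,t_1} = 1 + \sum_{s_1,t_1} a_{s_1,t_1} + \sum_{s_1,t_1}\ \sum_{s_2 \le s_1,\, t_2 \le t_1} a_{s_2,t_2}\, a_{s_1,t_1},
\]
which, after relabelling the indices of the last sum in increasing order, is precisely the $m=0,1,2$ terms of the displayed formula for $\hat{K}_2$. In general, iterating the substitution $\hat{Q}_k(s,t) = 1 + \sum_{s',t'} \hat{Q}_{k-1}(s',t')\, a_{s',t'}$ from $k = M$ down to $k = 1$ produces, for each $m \in \{0,\dots,M\}$, a single homogeneous term $\sum \prod_{i=1}^m a_{s_i,t_i}$ whose indices run over the nested ranges dictated by the recursion ($s_{i+1}\le s_i$, $t_{i+1}\le t_i$, all bounded by $T$); this is exactly a sum over a discrete bi-simplex, and summing over $m$ reproduces $\hat{K}_M$. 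The cleanest way to make this rigorous is a short induction on $M$, using $\hat{Q}_M(T,T) = 1 + \sum_{s,t} \hat{Q}_{M-1}(s,t)\, a_{s,t}$ together with the inductive expansion of $\hat{Q}_{M-1}$.

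The part that needs care rather than ingenuity is the bookkeeping of the summation ranges: one must verify that the domains generated by iterating the recursion coincide with $\hat{\Delta}^m_T \times \hat{\Delta}^m_T$, which in particular means tracking whether the discrete simplices carry strict or weak orderings (so that the inner sums run over $[s']$ versus $[s'-1]$, and similarly for $t'$), a distinction the continuous proof could hide inside sets of measure zero. A second, minor, point is the remark that the argument is unchanged for time series of different lengths: for $\hat{\alpha}\colon [T_\alpha + 1] \to G$ and $\hat{\beta}\colon [T_\beta + 1] \to G$ one simply replaces $\hat{Q}_M(T,T)$ by $\hat{Q}_M(T_\alpha, T_\beta)$ and the single discrete simplex $\hat{\Delta}^m_T$ by the product of two discrete simplices of sizes $T_\alpha$ and $T_\beta$. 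Neither point is a genuine obstacle, so the corollary should follow with essentially the same short proof as its continuous counterpart.
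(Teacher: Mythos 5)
Your proof is correct and follows exactly the route the paper takes: the paper gives no separate argument for the discrete case, stating only that ``the proofs for these discrete formulas proceed in exactly the same manner as their continuous counterparts,'' which is precisely your unfold-the-recursion / induct-on-$M$ argument mirroring the $M=2$ computation of the continuous corollary. Your caution about strict versus weak orderings is well placed --- as literally written the inner sums over $s' \in [s]$ and $t' \in [t]$ generate weakly increasing index tuples, while $\hat{\Delta}^m_T$ is defined with strict inequalities, so one of the two conventions must be adjusted (inner sums over $[s'-1]$ and $[t'-1]$, or a weakly ordered discrete simplex); this is an inconsistency in the paper's statement rather than a gap in your argument.
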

The proofs for these discrete formulas proceed in exactly the same manner as their continuous counterparts. This final recursive formula for the kernel provides an efficient computation of the discrete signature. We set some notation before writing down the algorithm. Suppose $A, B$ are $T \times T$ arrays. We will use the notation $A[i, j]$ to denote elements of the array, and we suppose that the arrays are $1$-indexed. The notation for the pseudocode is explained in Appendix~\ref{apx:implementation}.

For the algorithm, suppose we have discrete time series $\alpha, \beta:[T+1] \rightarrow G$, and the corresponding derivatives $\alpha', \beta' : [T] \rightarrow \fg$ are already computed, as per Appendix~\ref{apx:implementation}. We assume that the Lie group $G$ is $N$ dimensional. In the pseudocode, we let $\texttt{a}, \texttt{b}$ be the discrete derivatives $\alpha', \beta'$ respectively.

\begin{algorithm}[H]
\SetKwInOut{Input}{Input}\SetKwInOut{Output}{Output}
\SetAlgoLined
\Input{$\texttt{a,b}$ ($T \times N$ arrays): Two paths as discrete derivatives\\
    $\texttt{M}$ (Int): The truncation level}
\Output{\texttt{R} (Float): The kernel value $K_M(\alpha,\beta)$.}
\nosemic Compute the Gram matrix of the derivatives\;
\pushline \dosemic \nonl $\texttt{K} \leftarrow \texttt{ab}^\intercal$\;
\popline Initialize $(T \times T)$ arrays $\texttt{A}$ and $\texttt{Q}$ \;
\nosemic Initialize the first step of the recursion\;
\pushline \dosemic \nonl $\texttt{A} \leftarrow \texttt{K}$ \;
 \popline \For{$\texttt{m=2..M}$}{
  $\texttt{Q} \leftarrow \texttt{1+ A[}\boxplus, \boxplus\texttt{]}$ \;
  $\texttt{A} \leftarrow \texttt{K} \cdot \texttt{Q}$ \;
 }
 $\texttt{R} \leftarrow \texttt{1 + A[}\Sigma, \Sigma\texttt{]}$\;
 Return $\texttt{R}$
 \caption{Discretized Signature Kernel}
\end{algorithm}

As discussed by~\cite{kiraly_kernels_2019}, the runtime of this algorithm is $O(T^2 \cdot M)$. Now, consider the naive computation of the kernel where we first compute the truncated signatures of $\hat{\alpha}$ and $\hat{\beta}$, and then compute the inner product. From the analysis in Appendix~\ref{apx:implementation}, a signature computation requires $O(TN^M)$ operations. The leading-order term in the computation of the inner product is computing the inner product of the respective level $M$ tensors. This requires $N^M$ operations. Thus, the complexity of the naive computation is $O(TN^M)$.

This suggests that if the length $T$ of the time series is large and the truncation level is small, then a naive computation may be more efficient. However, if we wish to compute the kernel at a high truncation level, the recursive algorithm provided here scales significantly better. Furthermore, several variants of this algorithm for Euclidean path signatures are considered in~\cite{kiraly_kernels_2019} such as by incorporating low-rank approximations. These algorithms can similarly be extended to the setting of Lie groups by applying them to the discrete derivatives.


\section{Experiments}
\label{sec:experiments}

In this section, we provide two detailed experiments to demonstrate the universal and characteristic properties of the path signature. First, we consider the human action recognition problem from computer vision, using a Lie group representation of the data. We find that the path signature method is simple to implement, achieves comparable classification performance to shallow learning methods, and provides an interpretable feature set. Second, we perform a kernel two-sample hypothesis test aiming to distinguish between two different random walks on $SO(3)$. Here, we find that the path signature for $SO(3)$ significantly outperforms the same hypothesis test done using the Euclidean representation of $SO(3)$.


\subsection{Human action recognition}
\label{ssec:g3d}
In this subsection, we aim to utilize the universal property of the signature kernel to study a classification problem. Namely, we will be working in the domain of \style{human action recognition}, in which several recent works, such as~\cite{huang_deep_2017},~\cite{rhif_action_2018} and~\cite{li_skeleton-based_2019}, have used Lie group representations and deep learning to achieve state of the art results. Euclidean path signatures have also been used to study the real valued and joint-based representation of this problem in~\cite{yang_developing_2019}, and preliminary experiments using Lie group path signatures have been performed in~\cite{celledoni_signatures_2019}, though detailed quantative comparisons with larger datasets were not provided.

Our goal in this section is to demonstrate the utility, simplicity and interpretability of the Lie group path signature using the G3D-Gaming dataset from~\cite{bloom_g3d_2012}. Thus, our focus will be on establishing baseline results using support vector machines and random forests, and our main comparison will be with the ``shallow'' learning methods described in~\cite{vemulapalli_human_2014} and~\cite{vemulapalli_rolling_2016}. We achieve comparable classification results using a random forest algorithm and the second level signature as a feature set. A major advantage of the second level signature is that we can interpret this feature set in terms of the underlying movement, and we provide a brief discussion of this analysis. 

\subsubsection{Lie group representation}

In this experiment, we aim to classify actions based on human skeletal motion data, which we represent as a $SO(3)^k$ valued time series. This Lie group representation of human motion was introduced in~\cite{vemulapalli_human_2014} and~\cite{vemulapalli_rolling_2016}, which describes a pose as an element of $SO(3)^k$. The relative rotation of two body parts is an element of $SO(3)$, and by recording the relative rotation of $k$ pairs of body parts, we can describe the full pose of a human. 

A skeleton $S= (V, E)$ can be described using a set of joints $V = \{v^i\}_{i=1}^{n_V}$, where $v^i \in \R^3$, and a set of body parts $E = \{e^i\}_{i=1}^{n_E}$, where each body part is a function $e^i : \{0,1\} \rightarrow V$. We may consider $e^i(0)$ and $e^i(1)$ to respectively be the start and end point of the body part $e^i$. Additionally, we will denote the unit vector describing the direction from the start to the end of the body part by
\begin{align*}
    \hat{e}^i = \frac{e^i(1) - e^i(0)}{\|e^i(1) - e^i(0)\|}.
\end{align*}
We will consider all pairs of body parts $(e^i, e^j)$ that share a joint, such that $e^i(c_1) = e^j(c_2)$ for some $c_1, c_2 \in \{0, 1\}$, so $k$ is the number of adjacent pairs of body parts. To obtain the rotation matrix for a chosen pair $(e^i, e^j)$, we rotate the global coordinate system (with minimum rotation) such that $\hat{e}^i$ is the $x$-axis. Then, the rotation matrix $R_{i,j} \in SO(3)$ is the minimum rotation from $\hat{e}^i$ to $\hat{e}^j$ in this coordinate system. By repeating this for all adjacent pairs, we obtain an element of $SO(3)^k$, and further repeating this for all time steps, we can represent this motion as a time series in $SO(3)^k$. 

\subsubsection{Data and preprocessing}
The dataset we use is the G3D-Gaming dataset from~\cite{bloom_g3d_2012}. This contains 663 sequences of 20 different gaming motions performed by 10 subjects, and each subject performed each action at least two times. The 3D locations of 20 joints are provided for every frame. However, the number of frames for each recorded action varies widely from 3 frames to 330 frames. We use the code provided in~\cite{vemulapalli_rolling_2016} to generate the $SO(3)$ representation. 

\begin{figure}[!htbp]
\centering
	\includegraphics[width=0.4\textwidth]{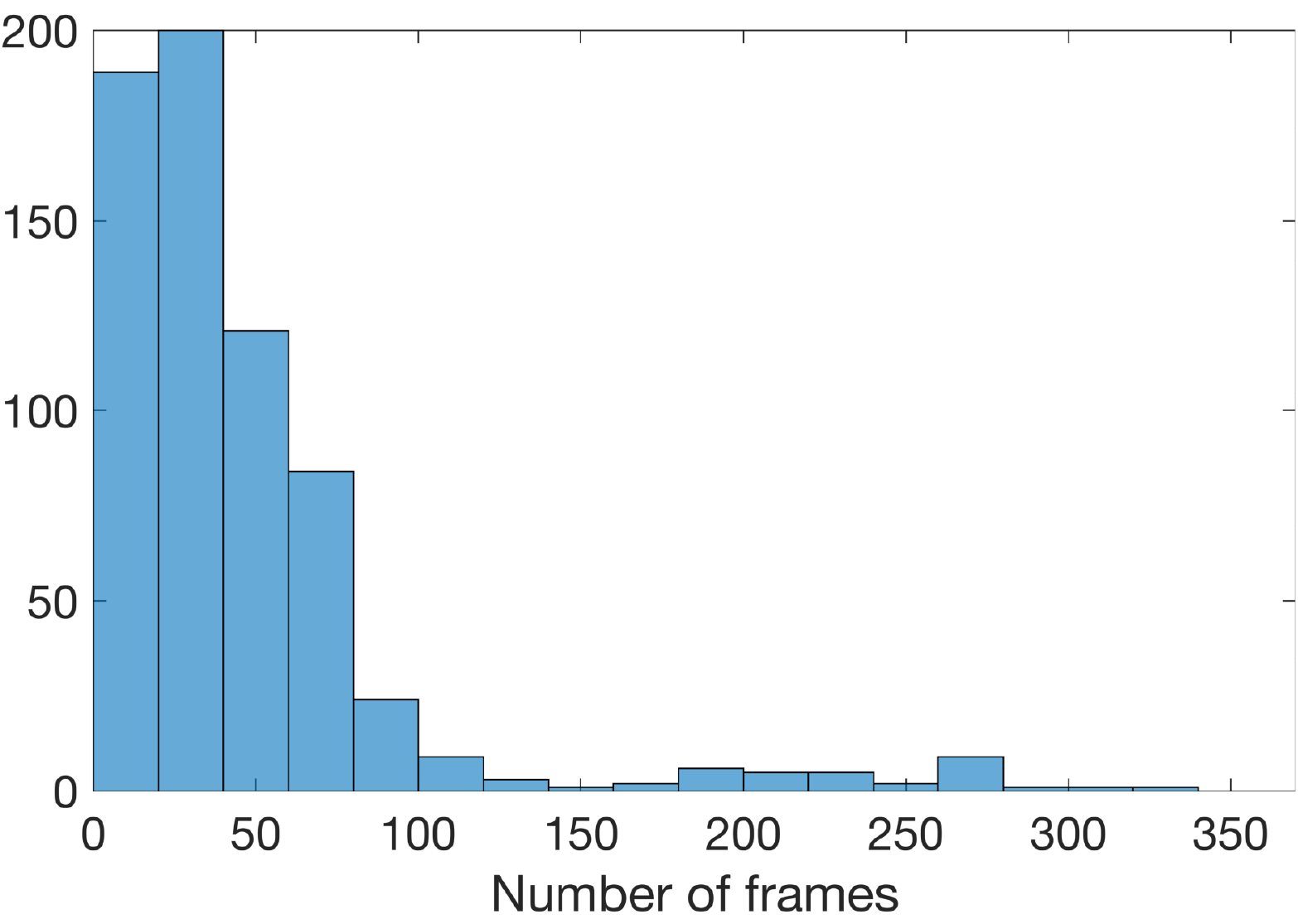}
	\caption{Histogram of number of frames per recorded action in G3D dataset.}
\end{figure}

Note that in~\cite{vemulapalli_human_2014} and~\cite{vemulapalli_rolling_2016}, all pairs of body parts are used in the Lie group representation, which results in $k=342$ pairs for this dataset. In contrast, we only use all adjacent pairs of body parts, which we call the \style{primary pairs}, resulting in $k=18$ pairs for this dataset. We use significantly less data because the path signatures take into account the relationships between all input pairs, so we have information regarding non-primary pairs through the higher order signature terms. The numbering of the primary pairs are given in the following figure.
\clearpage
\begin{figure}[!htbp]
\centering
	\includegraphics[width=0.4\textwidth]{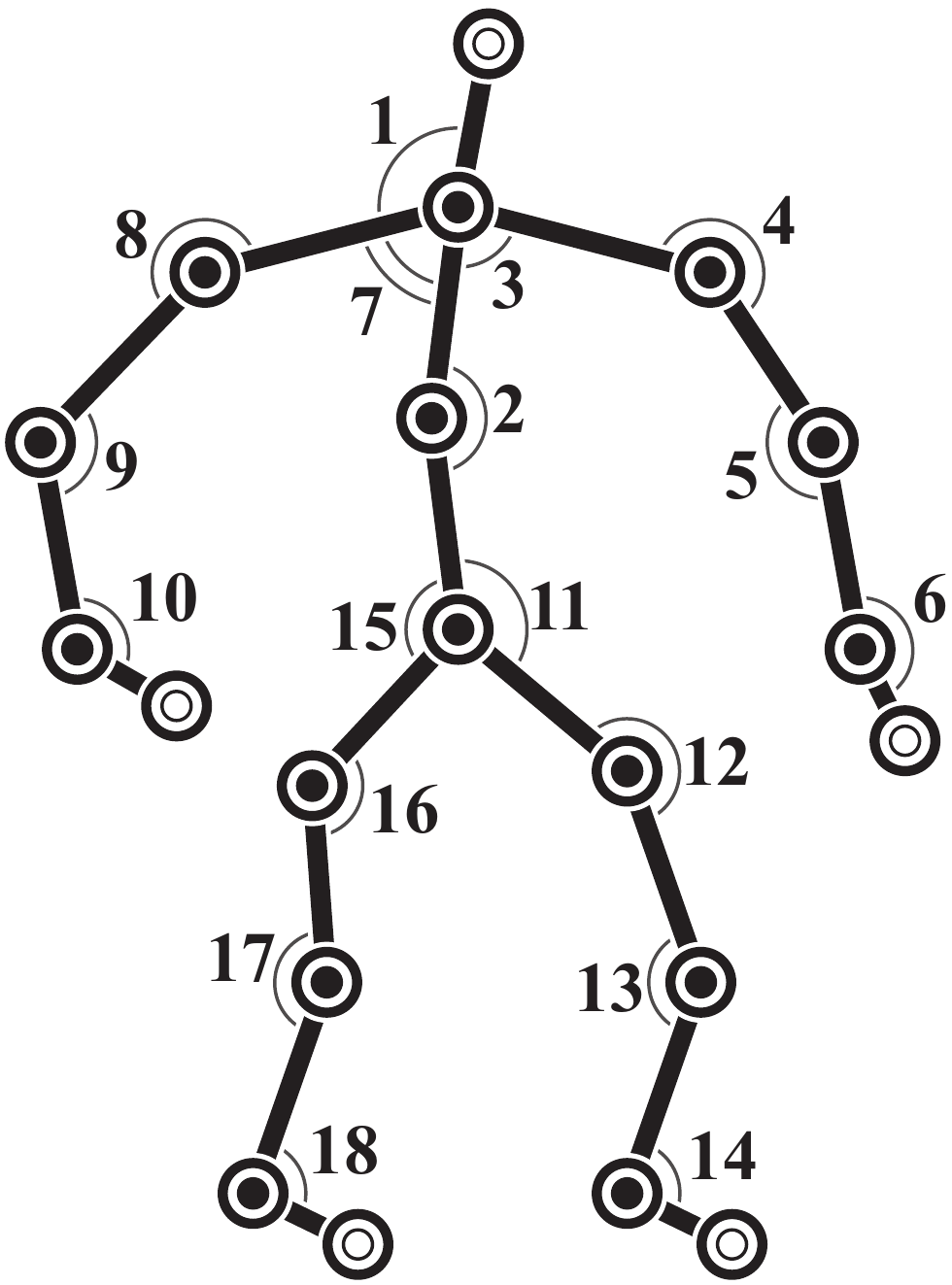}
	\caption{Numbering of the primary pairs of body parts.}
	\label{fig:primary_pairs}
\end{figure}

We categorize these 18 primary pairs into 5 classes.
\begin{table}[h!]
\centering
\begin{tabular}{|c|c|} 
 \hline
    Body & 1-2 \\ \hline
    L Arm & 3-6 \\ \hline
    R Arm & 7-10 \\ \hline
    L Leg & 11-14 \\ \hline
    R Leg & 15-18 \\ \hline
\end{tabular}
\caption{The body regions corresponding to groups of primary pairs.}
\label{tab:classes}
\end{table}

Extensive preprocessing of this data is performed in~\cite{vemulapalli_human_2014} and~\cite{vemulapalli_rolling_2016} in order to deal with several difficulties. During the training stage, the following steps were taken.
\begin{enumerate}
    \item Each time series is resampled via interpolation so that all time series have a fixed length.
    \item A nominal curve in $SO(3)^k$ is generated for each action class.
    \item To handle issues of rate variation and temporal misalignment, dynamic time warping (DTW) is used to warp each time series to its corresponding nominal curve.
    \item A rolling and unwrapping procedure is computed with respect to its nominal curve to obtain a curve in the Lie algebra $\fso(3)^k$. 
    \item (Optional) A Fourier temporal pyramid (FTP) representation of the Lie algebra curve may be computed to further deal with temporal misalignment.
\end{enumerate}
A classifier such as a support vector machine (SVM) is then trained for every action class (one vs. rest). In the testing stage, the data is preprocessed with respect to all nominal curves and the corresponding SVM is used for prediction. This amounts to a large preprocessing cost, especially for test samples, which must be preprocessed with respect to all action classes.

In contrast, we perform minimal preprocessing since we can compare time series with varying numbers of frames using path signatures, and the issues of rate variation and temporal misalignment are handled by reparametrization invariance. 

\subsubsection{Results}
Following~\cite{vemulapalli_human_2014, vemulapalli_rolling_2016}, we use a cross-subject test setting, where we use half of the subjects for training, and the other half for testing. All of the reported classification results are averaged over ten different combinations of the train/test split. We perform the classification using a kernel SVM, as well as a random forest. For the kernel SVM, we report the results using the signature kernel truncated at level 6. We use the Julia implementation of \texttt{scikit-learn}, with the \texttt{SVC} implementation for support vector machines, which uses the one-against-one approach~\citep{knerr_single-layer_1990} for multi-class classifcation. For the random forest, we compute the level 2 signature of the time series and treat it as a feature set. We use the random forest implementation in the Julia \texttt{DecisionTrees} package. We follow the suggested default random forest hyperparameters in~\cite{probst_hyperparameters_2019}, and use 1,000 trees, $\sqrt{n_f}$ features for each tree, where $n_f$ is the total number of input features, a maximum depth of $100$, and use $70\%$ of the data to train each tree. A tensor renormalization is used for all path signature computations as described in Proposition~\ref{prop:tensconstruction}, and using the function
\begin{align*}
    \psi(x) & = \left\{
        \begin{array}{ll}
            x^2 & \textrm{if} \, x \leq \sqrt{M} \\
            M + M^{1+a}(M^{-a} - x^{-a})/a & \textrm{if} \, x > \sqrt{M}
        \end{array}
    \right.
\end{align*}
where $M = 4$ and $a = 1$ as in~\cite{chevyrev_signature_2018}.

We apply each algorithm to several embeddings of the discrete time series. In addition to using the raw time series, we also use the identity initialialized (IdInit) embedding to remove translation invariance, and the sliding window (SWin) embedding with 1 to 6 lags. 

\begin{table}[h!]
\centering
\begin{tabular}{ |c|c|c| } 
 \hline
  & Random Forest & SVM \\ \hline \hline
 Raw & 69.99\% & 65.20\% \\ \hline
 IdInit & 80.48\% & 77.86\% \\ \hline
 SW, 1 lag & 80.32\% & 79.24\% \\ \hline
 SW, 2 lags & 82.71\% & 80.27\% \\ \hline
 SW, 3 lags & 83.77\% & 80.84\% \\ \hline
 SW, 4 lags & 84.47\% & 81.08\% \\ \hline
 SW, 5 lags & 84.86\% & 81.44\%\\ \hline
 SW, 6 lags & 85.56\% & 81.44\% \\ \hline
\end{tabular}
\caption{Classification results using a random forest and SVM for different embeddings.}
\label{tab:g3d_results}
\end{table}

Based on our results, a random forest trained using level 2 signatures outperforms an SVM trained using level 6 signatures for all embeddings. This may be due to the fact that random forests are in general better suited for multi-class classification tasks, since the SVM approach is to split the multi-class problem into ${20 \choose 2}$ binary classification tasks.

We note by using the random forest classifier, and introducing lags, we are able to achieve results which are comparable to the 87.95\% accuracy of~\cite{vemulapalli_rolling_2016}, given the fact that we are using significantly less input data, minimal preprocessing, and default hyperparameters on the random forest. \medskip

\begin{remark}
\label{rem:translation_breaking}
    In Section~\ref{sssec:sliding_window}, we mentioned that one possible explanation for the strong empirical performance of the sliding window embedding is the breaking of translation invariance. In these results, we can isolate the effect of breaking translation invariance by using the IdInit embedding. We see that for both the random forest and the SVM, the performance improves significantly after using the IdInit embedding when compared to the raw time series. The sliding window embedding with 1 lag also significantly improves the performance when compared to the raw time series, but we see that the increase is comparable to that of the IdInit embedding. When we increase the number of lags, the path signature is able to capture more information by integrating with respect to past values, and thus the performance continues to improve. \cite{fermanian_embedding_2019} suggested the problem of explaining why the sliding window embedding performs well in practice. This empirical evidence indicates that one such reason is the breaking of the translation invariance of the path signature.
\end{remark}

\subsubsection{Interpretation of the second level signature matrix}
In addition to comparable classification accuracy, the second level signature matrix affords an interpretable feature set. This is one of the highlights of this method, as none of the cited classification methods are easily interpretable. Namely, in both the narrow and deep learning approaches, the input to the classification algorithm is the stacked (and in some cases preprocessed) Lie group valued time series, resulting in an extremely high dimensional vector. In this section, we will discuss the interpretation of the raw second level signature matrix, which was used for the random forest classification. 

The second level path signature of a path $\gamma \in PG$, where $G$ is $N$-dimensional, can be viewed as a $(N \times N)$ matrix where the $(i,j)$ entry is simply $S^{(i,j)}(\gamma)$. For these applications, our Lie group is $G = SO(3)^{18}$, which has $N=54$ dimensions. We will consider the absolute value of the averaged second level signature matrices for each action class of the G3D data set. Specifically, because we are focused only on the interpretation in this section, we will use all of the data when computing averages. Also, we take the absolute value to simplify the interpretation of these matrices. We begin by considering a single action class (walk) in order to describe how to read the matrix.

\begin{figure}[!htbp]
\centering
	\includegraphics[width=0.5\textwidth]{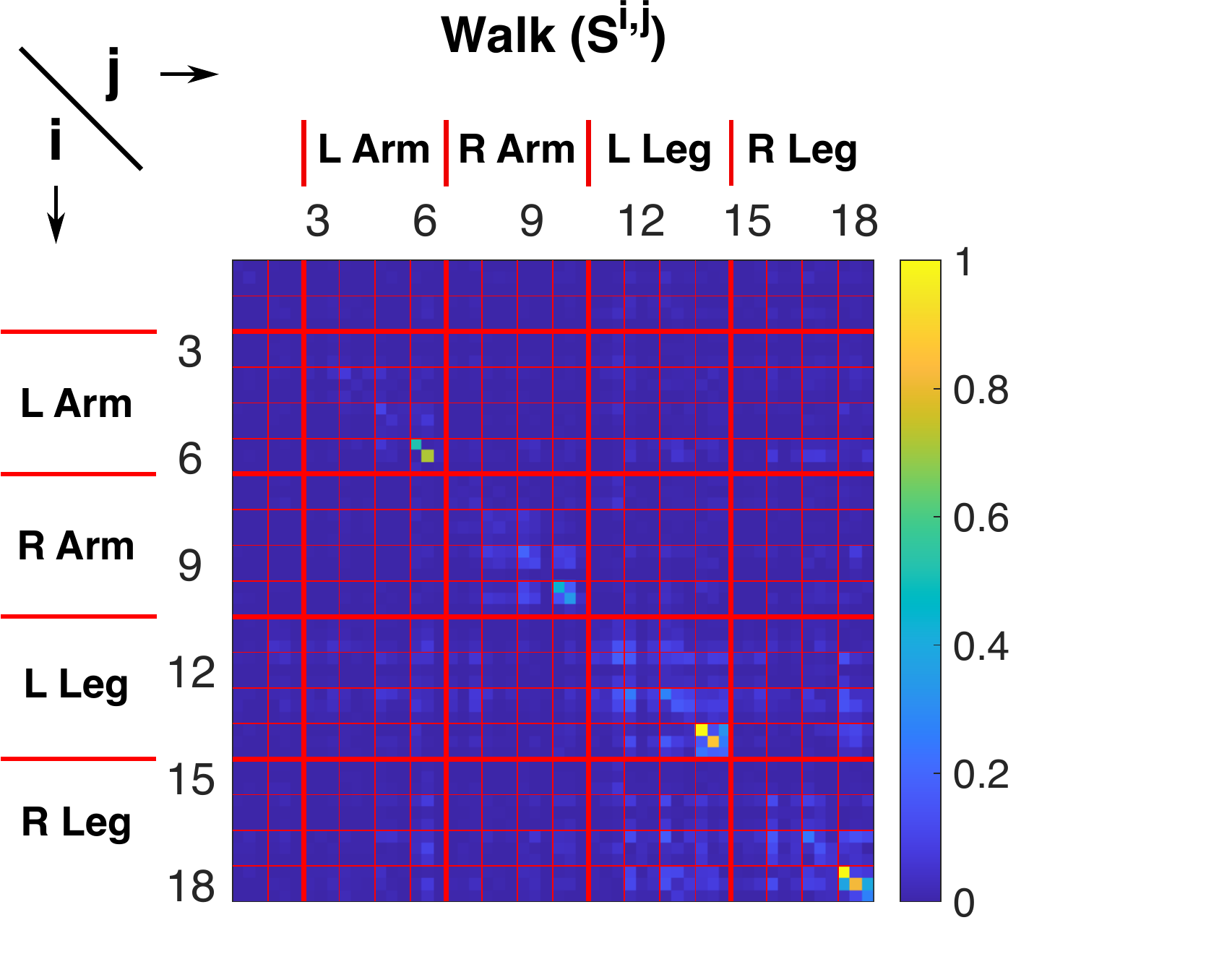}
	\caption{Averaged absolute second level signature matrix for the action class ``walk.''}
	\label{fig:walk_example}
\end{figure}
Here we have visualized the $(54 \times 54)$ matrix as a heatmap, where the entries are arranged in accordance with standard matrix notation. The thin red grid lines separate the matrix into $(3 \times 3)$ blocks, where each triplet of entries corresponds to a single primary pair. The numerical labels on the axes enumerate the blocks (or primary pairs), which correspond to the primary pairs as shown in Figure~\ref{fig:primary_pairs}. The thicker red grid lines are used to separate the different categories of primary pairs, as outlined in Table~\ref{tab:classes}, and labelled in the figure. The diagonal entries of the matrix measure the square of the cumulative rotation of a given basis direction for a given primary pair. The off diagonal entries measure the positive or negative influence of the basis direction $i$ on the basis direction $j$, as defined in Section~\ref{ssec:leadlag}. For a closer look at this example, we isolate the blocks corresponding to the left and right legs, and reduce the color threshold.

\begin{figure}[!htbp]
\centering
	\includegraphics[width=0.45\textwidth]{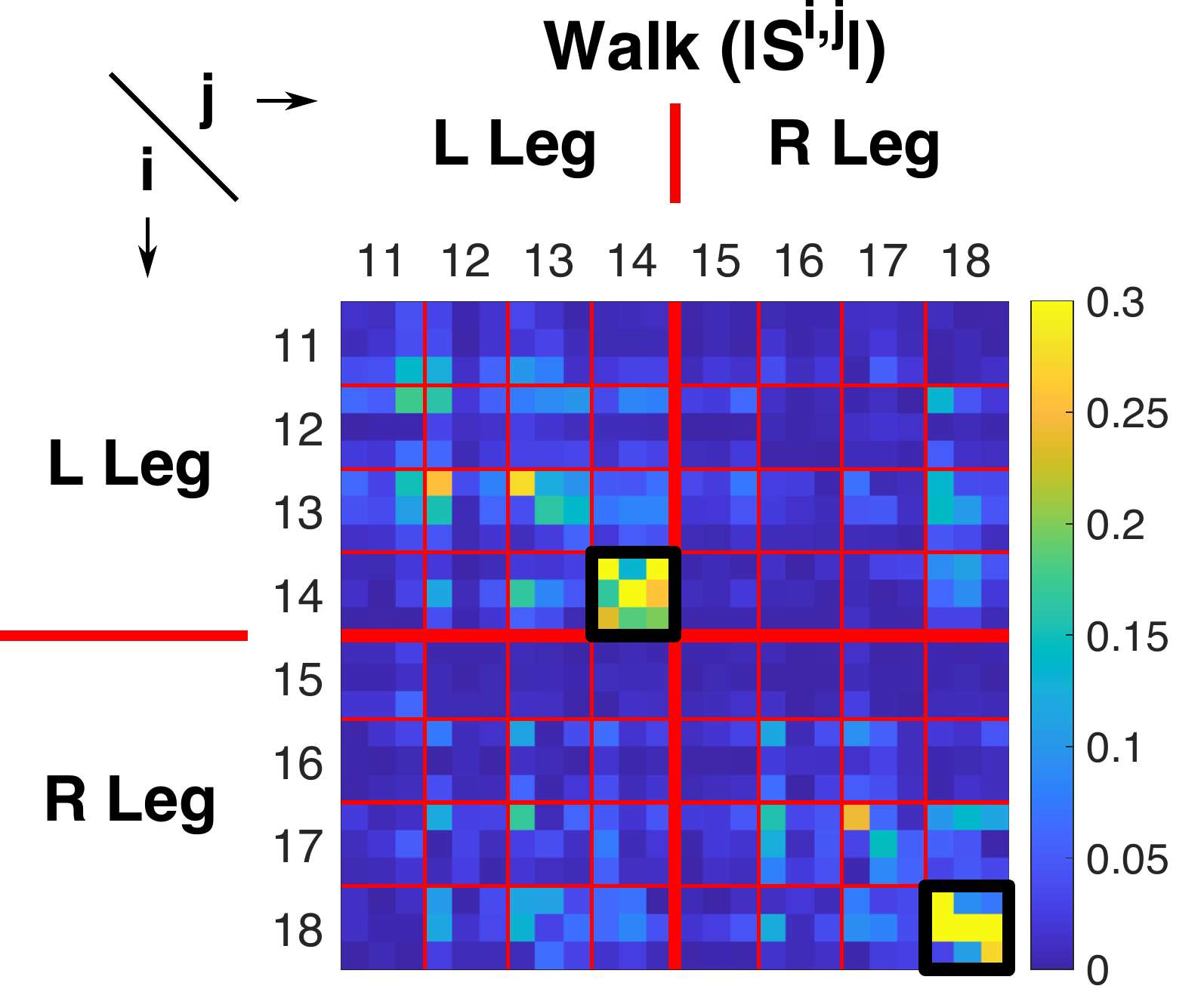}
	\caption{Averaged absolute second level signature matrix for the action class ``walk.''}
\end{figure}

The blocks with the largest magnitude are the diagonal blocks 14 and 18, which correspond to the joints of the left and right foot (see Figure~\ref{fig:primary_pairs}. In addition, many of the off-diagonal entries are nonzero, which corresponds to the action of walking. While walking, we alternate moving our left and right legs, and the signature matrix measures this as an influence of rotations about joints in one leg on the rotations about joints in the other leg. \medskip

In Figure~\ref{fig:all_actions}, we have plotted the absolute value of the averaged second level signature matrix for all action classes in the data set. We omit the labelling to simplify the figure, but all labelling is the same as Figure~\ref{fig:walk_example}. In particular, the colors range from $0$ to $1$. 

\begin{figure}[!htbp]
\centering
	\includegraphics[width=\textwidth]{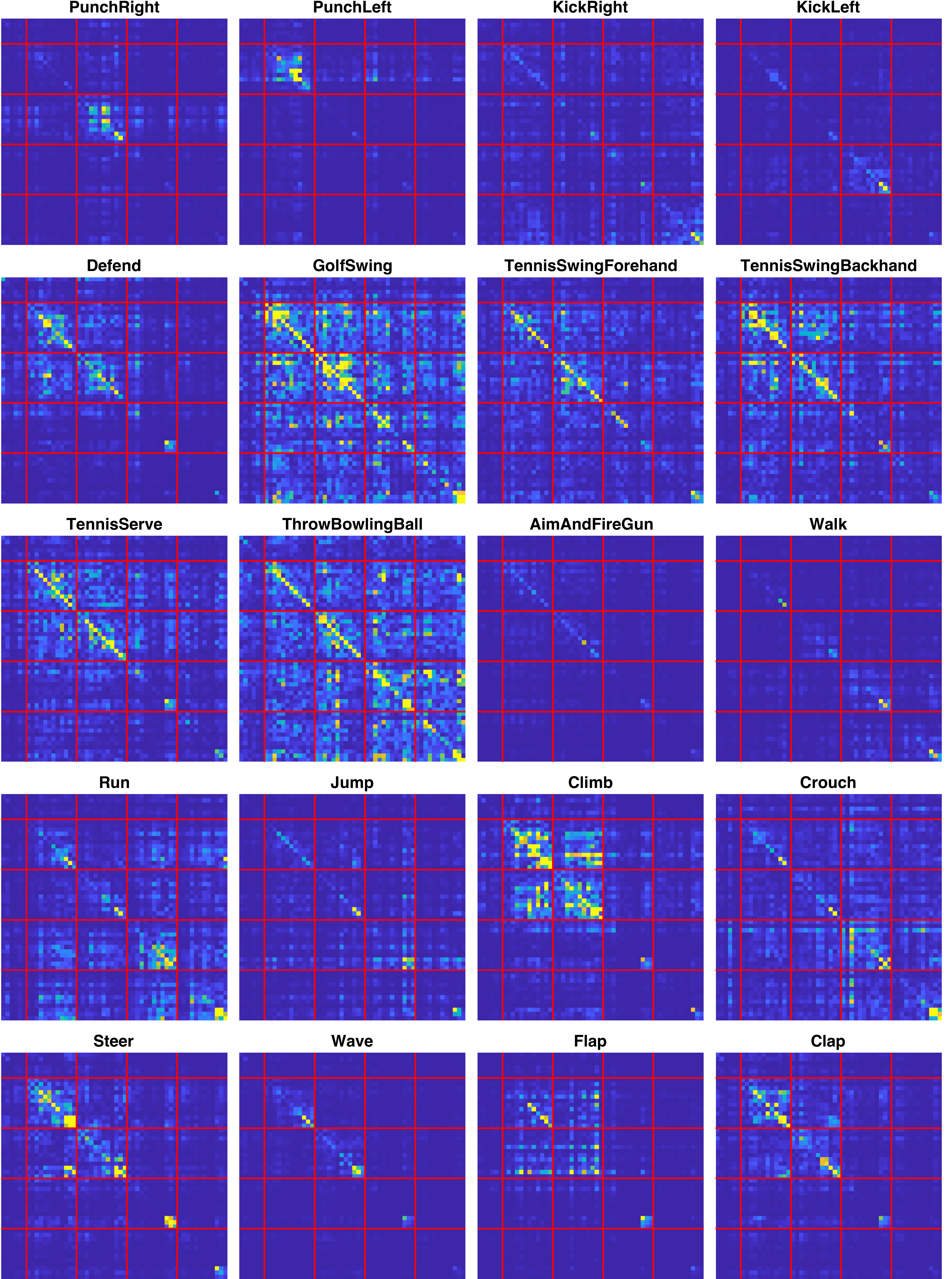}
	\caption{Averaged absolute $S^2$ matrices for all actions in G3D dataset.}
	\label{fig:all_actions}
\end{figure}


\subsection{Random walks on Lie groups}
\label{ssec:randomwalk}
In this experiment, we aim to take advantage of the characteristic property of the signature kernel and perform a kernel two-sample hypothesis testing problem, as described in~\cite{gretton_kernel_2012}. We aim to distinguish between two distinct discrete random walks on $SO(3)$, or in other words, between two distinct probability distribution on $P(SO(3))$. In addition, by using the matrix representation of $SO(3)$, we can also view these as probability distributions on $P\R^9$, and we also perform the hypothesis testing in this context. We find that for this problem, the path signature defined using left-invariant forms on $SO(3)$ significantly outperforms the Euclidean signature. \medskip

\subsubsection{Hypothesis testing with MMD}
Suppose $\mu$ and $\nu$ are two probability measures on $PG$ and we have  i.i.d. samples $\{X^i\}_{i=1}^n$ of $\mu$ and $\{Y^i\}_{i=1}^n$ of $\nu$, each with $n$ samples. Note that we use the superscript to index the separate samples, since the subscript is reserved for the time parameter. We wish to distinguish between the null hypothesis $H_0: \mu = \nu$ against the alternative hypothesis $H_1 : \mu \neq \nu$. This is done by computing the unbiased empirical MMD (Equation~\ref{eq:emp_mmd}) between our two collections of samples, and comparing this against a chosen threshold. A Type I error occurs when the test falsely rejects the null hypothesis, and a Type II error occurs when the test falsely accepts the null hypothesis. The \style{level} $\alpha$ of a test is a chosen upper bound on the probability of a Type I error.

The threshold for the MMD is chosen such that the test is \style{consistent}, meaning the test achieves a level $\alpha$ and a Type II error of zero in the asymptotic large sample limit. One method for choosing the threshold is by using large deviation bounds for the MMD under the assumption that the null hypothesis is true, which are derived in~\cite{gretton_kernel_2012}. However, because such bounds must hold for all measures, the threshold chosen in this way is conservative and may not yield optimal results with a finite number of samples. An alternative method is to use a data-dependent threshold. Our method here will be to generate a null distribution for the MMD by performing a permutation test, and computing the $1-\alpha$ quantile for a level $\alpha$ test. Alternative methods for deriving data-dependent thresholds are explored in~\cite{gretton_fast_2009, gretton_kernel_2012}.

\subsubsection{Random walk generation}
For this experiment, we will consider discrete random walks on $SO(3)$. Each walk is initialized by sampling a point uniformly at random from $SO(3)$. A method to sample uniformly from $SO(3)$ is given in~\cite{diaconis_subgroup_1987}. Next, we sample $T$ i.i.d. points $\{v_i\}_{i=1}^T$ from the unit sphere in the Lie algebra $\fso(3)$. We use a unimodal probability distribution on $S^2$, called the von Mises-Fisher distribution. This is parametrized by the mean direction $x \in S^2$, and the concentration parameter $\kappa \geq 0$, which describes the concentration of the distribution about the mean. 

\begin{figure}[!htbp]
\centering
	\includegraphics[width=0.5\textwidth]{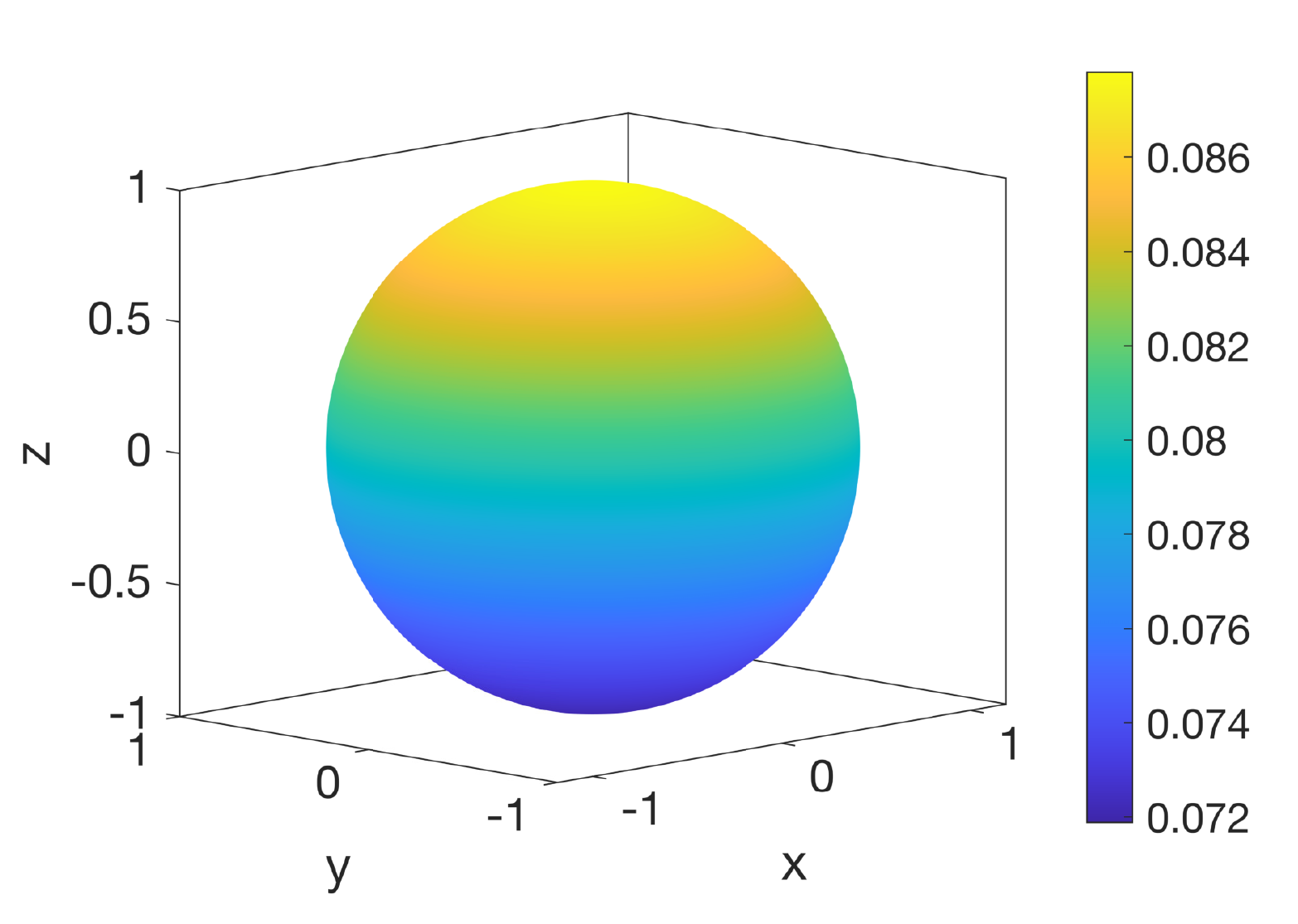}
	\caption{The von-Mises Fisher density on $S^2$ with mean direction $x =(0, 0, 1)$ and $\kappa = 0.1$.}
\end{figure}

We may think of this as a random walk with drift in the mean direction. A numerically stable method of sampling from this distribution is discussed in~\cite{jakob_numerically_2012}. The random walk $\hat{\gamma}: [T+1] \rightarrow SO(3)$ is defined recursively as
\begin{align*}
    \hat{\gamma}_{i+1} = \hat{\gamma}_i \exp(c v_i),
\end{align*}
where $c>0$ is the step size and $\hat{\gamma}_1$ is a randomly sampled point on $SO(3)$. \medskip

\subsubsection{Results}
We perform two classes of tests.
\begin{enumerate}
    \item $H_0$ is false. Here, we attempt to distinguish between distributions with different mean directions. We take $x_\mu = (1, 0, 0)$ and $x_\nu = (0, 1, 0)$. 
    \item $H_0$ is true. Here we take the mean directions of both distributions to be $x_\mu = x_\nu = (1, 0, 0)$.
\end{enumerate}
All distributions will have a concentration parameter of $\kappa = 0.1$. For each test, we will sample $n=50$ random walks from each distribution, and each random walk will have $L=100$ steps. To generate the null distribution of the MMD, we perform a permutation test with 2,000 permutations for a given set of samples. A level of $\alpha=0.05$ is chosen, so we compute the $0.95$ quantile of the null distribution as the threshold used for the MMD. The path signature is truncated at level 4 in the MMD computation. We perform 1,000 tests for each class, and each test is done using both Lie group and Euclidean path signatures. \medskip

The following table provides the error rates (false positive/negative) for the two classes of tests using the two methods.

\begin{table}[h]
\centering
\begin{tabular}{ |c|c|c| } 
 \hline
  & Lie Group & Euclidean \\ \hline \hline
 $H_0$ false & 6.1\% & 83.7\% \\ \hline
 $H_0$ true & 5.7\% & 4.9\% \\ \hline
\end{tabular}
\caption{Error rates for hypothesis testing. Each test was run 1,000 times.}
\end{table}

Additionally, we provide histograms that summarize the test results. The test distributions show the distribution of $\MMD_u$ over the 1,000 independent trials. The null distribution shown is generated from a permutation test for a single trial. The red line shows the $0.95$ quantile, and represents the threshold for that trial. The histograms for $H_0$ being false are shown in Figure~\ref{fig:hist_h0false} and the histograms for $H_0$ being true are shown in Figure~\ref{fig:hist_h0true}.

We find that the Lie group path signatures significantly outperform Euclidean path signatures. This is due to the fact that the Euclidean representation of the data is ill-suited for this problem. We are aiming to detect a slight drift in the direction of the rotation, which is a translation invariant feature in $SO(3)$. However, this is not a translation invariant feature in the Euclidean representation of the problem, so the effect of the drift is confounded in the Euclidean path signature.

\clearpage

\begin{figure}[!htbp]
\centering
	\subfigure{\includegraphics[width=0.36\textwidth]{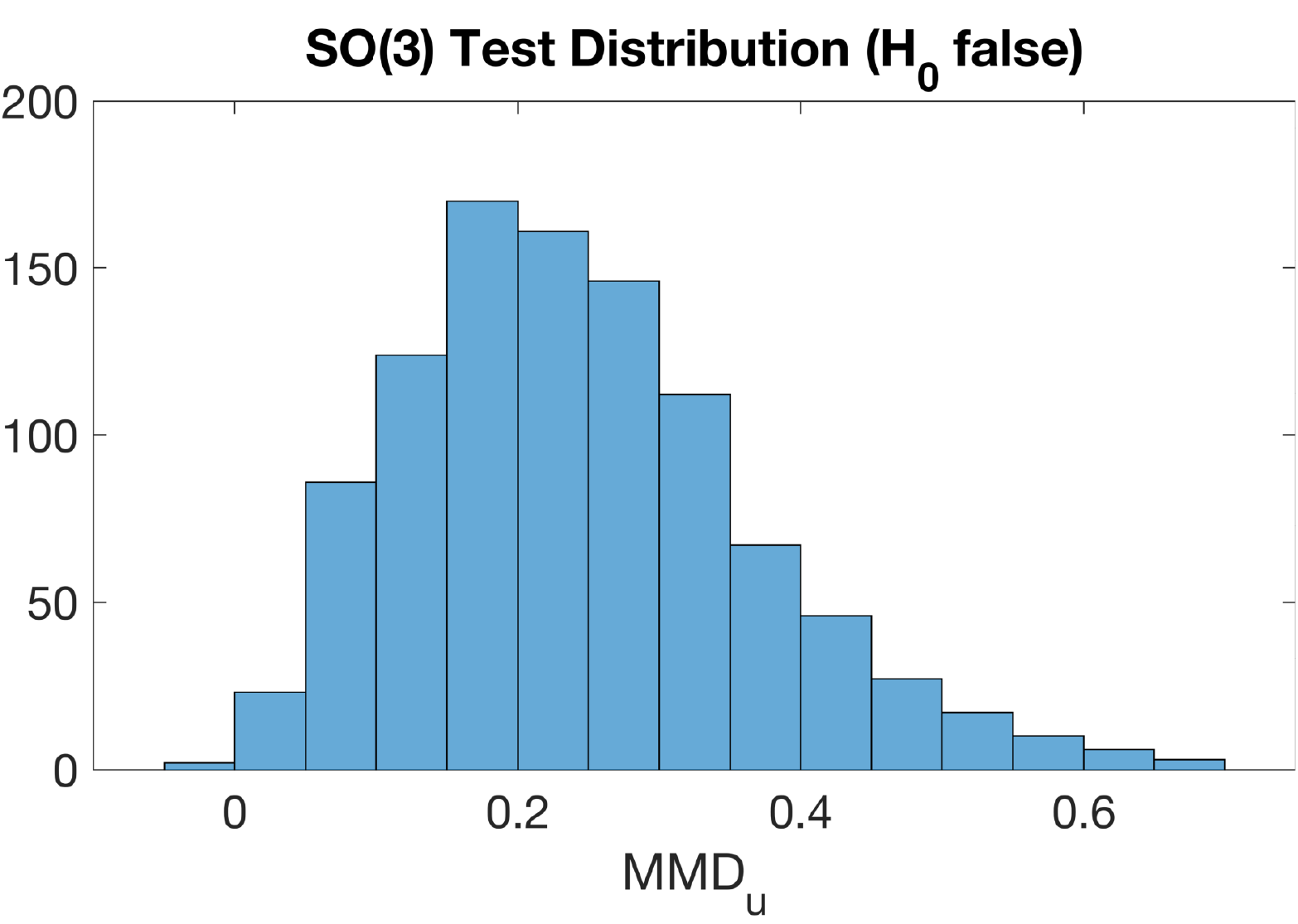}} \hspace{50pt}
	\subfigure{\includegraphics[width=0.36\textwidth]{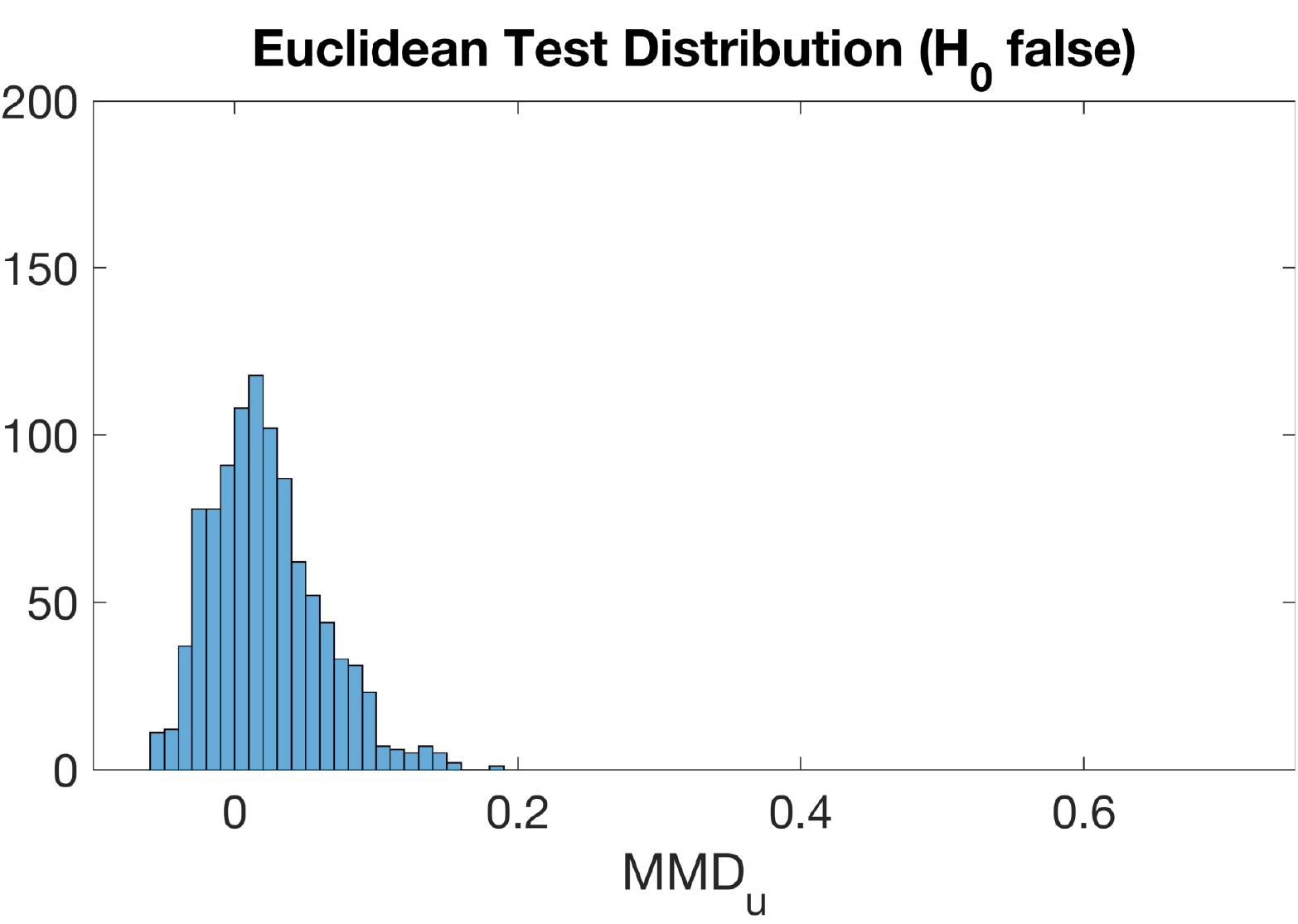}} 
\end{figure}
\vspace{-30pt}
\begin{figure}[!htbp]
\centering
	\subfigure{\includegraphics[width=0.36\textwidth]{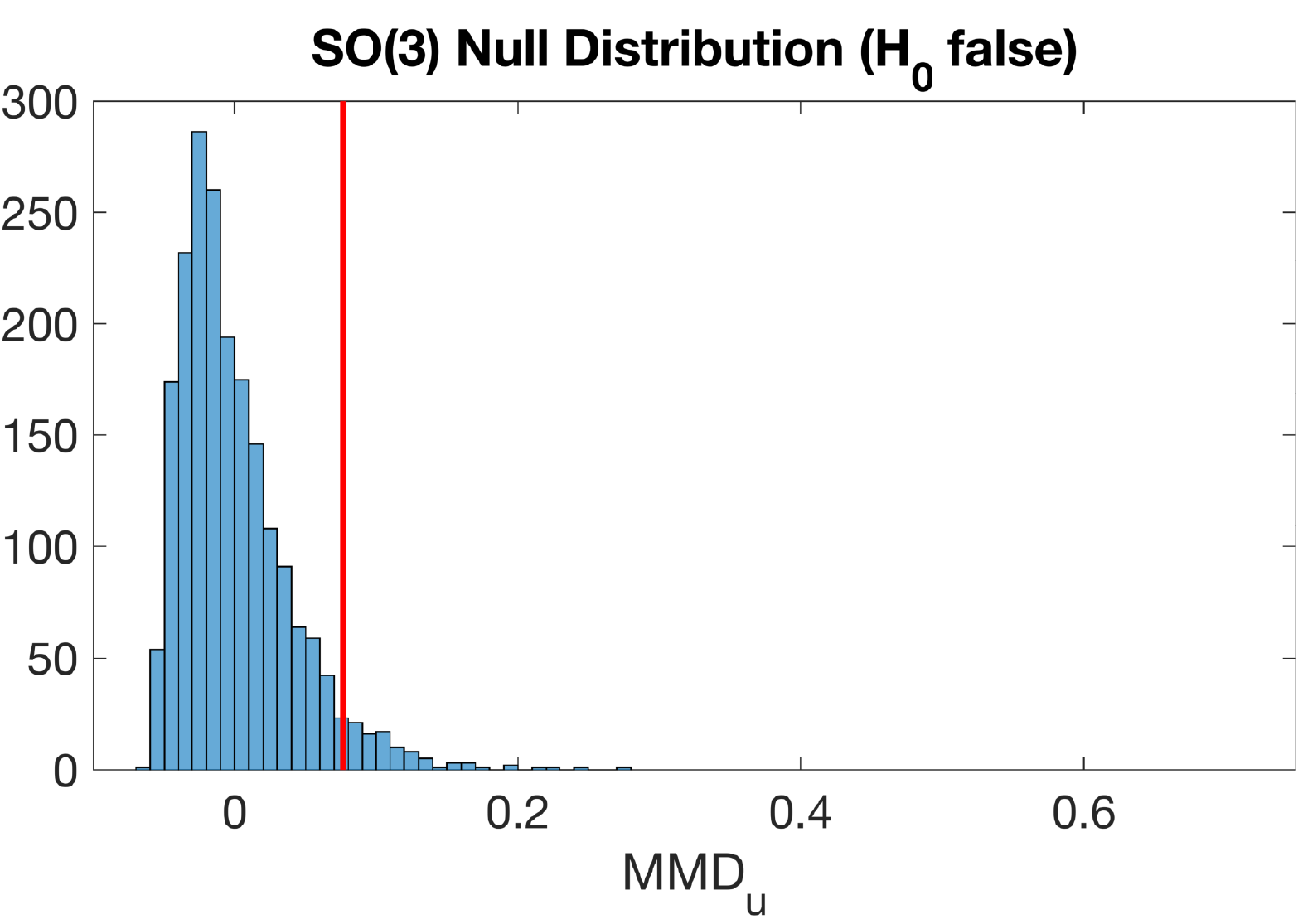}} \hspace{50pt}
	\subfigure{\includegraphics[width=0.36\textwidth]{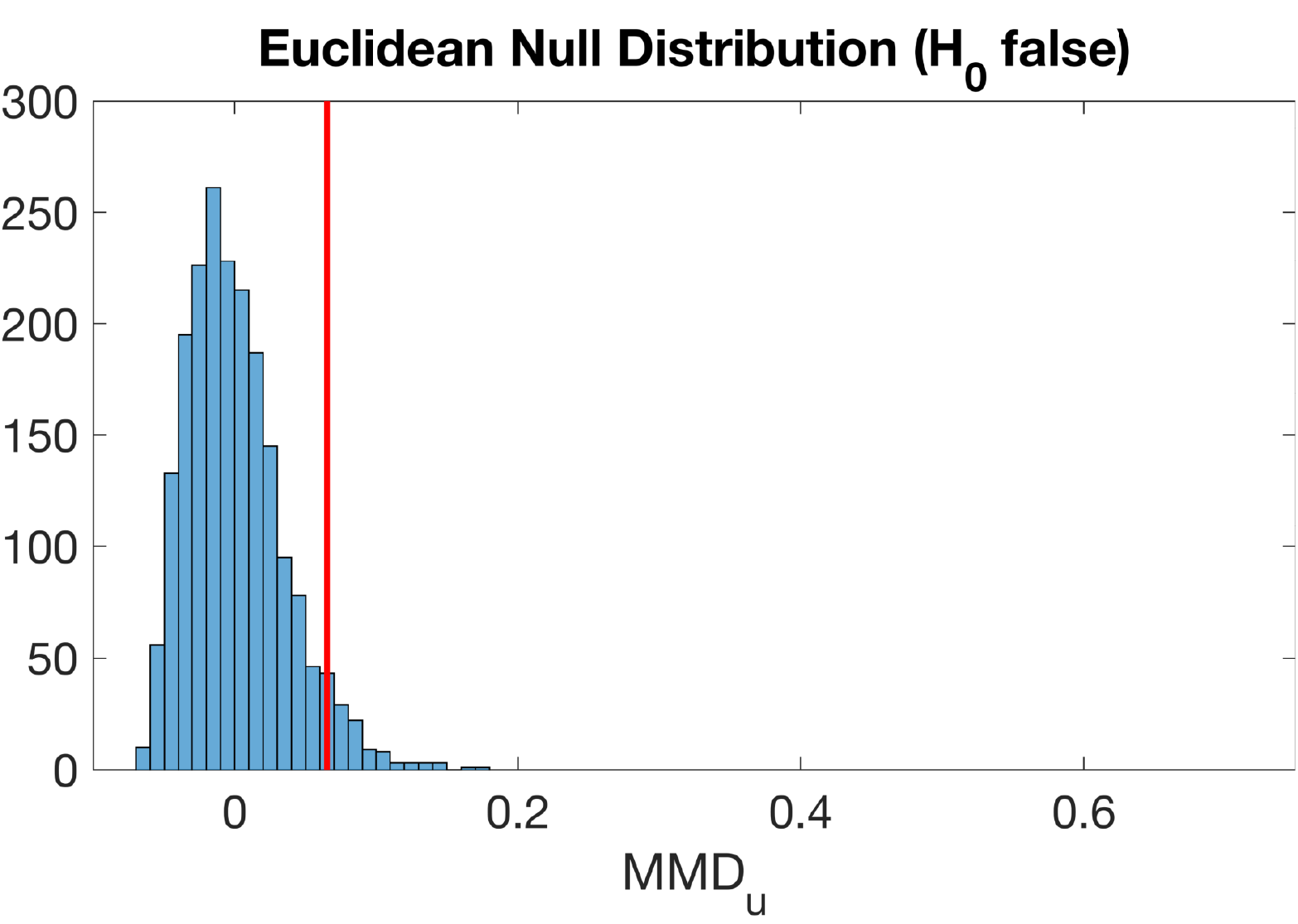}}
	\caption{Test (top) and null (bottom) distributions of $\MMD_u$ when $H_0$ is false.}
	\label{fig:hist_h0false}
\end{figure}
\vspace{-15pt}
\begin{figure}[!htbp]
\centering
	\subfigure{\includegraphics[width=0.36\textwidth]{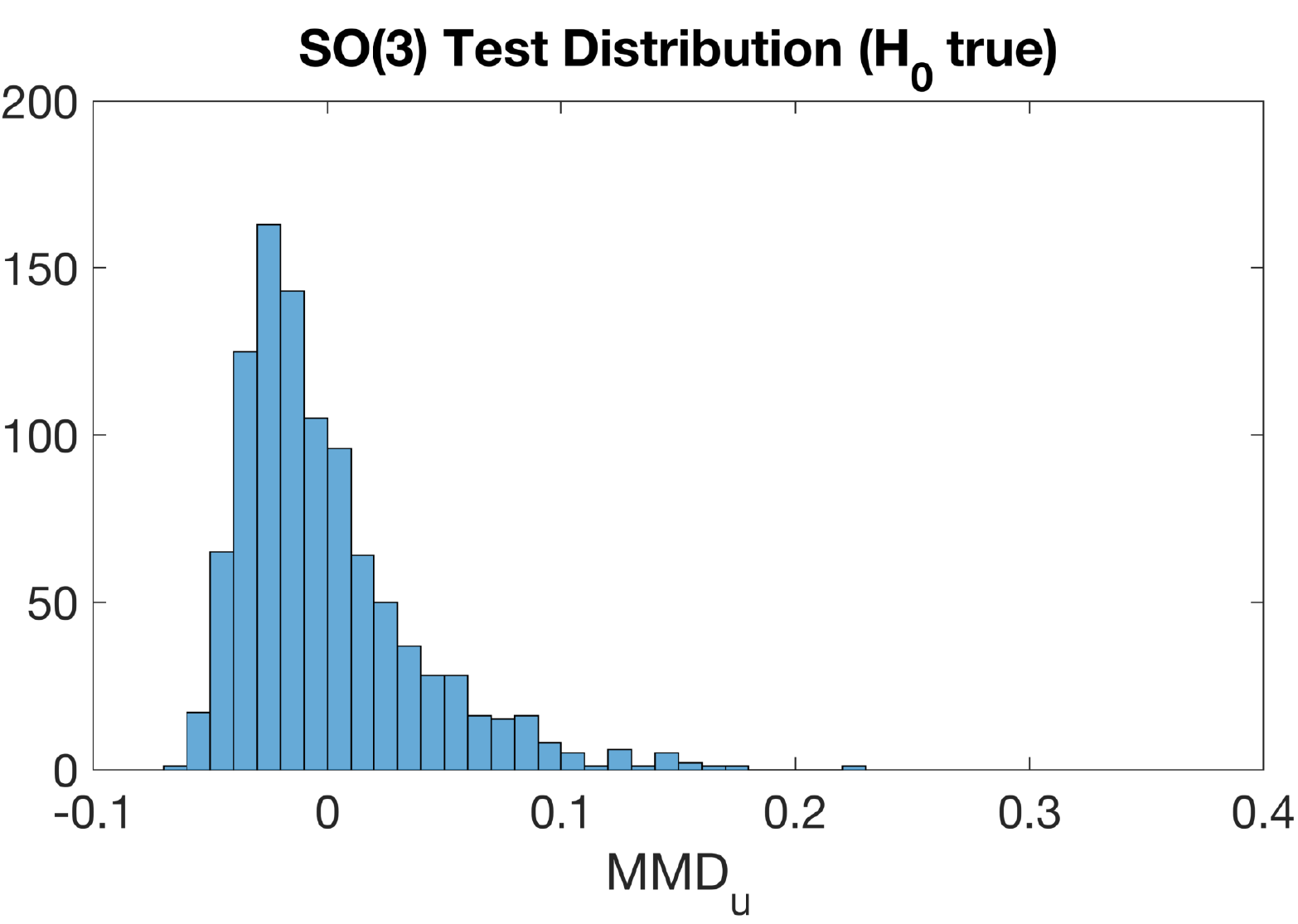}} \hspace{50pt}
	\subfigure{\includegraphics[width=0.36\textwidth]{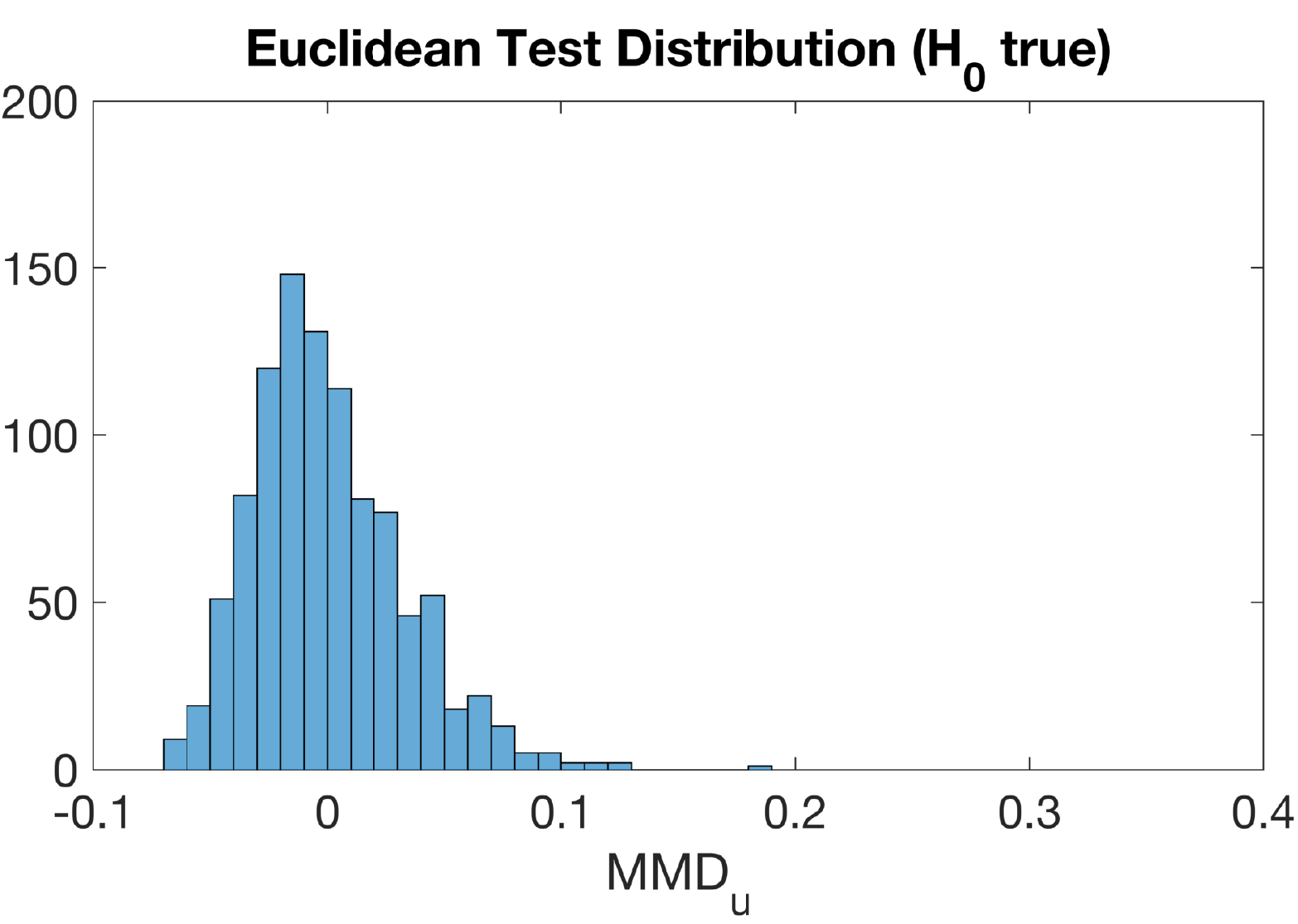}}
\end{figure}
\vspace{-30pt}
\begin{figure}[!htbp]
\centering
	\subfigure{\includegraphics[width=0.36\textwidth]{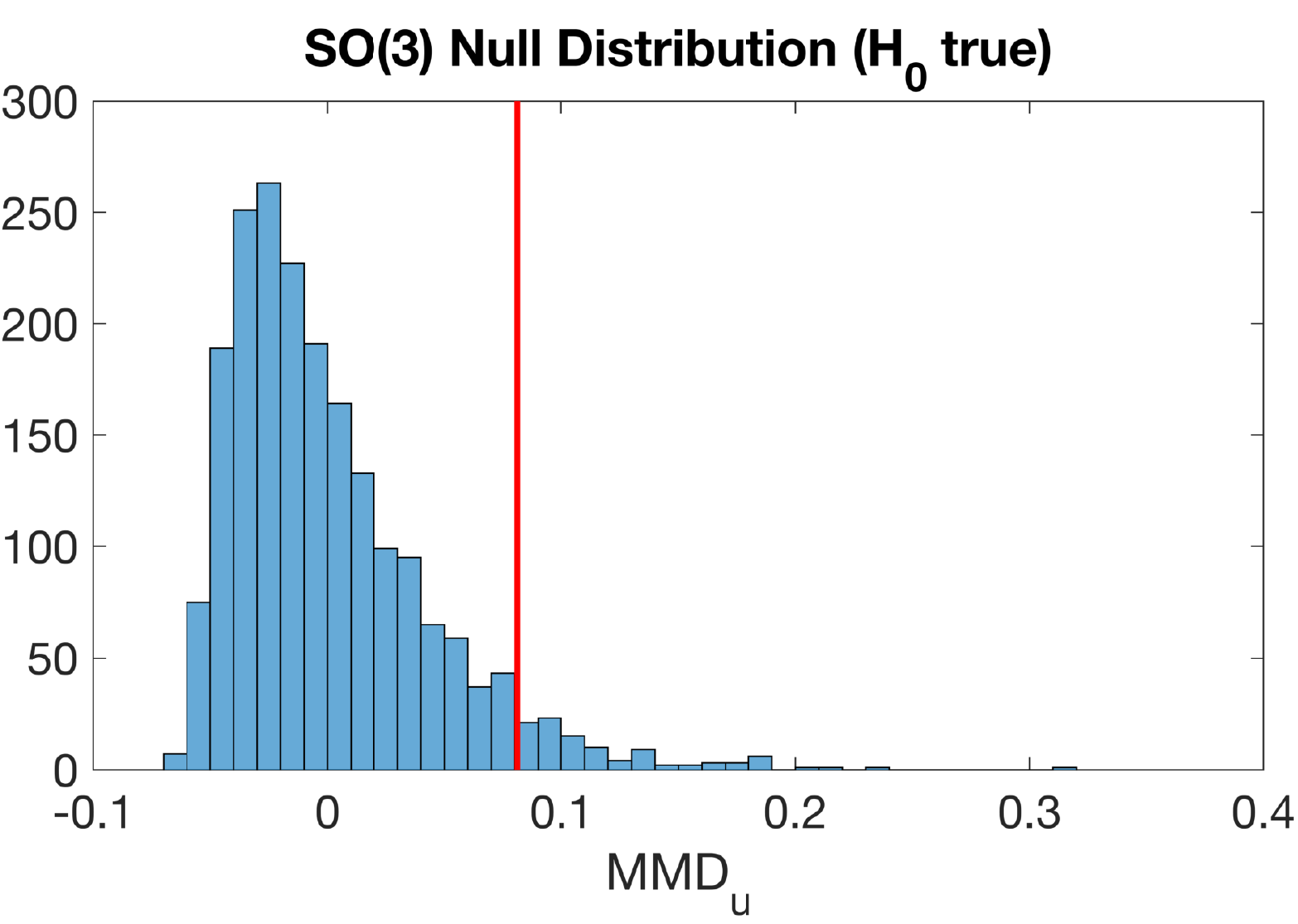}} \hspace{50pt}
	\subfigure{\includegraphics[width=0.36\textwidth]{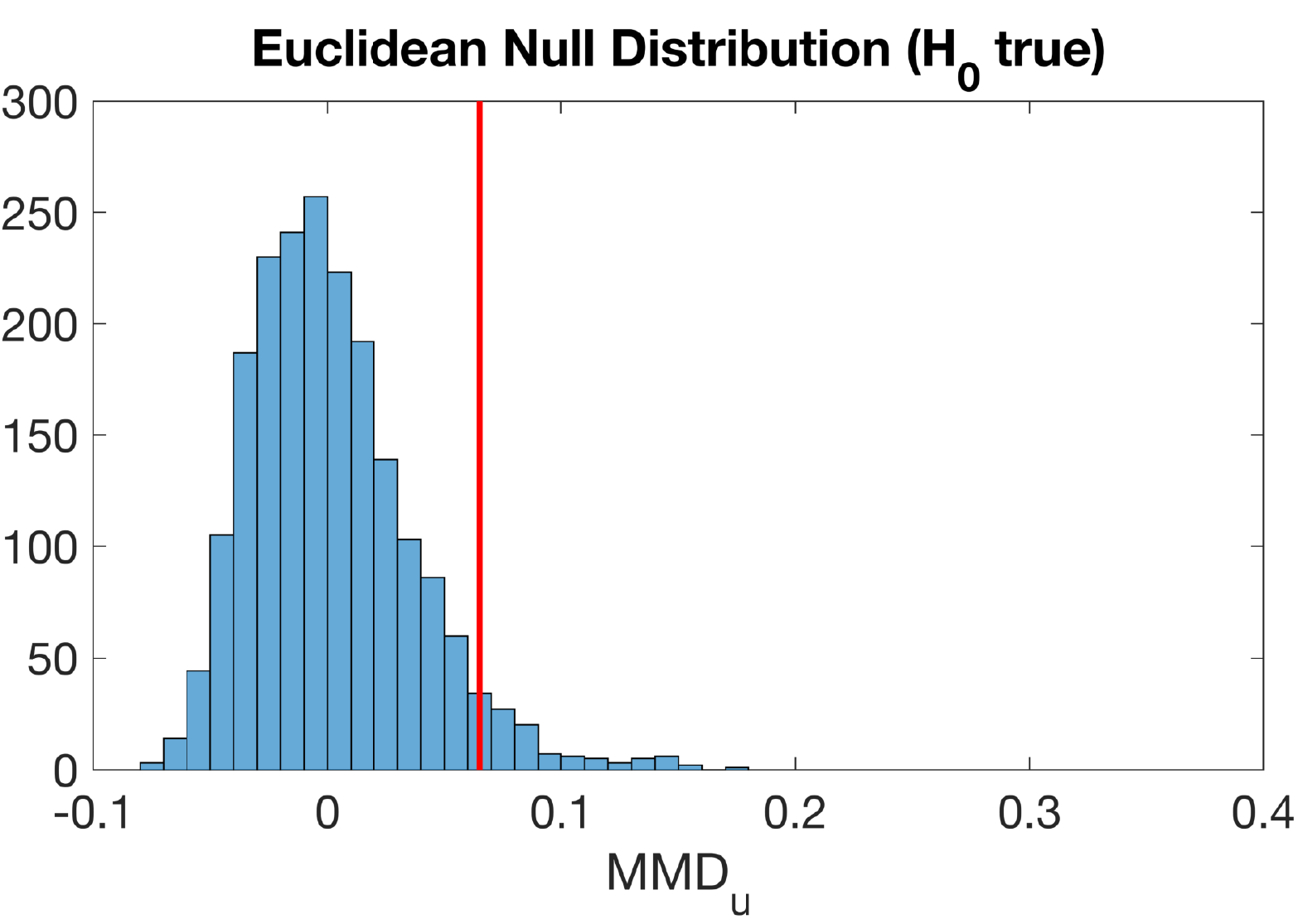}}
	\caption{Test (top) and null (bottom) distributions of $\MMD_u$ when $H_0$ is true.}
	\label{fig:hist_h0true}
\end{figure}

\clearpage


\section{Conclusion}

We have defined path signature for Lie group valued time series, and studied several of its properties, the main result being the universal and characteristic properties of the signature kernel. By defining signature using only the derivative of the path, computational techniques from Euclidean valued paths to Lie group valued paths can be exported cleanly. Our theory is validated using two detailed experiments highlighting both the universal and characteristic properties, showing that the path signature has strong empirical performance, while providing an interpretable feature set which can be used to better understand the underlying phenomena.

Lie group valued data is ubiquitous; however, in previous studies analyzing such data, the analysis pipeline can be complicated due to the ostensible complexities when dealing with Lie groups (as described in Section~\ref{ssec:g3d}). Our derivations show that Lie group valued data can be treated in a manner nearly identical to standard Euclidean valued data. \medskip

The work in this paper provides the foundations for further studying properties and applications of path signatures for Lie groups and more. We highlight two directions for possible future research.

\begin{enumerate}
    \item Although the path signature is a complete characterization of paths in Euclidean space, the inversion problem of recovering a path given its path signature is difficult~\citep{lyons_hyperbolic_2017, chang_signature_2017}. \cite{pfeffer_learning_2019} studies a restricted and approximate inversion problem from an algebro-geometric perspective of the path signatures, which was initiated in~\cite{amendola_varieties_2019}.
%
    
    One may consider a similar question for Lie groups. Suppose we fix a path $\gamma \in PG$, and we are given the path signature of a path $\alpha \in PH$, where $\alpha = F\gamma$, where $F: G \rightarrow H$ is a Lie group homomorphism. Is it possible to recover the Lie group homomorphism $F$?
    
    \item In~\cite{chen_integration_1958}, the path signature is defined for a manifold $M$ by choosing a collection of $1$-forms $\{\omega_i\}_{i=1}^N$ (as no natural Lie algebra basis is available). Choose $x \in M$ to be a basepoint, and let $S: PM_x \rightarrow T((\R^N))$ be the path signature defined with respect to these $1$-forms and let $\alpha, \beta \in PM_x$. If the $1$-forms span the cotangent bundle of $M$ at every point $x \in M$, Chen's injectivity result states that $S(\alpha) = S(\beta)$ if and only if $\alpha$ and $\beta$ are tree-like equivalent (in a slightly modified sense). The injectivity theorem given in Theorem~\ref{thm:injectivity} is a customization of Chen's result to the case $M = G$ since a basis of $\fg^*$ spans the cotangent bundle at every point.
    
    Recently, there has been interest in studying time series evolving on manifolds, and the development of a path signature kernel for paths on manifolds would provide a powerful tool for geometric time series analysis. One of the difficulties of this definition would be the representation of data on manifolds and the choice of $1$-forms used. One could begin with path signatures on \style{parallelizable} manifolds, which by definition admit a smooth basis of vector fields and $1$-forms. This would encompass all orientable 3-dimensional manifolds. 
\end{enumerate}

\clearpage
\acks{D.L. would like to acknowledge Chad Giusti and Jakob Hansen for several helpful discussions throughout this project. D.L. and R.G. are supported by the Office of the Assistant Secretary of Defense Research \& Engineering through ONR N00014-16-1-2010.  D.L. is also supported by the Natural Sciences and Engineering Research Council of Canada (NSERC) PGS-D3.}


\appendix
\section{Implementation details of path signature computation}
\label{apx:implementation}

We provide implementation details of the path signature computation for Lie groups, which includes the case for Euclidean space. Several libraries containing path signature computations for Euclidean paths exist, listed below.
\begin{enumerate}
    \item The \texttt{esig} package~\citep{lyons_esig_nodate} implemented in C++ and Python was one of the first libraries for path signature computations. It is a CPU-only and single-threaded implementation.
    \item The \texttt{iisignature} package~\citep{reizenstein_algorithm_2020} is also implemented in Python. It is also a CPU-only and single-threaded implementation.
    \item The \texttt{Signatory} library~\citep{kidger_signatory_2020} is the most recent and complete implementation of signature computations. It is written in C++ with a built-in Python wrapper. It includes functionality such as backpropagation which was not included in the previous packages. In addition, it provides GPU as well as both single and multi-threaded CPU support.
\end{enumerate}

Our current package is the first Julia implementation of path signature computations, and is also the first package to support matrix Lie group valued time series. We provide functions for both the continuous and discrete path signatures. The package can be found in \url{https://github.com/ldarrick/PathSignatures}. \medskip

Suppose $P: [T+1] \rightarrow G$ is a discrete time series in an $N$-dimensional matrix Lie group $G \subseteq GL_D(\R)$ (elements are $D \times D$ matrices), and let $p:[T] \rightarrow \fg$ be the discrete derivative of the time series. Note our change in notation for the paths in order to match the code. The continuous signature computation is performed by using Chen's identity (Equation~\ref{eq:chens_identity}) as described in Section~\ref{ssec:discrete_ps}. Specifically, once the discrete derivative is computed, the path signature is computed by tensor exponentiation and multiplication:
\begin{align*}
    S(P) = \exp_\otimes(p_1) \otimes \exp_\otimes(p_2) \otimes \ldots \otimes \exp_\otimes(p_T).	
\end{align*}

Our goal in this section is to compute the truncated discrete signature $\hat{S}_M(P)$. Recall that the discrete path signature with respect to the multi-index $I = (i_1, \ldots, i_m) \in [N]^m$ is computed as
\begin{align*}
    \hat{S}^I(\hat{\gamma}) \coloneqq \sum_{(t_1, \ldots, t_m) \in \hat{\Delta}^m_T} \omega_{i_1}(p_{t_1}) \ldots \omega_{i_m}(p_{t_m}).
\end{align*}
We note that the path signature computation relies only on the derivative $p$, so we split the computation into two steps.
\begin{enumerate}
    \item Compute the discrete derivative $p$ from a discrete time series $P$ in a Lie group $G$ with respect to a chosen basis of $\fg$. Note that this step is dependent on both the Lie group $G$, and the choice of basis of $\fg$.
    \item Compute the truncated discrete path signature $\hat{S}_M(\hat{\gamma})$ given the discrete derivative $p$. Note that this step is independent of both the Lie group $G$ as well as the choice of basis of $\fg$.
\end{enumerate}
This abstraction allows us to write a single path signature function, though we must write a new discrete derivative function for each Lie group we wish to consider. Let us first fix some notation used in the pseudo-code.

\begin{enumerate}
    \item \textbf{Matrix Multiplication.} For a $T_1 \times T_2$ array $A$ and a $T_2 \times T_3$ array $B$, we define
    \begin{align*}
        AB[i,j] = \sum_{k=1}^{T_2} A[i,k] B[k,j].
    \end{align*}
    \item \textbf{Element-wise Multiplication.} For two arrays $A$ and $B$ of the same size, we define
    \begin{align*}
        (A\cdot B)[i,j] = A[i,j] B[i,j].
    \end{align*}
    \item \textbf{Transpose.} For an array $A$, we define
    \begin{align*}
        A^{\intercal}[i,j] = A[j,i].
    \end{align*}
    \item \textbf{Sum.} For a $T_1 \times T_2$ array $A$, we define
    \begin{align*}
        A[\Sigma, \Sigma] = \sum_{i=1}^{T_1} \sum_{j=1}^{T_2} A[i,j].
    \end{align*}
    \item \textbf{Cumulative Sum.} For a $T_1 \times T_2$ array $A$, we define the $T_1 \times T_2$ array
    \begin{align*}
        A[\boxplus, \boxplus][i,j] = \sum_{k_1=1}^i \sum_{k_2=1}^j A[k_1,k_2].
    \end{align*}
    \item \textbf{End.} For a length $T$ array $A$, we define 
    \begin{align*}
        A[\texttt{end}] = A[T].
    \end{align*}
    \item \textbf{Splat.} For a length $T$ array $A$, we define the splat operator
    \begin{align*}
        A... = A[1], A[2], \ldots, A[T],
    \end{align*}
    to use the entries of $A$ as arguments for some function. 
\end{enumerate}

The discrete derivative function for a path $P: [T+1] \rightarrow \R^N$, which can be represented by a $((T+1) \times N)$ array, is simply a difference operation in the first dimension. Now, let's consider a path $P: [T+1] \rightarrow G$, where $G$ is an $N$-dimensional matrix Lie group $G \subseteq GL_D(\R)$. This path can be represented by a $((T+1) \times D \times D)$ array. Suppose that we have implemented a function $\texttt{convert\_to\_basis()}$ which takes in a $(D \times D)$ array, which represents an element of $\fg$, and converts it into a length $N$ vector. Let $\texttt{logm()}$ be the matrix logarithm function.

\begin{algorithm}
\SetKwInOut{Input}{Input}\SetKwInOut{Output}{Output}
\SetAlgoLined
\Input{$\texttt{P}$ ($((T+1) \times D \times D)$ array): A path in the matrix Lie group $G$}
\Output{$\texttt{p}$ ($(T \times N)$ array): The discrete derivative}
Initialize $(T \times N)$ array $\texttt{p}$ \;
Initialize $(D \times D)$ array $\texttt{p}$ \;
 \For{$\texttt{t=1..T}$}{
    \nosemic Compute the discrete derivative at time $\texttt{t}$\;
    \pushline \dosemic \nonl $\texttt{d}\leftarrow \texttt{logm(P[t+1,:,:]}^{\texttt{-1}}\texttt{P[t,:,:])}$\;
    \popline \nosemic Convert the derivative to a representation in terms of a chosen basis of $\fg$\;
    \pushline \dosemic \nonl $\texttt{p[t,:]} \leftarrow \texttt{convert\_to\_basis(d)}$\;
 }
 Return $\texttt{p}$\;
 \caption{Discrete Derivative for Matrix Lie Groups}
\end{algorithm}

Next, we will describe the discrete signature computation. Note that the following algorithm is not novel, and is similar to the implementation provided in~\cite{kidger_signatory_2020}. The idea is to compute the truncated signature using forward recursion, using the computations for lower levels to also compute higher levels. Thus, this computation is made up of two functions: the initialization and the recursion.
The truncated signature will be an element of $T^{(\leq M)}(\R^N)$. This is represented as a 1D array of multidimensional arrays, which we label $s$. For example, $s[m]$ is an $m$-dimensional array of length $N$ in each dimension, and we denote access to the inner arrays using separate square brackets. Namely, $s[2][3,4]$ is the $[3,4]$ element of the 2D array $s[2]$. 

\begin{algorithm}
\SetKwInOut{Input}{Input}\SetKwInOut{Output}{Output}
\SetAlgoLined
\Input{$\texttt{p}$ ($(T \times N)$ array): The discrete derivative \\ 
$\texttt{M}$ (Int): The truncation level}
\Output{$\texttt{s}$ (Tensor polynomial): The truncated discrete signature}
Initialize 1D array of multidimensional arrays $\texttt{s}$\;
Initialize length $M$ array $\texttt{cur\_ind}$\;
Initialize length $T$ array $\texttt{Q}$\;
 \For{$\texttt{n=1..N}$}{
    \nosemic Set the first index \;
    \pushline \dosemic \nonl $\texttt{cur\_ind[1]} \leftarrow \texttt{n}$\;
    \popline \nosemic Compute first level signature path and store in $\texttt{s}$\;
    \pushline \dosemic \nonl $\texttt{Q} \leftarrow \texttt{K[}\boxplus\texttt{,n]}$\;
     \dosemic \nonl $\texttt{s[1][n]} \leftarrow \texttt{Q[end]}$\;
    \popline \nosemic Perform forward recursion \;
    \pushline \dosemic \nonl $\texttt{s}\leftarrow \texttt{sig\_forward(p, Q cur\_ind, 2, M)}$\;
 }
 Return $s$
 \caption{Discrete Signature Initialization}
\end{algorithm}

\begin{algorithm}
\SetKwInOut{Input}{Input}\SetKwInOut{Output}{Output}
\SetAlgoLined
\Input{$\texttt{s}$ (Tensor polynomial): The current signature\\
$\texttt{p}$ ($(T \times N)$ array): The discrete derivative\\
$\texttt{lastQ}$ (Length $T$ array): The previous signature path\\
$\texttt{cur\_ind}$ (Length $M$ array): The current index\\
$\texttt{cur\_level}$ (Int): The current level\\
$\texttt{M}$ (Int): The truncation level
}
\Output{$\texttt{s}$ (Tensor polynomial): Partially computed truncated discrete signature}
Initialize length $T$ array $\texttt{Q}$\;
\eIf{$\texttt{cur\_level < M}$}{
    \For{\texttt{n=1..N}}{
        \nosemic Update the current index \;
        \pushline \dosemic \nonl $\texttt{cur\_ind}[\texttt{cur\_level}] \leftarrow \texttt{n}$\;
        \popline \nosemic Compute the current signature path\;
        \pushline \dosemic \nonl $\texttt{Q} \leftarrow \texttt{lastQ}\cdot \texttt{p[:,n]}$ \;
        \dosemic \nonl $\texttt{Q} \leftarrow \texttt{Q[}\boxplus\texttt{]}$ \;
        \popline \nosemic Store the current signature\;
        \pushline \dosemic \nonl $\texttt{s[cur\_level][cur\_ind[1:cur\_level]...]} = \texttt{Q[end]}$ \;
        \popline \nosemic Perform forward recursion \;
        \pushline \dosemic \nonl $\texttt{s} \leftarrow \texttt{sig\_forward(p, Q, cur\_ind, cur\_level+1, M)}$ \;
    }
}{
    \For{$\texttt{n=1..N}$}{
        \nonl \nosemic $\backslash \backslash$ Last level: perform the same operations, but don't perform recursion \;
        $\texttt{cur\_ind[cur\_level]} \leftarrow n$ \;
        $\texttt{Q} \leftarrow \texttt{lastQ}\cdot \texttt{p[:,n]}$\;
        $\texttt{s[cur\_level][cur\_ind...]} \leftarrow \texttt{Q(}\Sigma\texttt{)}$ \;
    }
}
Return $\texttt{s}$
 \caption{Signature Forward Recursion ($\texttt{sig\_forward}$)}
\end{algorithm}

Here, we may easily compute the complexity of the truncated signature algorithm in terms of the number of elementary operations. We note that there are $N^M$ entries of the level $M$ signature, and all lower levels are computed along the way. To compute an entry of the level $M$ signature from the level $M-1$ signature, we require $O(T)$ elementary operations. Thus, the complexity of this algorithm is $O(TN^M)$.

\clearpage
\section{Tensor Normalization}
\label{apx:tensornorm}
In this appendix, we will discuss the construction of a tensor normalization, as shown in Appendix A of~\cite{chevyrev_signature_2018}, as well as the computation of the normalization. The following proposition provides the construction that we will follow. Suppose $H$ is a Hilbert space throughout this appendix.

\begin{proposition}
\label{prop:tensconstruction}
Let $\psi: [1, \infty) \rightarrow [1, \infty)$ with $\psi(1) = 1$. For $\mathbf{t} \in T_1((H))$ and define $\lambda: T_1((H)) \rightarrow (0, \infty)$ to be the unique non-negative number such that $\|\delta_{\lambda(\mathbf{t})} \mathbf{t}\|^2 = \psi(\|\mathbf{t}\|)$. Define
\begin{align*}
    \Lambda : T_1((H)) & \rightarrow T_1((H)) \\
    \mathbf{t} & \mapsto \delta_{\lambda(\mathbf{t})} \mathbf{t}.
\end{align*}
Denote further $\|\psi\|_\infty = \sup_{x \geq 1} \psi(x)$. Then the following holds.
\begin{enumerate}
    \item The function $\Lambda$ takes values in the set $\left\{ \mathbf{t} \in T_1((H)) \, : \, \|\mathbf{t}\| \leq \sqrt{\|\psi\|_\infty}\right\}$.
    \item If $\psi$ is injective, then so is $\Lambda$.
    \item Suppose that $\sup_{x \geq 1} \psi(x)/s^2 \leq 1$, $\|\psi\|_\infty < \infty$, and that $\psi$ is $K$-Lipschitz for some $K > 0$. Then
    \begin{align*}
        \|\Lambda(\mathbf{s}) - \Lambda(\mathbf{t})\| \leq (1 + \sqrt{K} + 2 \sqrt{\|\psi\|_\infty})(\sqrt{\|\mathbf{s} - \mathbf{t}\|} \vee \|\mathbf{s} - \mathbf{t}\|).
    \end{align*}
\end{enumerate}
\end{proposition}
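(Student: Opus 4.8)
The plan is to treat the three parts in order, since each is essentially self-contained, with the Lipschitz bound (part 3) being the substantial one. For part 1, I would argue as follows. Fix $\mathbf{t} \in T_1((H))$ with $\mathbf{t}_0 = 1$. The map $\lambda \mapsto \|\delta_\lambda \mathbf{t}\|^2 = \sum_{m \geq 0} \lambda^{2m} \|\mathbf{t}_m\|_m^2$ is continuous and strictly increasing in $\lambda$ on $(0,\infty)$, equals $1$ at $\lambda = 0$ (only the degree-zero term survives), and tends to $+\infty$ (or at least exceeds $\psi(\|\mathbf{t}\|)$, since $\|\mathbf{t}\| \geq \|\mathbf{t}_0\| = 1$ forces $\psi(\|\mathbf{t}\|) \geq 1$) as $\lambda \to \infty$ whenever $\mathbf{t} \neq \mathbf{t}_0$; so by the intermediate value theorem there is a unique $\lambda(\mathbf{t})$ achieving $\|\delta_{\lambda(\mathbf{t})}\mathbf{t}\|^2 = \psi(\|\mathbf{t}\|)$ (and $\lambda \equiv 1$ in the degenerate case $\mathbf{t} = \mathbf{t}_0$, using $\psi(1)=1$). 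Then $\|\Lambda(\mathbf{t})\|^2 = \psi(\|\mathbf{t}\|) \leq \|\psi\|_\infty$ immediately, and $\Lambda(\mathbf{t}) \in T_1((H))$ since dilation fixes the degree-zero term. Part 2 is then a short chain: if $\Lambda(\mathbf{s}) = \Lambda(\mathbf{t})$, taking norms gives $\psi(\|\mathbf{s}\|) = \psi(\|\mathbf{t}\|)$, so by injectivity of $\psi$ we get $\|\mathbf{s}\| = \|\mathbf{t}\|$, whence $\lambda(\mathbf{s}) = \lambda(\mathbf{t}) =: \lambda$ (both solve the same equation for the same target value applied to vectors of equal norm — here one must be slightly careful: rather, $\psi(\|\mathbf{s}\|) = \|\delta_{\lambda(\mathbf{s})}\mathbf{s}\|^2$ and the defining equation pins $\lambda(\mathbf{s})$ in terms of the coefficients of $\mathbf{s}$, but once $\Lambda(\mathbf{s}) = \Lambda(\mathbf{t})$ as full tensors, comparing degree $m$ gives $\lambda(\mathbf{s})^m \mathbf{s}_m = \lambda(\mathbf{t})^m \mathbf{t}_m$; the degree-$1$ comparison plus $\|\mathbf{s}\| = \|\mathbf{t}\|$ forces $\lambda(\mathbf{s}) = \lambda(\mathbf{t})$ unless all higher terms vanish, a case handled directly), and then $\mathbf{s} = \delta_{1/\lambda}\Lambda(\mathbf{s}) = \delta_{1/\lambda}\Lambda(\mathbf{t}) = \mathbf{t}$.

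For part 3, the strategy is to split $\|\Lambda(\mathbf{s}) - \Lambda(\mathbf{t})\|$ using the triangle inequality through an intermediate vector, the natural choice being $\delta_{\lambda(\mathbf{t})}\mathbf{s}$ (apply $\mathbf{s}$'s partner dilation to $\mathbf{t}$, or vice versa):
\begin{align*}
    \|\Lambda(\mathbf{s}) - \Lambda(\mathbf{t})\|
    &= \|\delta_{\lambda(\mathbf{s})}\mathbf{s} - \delta_{\lambda(\mathbf{t})}\mathbf{t}\| \\
    &\leq \|\delta_{\lambda(\mathbf{s})}\mathbf{s} - \delta_{\lambda(\mathbf{t})}\mathbf{s}\| + \|\delta_{\lambda(\mathbf{t})}\mathbf{s} - \delta_{\lambda(\mathbf{t})}\mathbf{t}\|.
\end{align*}
The second term is $\|\delta_{\lambda(\mathbf{t})}(\mathbf{s} - \mathbf{t})\|$; since the hypothesis $\sup_{x\geq 1}\psi(x)/x^2 \leq 1$ (I read the stated ``$/s^2$'' as $/x^2$) should force $\lambda(\mathbf{t}) \leq 1$ — because $\|\delta_{\lambda(\mathbf{t})}\mathbf{t}\|^2 = \psi(\|\mathbf{t}\|) \leq \|\mathbf{t}\|^2 = \|\delta_1 \mathbf{t}\|^2$ and $\lambda \mapsto \|\delta_\lambda\mathbf{t}\|$ is increasing — and dilation by $\lambda \leq 1$ is a contraction on the nonconstant part (it fixes degree $0$ and shrinks all higher degrees), we get $\|\delta_{\lambda(\mathbf{t})}(\mathbf{s}-\mathbf{t})\| \leq \|\mathbf{s} - \mathbf{t}\|$, since $(\mathbf{s} - \mathbf{t})_0 = 0$. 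The first term is the genuinely delicate one: it equals $\|(\lambda(\mathbf{s})^m - \lambda(\mathbf{t})^m)\mathbf{s}_m\|_{m\geq 1}$ in norm, and I would bound it by controlling $|\lambda(\mathbf{s}) - \lambda(\mathbf{t})|$ via the defining equations, then using $\|\delta_{\lambda(\mathbf{s})}\mathbf{s}\| = \sqrt{\psi(\|\mathbf{s}\|)} \leq \sqrt{\|\psi\|_\infty}$ to absorb the $\mathbf{s}_m$ factors; a Lipschitz estimate on $\psi$ enters here through the dependence of $\lambda(\cdot)$ on $\psi(\|\cdot\|)$, producing the $\sqrt{K}$ and the square-root-of-the-difference behaviour (the $\sqrt{\|\mathbf{s}-\mathbf{t}\|}$ term) because inverting the quadratic-type relation $\|\delta_\lambda \mathbf{t}\|^2 = \psi(\|\mathbf{t}\|)$ for $\lambda$ is only Hölder-$\tfrac12$ continuous near the degenerate locus.

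The main obstacle I anticipate is precisely this control of $|\lambda(\mathbf{s}) - \lambda(\mathbf{t})|$ and hence of $\|\delta_{\lambda(\mathbf{s})}\mathbf{s} - \delta_{\lambda(\mathbf{t})}\mathbf{s}\|$: one needs a quantitative lower bound on how fast $\lambda \mapsto \|\delta_\lambda \mathbf{t}\|^2$ increases in order to invert it, and this derivative degenerates exactly when $\mathbf{t}$ is close to its constant term, which is why only a $\sqrt{\|\mathbf{s}-\mathbf{t}\|}$-type (not Lipschitz) bound survives in general, whence the appearance of $\sqrt{\|\mathbf{s}-\mathbf{t}\|} \vee \|\mathbf{s}-\mathbf{t}\|$. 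I would handle this by a case split: when $\|\mathbf{s} - \mathbf{t}\|$ is bounded away from $0$ relative to the norms involved, crude bounds (each term at most $2\sqrt{\|\psi\|_\infty}$) suffice; when $\|\mathbf{s}-\mathbf{t}\|$ is small, one linearizes, picking up the $\sqrt{K}$ factor from the Lipschitz constant of $\psi$ and the $\sqrt{\|\mathbf{s}-\mathbf{t}\|}$ from the square-root inversion, and then recombines the constants into $1 + \sqrt{K} + 2\sqrt{\|\psi\|_\infty}$. Since this is the construction of~\cite{chevyrev_signature_2018}, I would follow their Appendix~A estimates closely rather than reprove the sharp constants from scratch.
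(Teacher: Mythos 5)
First, note that the paper contains no proof of this proposition: it is stated as imported from Appendix~A of \cite{chevyrev_signature_2018}, and the surrounding appendix only adds the remark that $\lambda(\mathbf{t})$ is computed in practice by finding the root of a polynomial on the truncated algebra. So there is no in-paper argument to compare against, and your sketch must stand on its own. Measured that way, parts~1 and~2 are essentially complete. One refinement for part~1: arguing surjectivity via ``$\|\delta_\lambda\mathbf{t}\|^2\to\infty$ as $\lambda\to\infty$'' is slightly off, because the series $\sum_m\lambda^{2m}\|\mathbf{t}_m\|_m^2$ may equal $+\infty$ for all $\lambda$ past a finite radius of convergence while staying bounded below it, so the function can jump over the target value $\psi(\|\mathbf{t}\|)$. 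The clean fix is the observation you already make in part~3: under $\psi(x)\le x^2$ one has $\psi(\|\mathbf{t}\|)\le\|\mathbf{t}\|^2=\|\delta_1\mathbf{t}\|^2$, so the solution lies in $\lambda\in[0,1]$, where the series converges, is continuous, and is strictly increasing for $\mathbf{t}\ne 1$. (Strictly speaking this means well-definedness of $\Lambda$ already needs the growth hypothesis that the statement only lists under item~3; that is a defect of the statement as transcribed, not of your argument.) Part~2 is correct once all degrees are compared and the case $\Lambda(\mathbf{s})=1$ is handled separately, which you acknowledge.

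The genuine gap is in part~3. Your triangle-inequality decomposition through $\delta_{\lambda(\mathbf{t})}\mathbf{s}$, together with $\|\delta_{\lambda(\mathbf{t})}(\mathbf{s}-\mathbf{t})\|\le\|\mathbf{s}-\mathbf{t}\|$ (valid since $\lambda(\mathbf{t})\le1$ and $(\mathbf{s}-\mathbf{t})_0=0$), correctly accounts for the summand $1$ in the constant. But the term $\|\delta_{\lambda(\mathbf{s})}\mathbf{s}-\delta_{\lambda(\mathbf{t})}\mathbf{s}\|$ --- which carries the entire content of the $\sqrt{K}$ and the $\sqrt{\|\mathbf{s}-\mathbf{t}\|}\vee\|\mathbf{s}-\mathbf{t}\|$ --- is only described, and the route you propose (a quantitative lower bound on the rate of increase of $\lambda\mapsto\|\delta_\lambda\mathbf{s}\|^2$ so as to invert for $\lambda$, plus a case split) runs straight into the degeneracy you yourself identify. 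The missing idea that makes the argument work is the elementary inequality $(a^m-b^m)^2\le|a^{2m}-b^{2m}|$ for $a,b\ge0$, which, summed against $\|\mathbf{s}_m\|_m^2$, gives $\|\delta_a\mathbf{s}-\delta_b\mathbf{s}\|^2\le\bigl|\,\|\delta_a\mathbf{s}\|^2-\|\delta_b\mathbf{s}\|^2\bigr|$. Taking $a=\lambda(\mathbf{s})$, $b=\lambda(\mathbf{t})$, the right-hand side is controlled by $|\psi(\|\mathbf{s}\|)-\psi(\|\mathbf{t}\|)|+\bigl|\,\|\delta_{\lambda(\mathbf{t})}\mathbf{t}\|^2-\|\delta_{\lambda(\mathbf{t})}\mathbf{s}\|^2\bigr|$; the first summand is at most $K\|\mathbf{s}-\mathbf{t}\|$ by the Lipschitz hypothesis and the reverse triangle inequality, and the second factors as a difference of norms times a sum of norms bounded via $\sqrt{\|\psi\|_\infty}$. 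Taking square roots at the end produces exactly the $\sqrt{K}\sqrt{\|\mathbf{s}-\mathbf{t}\|}$ and $2\sqrt{\|\psi\|_\infty}$ contributions, with no need to ever estimate $|\lambda(\mathbf{s})-\lambda(\mathbf{t})|$. Without this (or an equivalent device), your part~3 is a plan rather than a proof --- though in fairness you are explicit about deferring to \cite{chevyrev_signature_2018}, which is precisely what the paper itself does.
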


\begin{corollary}
    Let $\psi: [1, \infty) \rightarrow [1, \infty)$ be injective satisfying $\psi(1) = 1$ and the conditions of item (3.) in Proposition~\ref{prop:tensconstruction}. Then, the function $\Lambda$ constructed in Proposition~\ref{prop:tensconstruction} is a tensor normalization.
\end{corollary}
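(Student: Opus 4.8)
The plan is to verify, one at a time, the three requirements in the definition of a tensor normalization, and to observe that each is supplied almost verbatim by one of the three conclusions of Proposition~\ref{prop:tensconstruction}, applied with the Hilbert space $H = \bar{\fg}$. So the proof is pure bookkeeping on top of that proposition.

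First I would record the form of $\Lambda$. By the construction in Proposition~\ref{prop:tensconstruction}, for each $\mathbf{t} \in T_1((\bar{\fg}))$ the number $\lambda(\mathbf{t})$ is the unique non-negative real satisfying $\|\delta_{\lambda(\mathbf{t})}\mathbf{t}\|^2 = \psi(\|\mathbf{t}\|)$, so $\lambda$ is a well-defined function $T_1((\bar{\fg})) \to (0,\infty)$ and $\Lambda(\mathbf{t}) = \delta_{\lambda(\mathbf{t})}\mathbf{t}$; this is exactly the shape demanded of a tensor normalization. Next, the bounded-image condition: since the hypotheses of item (3.) are assumed, in particular $\|\psi\|_\infty < \infty$, the quantity $K := \sqrt{\|\psi\|_\infty}$ is a genuine positive constant, and item (1.) states that $\Lambda$ takes values in $\{\mathbf{t} \in T_1((\bar{\fg})) : \|\mathbf{t}\| \leq K\}$. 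Injectivity of $\Lambda$ is immediate from item (2.), which yields injectivity precisely when $\psi$ is injective — which is part of our hypothesis.

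Finally, for continuity I would invoke item (3.): under the hypotheses assumed there on $\psi$ — finiteness of $\|\psi\|_\infty$, the stated quadratic bound, and $L$-Lipschitz continuity of $\psi$ for some $L>0$ — the estimate
\[
    \|\Lambda(\mathbf{s}) - \Lambda(\mathbf{t})\| \leq \left(1 + \sqrt{L} + 2\sqrt{\|\psi\|_\infty}\right)\left(\sqrt{\|\mathbf{s}-\mathbf{t}\|} \vee \|\mathbf{s}-\mathbf{t}\|\right)
\]
holds for all $\mathbf{s},\mathbf{t} \in T_1((\bar{\fg}))$; letting $\|\mathbf{s}-\mathbf{t}\| \to 0$ the right-hand side tends to $0$, so $\Lambda$ is (in fact uniformly, and locally Hölder-$\tfrac12$) continuous. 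Collecting these, $\Lambda$ is a continuous injective map of the form $\mathbf{t} \mapsto \delta_{\lambda(\mathbf{t})}\mathbf{t}$ into a norm-bounded subset of $T_1((\bar{\fg}))$, i.e.\ a tensor normalization.

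There is no genuine obstacle at the level of this corollary: all the analytic content — well-definedness of $\lambda$ via monotonicity of $s \mapsto \|\delta_s \mathbf{t}\|$, and the derivation of the Lipschitz-type estimate — is internal to Proposition~\ref{prop:tensconstruction}, which we are entitled to assume. The only points needing care are cosmetic: keeping the two roles of the letter ``$K$'' apart (the image bound $\sqrt{\|\psi\|_\infty}$ versus the Lipschitz constant of $\psi$), and noting explicitly that the Hölder-type bound of item (3.) indeed forces ordinary continuity of $\Lambda$.
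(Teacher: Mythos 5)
Your proposal is correct and matches the paper's (implicit) argument: the paper states this corollary without proof, treating it as an immediate consequence of Proposition~\ref{prop:tensconstruction}, and your bookkeeping — item (1.) for the bounded image, item (2.) for injectivity, item (3.) for continuity via the H\"older-type estimate — is exactly the verification intended. Your remark about keeping the image bound $\sqrt{\|\psi\|_\infty}$ distinct from the Lipschitz constant of $\psi$ is a sensible clarification, not a deviation.
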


This method of constructing a tensor normalization is done by defining the normalized norm $\|\delta_{\lambda(\mathbf{t})} \mathbf{t}\|$ of an element $\mathbf{t}$, given its original norm $\psi(\|\mathbf{t}\|)$. In practice, we will be working with a truncated element of the tensor algebra. Thus, given an element $\mathbf{t} \in T_1^{(\leq M)}((H))$, we can obtain the value of $\lambda(\mathbf{t})$ by solving the equation $\|\delta_{\lambda(\mathbf{t})} \mathbf{t}\|^2 = \psi(\|\mathbf{t}\|)$. This reduces to finding the zero of the polynomial equation
\begin{align*}
    \sum_{k=1}^M \lambda^{2k} \|\mathbf{t}_k\|^2 - \psi(\|\mathbf{t}\|) = 0.
\end{align*}

\bibliographystyle{natbib}
\bibliography{liegroups}

\end{document}